\documentclass[11pt]{article}

\usepackage{noisetilt_arxiv}
\usepackage{graphicx}
\usepackage{booktabs}
\usepackage{microtype}
\usepackage[accsupp]{axessibility}

\usepackage[numbers,sort&compress]{natbib}
\usepackage[hidelinks]{hyperref}
\usepackage[nameinlink,capitalize,noabbrev]{cleveref}

\usepackage{amsmath}
\usepackage{amssymb}
\usepackage{mathtools}

\usepackage{bm}

\usepackage{amsfonts}
\usepackage{array}
\usepackage{tabularx}
\usepackage{makecell}
\usepackage{multirow}
\usepackage[table]{xcolor}
\usepackage{pifont}
\usepackage{capt-of}
\usepackage{placeins}
\usepackage{wrapfig}

\renewcommand{\eqref}[1]{equation~\textup{\ref{#1}}}
\crefname{section}{Section}{Sections}
\crefname{subsection}{Section}{Sections}

\DeclareMathOperator*{\argmax}{arg\,max}

\definecolor{lightgreen}{HTML}{A4DBB7}
\definecolor{darkgreen}{HTML}{65A682}

\newcommand{\Ours}[0]{\texttt{NTRK}}
\newcommand{\Oursbf}[0]{\fontfamily{lmtt}\fontseries{b}\selectfont \Ours{}}

\newcommand{\cmark}{\ding{51}}
\newcommand{\xmark}{\ding{55}}

\begin{document}

\ntaffiliations{
  $^{1}$KAIST
  \qquad
  $^{2}$The University of Tokyo
}

\ntmaketitle
  {\textsc{NoiseTilt}: Noise-Tilted Reverse Kernels for Diffusion Reward Alignment}
  {Jisung Hwang$^{1,\dagger}$, Yunhong Min$^{1}$, Jaihoon Kim$^{1}$, I-Chao Shen$^{2,\ddagger}$, Minhyuk Sung$^{1,\ddagger}$}
  {}
\begingroup
\renewcommand{\thefootnote}{\ensuremath{\dagger}}
\footnotetext{This work was done in part while the author was a visiting researcher at The University of Tokyo.}
\renewcommand{\thefootnote}{\ensuremath{\ddagger}}
\footnotetext{I-Chao Shen and Minhyuk Sung are co-corresponding authors.}
\endgroup

\begin{figure}[H]
    \centering
    \includegraphics[width=\linewidth,trim=24pt 9pt 16pt 13pt,clip]{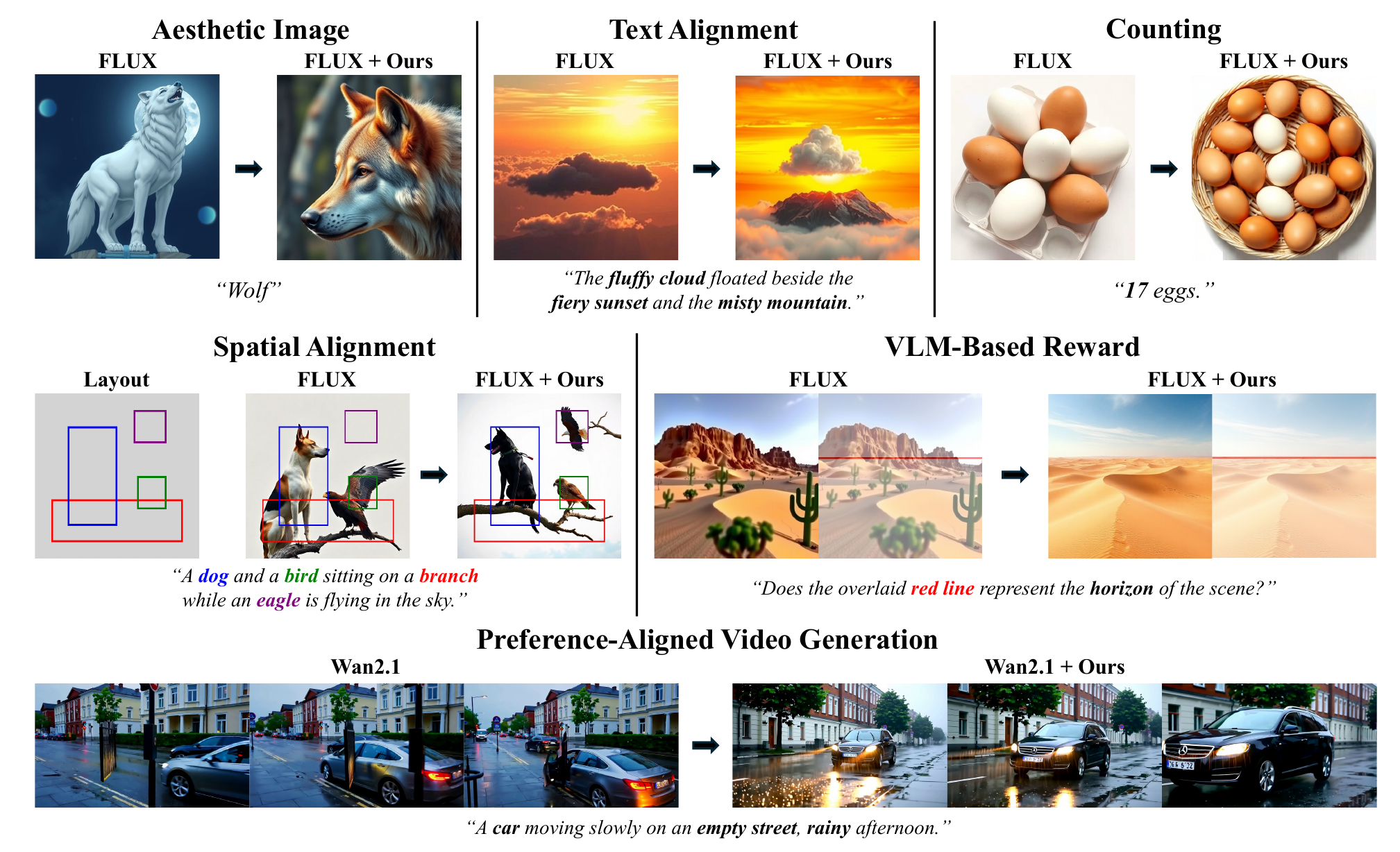}
    \caption{Inference-time reward alignment results using our Noise-Tilted Reverse Kernels (\Ours{}) across diverse reward-guided diffusion applications.}
    \label{fig:teaser}
\end{figure}

\begin{abstract}
    We introduce the Noise-Tilted Reverse Kernel (\Ours{}), a reward-guided diffusion sampler that injects reward gradients through the noise term, leaving the pretrained reverse kernel unchanged and requiring only a single sample per step. Reward-guided sampling at inference time has greatly expanded the versatility of pretrained diffusion models. Yet existing methods face a trade-off. Gradient-based guidance shifts the reverse mean, steering generation but pushing intermediate states outside the region that the model was trained on and degrading quality. Search-based methods preserve quality but gain no gradient signal. No prior method achieves both. \Ours{} resolves this by keeping the reverse mean fixed and biasing the noise term toward high reward. This is enabled by a whitening operator, the central mechanism behind \Ours{}, which converts reward gradients into noise-compatible perturbations without losing their guiding signal. Across various reward alignment tasks, \Ours{} outperforms recent state-of-the-art baselines without losing sample quality. Remarkably, on aesthetic generation, \Ours{} surpasses the reward of the best baseline at 500 NFEs using only 25 NFEs, a $20\times$ reduction in compute.
\keywords{Noise-Tilted Reverse Kernel \and Whitening Operator \and Reward Alignment \and Inference-Time Scaling}

\end{abstract}

\clearpage
\twocolumn

\section{Introduction}
\label{sec:intro}

Inference-time scaling has become one of the most powerful levers for expanding the capabilities of pretrained generative models, enabling alignment to user- and task-specific preferences without costly retraining. 
In particular, diffusion models, which underpin state-of-the-art image and video generation~\cite{Rombach:2022LDM, flux2024, cai2025:zimage, wan2025}, are well suited for inference-time scaling thanks to their iterative denoising process. 
When preferences can be expressed as a reward function, this iterative procedure offers a model-agnostic mechanism to reshape the sampling distribution toward high-reward regions.
As a result, reward-guided sampling at inference time has been applied broadly, including deblurring~\cite{Chung:2023DPS, rozet2024:moment_matching, he2024mpgd, song2023:pseudoinverse, Ye:2024TFG, Dou2024:FPS, Cardoso2024:MCGDiff}, super-resolution~\cite{Chung:2023DPS, chung2022:mcg, song2023:pseudoinverse, Ye:2024TFG}, aesthetic image generation~\cite{Ma2025:SoP, Kim:2025DAS, yoon2025:psi}, spatial alignment~\cite{Bansal:2023UGD, yoon2025:psi}, text–image alignment~\cite{Kim:2025DAS, kim2025:rbf}, and increasingly, controllable video generation~\cite{jang2026frameguidance, zhang2024:controlvideo, lu2024:freelong, qiu2024:freetraj, liu2025:videoalign}.

Behind all these applications lies the same question: how to steer each denoising step toward high-reward outcomes.
These methods broadly fall into two paradigms, distinguished by how they modify or utilize the reverse Gaussian kernel.
The first, \textbf{mean-shifted} (gradient guidance) methods, pioneered by DPS~\cite{Chung:2023DPS}, augment the mean of the reverse Gaussian kernel with reward gradients, steering the trajectory toward high-reward regions.
MCMC-based extensions further build on this principle~\cite{Yu:2023FreeDOM, Song:2023LGD, Bansal:2023UGD}.
The second, \textbf{search-based} methods~\cite{Ma2025:SoP, Li2024:SVDD, kim2025:rbf}, draw $K$ candidates from the stochastic reverse kernel and select among them by a reward criterion such as argmax or importance sampling, without gradients or kernel modification.
Several works combine both via particle filtering~\cite{Kim:2025DAS, wu2023:tds, yoon2025:psi}.

The two paradigms represent opposite ends of a fundamental trade-off between gradient guidance and noise-compatibility.
Mean-shifted methods gain the former but introduce a critical mismatch: adding reward gradients to the reverse mean displaces sampled states outside the noise-compatible regime, the narrow region where the pretrained model was trained to operate, with the displacement growing as the gradient magnitude increases (see \cref{fig:method_illust}).
This displacement drives intermediate states out-of-distribution and degrades generation quality.
It also makes these methods prone to reward hacking~\cite{Gao:2023Scaling}.
Search-based methods preserve the latter but forgo gradient guidance entirely, relying on random sampling to find reward-improving directions.
To the best of our knowledge, \textbf{no prior method achieves both}, and our work addresses this gap.

\begin{figure}[t]
\centering
\setlength{\tabcolsep}{1.0pt}
\renewcommand{\arraystretch}{1.0}
\small

\newcommand{\ImgBox}[1]{\fbox{\includegraphics[width=\linewidth]{#1}}}
\setlength{\fboxsep}{0pt}
\setlength{\fboxrule}{0.25pt}

\newcommand{\TNoise}[1]{\ImgBox{Figures/NoiseDistinct/noises/#1}}
\newcommand{\ANoise}[1]{\ImgBox{Figures/NoiseDistinct/noises/#1}}
\newcommand{\cellimg}[1]{\fbox{\includegraphics[width=0.465\columnwidth]{Figures/init_injc_sample/#1}}}

\begin{tabularx}{\linewidth}{@{} p{4mm} >{\centering\arraybackslash}X >{\centering\arraybackslash}X c @{}}
\toprule
& \multicolumn{2}{c}{\small Latent Visualization}
& \small Sampled Image \\
\midrule
\multirow{2}{*}[3em]{\rotatebox[origin=c]{90}{\normalsize\textbf{Typical Noise}}}
& \TNoise{00.png} & \TNoise{01.png}
& \raisebox{4.4em}{\multirow{2}{*}{\cellimg{pure_pure.jpg}}} \\[-2pt]
& \TNoise{02.png} & \TNoise{03.png} & \\[-2pt]
\midrule
\multirow{2}{*}[3.2em]{\rotatebox[origin=c]{90}{\normalsize\textbf{Atypical Noise}}}
& \ANoise{s00.png} & \ANoise{s03.png}
& \raisebox{4.4em}{\multirow{2}{*}{\cellimg{pure_strc.jpg}}} \\[-2pt]
& \ANoise{s04.png} & \ANoise{s05.png} & \\[-2pt]
\bottomrule
\end{tabularx}

\caption{
\footnotesize
\textbf{Typical vs. atypical noise.}
Typical noise produces high-quality outputs, whereas atypical noise induces artifacts.
}

\label{fig:noise_inject_examples}
\vspace{-1.0\baselineskip}
\end{figure}

To resolve this trade-off, we propose the \emph{Noise-Tilted Reverse Kernel} (\Ours{}), a drop-in alternative that leaves the reverse-kernel mean entirely unchanged and routes reward information through the noise term at each step.

The central challenge is that any signal injected through the noise term must still be \textbf{\emph{typical}} Gaussian noise~\cite{hwang2025:mpgr, hwang2026gradient}, a multi-faceted property encompassing norm concentration, spatial uncorrelatedness, and collective Gaussian statistics. A raw reward gradient is usually structured and deterministic; injecting it directly through the noise channel produces atypical noise and again drives states out-of-distribution, as illustrated in \cref{fig:noise_inject_examples}.
Inspired by the concept of whitening in statistical learning~\cite{hyvarinen2001ica,kessy2018optimal}, we introduce a \emph{whitening operator} that processes the reward gradient before injection, suppressing structured components incompatible with the pretrained model while preserving the directional information of the gradient.
Whitening is therefore not an auxiliary step but the core mechanism of \Ours{}, and reward alignment improves directly with the quality of the whitening operator.

In our experiments, we demonstrate the effectiveness of \Ours{}, which can also be combined with multi-particle strategies such as Best-of-N (BoN)~\cite{Stiennon:2020BoN}. 
Across a range of applications, including aesthetic image generation, text-aligned image generation, VLM-based reward alignment, preference-aligned video generation as showcased in \cref{fig:teaser}, we show that \Ours{}, paired with simple BoN, can outperform recent state-of-the-art approaches that employ more complex techniques such as trajectory rollback~\cite{Yu:2023FreeDOM}, particle filtering~\cite{Kim:2025DAS, Li2024:SVDD, kim2025:rbf}, and initial-point search~\cite{yoon2025:psi}. 
In particular, for aesthetic image generation, our method surpasses the reward achieved by the state-of-the-art baseline with 500 NFEs using only \emph{25 NFEs} (i.e., 5\%), while maintaining image quality.

\section{Related Work}
\label{sec:related_work}
We review prior work on reward alignment for diffusion models at inference time.
Fine-tuning-based approaches include direct backpropagation~\cite{Clark2024:DRaFT,prabhudesai2024:VADER} and reinforcement learning~\cite{Black2024:DDPO,Fan:2023DPOK,Rafailov2023:DPO,Wallace:2024DiffusionDPO}; we focus on inference-time methods that operate directly on pretrained models without additional training, and can further be applied on top of fine-tuned models~\cite{li2026:mixgrpo,liu2025:flowgrpo}.
We organize these into two threads: methods that modify the per-step reverse kernel, grouped into three categories summarized in \cref{tab:kernel_summary}, and methods for making reward gradients noise-compatible, which underpin our whitening operator.

\subsection{Reverse Kernel Methods}

Rather than manipulating attention maps~\cite{hertz2022:p2p,ma2025:omnipainter,hertz2024:stylealigned,chefer2023:attend,kumari2023:custom}, which tend to be task- and model-specific, we focus on methods that modify the reverse kernel using a differentiable reward.
A separate line that treats the injected noise as an optimization variable~\cite{tang2025dno,eyring2024reno} is covered in the second thread below.

\paragraph{Mean-shifted reverse kernels.}
A foundational method in this line is DPS~\cite{Chung:2023DPS}, which incorporates reward information directly into the diffusion sampling process by shifting the mean of each reverse kernel using gradients derived from the reward model via Tweedie's formula~\cite{Robbins1992}.
By leveraging the pretrained diffusion prior, this approach can be applied to diverse tasks and has become one of the most widely adopted frameworks for inference-time alignment.
Numerous extensions have since been proposed to improve sampling efficiency, accuracy, and applicability: FreeDoM~\cite{Yu:2023FreeDOM} incorporates additional Monte Carlo sampling, other works develop more advanced solvers~\cite{rozet2024:moment_matching,Song:2023LGD,song2023:pseudoinverse,wu2023:tds,he2024mpgd}, and the framework has been extended to latent diffusion models~\cite{rout2023:psld,rout2024:STSL} enabling scalable applications in large generative systems~\cite{Rombach:2022LDM}, and applied across image and video generation~\cite{Bansal:2023UGD,lee2023:syncdiffusion,Ye:2024TFG,kwon2024:video,daras2024:warped}.
As shown in \cref{tab:kernel_summary}, however, shifting the kernel mean pushes samples outside the noise-compatible Gaussian regime that pretrained models are trained on.

\paragraph{Search-based methods.}
Rather than modifying the kernel mean, search-based methods exploit the stochasticity of the reverse diffusion process by drawing $K$ candidates from the unmodified base kernel and selecting those with higher rewards~\cite{Ma2025:SoP,ramesh2025:search,singhal2025:fk}.
These methods trade additional computation for improved sample quality while strictly preserving the native reverse kernel.
SVDD~\cite{Li2024:SVDD} is a representative example that selects the highest-reward sample at each denoising step; RBF~\cite{kim2025:rbf} improves efficiency by dynamically allocating the sampling budget across timesteps.
These methods remain noise-compatible but forgo gradient guidance and require $K$ draws per step.

\begin{table}[t]
\centering
\caption{
\footnotesize
\textbf{Comparison of reverse-kernel guidance mechanisms.}
\Ours{} is the only approach satisfying all three properties simultaneously.
}
\label{tab:kernel_summary}
\vspace{0.05cm}
\renewcommand{\arraystretch}{1.5}
\setlength{\tabcolsep}{3pt}
\scriptsize
\begin{tabular}{lccccc}
\toprule
& \textbf{Base} & \makecell{\textbf{Mean-}\\\textbf{Shifted}} & \makecell{\textbf{Search-}\\\textbf{Based}} & \textbf{Hybrid} & \makecell{\textbf{Noise-}\\\textbf{Tilted}} \\
\midrule
\makecell[l]{Examples}
& ---
& \tiny DPS~\cite{Chung:2023DPS}
& \tiny SVDD~\cite{Li2024:SVDD}
& \tiny DAS~\cite{Kim:2025DAS}
& \Ours{} \\
\makecell[l]{Gradient\\[-0.2em]guidance}
& \xmark
& \cmark
& \xmark
& \cmark
& \cmark \\
\makecell[l]{Pretrained\\[-0.2em]-compatible}
& \cmark
& \xmark
& \cmark
& \xmark
& \cmark \\
Draws / step
& $1$
& $1$
& $K$
& $K$
& $1$ \\
\bottomrule
\end{tabular}
\vspace{-0.2cm}
\end{table}

\paragraph{Hybrid methods.}
A third group combines gradient guidance with multi-particle sampling, gaining the directionality of gradient guidance and the diversity of multiple draws.
DAS~\cite{Kim:2025DAS} incorporates reward gradients within a Sequential Monte Carlo (SMC) framework~\cite{Doucet2001:SMC}, and $\Psi$-Sampler~\cite{yoon2025:psi} additionally reshapes the initial particle distribution toward the reward-aligned posterior.
However, because these methods still shift the kernel mean to inject gradient guidance, they inherit the noise-compatibility issue of mean-shifted kernels.

Across all three groups, no existing approach simultaneously achieves gradient guidance and noise-compatibility.

\subsection{Noise-Compatible Perturbations}

Our work addresses this gap with \Ours{}, which routes reward information through the stochastic noise term rather than the mean and is the only approach satisfying all three properties in \cref{tab:kernel_summary}. The central challenge is that the injected perturbation must remain noise-compatible, a property more demanding than simply matching a Gaussian norm~\cite{hwang2025:mpgr, hwang2026gradient}.
Inspired by statistical whitening~\cite{hyvarinen2001ica, kessy2018optimal}, our whitening operator processes the reward gradient to exhibit Gaussian statistics and spatial uncorrelatedness before injection.

\paragraph{Regularization-based methods.}
Several prior works have studied how to encourage Gaussian typicality in vectors involved in diffusion inference, each from a different context: earlier approaches apply norm-based regularization to keep optimized noise or latent vectors near the Gaussian concentration shell~\cite{samuel2023:norm, eyring2024reno, Benhamu:2024DFlow}, while DNO~\cite{tang2025dno}, MPGR~\cite{hwang2025:mpgr}, and StressDream~\cite{seo2026:stressdream} additionally regularize higher-order spatial statistics alongside norm, in the respective contexts of noise-space optimization, gradient guidance stabilization, and video world model steering.
All of these impose soft constraints for specific properties of Gaussian noise, however, and cannot guarantee those properties.

\paragraph{Projection-based methods.}
WGNC~\cite{hwang2026gradient} is the first to reframe noise-compatibility as a \emph{projection} problem, directly mapping the reward gradient onto a white Gaussian noise feasible set defined by hard blockwise norm constraints in the Fourier domain. This is the first approach in the spirit of true statistical whitening, achieving typicality enforcement in a single pass.
However, its hard equality constraints also distort noise that is already noise-compatible.
Our whitening operator upgrades this projection approach: by replacing hard equality constraints with confidence-interval projections, it strongly suppresses atypical structure while leaving genuine Gaussian noise nearly unchanged (see~\cref{app:whitening_comparison} for a detailed comparison).

\section{Overview}
We formalize the reward alignment problem and analyze four design choices for the per-step reverse kernel.
The central question is how reward information can be incorporated at each denoising step without violating the noise-compatible regime that the pretrained model relies on.

\subsection{Problem Definition}

Given a pretrained diffusion model that maps a source noise distribution, $p_T = \mathcal{N}(\bm{0}, \bm{I})$, to a data distribution $p_0$, our objective is to generate high-reward samples $\bm{x}_0$, a task generally known as reward alignment. 
Formally, this objective is formulated as finding a target distribution $p^{*}_0$~\cite{Korbak:2022RLDM,Uehara:2024Finetuning,Uehara:2024Bridging} such that:
\begin{equation}
    \label{eq:reward_max_obj}
    \begin{aligned}
    p^{*}_0 = \argmax_{q}\;\mathbb{E}_{\bm{x}_0 \sim q} \left[ r(\bm{x}_0) \right]-\beta\,\mathcal{D}_{\text{KL}} \left[ q \| p_0 \right],
    \end{aligned}
\end{equation}
which maximizes the expected reward $r(\bm{x}_0)$ while the KL divergence acts as a regularizer, preventing deviation from the pretrained data distribution.
The temperature $\beta>0$ controls this trade-off: smaller $\beta$ produces stronger reward tilting, while larger $\beta$ stays closer to the pretrained distribution.

The optimal reverse kernel $p^*_\theta(\bm{x}_{t-1} \mid \bm{x}_{t})$ required to sample from the target distribution in \cref{eq:reward_max_obj} can be approximated as follows:
\begin{equation}
    \label{eq:optimal_policy}
    \begin{aligned}
    p^*_\theta(\bm{x}_{t-1} | \bm{x}_{t}) 
    \propto
    p_\theta(\bm{x}_{t-1} | \bm{x}_{t})\exp\left( \tfrac{V_{t-1}(\bm{x}_{t-1})}{\beta} \right), 
    \end{aligned}
\end{equation}
where the exact value function $V_t(\bm{x}_t)$ is intractable and is approximated using Tweedie's posterior mean~\cite{Robbins1992}. 
That is, $V_t(\bm{x}_t) \approx r(\hat{\bm{x}}_{0|t})$, where $\hat{\bm{x}}_{0|t} := \mathbb{E}[\bm{x}_0 \mid \bm{x}_t]$.
The design of the per-step reverse kernel determines how this approximation is realized in practice, and whether the resulting updates remain noise-compatible.

In the following sections, we first recall the base diffusion reverse kernel and then compare two common approximation routes to reward guidance: mean-shifted reverse kernels and search-based selection.
Lastly, in \cref{subsec:whiten_kernel}, we introduce our proposed noise-tilted reverse kernel that preserves the base mean while injecting reward information through the stochastic term.
These four kernels are illustrated in \cref{fig:method_illust}.

\clearpage
\newpage
\begin{figure*}[t]
\centering
\setlength{\tabcolsep}{2pt}
\renewcommand{\arraystretch}{0.95}
\footnotesize

\newcommand{\cellimg}[1]{\includegraphics[width=0.95\linewidth]{Figures/Illst/#1}}
\newcommand{\paneltitle}[3]{%
\makecell[c]{%
{\large\textbf{#1}}\\[-1pt]
{\normalsize #3 \ | \ \scriptsize #2}%
}%
}

\begin{tabularx}{\linewidth}{@{} *{2}{X} @{}}

\paneltitle
    {Base Kernel}
    {\cref{subsec:rev_kernel}}
    {1 draw / step}
&
\paneltitle
    {Mean-Shifted}
    {DPS~\cite{Chung:2023DPS}, \cref{subsec:grad_kernel}}
    {1 draw / step}
\\

\cellimg{ill0-1.png} &
\cellimg{ill1-1.png} \\

\makecell{{\large $\bm{x}_{t-1}=\bm{\mu}_\theta(\bm{x}_t,t)+\sigma_t\bm{\epsilon}_t,$}} &
\makecell{{\large $\bm{x}_{t-1}=\tilde{\bm{\mu}}_\theta(\bm{x}_t,t)+\sigma_t\bm{\epsilon}_t,$}}
\\
\normalsize \makecell{$\bm{\epsilon}_t\sim\mathcal{N}(\bm{0},\bm{I}).$} &
\normalsize \makecell{$\tilde{\bm{\mu}}_\theta(\bm{x}_t,t):=\bm{\mu}_\theta(\bm{x}_t,t)+\lambda_t\nabla_{\bm{x}_t}r(\hat{\bm{x}}_{0|t}).$}
\\[0.5cm]

\paneltitle
    {Search-Based}
    {SVDD~\cite{Li2024:SVDD}, \cref{subsec:search_kernel}}
    {$K$ draws / step}
&
\paneltitle
    {Noise-Tilted}
    {\textbf{Ours}, \cref{subsec:whiten_kernel}}
    {1 draw / step}
\\

\cellimg{ill2-2.png} &
\cellimg{ill3-2.png} \\

\makecell{{\large $\bm{x}_{t-1} = \bm{\mu}_\theta(\bm{x}_t,t) + \sigma_t \bm{\epsilon}_t^\star,$}} &
\makecell{{\large $\bm{x}_{t-1}=\bm{\mu}_\theta(\bm{x}_t,t)+\sigma_t\bm{\tilde{\epsilon}}_t,$}}
\\
\normalsize \makecell{$\bm{\epsilon}_t^\star = \bm{\epsilon}_t^{(i^\star)},\quad i^\star = \underset{i\in\{1,\dots,K\}}{\argmax}\; r(\hat{\bm{x}}_{0|t-1}^{(i)}),$\\
$\bm{x}_{t-1}^{(i)} = \bm{\mu}_\theta(\bm{x}_t,t) + \sigma_t\bm{\epsilon}_t^{(i)}.$} &
\normalsize \makecell{$\bm{\tilde{\epsilon}}_t=\sqrt{\rho_t}\,\mathcal{W}(\nabla_{\bm{x}_t}r(\hat{\bm{x}}_{0|t}))+\sqrt{1-\rho_t}\,\bm{\epsilon}_t.$}
\\

\end{tabularx}

\vspace{0.1\baselineskip}
\caption{
\footnotesize
\textbf{Reverse-kernel guidance mechanisms.}
The green annulus indicates the noise-compatible regime; blue dots show the induced sample distribution.
Mean-shifted guidance injects reward information by shifting the reverse mean, pushing samples outside the noise-compatible regime.
Search-based guidance preserves the base mean by selecting the best among $K$ candidate noise draws.
\Ours{} also preserves the base mean but constructs a single reward-tilted noise draw, achieving reward alignment without leaving the noise-compatible regime.
}
\vspace{-0.65\baselineskip}
\label{fig:method_illust}
\end{figure*}

\clearpage

\subsection{Diffusion Reverse Kernel}
\label{subsec:rev_kernel}
A diffusion model~\cite{Ho:2020DDPM, Song:2021SDE} progressively denoises an initial Gaussian noise by sequentially applying a Markovian reverse kernel, modeled as a Gaussian transition from $\bm{x}_t$ to $\bm{x}_{t-1}$ over discrete timesteps $t = T, \dots, 0$:
\begin{equation}
\label{eq:reverse_kernel}
p_\theta(\bm{x}_{t-1} \mid \bm{x}_t)
=
\mathcal{N}\big(\bm{x}_{t-1};
\bm{\mu}_\theta(\bm{x}_t,t),\sigma_t^2\bm{I}\big),
\end{equation}
where the mean $\bm{\mu}_\theta(\bm{x}_t, t)$ is parameterized using a neural network, and the variance $\sigma_t^2$ controls the amount of stochasticity injected each step.
The visualization of \cref{eq:reverse_kernel} is presented in \cref{fig:method_illust}.

To generate a sample in practice, $\bm{x}_{t-1}$ is drawn from the distribution in \cref{eq:reverse_kernel} using the standard reparameterization trick:
\begin{equation}
    \label{eq:base_sampling}
    \begin{aligned}
    \bm{x}_{t-1}
    &=
    \bm{\mu}_\theta(\bm{x}_t,t)+\sigma_t\bm{\epsilon}_t,\\
    \bm{\epsilon}_t&\sim\mathcal{N}(\bm{0},\bm{I}).
    \end{aligned}
\end{equation}
Since the reverse kernel is parameterized as a Gaussian distribution, any valid transition between consecutive latent states must satisfy the condition that the \textit{standardized perturbation} $\bm{\eta}_t$ is a Gaussian noise:
\begin{equation}
    \label{eq:eta_base}
    \bm{\eta}_t := \frac{\bm{x}_{t-1} - \bm{\mu}_\theta(\bm{x}_t, t)}{\sigma_t}.
\end{equation}
Under the base reverse process, $\bm{\eta}_t = \bm{\epsilon}_t$ holds trivially, so the standardized perturbation is inherently a standard Gaussian noise.

\subsection{Mean-Shifted Reverse Kernel~\cite{Chung:2023DPS}}
\label{subsec:grad_kernel}
For efficient reward alignment, first-order reward information can be injected directly into the reverse transition in \cref{eq:reverse_kernel}.
Concretely, the mean-shifted reverse kernel~\cite{Chung:2023DPS} modifies the base reverse kernel by adding this gradient to its mean:
\begin{equation}
    \label{eq:grad_kernel}
    \bm{x}_{t-1}
    =
    \bm{\mu}_\theta(\bm{x}_t,t)
    +
    \lambda_t\nabla_{\bm{x}_t}r(\hat{\bm{x}}_{0|t})
    +
    \sigma_t \bm{\epsilon}_t,
\end{equation}
where $\bm{\epsilon}_t\sim\mathcal{N}(\bm{0},\bm{I})$, $\lambda_t>0$ is a guidance hyperparameter, and detailed derivations are provided in \cref{subsec:mean-shift_app}.
As illustrated in the top-right panel of \cref{fig:method_illust}, \cref{eq:grad_kernel} injects reward guidance by shifting the reverse-kernel mean while keeping the injected noise term unchanged.

Since its introduction, this mean-shifted reverse kernel has been widely adopted and extended, including advanced sampling schemes~\cite{Yu:2023FreeDOM, he2024mpgd}, particle-based sampling~\cite{Kim:2025DAS, yoon2025:psi}, and applications ranging from images~\cite{Bansal:2023UGD, lee2023:syncdiffusion, Ye:2024TFG} to videos~\cite{kwon2024:video, daras2024:warped}.

Despite its practical success, a closer look at the standardized perturbation reveals a subtle yet critical mismatch with the base reverse dynamics.
Identifying and analyzing this discrepancy is one of the main contributions of this work.
To make the mismatch explicit, consider the standardized perturbation induced by \cref{eq:grad_kernel} relative to the base mean $\bm{\mu}_\theta(\bm{x}_t,t)$:
\begin{equation}
    \label{eq:eta_mean}
    \bm{\eta}_t^\text{mean}
    =
    \bm{\epsilon}_t
    +
    \frac{\lambda_t}{\sigma_t}\nabla_{\bm{x}_t}r(\hat{\bm{x}}_{0|t}).
\end{equation}
The deterministic reward gradient term in \cref{eq:eta_mean} means that the standardized perturbation is no longer a Gaussian noise assumed by the pretrained reverse kernel.
Consequently, the reverse updates can drift away from the learned intermediate distributions, and repeated iterations may push samples toward regions that are insufficiently supported by the pretrained diffusion in practice.
An alternative approach avoids this issue by preserving the base mean and instead biasing the stochastic perturbation, as we describe in the following subsection.

\subsection{Search-Based Reverse Kernel}
\label{subsec:search_kernel}

To approximate the optimal reverse transition in \cref{eq:optimal_policy}, search-based methods draw $K$ candidates from the base reverse kernel, all centered at the pretrained mean:
\begin{equation}
\label{eq:search_kernel}
\bm{x}_{t-1}^{(i)} = \bm{\mu}_\theta(\bm{x}_t,t)+\sigma_t\bm{\epsilon}_t^{(i)},
\quad
\bm{\epsilon}_t^{(i)}\sim\mathcal{N}(\bm{0},\bm{I}),
\end{equation}
where $i=1,\dots,K$. As a representative search-based method, SVDD~\cite{Li2024:SVDD} selects the candidate whose predicted reward score is highest:
\begin{equation}
\label{eq:search_select}
i^\star = \argmax_{i\in\{1,\dots,K\}}\,r\!\left(\hat{\bm{x}}_{0|t-1}^{(i)}\right),
\quad
\bm{x}_{t-1} = \bm{x}_{t-1}^{(i^\star)}.
\end{equation}
Reward information therefore appears through the selected stochastic perturbation $\bm{\epsilon}_t^{(i^\star)}$ rather than through a deterministic mean shift.
As illustrated in \cref{fig:method_illust}, search-based guidance can therefore be interpreted as an implicit form of noise tilting.
However, when high-reward samples are in low-density regions of the pretrained distribution, this becomes inefficient, requiring a prohibitively large number of samples.
We provide a more detailed derivation of this view in \cref{subsec:search_app}.
In the following subsection, we propose a method that retains this advantage of preserving the base mean while reducing the sampling cost to a single draw. 

\subsection{Noise-Tilted Reverse Kernel (\Ours{})}
\label{subsec:whiten_kernel}

Our key idea is to preserve the pretrained reverse mean $\bm{\mu}_\theta(\bm{x}_t,t)$ and inject reward information only through the noise, thereby avoiding the noise-compatibility mismatch of mean-shifted kernels. Unlike search-based selection, which requires drawing $K$ candidates per step, our approach achieves this with a single noise draw.

The challenge is that a raw reward gradient is usually structured and deterministic rather than Gaussian, so it cannot directly serve as a noise-compatible perturbation.
We therefore introduce a whitening operator $\mathcal{W}$ that maps the reward gradient to a noise-compatible direction, and define the whitened reward direction $\bm{w}_t:=\mathcal{W}(\nabla_{\bm{x}_t}r(\hat{\bm{x}}_{0|t}))$.
We formalize $\mathcal{W}$ in \cref{sec:method}.

To incorporate $\bm{w}_t$ into the noise term while preserving its Gaussian form, we rely on a basic identity: if $\bm{\epsilon}_1,\bm{\epsilon}_2\sim\mathcal{N}(\bm{0},\bm{I})$ are independent, then for any $\rho\in[0,1]$,
\begin{equation}
\label{eq:noise_add}
\sqrt{\rho}\,\bm{\epsilon}_1+\sqrt{1-\rho}\,\bm{\epsilon}_2
\sim
\mathcal{N}(\bm{0},\bm{I}).
\end{equation}
We define the guided injected noise by mixing the whitened direction with unbiased stochasticity:
\begin{equation}
\label{eq:tilde_eps}
\bm{\tilde{\epsilon}}_t
=
\sqrt{\rho_t}\,\bm{w}_t
+
\sqrt{1-\rho_t}\,\bm{\epsilon}_t,
\end{equation}
where $\bm{\epsilon}_t \sim \mathcal{N}(\bm{0},\bm{I})$ and $\rho_t\in[0,1]$ controls the guidance strength.
Intuitively, the first term injects a whitened guidance component that carries reward-gradient information, while the second term preserves unbiased stochasticity.
A heuristic interpretation of $\rho_t$ relative to search-based guidance under a local linearity assumption is provided in \cref{subsec:rho_app}.

Using \cref{eq:tilde_eps}, we replace the base sampling rule in \cref{eq:base_sampling}, resulting in our noise-tilted reverse kernel, \Ours{}:
\begin{equation}
\label{eq:noise_kernel}
\bm{x}_{t-1}
=
\bm{\mu}_\theta(\bm{x}_t,t)
+
\sigma_t\bm{\tilde{\epsilon}}_t.
\end{equation}
Most importantly, the standardized noise relative to the base mean becomes
\begin{equation}
\label{eq:eta_ours}
\bm{\eta}_t^{\text{noise}}
:=
\frac{\bm{x}_{t-1}-\bm{\mu}_\theta(\bm{x}_t,t)}{\sigma_t}
=
\bm{\tilde{\epsilon}}_t.
\end{equation}
The kernel is thus \emph{noise-tilted} in that reward guidance is expressed entirely through the noise term rather than the mean, as formalized in \cref{eq:eta_ours}.
Crucially, \cref{eq:noise_kernel} preserves the pretrained reverse-kernel structure, using the same base mean $\bm{\mu}_\theta(\bm{x}_t,t)$ and the same nominal noise scale $\sigma_t$ as \cref{eq:reverse_kernel}.
Consequently, each step remains noise-compatible with the pretrained reverse dynamics.
As shown in the right panel of \cref{fig:method_illust}, samples of $\bm{x}_{t-1}$ are drawn toward higher-reward regions while remaining within the regime where the pretrained model operates reliably.

A core contribution of this work is the construction of the whitening operator $\mathcal{W}$ so that it is statistically well-founded in high dimensions and expressive enough to preserve reward signal.
In \cref{sec:method}, we formalize the required statistical conditions and introduce our confidence-interval projection method to realize $\mathcal{W}$ in practice.

\section{Whitening Operator}
\label{sec:method}

\begin{figure*}[t]
\centering
\setlength{\tabcolsep}{1pt}
\renewcommand{\arraystretch}{0.5}
\small

\newcommand{\cellimg}[1]{\includegraphics[width=\linewidth]{Figures/whitening_process/#1}}

\newcolumntype{C}{>{\centering\arraybackslash}X}

\begin{tabularx}{\linewidth}{@{} c *{4}{C} || *{2}{C} @{}}
\toprule
& Input & \makecell{2OS\\Projection} & +\makecell{Tile-wise\\Statistics} & +\makecell{Multiple\\Domains} & \makecell{Sampled\\Noise $\bm{z}$} & $\mathcal{W}(\bm{z})$ \\

\midrule
\raisebox{1.0ex}{\rotatebox{90}{\scriptsize \makecell[c]{Latent\\Visualization}}} &
\cellimg{original.png} &
\cellimg{clamp2.png} &
\cellimg{mean_var.png} &
\cellimg{ours.png} &
\cellimg{random.png} &
\cellimg{random_filter.png} \\
\raisebox{2.8ex}{\rotatebox{90}{\scriptsize \makecell[c]{Sampled\\Image}}} &
\cellimg{original_img.jpg} &
\cellimg{clamp2_img.jpg} &
\cellimg{mean_var_img.jpg} &
\cellimg{ours_img.jpg} &
\cellimg{random_img.jpg} &
\cellimg{random_filter_img.jpg} \\[-2px]
\bottomrule
\end{tabularx}

\caption{
\footnotesize
\textbf{Effect of our whitening operator $\mathcal{W}$.}
As we apply the components of $\mathcal{W}$ to a structured latent (left to right), it becomes more similar to typical Gaussian noise with more realistic samples. 
For typical Gaussian noise, $\mathcal{W}(\bm{z})$ changes negligibly (two rightmost columns). 
}
\label{fig:whitening_process}
\vspace{-0.25\baselineskip}
\end{figure*}

Our goal is to transform an arbitrary input vector into a \emph{typical} standard Gaussian noise vector.
Although $\mathcal{N}(\bm{0},\bm{I})$ has nonzero density everywhere in $\mathbb{R}^N$, in high dimensions almost all probability mass concentrates on a narrow \emph{typical} region.
Pretrained generative models are trained on trajectories whose injected perturbations lie in this typical region; consequently, perturbations that drift away from it can act as out-of-distribution inputs to the learned reverse dynamics (see \cref{fig:noise_inject_examples}).
We therefore design a whitening operator $\mathcal{W}:\mathbb{R}^N\rightarrow\mathbb{R}^N$ that \emph{moves} the reward gradient direction toward typical standard Gaussian noise before it is mixed into the stochastic term of the reverse kernel.

The typical set is difficult to characterize exactly in closed form.
Instead, we approximate it using a collection of \emph{high-confidence} constraints induced by known statistics of the standard normal distribution.
Concretely, we precompute $99.99\%$ confidence bounds and define $\mathcal{W}$ as a sequence of projections onto the corresponding confidence sets.
A key building block is a two-level order-statistic projection (\textbf{2OS}), which we describe next.

\paragraph{Two-level order statistics.}
Let $\bm{x}\in\mathbb{R}^N$ be the vector to whiten, reshaped into a tile matrix $\bm{Y}\in\mathbb{R}^{M\times D}$ (so $N=MD$).
The 2OS statistic is obtained by sorting \emph{within} each tile and then sorting \emph{across} tiles at each within-tile rank:
\begin{equation}
\bm{Z}=\mathrm{sort}_0\bigl(\mathrm{sort}_1(\bm{Y})\bigr),
\label{eq:whiten_2os_def}
\end{equation}
where $\mathrm{sort}_1$ sorts each row of $\bm{Y}$ and $\mathrm{sort}_0$ sorts each column.
Intuitively, $\bm{Z}$ captures a ``rank-of-rank'' summary: $Z_{r,j}$ is the $r$-th smallest value among the $j$-th order statistics collected from all tiles.
For standard Gaussian noise, each entry $Z_{r,j}$ concentrates sharply, allowing tight confidence bounds for each $(r,j)$.

\paragraph{Confidence bounds for the 2OS statistic.}
Fix $\alpha=10^{-4}$ and let $\Phi$ denote the CDF of the standard normal distribution.
For each rank pair $(r,j)$, we compute quantile bounds $(q^{\mathrm{lo}}_{r,j},q^{\mathrm{hi}}_{r,j})$ from the nested order-statistic distribution, and map them to value-domain bounds
\begin{equation}
(L_{r,j},U_{r,j})
=
\bigl(\Phi^{-1}(q^{\mathrm{lo}}_{r,j}),
\Phi^{-1}(q^{\mathrm{hi}}_{r,j})\bigr).
\label{eq:whiten_2os_quantile_lo}
\end{equation}
The exact Beta-quantile construction is given in \cref{app:whitening_2os}.

These bounds specify a $1-\alpha$ confidence set for the doubly-sorted matrix $\bm{Z}$:
\begin{equation}
\mathcal{C}_{\mathrm{2os}}
=
\{\bm{Y}:L_{r,j}\le Z_{r,j}\le U_{r,j},\ \forall r,j\}.
\label{eq:whiten_confset_2os}
\end{equation}
\paragraph{2OS projection via sort--clip--unsort.}
Given $\bm{Y}$, we compute $\bm{Z}$ as in \cref{eq:whiten_2os_def}, clip each element to its confidence interval
\begin{equation}
    Z_{r,j}\leftarrow\operatorname{clip}(Z_{r,j};L_{r,j},U_{r,j}),
\label{eq:whiten_clip}    
\end{equation}
and then invert the two sorting permutations to map the clipped $\bm{Z}$ back to the original tile layout.
We prove in \cref{app:whitening_2os} that this \emph{sort--clip--unsort} operation equals the Euclidean projection onto $\mathcal{C}_{\mathrm{2os}}$, and we therefore call it the \emph{2OS projection}.

The 2OS projection prevents extreme values from concentrating in a few tiles.
Because clipping is applied to each rank pair $(r,j)$ of the doubly-sorted statistic, each tile contains a balanced spread of small-to-large values consistent with typical standard Gaussian noise.
Applying 2OS additionally to tile-wise mean and energy statistics constrains block-level moments, while repeating across orthogonally transformed domains captures structured correlations that the value-domain projection alone cannot reach.

The full whitening operator $\mathcal{W}$ is composed of 2OS projections applied to various tile-wise statistics across multiple transformed domains, as described in \cref{app:whitening_operator}. We visualize this progression in \cref{fig:whitening_process}.
Since all constraints are derived from $99.99\%$ confidence intervals under the reference distribution, typical standard Gaussian noise passes through with negligible modification in practice (cosine similarity $>0.99999$; see the two rightmost columns in \cref{fig:whitening_process}), whereas structured inputs are substantially whitened, yielding a perturbation that is noise-compatible with the pretrained reverse kernel.

\begin{table*}[t!]
\centering
\caption{
\footnotesize
\textbf{Quantitative comparison on image reward alignment.}
Left: aesthetic image generation (target: Aesthetic Score~\cite{Schuhmann:aesthetics}).
Right: text-aligned image generation (target: PickScore~\cite{Kirstain2023:pickapic}).
For single-particle methods we augment sampling with Best-of-N to match the total NFE, denoted with \textsuperscript{\dag}.
Dark green cells indicate the best result for each metric, light green the second best.
}
\label{tab:aesthetic_full}
\label{tab:pick_full}
\renewcommand{\arraystretch}{1.10}
\setlength{\tabcolsep}{3.5pt}
\footnotesize
\resizebox{\textwidth}{!}{%
\begin{tabular}{@{} l c | ccccc | ccccc @{}}
\toprule

\multirow{3}{*}[-3.0ex]{\textbf{Method}}
& \multirow{3}{*}[-3.0ex]{\textbf{NFE}}
& \multicolumn{5}{c|}{\normalsize \textbf{Aesthetic Image Generation}}
& \multicolumn{5}{c}{\normalsize \textbf{Text-Aligned Image Generation}} \\

\cmidrule(lr){3-7} \cmidrule(lr){8-12}

&
& \multicolumn{1}{c}{\makecell{\textbf{Target}\\\textbf{Reward}}}
& \multicolumn{4}{c|}{\textbf{Held-Out Reward}}
& \multicolumn{1}{c}{\makecell{\textbf{Target}\\\textbf{Reward}}}
& \multicolumn{4}{c}{\textbf{Held-Out Reward}} \\

\cmidrule(lr){3-3} \cmidrule(lr){4-7} \cmidrule(lr){8-8} \cmidrule(lr){9-12}

&
& \makecell{\textbf{Aesthetic}\\\textbf{Score}} $\uparrow$
& \makecell{\textbf{Pick}\\\textbf{-Score}} $\uparrow$
& \textbf{HPSv2} $\uparrow$
& \makecell{\textbf{Image}\\\textbf{Reward}} $\uparrow$
& \makecell{\textbf{VQA}\\\textbf{Score}} $\uparrow$
& \makecell{\textbf{Pick}\\\textbf{-Score}} $\uparrow$
& \makecell{\textbf{Aesthetic}\\\textbf{Score}} $\uparrow$
& \textbf{HPSv2} $\uparrow$
& \makecell{\textbf{Image}\\\textbf{Reward}} $\uparrow$
& \makecell{\textbf{VQA}\\\textbf{Score}} $\uparrow$ \\

\midrule

Base~\cite{flux2024}
& 25
& 6.0282 & 0.2144 & 0.2759 & 1.0538 & 0.9644
& 0.2054 & 5.4664 & 0.2316 & 0.1710 & 0.8011 \\

BoN~\cite{Stiennon:2020BoN}
& 500
& 6.7310 & \cellcolor{lightgreen}0.2197 & 0.2890 & 1.1419 & 0.9597
& 0.2146 & 5.8582 & 0.2619 & 0.6883 & 0.8021 \\

DPS\textsuperscript{\dag}~\cite{Chung:2023DPS}
& 500
& 6.7647 & 0.2191 & 0.2861 & 1.0639 & 0.9624
& 0.2147 & 5.8073 & 0.2622 & 0.6310 & 0.8028 \\

FreeDoM\textsuperscript{\dag}~\cite{Yu:2023FreeDOM}
& 533
& 6.8406 & 0.2185 & 0.2853 & 0.9941 & 0.9635
& 0.2133 & 5.8492 & 0.2572 & 0.5354 & 0.7990 \\

SVDD~\cite{Li2024:SVDD}
& 500
& 7.1363 & 0.2177 & 0.2814 & 1.0256 & 0.9510
& 0.2204 & \cellcolor{lightgreen}5.8743 & \cellcolor{lightgreen}0.2699 & \cellcolor{darkgreen}0.7592 & \cellcolor{lightgreen}0.8201 \\

RBF~\cite{kim2025:rbf}
& 500
& 6.9900 & 0.2183 & 0.2826 & 1.0761 & 0.9689
& 0.2202 & 5.8618 & 0.2682 & \cellcolor{lightgreen}0.7583 & 0.8149 \\

DAS~\cite{Kim:2025DAS}
& 500
& 6.9384 & 0.2183 & 0.2860 & 1.0568 & 0.9706
& 0.2139 & 5.8385 & 0.2568 & 0.5226 & 0.7990 \\

$\Psi$-Sampler~\cite{yoon2025:psi}
& 500
& 7.0116 & 0.2188 & 0.2847 & 1.1235 & \cellcolor{darkgreen}0.9737
& 0.2120 & 5.7329 & 0.2551 & 0.4590 & 0.8145 \\

\midrule

\textbf{{\Oursbf{}} (Ours)}
& 25
& \cellcolor{lightgreen}7.4510 & \cellcolor{darkgreen}0.2200 & \cellcolor{lightgreen}0.2928 & \cellcolor{darkgreen}1.2565 & \cellcolor{lightgreen}0.9728
& \cellcolor{lightgreen}0.2224 & 5.7720 & 0.2601 & 0.5257 & \cellcolor{darkgreen}0.8210 \\

{\Oursbf{}}\textsuperscript{\dag} \textbf{(Ours)}
& 500
& \cellcolor{darkgreen}7.9656 & \cellcolor{lightgreen}0.2197 & \cellcolor{darkgreen}0.2932 & \cellcolor{lightgreen}1.1669 & 0.9609
& \cellcolor{darkgreen}0.2327 & \cellcolor{darkgreen}5.9020 & \cellcolor{darkgreen}0.2817 & 0.7370 & 0.8017 \\

\bottomrule
\end{tabular}%
}
\end{table*}

\section{Experiments}
\label{sec:exp}
In this section, we present experimental results demonstrating the effectiveness of \Ours{} compared to prior baselines, with the experimental setup described in \cref{subsec:exp_setup}. We first evaluate aesthetic image generation and text-aligned image generation in \cref{subsec:aesthetic} and \cref{subsec:human-preference-image}, respectively. We then extend the evaluation to preference-aligned video generation in \cref{subsec:human-preference-video}.
Finally, we show that our method can also be applied on top of fine-tuned models in \cref{subsec:fine-tune-plug}.
We provide additional applications, including counting tasks and VLM-based reward alignment, in \cref{app:counting,app:vlm_reward_alignment}, and alignment results using a different diffusion model~\cite{cai2025:zimage} in \cref{app:zimage_results}.
Additional quantitative and qualitative results for the main experiments are provided in \cref{app:additional_main}.

\subsection{Experiment Setup}
\label{subsec:exp_setup}
\paragraph{Tasks.} We evaluate \Ours{} across three reward-guided generation settings in the main text: aesthetic image generation, text-aligned image generation, and preference-aligned video generation.
For aesthetic image generation, we use $45$ animal prompts from previous work, DDPO~\cite{Black2024:DDPO}. For text-aligned image generation, we use $100$ prompts in complex category of T2I-CompBench++~\cite{Huang:2025T2ICompBench++}. For preference-aligned video generation, we use 200 prompts from VBench~\cite{huang2023:vbench} animal and scenery categories.

For image and video generation applications, we use FLUX~\cite{flux2024} and Wan2.1~\cite{wan2025} as the base flow models, respectively.
In all experiments, we fix the sampling steps to $25$.
As a reference, we also include the results of the base models without any guidance method applied.

\begin{figure*}[t!]
\centering
\setlength{\tabcolsep}{0pt}
\renewcommand{\arraystretch}{1.05}
\scriptsize

\begin{tabular}{@{}
>{\centering\arraybackslash}p{0.03\textwidth}
*{6}{>{\centering\arraybackslash}m{0.161\textwidth}}
@{}}
\toprule

&
Base~\cite{flux2024} &
BoN~\cite{Stiennon:2020BoN} &
DPS\textsuperscript{\dag}~\cite{Chung:2023DPS} &
SVDD~\cite{Li2024:SVDD} &
$\Psi$-Sampler~\cite{yoon2025:psi} &
{\Oursbf{}}\textsuperscript{\dag} \textbf{(Ours)} \\

\midrule

\multirow{6}{0.03\textwidth}[-20pt]{\centering\rotatebox{90}{\normalsize \textbf{Aesthetic Image Generation}}}
& \multicolumn{6}{@{}c@{}}{\textit{``Sheep''}} \\[2pt]
& \includegraphics[width=0.162\textwidth]{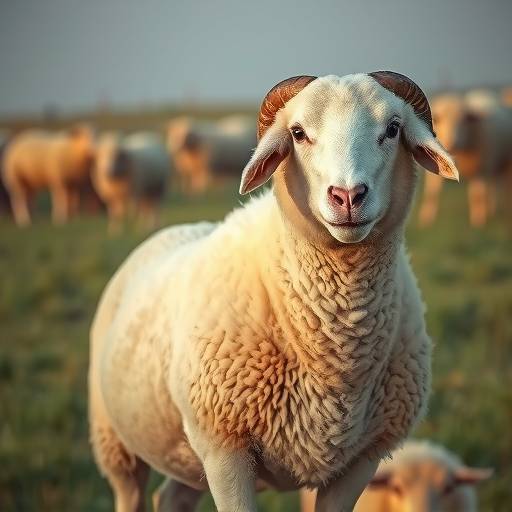} &
  \includegraphics[width=0.162\textwidth]{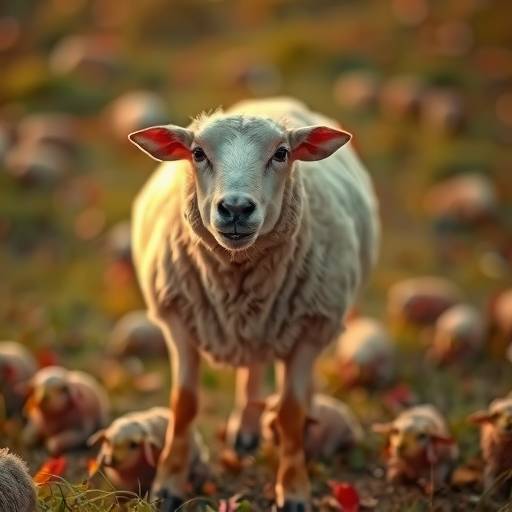} &
  \includegraphics[width=0.162\textwidth]{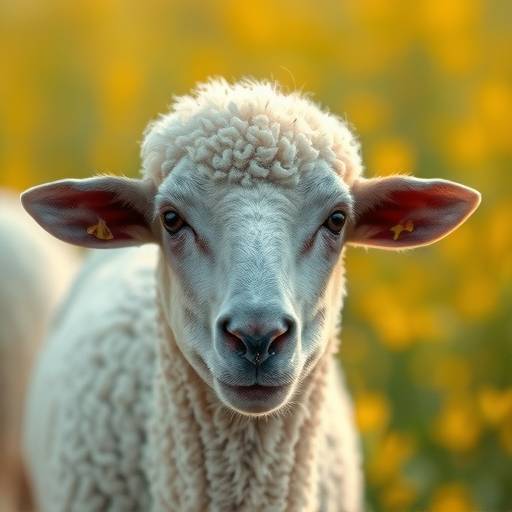} &
  \includegraphics[width=0.162\textwidth]{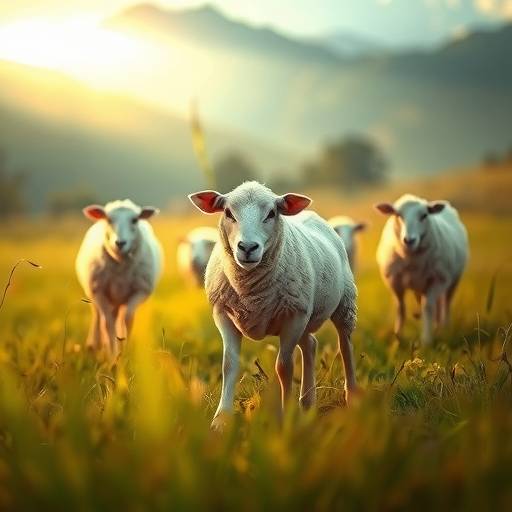} &
  \includegraphics[width=0.162\textwidth]{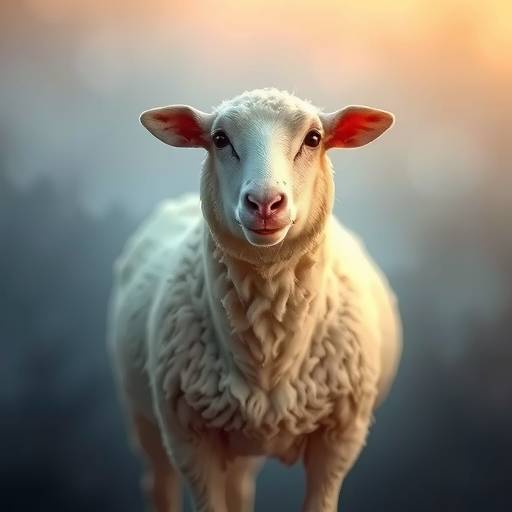} &
  \includegraphics[width=0.162\textwidth]{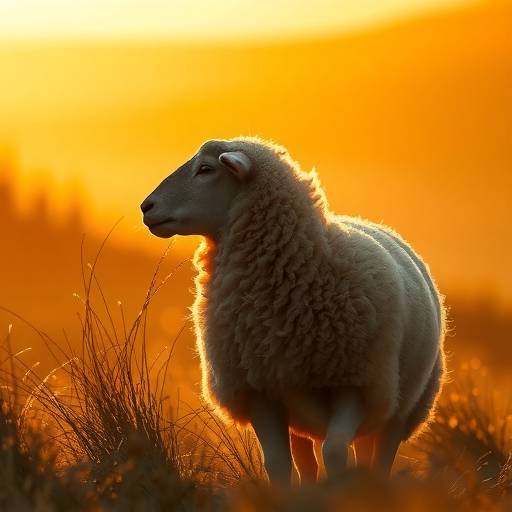} \\
& \textit{6.5477} & \textit{6.6494} & \textit{6.9178} & \textit{6.8442} & \textit{7.0671} & \textit{8.5842} \\[3pt]
& \multicolumn{6}{@{}c@{}}{\textit{``Frog''}} \\[2pt]
& \includegraphics[width=0.162\textwidth]{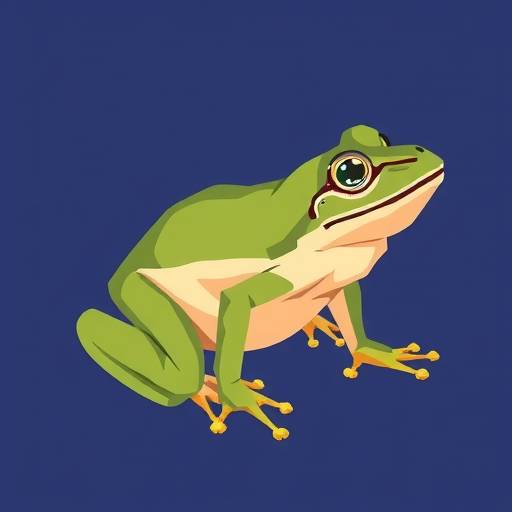} &
  \includegraphics[width=0.162\textwidth]{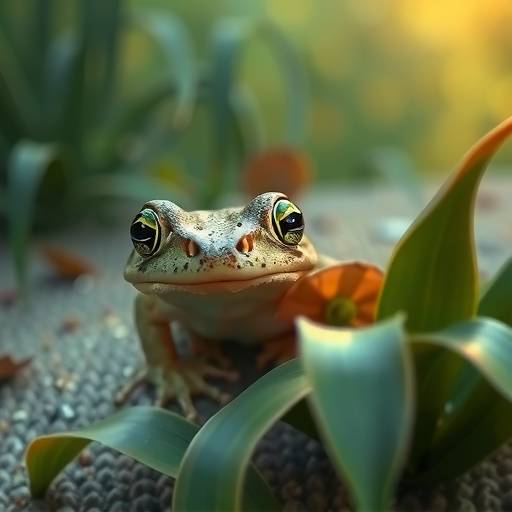} &
  \includegraphics[width=0.162\textwidth]{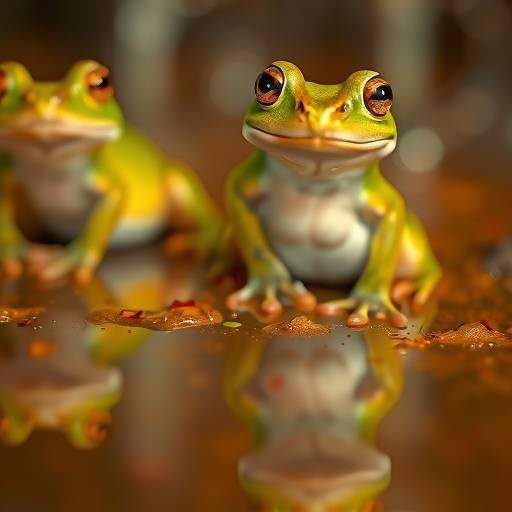} &
  \includegraphics[width=0.162\textwidth]{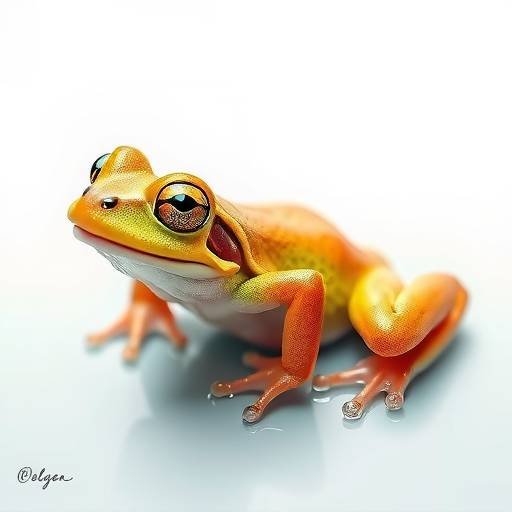} &
  \includegraphics[width=0.162\textwidth]{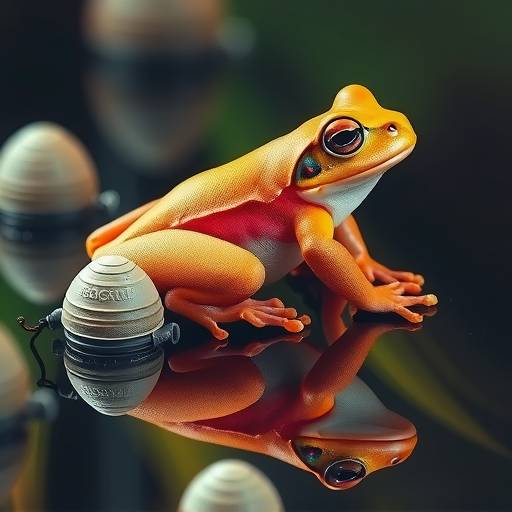} &
  \includegraphics[width=0.162\textwidth]{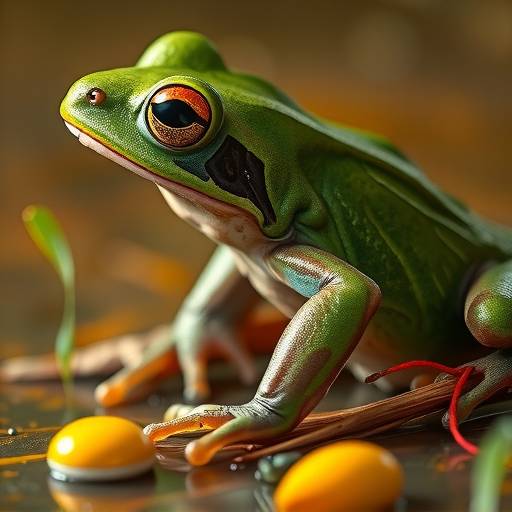} \\
& \textit{6.0279} & \textit{7.1517} & \textit{6.9119} & \textit{7.0071} & \textit{7.1417} & \textit{8.6239} \\

\midrule

\multirow{6}{0.03\textwidth}[-16pt]{\centering\rotatebox{90}{\normalsize \textbf{Text-Aligned Image Generation}}}
& \multicolumn{6}{@{}p{0.95\textwidth}@{}}{\textit{\makecell{``The yellow cone was suspended in mid-air near the orange pyramid and the green cylinder.''}}} \\[2pt]
& \includegraphics[width=0.162\textwidth]{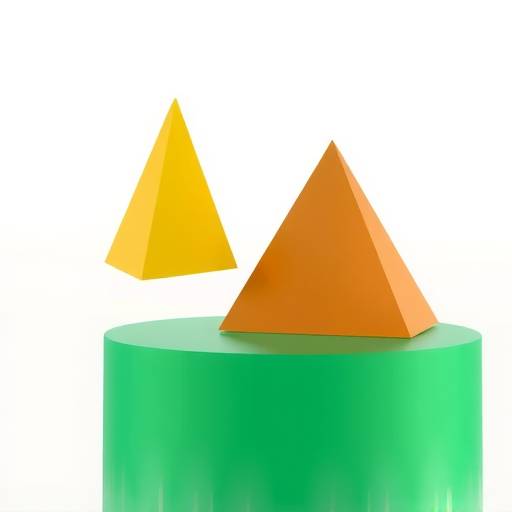} &
  \includegraphics[width=0.162\textwidth]{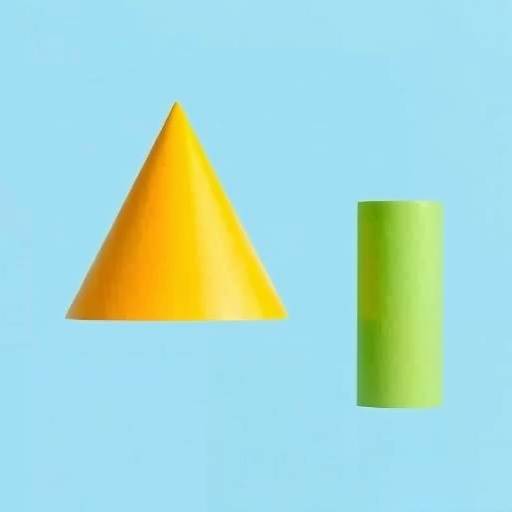} &
  \includegraphics[width=0.162\textwidth]{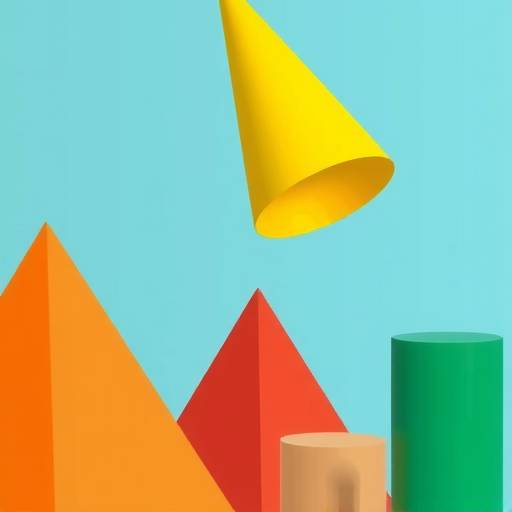} &
  \includegraphics[width=0.162\textwidth]{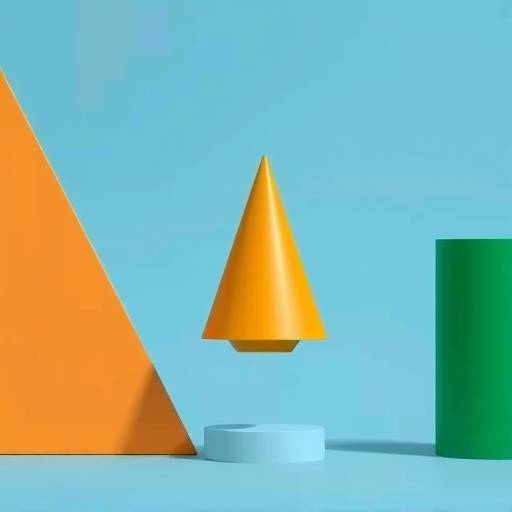} &
  \includegraphics[width=0.162\textwidth]{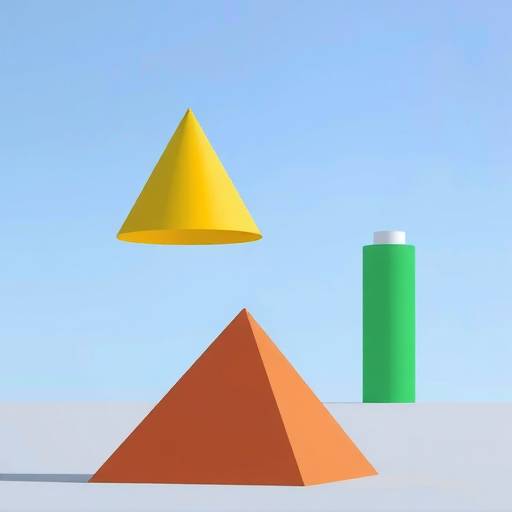} &
  \includegraphics[width=0.162\textwidth]{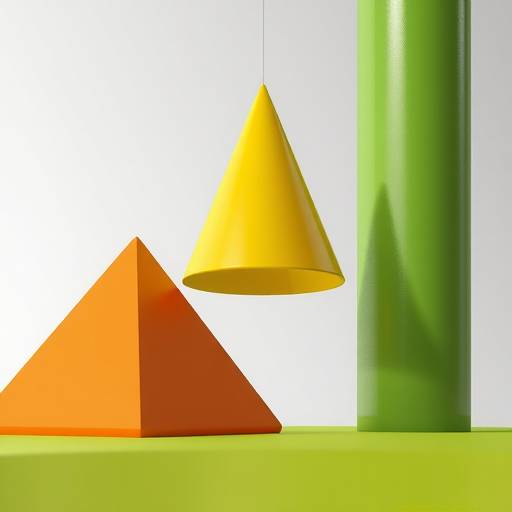} \\
& \textit{0.2190} & \textit{0.2227} & \textit{0.2235} & \textit{0.2328} & \textit{0.2224} & \textit{0.2372} \\[3pt]
& \multicolumn{6}{@{}p{0.95\textwidth}@{}}{\textit{\makecell{``The soft, billowing curtains fluttered in the gentle breeze, adding a touch of elegance to the room.''}}} \\[2pt]
& \includegraphics[width=0.162\textwidth]{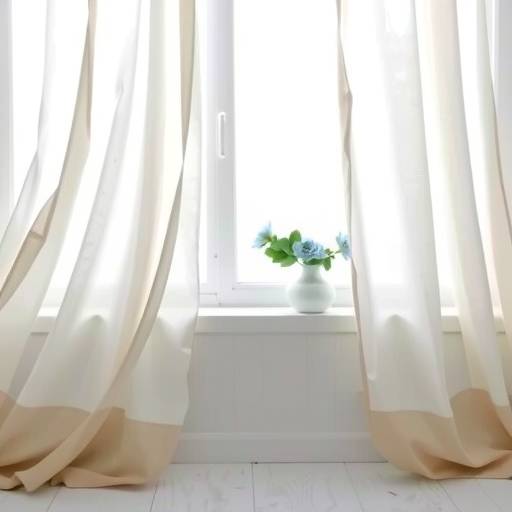} &
  \includegraphics[width=0.162\textwidth]{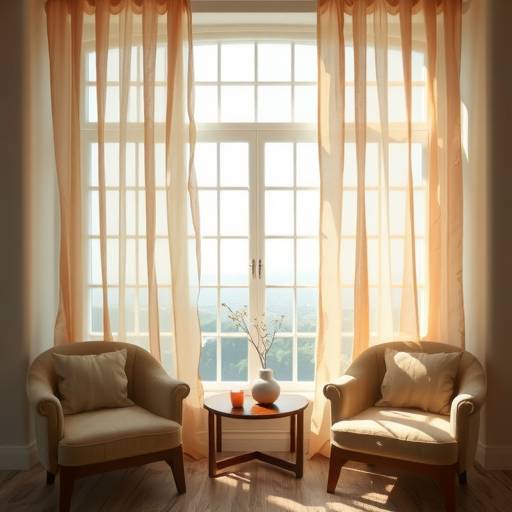} &
  \includegraphics[width=0.162\textwidth]{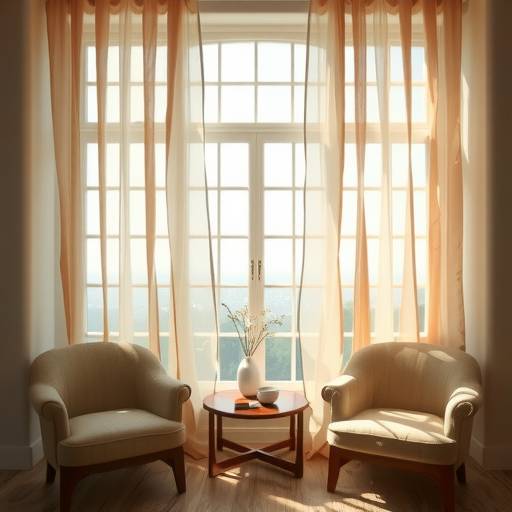} &
  \includegraphics[width=0.162\textwidth]{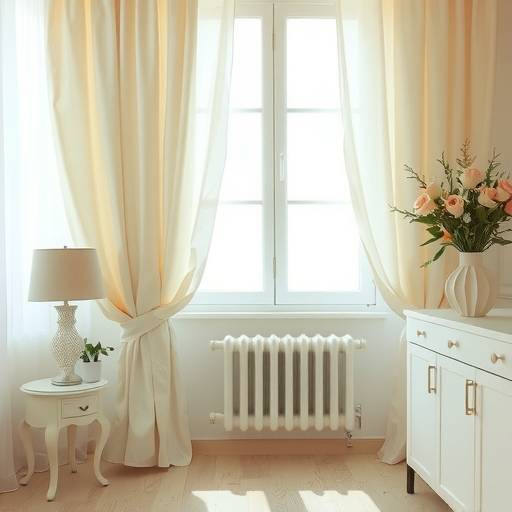} &
  \includegraphics[width=0.162\textwidth]{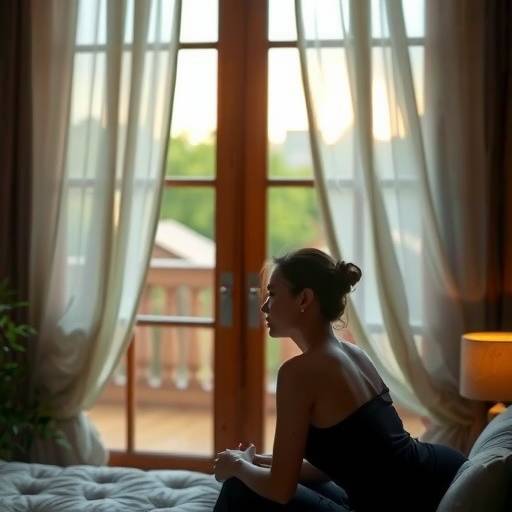} &
  \includegraphics[width=0.162\textwidth]{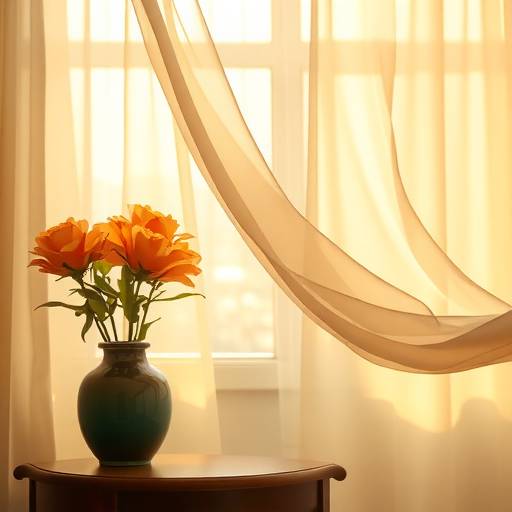} \\
& \textit{0.2070} & \textit{0.2126} & \textit{0.2122} & \textit{0.2146} & \textit{0.2055} & \textit{0.2313} \\

\bottomrule
\end{tabular}

\caption{
\footnotesize
\textbf{Qualitative comparison on image reward alignment.}
Top: aesthetic image generation (target: Aesthetic Score~\cite{Schuhmann:aesthetics}).
Bottom: text-aligned image generation (target: PickScore~\cite{Kirstain2023:pickapic}).
Scores shown in italics below each image.
For single-particle methods we augment sampling with Best-of-N to match the total NFE, denoted with \textsuperscript{\dag}.
}
\label{fig:aesthetic_qualitative}
\label{fig:text_align_qualitative}
\end{figure*}

\paragraph{Baselines.}
We compare \Ours{} against a range of inference-time reward-alignment algorithms discussed in~\cref{sec:related_work}, including both gradient-based guidance methods and search-based approaches. Specifically, we consider DPS~\cite{Chung:2023DPS} and FreeDoM~\cite{Yu:2023FreeDOM} as single-particle gradient-based methods, DAS~\cite{Kim:2025DAS} and $\Psi$-Sampler~\cite{yoon2025:psi} as multi-particle gradient-based methods, and SVDD~\cite{Li2024:SVDD}, RBF~\cite{kim2025:rbf}, and \textsc{BoN}~\cite{Stiennon:2020BoN} as search-based methods.
We additionally compare against DNO~\cite{tang2025dno}, a noise-optimization method with a distinct NFE budget structure, in \cref{app:dno_comparison}.

Note that multi-particle and search methods utilize multiple samples during sampling, whereas single-particle methods produce only a single trajectory.
For fair comparison, we fix the total number of function evaluations (NFE) across all methods.
In particular, for single-particle sampling methods such as DPS and FreeDoM, we augment sampling with Best-of-N (BoN)~\cite{Stiennon:2020BoN}, which runs multiple independent sampling processes and selects the highest-reward output, ensuring that the overall NFE is comparable across methods.
\footnote{FreeDoM~\cite{Yu:2023FreeDOM} incorporates additional MCMC sampling, which increases the computational cost.} For all quantitative results, methods augmented with BoN are marked with \textsuperscript{\dag}.
For \Ours{}, we report results both with and without BoN to isolate the effect of the proposed method.
Implementation details and hyperparameter settings are summarized in \cref{app:imple_details}.

\subsection{Aesthetic Image Generation}
\label{subsec:aesthetic}

\paragraph{Evaluation Metrics.}
In this work, we refer to the reward used for inference-time optimization as the \emph{target reward}, and to rewards not observed during optimization as \emph{held-out rewards}. In this task, the target reward is Aesthetic Score~\cite{Schuhmann:aesthetics}. As held-out rewards, we evaluate image quality using ImageReward~\cite{Xu2023:ImageReward} and HPSv2~\cite{wu2023:hpsv2}, and text–image alignment using PickScore~\cite{Kirstain2023:pickapic} and VQA Score~\cite{lin2024:vqa}.

\paragraph{Results.}
The quantitative and qualitative results are presented in~\cref{tab:aesthetic_full} and~\cref{fig:aesthetic_qualitative}, respectively. Overall, \Ours{} achieves the best target reward performance across all methods, and even outperforms all baselines with only $1/20$ of NFE.
On held-out rewards, \Ours{} also delivers the best image quality for both ImageReward and HPSv2, while remaining comparable on text-image alignment metrics.
The qualitative examples in~\cref{fig:aesthetic_qualitative} further support these trends.
Across different prompts, \Ours{} produces more visually appealing samples than the baselines, achieving the highest rewards~\cite{Schuhmann:aesthetics}.

\begin{table*}[t!]
\centering
\caption{
\footnotesize
\textbf{Quantitative comparison on preference-aligned video generation.}
The target reward is VideoReward~\cite{liu2025:videoalign}, and the held-out rewards are the metrics proposed in VBench~\cite{huang2023:vbench}.
Dark green cells indicate the best result for each metric across all runs, while light green cells denote the second best.
}
\label{tab:video_reward}
\renewcommand{\arraystretch}{1.18}
\setlength{\tabcolsep}{2.2pt}
\scriptsize

\resizebox{\textwidth}{!}{%
\begin{tabular}{l c c c c c c c c}
\toprule

\multirow{3}{*}[-2.0ex]{\textbf{Method}}
& \multirow{3}{*}[-2.0ex]{\textbf{NFE}}
& \textbf{Target Reward}
& \multicolumn{6}{c}{\textbf{Held-Out Reward}} \\

\cmidrule(lr){3-3} \cmidrule(lr){4-9}

&
&
\multirow{2}{*}[-1.0ex]{\shortstack{\textbf{VideoReward} $\uparrow$}}
& \multicolumn{2}{c}{\textbf{Motion Quality}}
& \multicolumn{2}{c}{\textbf{Visual Quality}}
& \multicolumn{2}{c}{\textbf{Text Alignment}} \\

\cmidrule(lr){4-5} \cmidrule(lr){6-7} \cmidrule(lr){8-9}

&
&
& Smooth.$\uparrow$ 
& Dynamic$\uparrow$
& Aesthetic$\uparrow$
& Imaging.$\uparrow$
& Subject$\uparrow$
& Back.$\uparrow$ \\ 

\midrule

Base~\cite{wan2025}
& 25
& -0.399 
& 0.9629 
& 0.9300 
& \cellcolor{lightgreen}0.6104 
& 0.6791 
& \cellcolor{lightgreen}0.9589 
& \cellcolor{darkgreen}0.9647 \\

DPS~\cite{Chung:2023DPS}
& 25
& \cellcolor{lightgreen}-0.130 
& 0.9621 
& \cellcolor{lightgreen}0.9350 
& 0.5867 
& 0.6738 
& 0.9398 
& 0.9473 \\

FreeDoM~\cite{Yu:2023FreeDOM}
& 25
& -0.211 
& \cellcolor{darkgreen}0.9632 
& 0.6900 
& 0.5880 
& \cellcolor{darkgreen}0.6897 
& 0.9586 
& 0.9570 \\

{\Oursbf{}} \textbf{(Ours)}
& 25
& \cellcolor{darkgreen}3.465 
& \cellcolor{lightgreen}0.9630 
& \cellcolor{darkgreen}0.9500 
& \cellcolor{darkgreen}0.6120 
& \cellcolor{lightgreen}0.6870 
& \cellcolor{darkgreen}0.9591 
& \cellcolor{lightgreen}0.9646 \\

\bottomrule
\end{tabular}%
}
\end{table*}
\begin{figure*}[t!]
\centering
\setlength{\tabcolsep}{1pt}
\renewcommand{\arraystretch}{1.05}
\scriptsize

\newcommand{\methodw}{0.11\textwidth}
\newcommand{\framew}{0.142\textwidth}
\newcommand{\gapw}{0.001\textwidth}

\begin{tabular}{@{}%
>{\centering\arraybackslash}m{\methodw}
>{\centering\arraybackslash}m{\framew}
>{\centering\arraybackslash}m{\framew}
>{\centering\arraybackslash}m{\framew}
m{\gapw}
>{\centering\arraybackslash}m{\framew}
>{\centering\arraybackslash}m{\framew}
>{\centering\arraybackslash}m{\framew}
@{}}
\toprule

& \multicolumn{3}{c}{\scriptsize\textit{``Turkey in cage.''}}
& 
& \multicolumn{3}{c}{\scriptsize\textit{``Mother whale swimming with babies.''}} \\

Base~\cite{wan2025}
& \includegraphics[width=\framew]{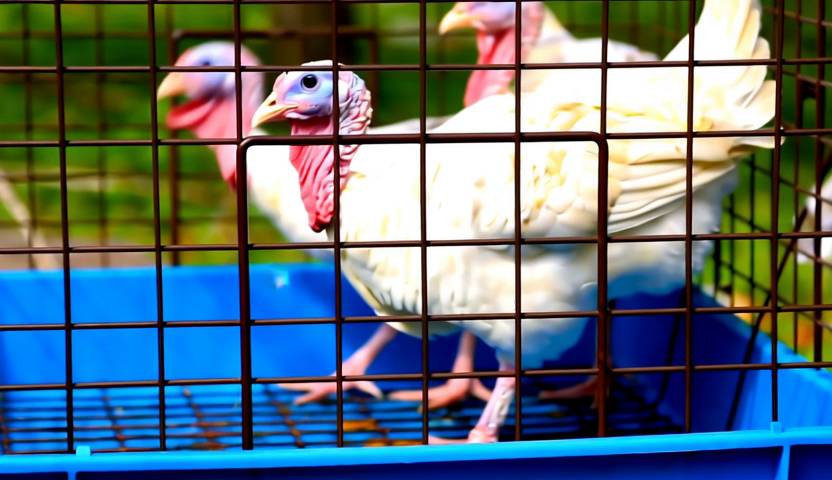}
& \includegraphics[width=\framew]{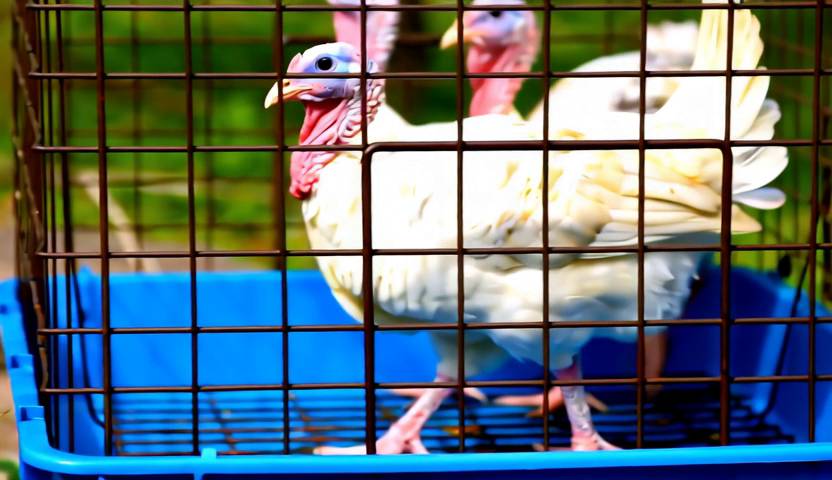}
& \includegraphics[width=\framew]{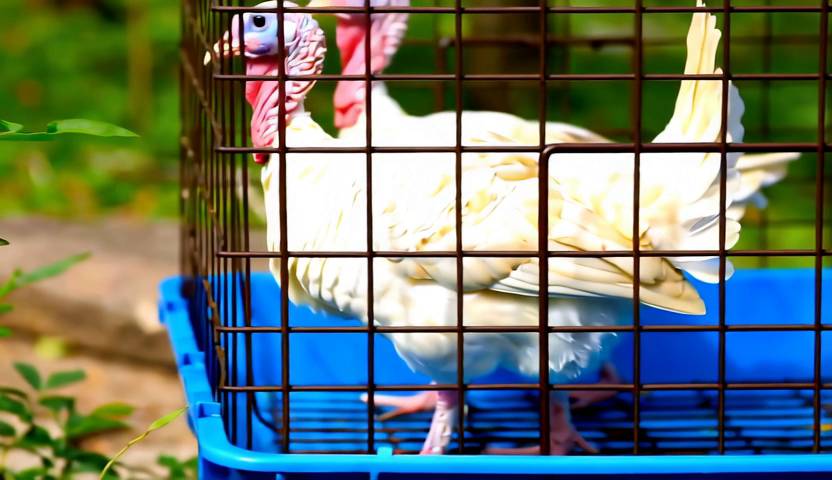}
&
& \includegraphics[width=\framew]{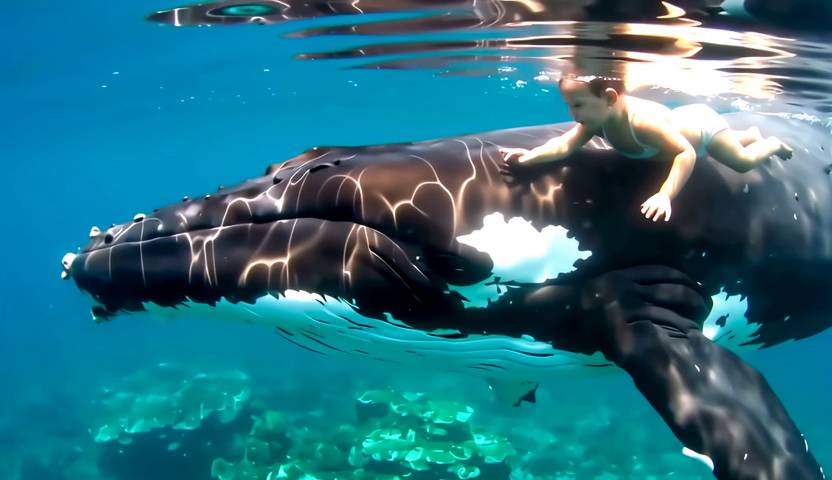}
& \includegraphics[width=\framew]{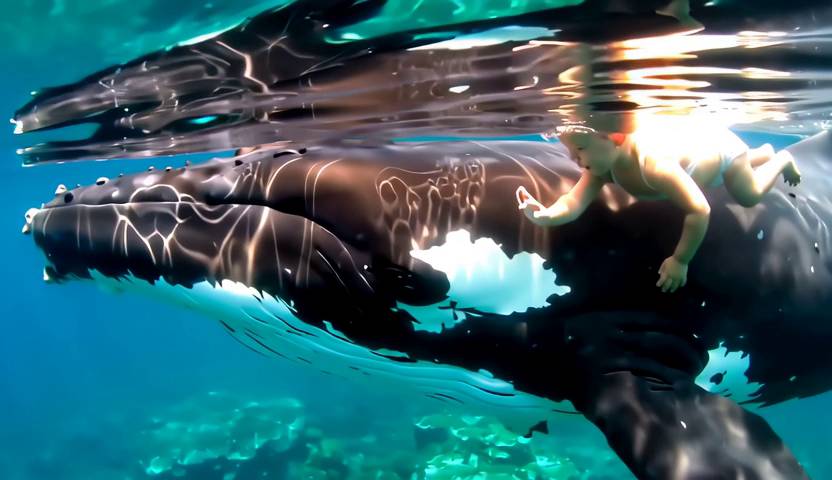}
& \includegraphics[width=\framew]{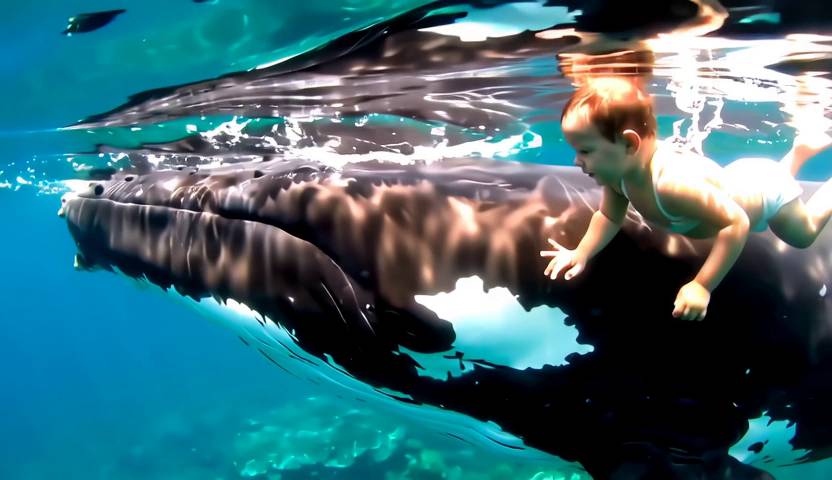} \\

DPS~\cite{Chung:2023DPS}
& \includegraphics[width=\framew]{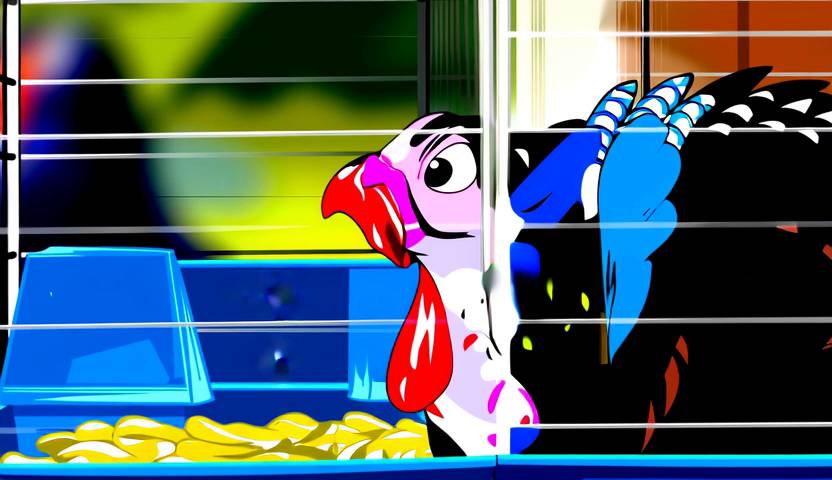}
& \includegraphics[width=\framew]{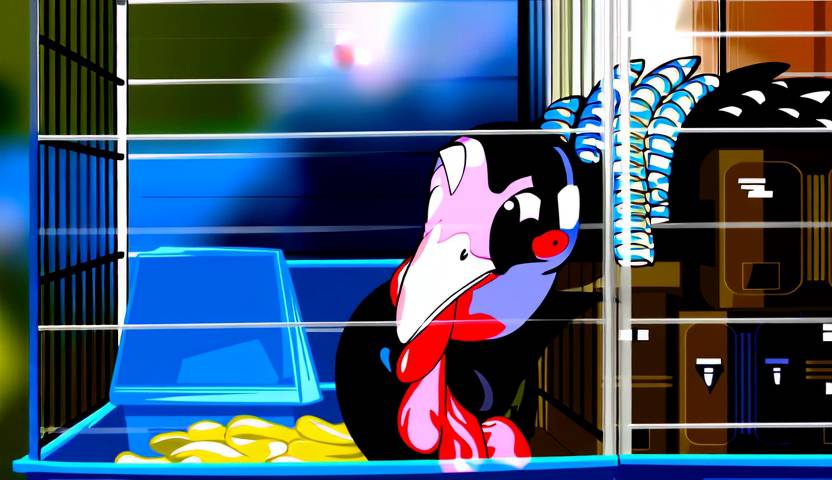}
& \includegraphics[width=\framew]{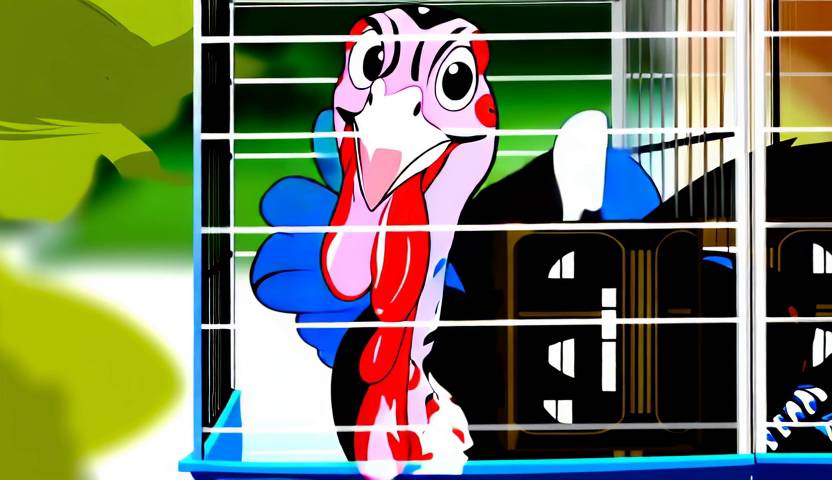}
&
& \includegraphics[width=\framew]{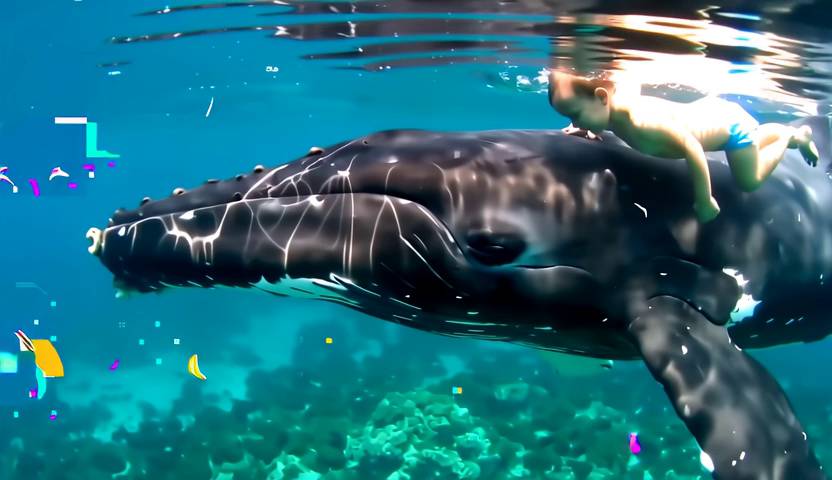}
& \includegraphics[width=\framew]{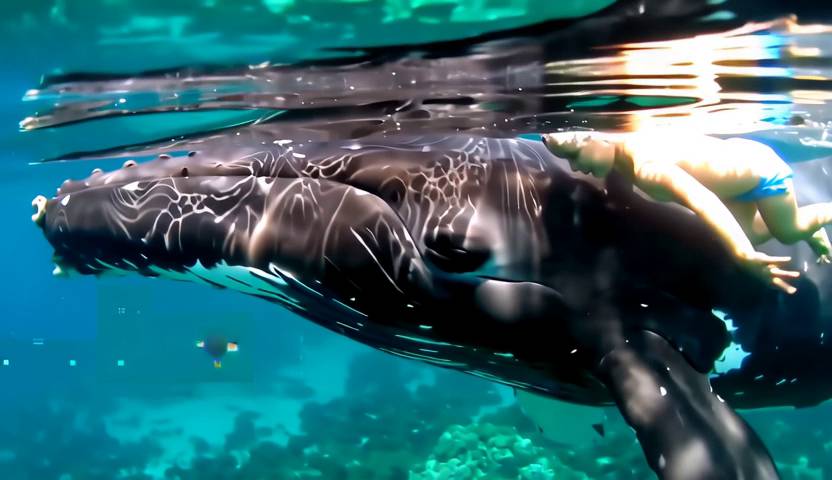}
& \includegraphics[width=\framew]{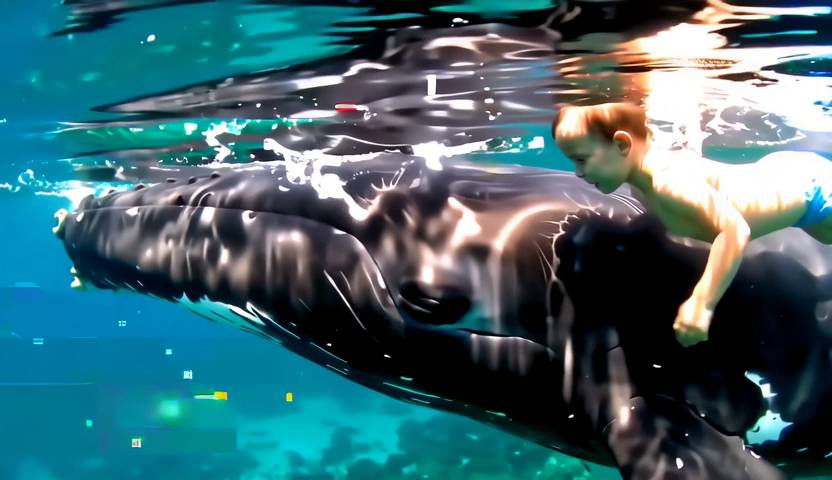} \\

\makecell{FreeDoM \\ \cite{Yu:2023FreeDOM}}
& \includegraphics[width=\framew]{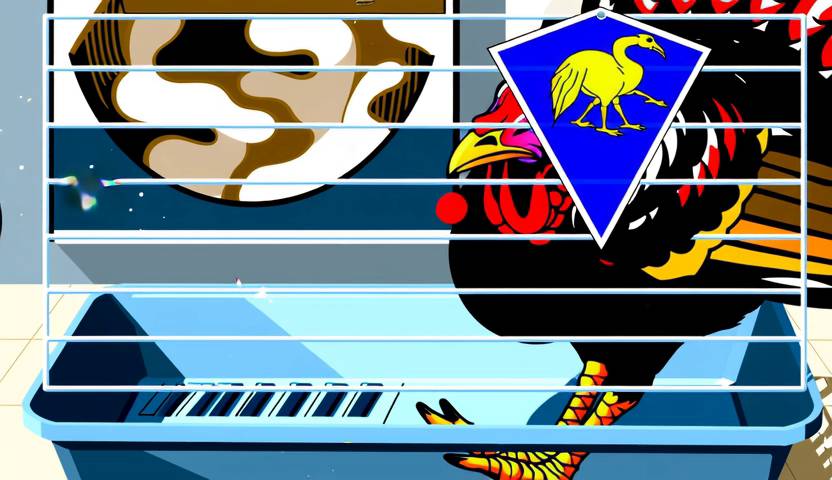}
& \includegraphics[width=\framew]{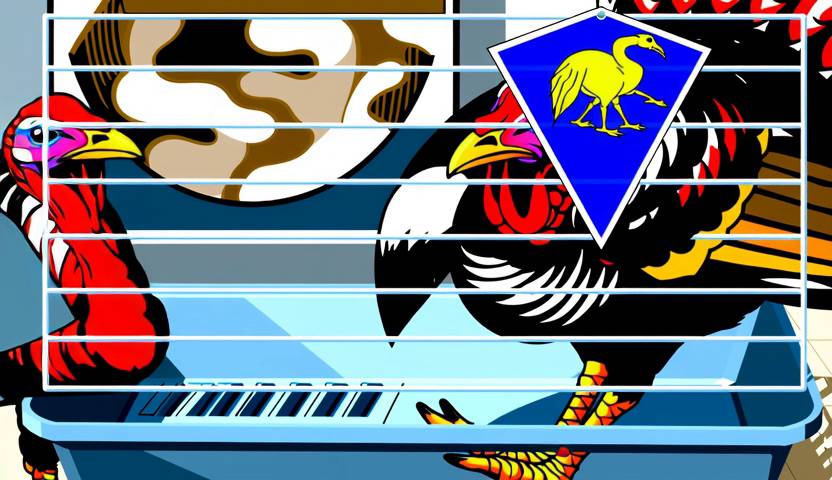}
& \includegraphics[width=\framew]{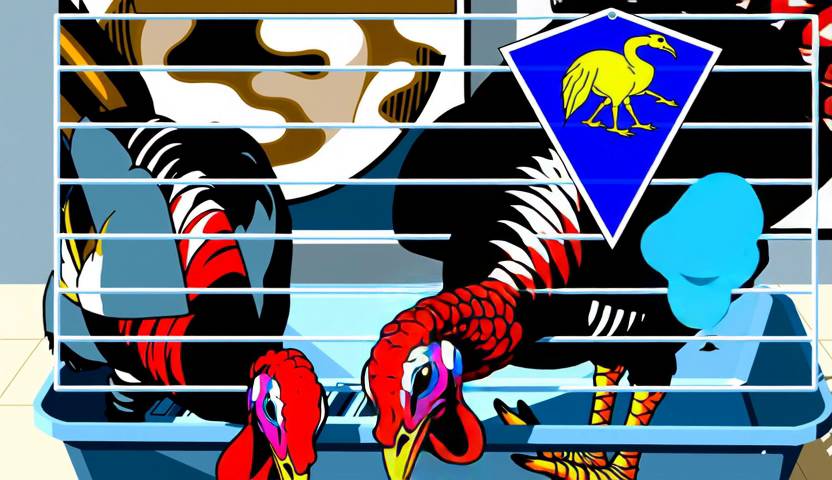}
&
& \includegraphics[width=\framew]{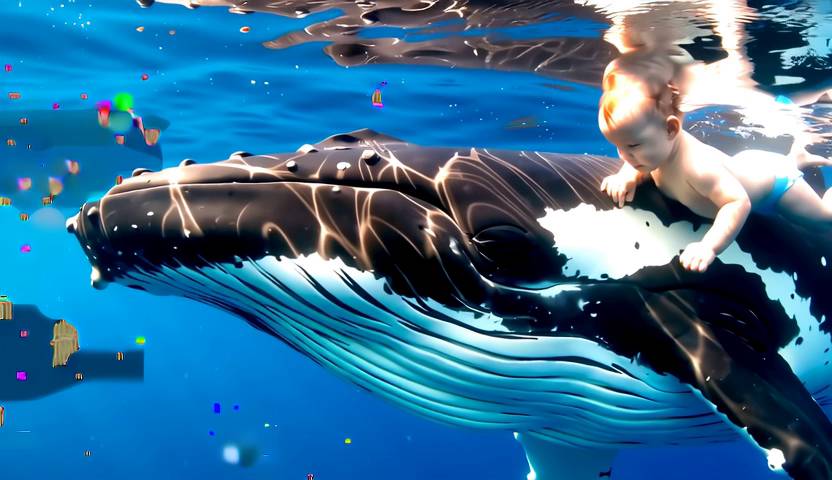}
& \includegraphics[width=\framew]{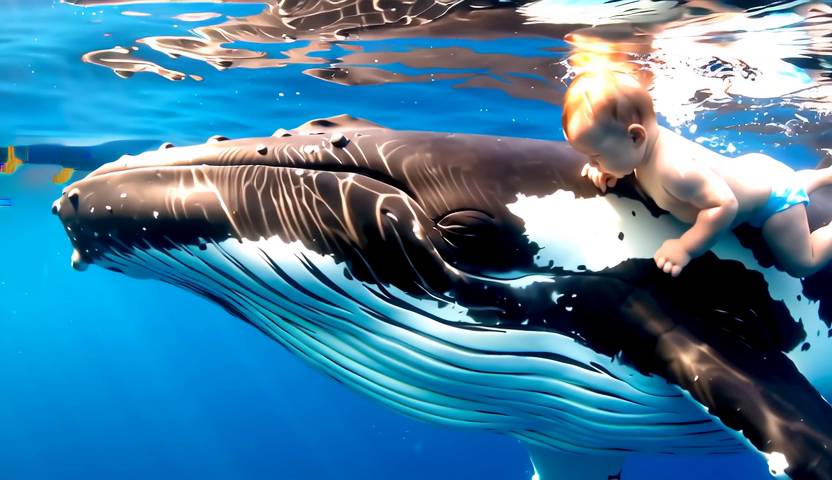}
& \includegraphics[width=\framew]{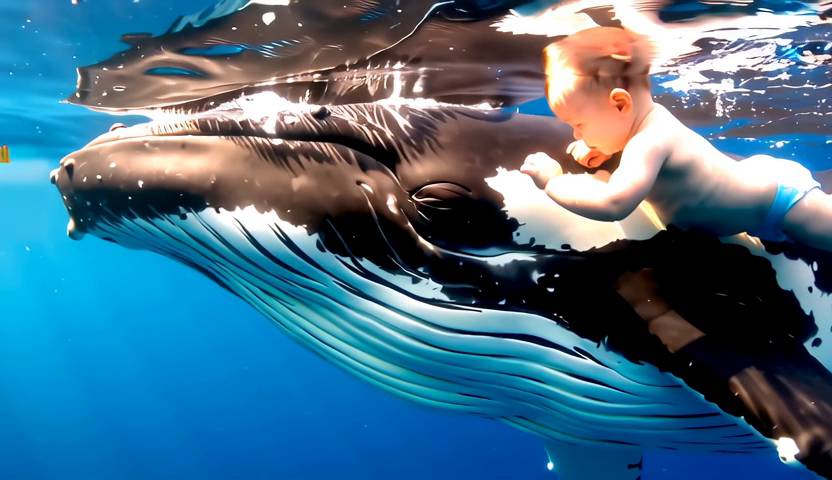} \\

{\Oursbf{}} \textbf{(Ours)}
& \includegraphics[width=\framew]{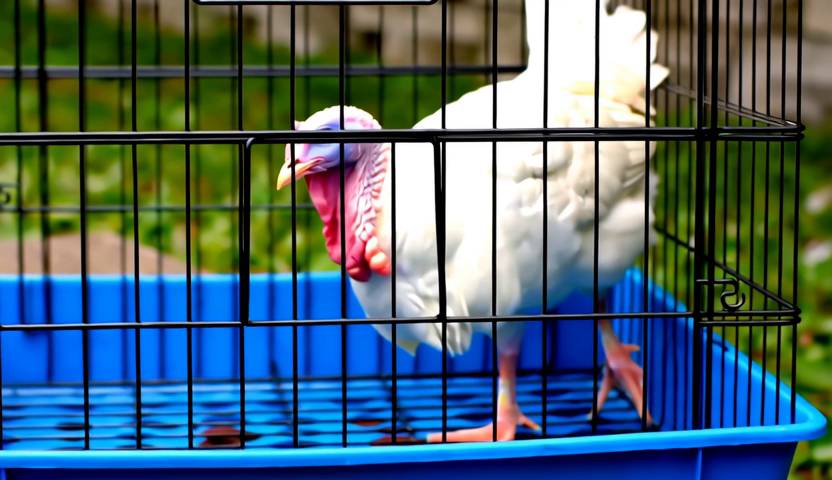}
& \includegraphics[width=\framew]{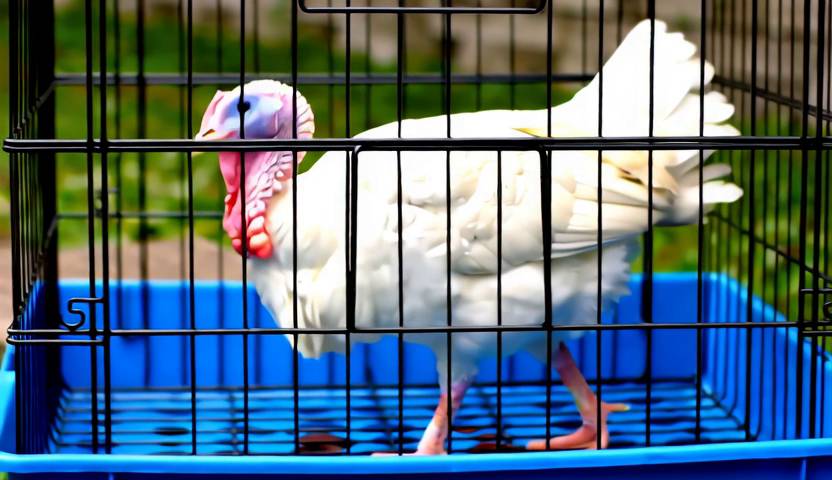}
& \includegraphics[width=\framew]{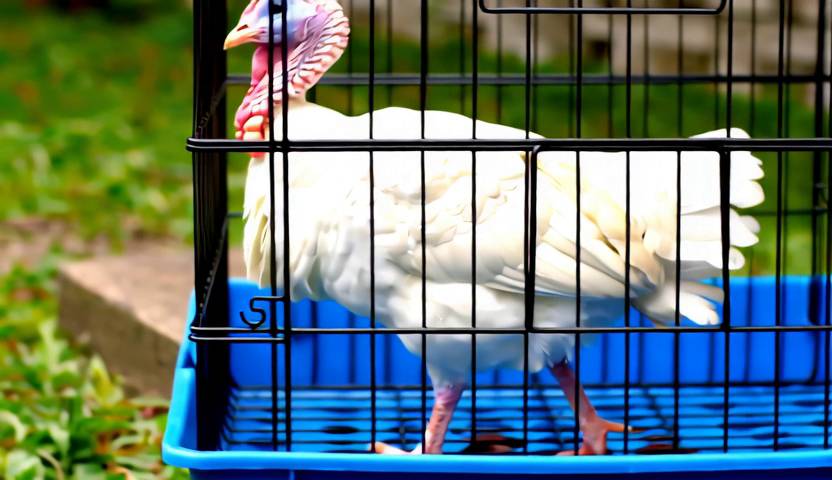}
&
& \includegraphics[width=\framew]{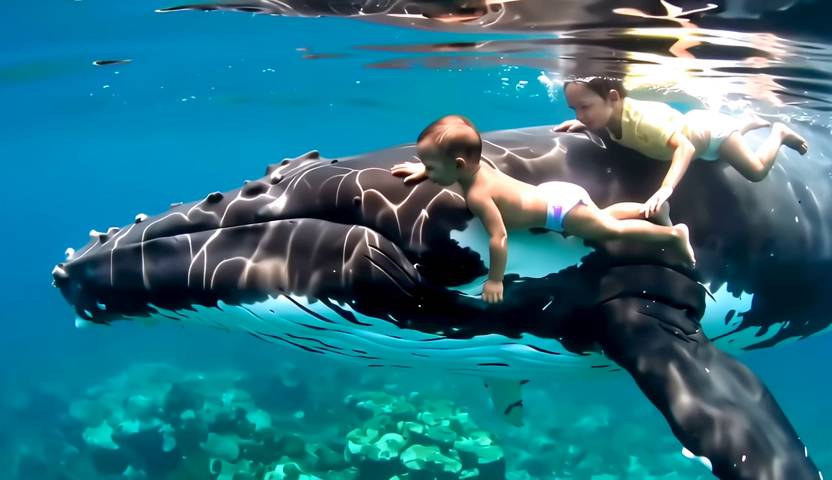}
& \includegraphics[width=\framew]{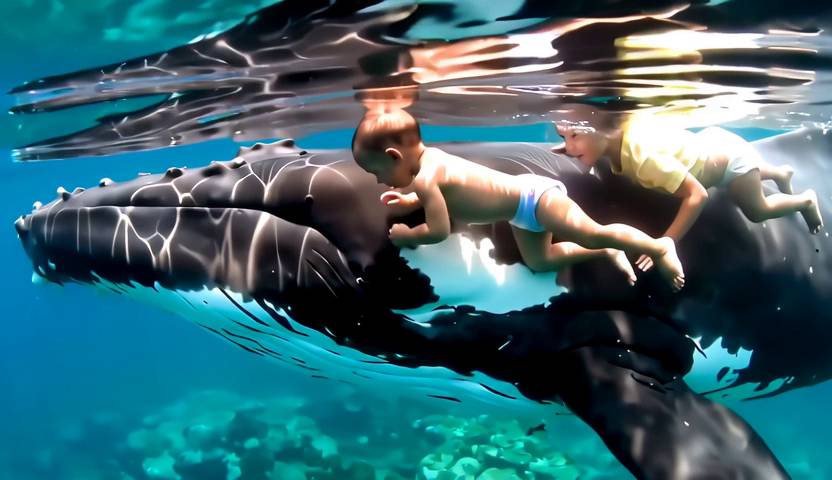}
& \includegraphics[width=\framew]{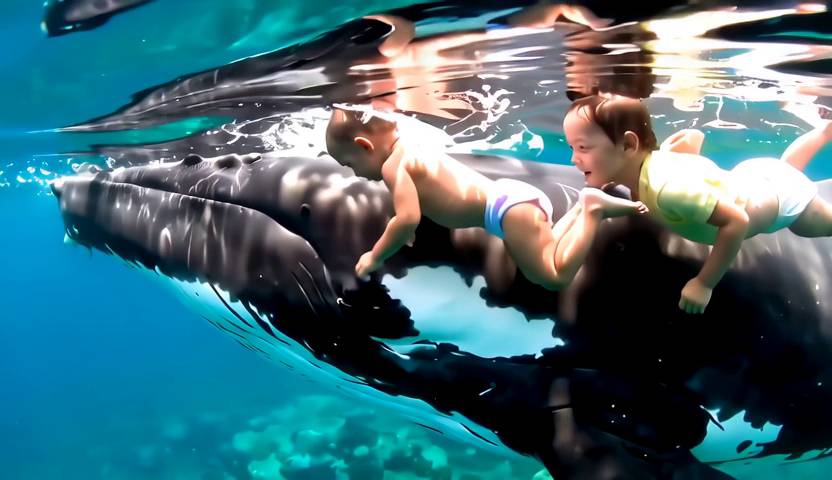} \\

\bottomrule
\end{tabular}

\caption{
\footnotesize
\textbf{Qualitative comparison on preference-aligned video generation using VideoReward~\cite{liu2025:videoalign}.} \Ours{} produces videos with better text alignment and visual quality, and additional examples are provided in \cref{fig:video_qualitative_supp}.
}
\label{fig:video_qualitative}
\end{figure*}

\subsection{Text-Aligned Image Generation}
\label{subsec:human-preference-image}

\paragraph{Evaluation Metrics.}
The target reward used to align text-image is PickScore~\cite{Kirstain2023:pickapic}.
For held-out rewards, we evaluate text–image alignment using VQA Score~\cite{lin2024:vqa}, and image quality using Aesthetic Score~\cite{Schuhmann:aesthetics}, ImageReward~\cite{Xu2023:ImageReward}, and HPSv2~\cite{wu2023:hpsv2}.

\paragraph{Results.}
The quantitative and qualitative results are presented in~\cref{tab:pick_full} and~\cref{fig:text_align_qualitative}, respectively.
As in the aesthetic image generation task, \Ours{} achieves the best target reward performance across all methods, with the same efficiency trend that the $25$-NFE setting already outperforms all baselines.
These gains also transfer to held-out rewards: \Ours{} with $500$ NFE achieves the best Aesthetic Score and HPSv2, while \Ours{} with $25$ NFE attains the best VQA Score.
Qualitatively, \Ours{} produces samples that better align with text prompts, particularly on spatial and logical relations, compared to the baselines.

\begin{table*}[h!]
\centering
\caption{
\footnotesize
\textbf{Quantitative comparison on text-aligned image generation with fine-tuned model.}
We compare DPS~\cite{Chung:2023DPS} and \Ours{} integrated with a fine-tuned model, MixGRPO~\cite{li2026:mixgrpo}.
For single-particle methods we augment sampling with Best-of-N, denoted with \textsuperscript{\dag}.
Dark green cells indicate the best result for each metric across all runs, while light green cells denote the second best.
}
\label{tab:pick_ft_tto}
\renewcommand{\arraystretch}{1.10}
\setlength{\tabcolsep}{5.0pt}
\scriptsize

\resizebox{\textwidth}{!}{%
\begin{tabular}{@{} l c c c c c c @{}}
\toprule
\multirow{2}{*}[-1.4ex]{\textbf{Method}}
& \multirow{2}{*}[-1.4ex]{\textbf{NFE}}
& \textbf{Target Reward}
& \multicolumn{4}{c}{\textbf{Held-Out Reward}} \\
\cmidrule(lr){3-3} \cmidrule(lr){4-7}
&
&
\makecell{\textbf{PickScore}} $\uparrow$
& \makecell{\textbf{Aesthetic}\\\textbf{Score}} $\uparrow$
& \textbf{HPSv2} $\uparrow$
& \makecell{\textbf{Image}\\\textbf{Reward}} $\uparrow$
& \makecell{\textbf{VQA}\\\textbf{Score}} $\uparrow$ \\
\midrule

Base~\cite{flux2024}
& 25
& 0.2054
& 5.4664
& 0.2316
& 0.1710
& 0.8011 \\

MixGRPO~\cite{li2026:mixgrpo}
& 25
& 0.2166
& 6.5245
& 0.2679
& 0.7605
& 0.8239 \\

\midrule

\hspace{1mm} $\llcorner$ DPS\textsuperscript{\dag}~\cite{Chung:2023DPS}
& 500
& \cellcolor{lightgreen}0.2235
& \cellcolor{lightgreen}6.6966
& \cellcolor{lightgreen}0.2840
& \cellcolor{lightgreen}1.0501
& \cellcolor{lightgreen}0.8376 \\

\hspace{1mm} $\llcorner$ {\Oursbf{}}\textsuperscript{\dag} \textbf{(Ours)}
& 500
& \cellcolor{darkgreen}0.2545
& \cellcolor{darkgreen}6.7281
& \cellcolor{darkgreen}0.3224
& \cellcolor{darkgreen}1.2648
& \cellcolor{darkgreen}0.8498 \\

\bottomrule
\end{tabular}%
}
\end{table*}

\subsection{Preference-Aligned Video Generation}
\label{subsec:human-preference-video}
As done in the image generation task, we use $25$ sampling steps for all methods except FreeDoM~\cite{Yu:2023FreeDOM}, for which we use 13 steps due to its additional MCMC sampling, ensuring that the total NFE remains comparable.
For video generation task, we test the baselines and our method with a single particle.

\paragraph{Evaluation Metrics.}
In this task, the target reward is VideoReward~\cite{liu2025:videoalign}, which provides three component scores for Motion Quality (MQ), Visual Quality (VQ), and Text Alignment (TA). We use the sum of these components (MQ + VQ + TA) as the target reward. For held-out evaluation, we report metrics from VBench~\cite{huang2023:vbench}, grouped into six categories: Subject Consistency and Background Consistency for text alignment, Motion Smoothness and Dynamic Degree for motion quality, and Aesthetic Quality and Imaging Quality for visual quality.

\paragraph{Results.}
\Cref{tab:video_reward} and~\cref{fig:video_qualitative} summarize the quantitative and qualitative comparisons, respectively.
\Ours{} yields the highest VideoReward score and outperforms all the baselines.
Beyond the target reward, \Ours{} achieves the best results on Dynamic Degree, Aesthetic Quality, and Subject Consistency, and remains marginally runner-up on the other held-out metrics.
The qualitative examples further support this observation, with \Ours{} producing videos that better align with the text prompts. In particular, \Ours{} generates videos that clearly capture the elements (e.g., turkey and babies) described in the prompt.

\subsection{Integration with Fine-Tuned Models}
\label{subsec:fine-tune-plug}
Fine-tuning and inference-time alignment improve reward alignment along orthogonal directions: the former adapts model parameters, while the latter guides the sampling process without modifying them. Because of this orthogonality, \Ours{} can be applied on top of a fine-tuned model, and the two strategies can be combined whenever the goal is to maximize a specific reward as much as possible. In this section, we integrate \Ours{} with MixGRPO~\cite{li2026:mixgrpo}, which fine-tunes the base FLUX model~\cite{flux2024}.

\paragraph{Evaluation Metrics.}
In this task, the target reward is PickScore~\cite{Kirstain2023:pickapic}, and we report the same held-out rewards used in the text-aligned image generation task described in \cref{subsec:human-preference-image}.

\paragraph{Results.}
As shown in~\cref{tab:pick_ft_tto}, MixGRPO~\cite{li2026:mixgrpo} improves both the target reward and all held-out rewards compared to the base model~\cite{flux2024}.
We further observe that applying inference-time reward alignment remains highly effective on top of the fine-tuned model, yielding additional improvements across all metrics.
In particular, \Ours{} consistently outperforms DPS on both the target reward and all held-out rewards, achieving the highest overall scores.
These results confirm that \Ours{} is orthogonal to fine-tuning: the two can be composed to push a specific reward beyond what either approach achieves alone.

\section{Conclusion}
\label{sec:conclusion}
In this work, we identified a fundamental trade-off in inference-time reward alignment: gradient-based guidance steers generation effectively but degrades sample quality, while search-based methods preserve quality but forgo gradient guidance. We resolved this trade-off with the Noise-Tilted Reverse Kernel (\Ours{}), which leaves the reverse mean unchanged and instead biases the noise term toward high reward through a whitening operator that makes the reward gradient safe to inject as noise without losing its guiding signal.

Across aesthetic image generation, text-aligned image generation, and preference-aligned video generation, \Ours{} consistently outperforms recent baselines in target reward alignment without losing sample quality, and on aesthetic generation surpasses the reward of the best baseline at 500 NFEs using only 25, a $20\times$ reduction in compute. Beyond this setting, \Ours{} improves reward alignment on top of fine-tuned models, transfers to a different diffusion backbone, and extends to counting-based and VLM-based rewards.

By routing reward information through the noise term rather than the mean, \Ours{} offers a simple, broadly applicable mechanism for inference-time reward alignment that leaves the pretrained sampling dynamics intact.

\section*{Acknowledgments}
We thank Takeo Igarashi for valuable discussions. I-Chao Shen acknowledges support from the UTokyo-Google AI Symbiotic Future Society Program. This work was also supported by the NRF grant (RS-2026-25486000); IITP grants (RS-2024-00399817, RS-2025-25441313, RS-2025-25443318, and RS-2026-25526850), funded by MSIT, Korea; the Industrial Technology Innovation Program grant (RS-2025-02317326), funded by MOTIE, Korea; the InnoCORE program of MSIT (AI Meta-Scientist, N10260110); the National Supercomputing Center with supercomputing resources and technical support (KSC-2025-CRE-0475); the Advanced GPU Utilization Support Program; and the DRB-KAIST SketchTheFuture Research Center.

\bibliographystyle{noisetilt_arxiv}
\bibliography{main}

@String(CVPR   = {Proceedings of the IEEE/CVF Conference on Computer Vision and Pattern Recognition (CVPR)})

@String(ICCV   = {Proceedings of the IEEE/CVF International Conference on Computer Vision (ICCV)})

@String(ECCV   = {Proceedings of the European Conference on Computer Vision (ECCV)})

@String(NeurIPS = {Advances in Neural Information Processing Systems})

@String(NIPS   = {Advances in Neural Information Processing Systems})

@String(ICML   = {International Conference on Machine Learning})

@String(ICLR   = {International Conference on Learning Representations})

@String(CVPRW  = {IEEE/CVF Conference on Computer Vision and Pattern Recognition Workshops})

@String(TOG    = {ACM Transactions on Graphics})

@inproceedings{samuel2023:norm,
  title={Norm-Guided Latent Space Exploration for Text-to-Image Generation},
  author={Samuel, Dvir and Ben-Ari, Rami and Darshan, Nir and Maron, Haggai and Chechik, Gal},
  booktitle=NeurIPS,
  volume={36},
  pages={57863--57875},
  year={2023}
}

@inproceedings{kim2025:rbf,
  title={Inference-Time Scaling for Flow Models via Stochastic Generation and Rollover Budget Forcing},
  author={Kim, Jaihoon and Yoon, Taehoon and Hwang, Jisung and Sung, Minhyuk},
  booktitle=NeurIPS,
  volume={38},
  pages={30830--30864},
  year={2025}
}

@inproceedings{yoon2025:psi,
  title={Psi-Sampler: Initial Particle Sampling for SMC-Based Inference-Time Reward Alignment in Score Models},
  author={Yoon, Taehoon and Min, Yunhong and Yeo, Kyeongmin and Sung, Minhyuk},
  booktitle=NIPS,
  volume={38},
  pages={104745--104781},
  year={2025}
}

@inproceedings{eyring2024reno,
  title={{ReNO}: Enhancing One-Step Text-to-Image Models through Reward-Based Noise Optimization},
  author={Eyring, Luca and Karthik, Shyamgopal and Roth, Karsten and Dosovitskiy, Alexey and Akata, Zeynep},
  booktitle=NeurIPS,
  volume={37},
  pages={125487--125519},
  year={2024}
}

@inproceedings{hwang2025:mpgr,
  title={Moment- and Power-Spectrum-Based {Gaussianity} Regularization for Text-to-Image Models},
  author={Hwang, Jisung and Kim, Jaihoon and Sung, Minhyuk},
  booktitle=NeurIPS,
  volume={38},
  pages={18235--18264},
  year={2025}
}

@misc{flux2024,
    author={Black Forest Labs},
    title={{FLUX}},
    year={2024},
    howpublished={\url{https://github.com/black-forest-labs/flux}},
}

@article{wan2025,
      title={Wan: Open and Advanced Large-Scale Video Generative Models}, 
      author={Wan Team, Alibaba Group},
      year={2025},
      journal={arXiv preprint arXiv:2503.20314}
}

@book{Robbins1992,
    author={Robbins, Herbert E.},
    title={An Empirical Bayes Approach to Statistics},
    publisher={Springer},
    year={1992},
}

@misc{Schuhmann:aesthetics,
    title        = {{LAION} Aesthetics},
    author       = {C. Schuhmann},
    year={2022},
    howpublished={\url{https://laion.ai/blog/laion-aesthetics}},
}

@inproceedings{Kirstain2023:pickapic,
  title={{Pick-a-Pic}: An Open Dataset of User Preferences for Text-to-Image Generation},
  author={Kirstain, Yuval and Polyak, Adam and Singer, Uriel and Matiana, Shahbuland and Penna, Joe and Levy, Omer},
  booktitle=NeurIPS,
  volume={36},
  pages={36652--36663},
  year={2023}
}

@article{qwen,
  title={{Qwen2.5-VL} Technical Report},
  author={Bai, Shuai and Chen, Keqin and Liu, Xuejing and Wang, Jialin and Ge, Wenbin and Song, Sibo and Dang, Kai and Wang, Peng and Wang, Shijie and Tang, Jun and Zhong, Humen and Zhu, Yuanzhi and Yang, Mingkun and Li, Zhaohai and Wan, Jianqiang and Wang, Pengfei and Ding, Wei and Fu, Zheren and Xu, Yiheng and Ye, Jiabo and Zhang, Xi and Xie, Tianbao and Cheng, Zesen and Zhang, Hang and Yang, Zhibo and Xu, Haiyang and Lin, Junyang},
  journal={arXiv preprint arXiv:2502.13923},
  year={2025}
}

@book{Doucet2001:SMC,
    title={Sequential Monte Carlo methods in practice},
    author={Doucet, Arnaud and De Freitas, Nando and Gordon, Neil James and others},
    year={2001},
    publisher={Springer},
}

@inproceedings{Stiennon:2020BoN,
    title = {Learning to summarize from human feedback},
    author = {Stiennon, Nisan and Ouyang, Long and Wu, Jeff and Ziegler, Daniel M. and Lowe, Ryan and Voss, Chelsea and Radford, Alec and Amodei, Dario and Christiano, Paul},
    year = {2020},
    booktitle = NIPS,
    volume = {33},
    pages = {3008--3021},
}

@article{singhal2025:fk,
  title={A general framework for inference-time scaling and steering of diffusion models},
  author={Singhal, Raghav and Horvitz, Zachary and Teehan, Ryan and Ren, Mengye and Yu, Zhou and McKeown, Kathleen and Ranganath, Rajesh},
  journal={arXiv preprint arXiv:2501.06848},
  year={2025}
}

@article{ramesh2025:search,
  title={Test-time scaling of diffusion models via noise trajectory search},
  author={Ramesh, Vignav and Mardani, Morteza},
  journal={arXiv preprint arXiv:2506.03164},
  year={2025}
}

@inproceedings{Li2024:SVDD,
    title={Derivative-Free Guidance in Continuous and Discrete Diffusion Models with Soft Value-Based Decoding},
    author={Li, Xiner and Zhao, Yulai and Wang, Chenyu and Scalia, Gabriele and Eraslan, Gokcen and Nair, Surag and Biancalani, Tommaso and Regev, Aviv and Levine, Sergey and Uehara, Masatoshi},
    booktitle=NIPS,
    volume={38},
    pages={95507--95545},
    year={2025},
}

@inproceedings{Rafailov2023:DPO,
  title={Direct preference optimization: Your language model is secretly a reward model},
  author={Rafailov, Rafael and Sharma, Archit and Mitchell, Eric and Manning, Christopher D and Ermon, Stefano and Finn, Chelsea},
  booktitle=NIPS,
  volume={36},
  pages={53728--53741},
  year={2023}
}

@inproceedings{Cardoso2024:MCGDiff,
    title={Monte Carlo guided Diffusion for Bayesian linear inverse problems}, 
    author={Cardoso, Gabriel and Idrissi, Yazid Janati El and Corff, Sylvain Le and Moulines, Eric},
    booktitle=ICLR,  
    year={2024},
}

@inproceedings{Kim:2025DAS,
    title={Test-time Alignment of Diffusion Models without Reward Over-optimization},
    author={Kim, Sunwoo and Kim, Minkyu and Park, Dongmin},
    booktitle=ICLR,
    year={2025},
}

@inproceedings{Dou2024:FPS,
  title={Diffusion posterior sampling for linear inverse problem solving: A filtering perspective},
  author={Dou, Zehao and Song, Yang},
  booktitle=ICLR,
  year={2024}
}

@article{Ma2025:SoP,
    title={Inference-Time Scaling for Diffusion Models beyond Scaling Denoising Steps}, 
    author={Ma, Nanye and Tong, Shangyuan and Jia, Haolin and Hu, Hexiang and Su, Yu-Chuan and Zhang, Mingda and Yang, Xuan and Li, Yandong and Jaakkola, Tommi and Jia, Xuhui and Xie, Saining},
    year={2025},
    journal={arXiv preprint arXiv:2501.09732},
}

@inproceedings{Xu2023:ImageReward,
    title={{ImageReward}: Learning and Evaluating Human Preferences for Text-to-Image Generation},
    author={Xu, Jiazheng and Liu, Xiao and Wu, Yuchen and Tong, Yuxuan and Li, Qinkai and Ding, Ming and Tang, Jie and Dong, Yuxiao},
    booktitle=NeurIPS,
    volume={36},
    pages={15903--15935},
    year={2023},
}

@inproceedings{Clark2024:DRaFT,
    title={Directly Fine-Tuning Diffusion Models on Differentiable Rewards}, 
    author={Clark, Kevin and Vicol, Paul and Swersky, Kevin and Fleet David J},
    booktitle=ICLR,
    year={2024},
}

@inproceedings{Black2024:DDPO,
    title={Training Diffusion Models with Reinforcement Learning},
    author={Black, Kevin and Janner, Michael and Du, Yilun and Kostrikov, Ilya and Levine, Sergey},
    booktitle=ICLR,
    year={2024},
}

@inproceedings{Fan:2023DPOK,
    title = {{DPOK}: reinforcement learning for fine-tuning text-to-image diffusion models},
    author = {Fan, Ying and Watkins, Olivia and Du, Yuqing and Liu, Hao and Ryu, Moonkyung and Boutilier, Craig and Abbeel, Pieter and Ghavamzadeh, Mohammad and Lee, Kangwook and Lee, Kimin},
    booktitle = NeurIPS,
    volume = {36},
    pages = {79858--79885},
    year = {2023},
}

@inproceedings{Wallace:2024DiffusionDPO,
    title={Diffusion Model Alignment Using Direct Preference Optimization},
    author={Wallace, Bram and Dang, Meihua and Rafailov, Rafael and Zhou, Linqi and Lou, Aaron and Purushwalkam, Senthil and Ermon, Stefano and Xiong, Caiming and Joty, Shafiq and Naik, Nikhil},
    booktitle=CVPR,
    pages={8228--8238},
    year={2024},
}

@inproceedings{Uehara:2024Finetuning,
    title={Fine-Tuning of Continuous-Time Diffusion Models as Entropy-Regularized Control},
    author={Uehara, Masatoshi and Zhao, Yulai and Black, Kevin and Hajiramezanali, Ehsan and Scalia, Gabriele and Diamant, Nathaniel Lee and Tseng, Alex M and Biancalani, Tommaso and Levine, Sergey},
    booktitle=ICLR,
    year={2025},
}

@inproceedings{Gao:2023Scaling,
    title = {Scaling laws for reward model overoptimization},
    author = {Gao, Leo and Schulman, John and Hilton, Jacob},
    booktitle = ICML,
    pages = {10835--10866},
    year = {2023},
}

@inproceedings{Ho:2020DDPM,
    title = {Denoising diffusion probabilistic models},
    author = {Ho, Jonathan and Jain, Ajay and Abbeel, Pieter},
    booktitle = NeurIPS,
    volume = {33},
    pages = {6840--6851},
    year = {2020},
}

@inproceedings{Song:2021SDE,
    title={Score-Based Generative Modeling through Stochastic Differential Equations},
    author={Song, Yang and Sohl-Dickstein, Jascha and Kingma, Diederik P and Kumar, Abhishek and Ermon, Stefano and Poole, Ben},
    booktitle=ICLR,
    year={2021},
}

@inproceedings{Rombach:2022LDM,
    title     = {High-Resolution Image Synthesis With Latent Diffusion Models},
    author    = {Rombach, Robin and Blattmann, Andreas and Lorenz, Dominik and Esser, Patrick and Ommer, Bj\"orn},
    booktitle = CVPR,
    pages     = {10674--10685},
    year      = {2022},
}

@inproceedings{Chung:2023DPS,
    title={Diffusion Posterior Sampling for General Noisy Inverse Problems},
    author={Chung, Hyungjin and Kim, Jeongsol and Mccann, Michael Thompson and Klasky, Marc Louis and Ye, Jong Chul},
    booktitle=ICLR,
    year={2023},
}

@inproceedings{Song:2023LGD,
    title={Loss-Guided Diffusion Models for Plug-and-Play Controllable Generation},
    author={Song, Jiaming and Zhang, Qinsheng and Yin, Hongxu and Mardani, Morteza and Liu, Ming-Yu and Kautz, Jan and Chen, Yongxin and Vahdat, Arash},
    booktitle=ICML,
    pages={32483--32498},
    year={2023},
}

@inproceedings{Yu:2023FreeDOM,
    title={{FreeDoM}: Training-Free Energy-Guided Conditional Diffusion Model},
    author={Yu, Jiwen and Wang, Yinhuai and Zhao, Chen and Ghanem, Bernard and Zhang, Jian},
    booktitle=ICCV,
    pages={23174--23184},
    year={2023},
}

@inproceedings{prabhudesai2024:VADER,
  title={Video Diffusion Alignment via Reward Gradients},
  author={Prabhudesai, Mihir and Mendonca, Russell and Qin, Zheyang and Fragkiadaki, Katerina and Pathak, Deepak},
  booktitle=ICLR,
  year={2025}
}

@inproceedings{daras2024:warped,
  title={Warped diffusion: Solving video inverse problems with image diffusion models},
  author={Daras, Giannis and Nie, Weili and Kreis, Karsten and Dimakis, Alex and Mardani, Morteza and Kovachki, Nikola and Vahdat, Arash},
  booktitle=NeurIPS,
  volume={37},
  pages={101116--101143},
  year={2024}
}

@inproceedings{wu2023:tds,
  title={Practical and asymptotically exact conditional sampling in diffusion models},
  author={Wu, Luhuan and Trippe, Brian and Naesseth, Christian and Blei, David and Cunningham, John P.},
  booktitle=NeurIPS,
  volume={36},
  pages={31372--31403},
  year={2023}
}

@inproceedings{song2023:pseudoinverse,
  title={Pseudoinverse-guided diffusion models for inverse problems},
  author={Song, Jiaming and Vahdat, Arash and Mardani, Morteza and Kautz, Jan},
  booktitle=ICLR,
  year={2023}
}

@inproceedings{Ye:2024TFG,
    title = {{TFG}: Unified Training-Free Guidance for Diffusion Models},
    booktitle = NeurIPS,
    author = {Ye, Haotian and Lin, Haowei and Han, Jiaqi and Xu, Minkai and Liu, Sheng and Liang, Yitao and Ma, Jianzhu and Zou, James Y and Ermon, Stefano},
    volume = {37},
    pages = {22370--22417},
    year = {2024},
}

@inproceedings{Bansal:2023UGD,
    title={Universal Guidance for Diffusion Models}, 
    author={Bansal, Arpit and Chu, Hong-Min and Schwarzschild, Avi and Sengupta, Soumyadip and Goldblum, Micah and Geiping, Jonas and Goldstein, Tom},
    booktitle=CVPRW,
    year={2023},
}

@inproceedings{Benhamu:2024DFlow,
    title={{D-Flow}: Differentiating through Flows for Controlled Generation},
    author={Ben-Hamu, Heli and Puny, Omri and Gat, Itai and Karrer, Brian and Singer, Uriel and Lipman, Yaron},
    booktitle=ICML,
    pages={3462--3483},
    year={2024},
}

@inproceedings{Korbak:2022RLDM,
    title = {On reinforcement learning and distribution matching for fine-tuning language models with no catastrophic forgetting},
    author = {Korbak, Tomasz and Elsahar, Hady and Kruszewski, Germ\'{a}n and Dymetmant, Marc},
    year = {2022},
    booktitle = NIPS,
    volume = {35},
    pages = {16203--16220},
}

@inproceedings{Uehara:2024Bridging,
    title={Bridging Model-Based Optimization and Generative Modeling via Conservative Fine-Tuning of Diffusion Models},
    author={Uehara, Masatoshi and Zhao, Yulai and Hajiramezanali, Ehsan and Scalia, Gabriele and Eraslan, G{\"o}kcen and Lal, Avantika and Levine, Sergey and Biancalani, Tommaso},
    booktitle=NeurIPS,
    volume={37},
    pages={127511--127535},
    year={2024},
}

@inproceedings{Lipman:2023CFM,
    title={Flow Matching for Generative Modeling}, 
    author={Lipman, Yaron and Chen, Ricky T. Q. and Ben-Hamu, Heli and Nickel, Maximilian and Le, Matt},
    year={2023},
    booktitle=ICLR,
}

@inproceedings{Liu:2023RF,
    title={Flow Straight and Fast: Learning to Generate and Transfer Data with Rectified Flow},
    author={Liu, Xingchao and Gong, Chengyue and Liu, Qiang},
    booktitle=ICLR,
    year={2023},
}

@article{Huang:2025T2ICompBench++,
    title={{T2I-CompBench++}: An Enhanced and Comprehensive Benchmark for Compositional Text-to-Image Generation},
    author={Huang, Kaiyi and Duan, Chengqi and Sun, Kaiyue and Xie, Enze and Li, Zhenguo and Liu, Xihui},
    journal={IEEE Transactions on Pattern Analysis and Machine Intelligence},
    volume={47},
    number={5},
    pages={3563--3579},
    doi={10.1109/TPAMI.2025.3531907},
    year={2025},
}

@inproceedings{he2024mpgd,
    title={Manifold Preserving Guided Diffusion},
    author={Yutong He and Naoki Murata and Chieh-Hsin Lai and Yuhta Takida and Toshimitsu Uesaka and Dongjun Kim and Wei-Hsiang Liao and Yuki Mitsufuji and J Zico Kolter and Ruslan Salakhutdinov and Stefano Ermon},
    booktitle=ICLR,
    year={2024},
}

@inproceedings{tang2025dno,
  title={Inference-Time Alignment of Diffusion Models with Direct Noise Optimization},
  author={Tang, Zhiwei and Peng, Jiangweizhi and Tang, Jiasheng and Hong, Mingyi and Wang, Fan and Chang, Tsung-Hui},
  booktitle=ICML,
  pages={58905--58930},
  year={2025}
}

@article{hwang2026gradient,
  title={Gradient Preconditioning for Efficient and Reliable Reward-Guided Generation},
  author={Hwang, Jisung and Sung, Minhyuk},
  journal={arXiv preprint arXiv:2602.08646},
  year={2026}
}

@book{hyvarinen2001ica,
  title     = {Independent Component Analysis},
  author    = {Hyv{\"a}rinen, Aapo and Karhunen, Juha and Oja, Erkki},
  publisher = {John Wiley \& Sons},
  year      = {2001}
}

@article{kessy2018optimal,
  title   = {Optimal Whitening and Decorrelation},
  author  = {Kessy, Agnan and Lewin, Alex and Strimmer, Korbinian},
  journal = {The American Statistician},
  volume  = {72},
  number  = {4},
  pages   = {309--314},
  year    = {2018},
  doi     = {10.1080/00031305.2016.1277159}
}

@inproceedings{liu2025:videoalign,
  title={Improving Video Generation with Human Feedback},
  author={Liu, Jie and Liu, Gongye and Liang, Jiajun and Yuan, Ziyang and Liu, Xiaokun and Zheng, Mingwu and Wu, Xiele and Wang, Qiulin and Qin, Wenyu and Xia, Menghan and others},
  booktitle=NeurIPS,
  volume={38},
  pages={82155--82192},
  year={2025}
}

@inproceedings{huang2023:vbench,
     title={{VBench}: Comprehensive Benchmark Suite for Video Generative Models},
     author={Huang, Ziqi and He, Yinan and Yu, Jiashuo and Zhang, Fan and Si, Chenyang and Jiang, Yuming and Zhang, Yuanhan and Wu, Tianxing and Jin, Qingyang and Chanpaisit, Nattapol and Wang, Yaohui and Chen, Xinyuan and Wang, Limin and Lin, Dahua and Qiao, Yu and Liu, Ziwei},
     booktitle=CVPR,
     pages={21807--21818},
     year={2024}
 }

@inproceedings{wu2023:hpsv2,
  title={Human Preference Score v2: A Solid Benchmark for Evaluating Human Preferences of Text-to-Image Synthesis},
  author={Wu, Xiaoshi and Hao, Yiming and Sun, Keqiang and Chen, Yixiong and Zhu, Feng and Zhao, Rui and Li, Hongsheng},
  booktitle=ICLR,
  year={2024}
}

@inproceedings{lin2024:vqa,
      title={Evaluating Text-to-Visual Generation with Image-to-Text Generation},
      author={Zhiqiu Lin and Deepak Pathak and Baiqi Li and Jiayao Li and Xide Xia and Graham Neubig and Pengchuan Zhang and Deva Ramanan},
      year={2024},
      booktitle=ECCV,
      pages={366--384},
}

@inproceedings{qian2025:t2icount,
               title={T2ICount: Enhancing Cross-modal Understanding for Zero-Shot Counting},
               author={Qian, Yifei and Guo, Zhongliang and Deng, Bowen and Lei, Chun Tong and Zhao, Shuai and Lau, Chun Pong and Hong, Xiaopeng and Pound, Michael P},
               year={2025},
               booktitle=CVPR,
               pages={25336--25345},
}

@inproceedings{amininaieni2025:countgd,
      title={CountGD: Multi-Modal Open-World Counting},
      author={Amini-Naieni, Niki and Han, Tengda and Zisserman, Andrew},
      year={2024},
      booktitle=NeurIPS,
      volume={37},
      pages={48810--48837},
}

@article{li2026:mixgrpo,
      title={MixGRPO: Unlocking Flow-based GRPO Efficiency with Mixed ODE-SDE}, 
      author={Junzhe Li and Yutao Cui and Tao Huang and Yinping Ma and Chun Fan and Miles Yang and Zhao Zhong and Liefeng Bo},
      year={2025},
      journal={arXiv preprint arXiv:2507.21802}
}

@inproceedings{rout2023:psld,
  title={Solving linear inverse problems provably via posterior sampling with latent diffusion models},
  author={Rout, Litu and Raoof, Negin and Daras, Giannis and Caramanis, Constantine and Dimakis, Alex and Shakkottai, Sanjay},
  booktitle=NeurIPS,
  volume={36},
  pages={49960--49990},
  year={2023}
}

@inproceedings{rout2024:STSL,
  title={Beyond first-order tweedie: Solving inverse problems using latent diffusion},
  author={Rout, Litu and Chen, Yujia and Kumar, Abhishek and Caramanis, Constantine and Shakkottai, Sanjay and Chu, Wen-Sheng},
  booktitle=CVPR,
  pages={9472--9481},
  year={2024}
}

@inproceedings{luo2025:dual,
  title={Dual-process image generation},
  author={Luo, Grace and Granskog, Jonathan and Holynski, Aleksander and Darrell, Trevor},
  booktitle=ICCV,
  pages={17972--17983},
  year={2025}
}

@inproceedings{kwon2024:video,
  title={Solving video inverse problems using image diffusion models},
  author={Kwon, Taesung and Ye, Jong Chul},
  booktitle=ICLR,
  year={2025}
}

@inproceedings{lee2023:syncdiffusion,
  title={Syncdiffusion: Coherent montage via synchronized joint diffusions},
  author={Lee, Yuseung and Kim, Kunho and Kim, Hyunjin and Sung, Minhyuk},
  booktitle=NeurIPS,
  volume={36},
  pages={50648--50660},
  year={2023}
}

@inproceedings{rozet2024:moment_matching,
  title={Learning diffusion priors from observations by expectation maximization},
  author={Rozet, Fran{\c{c}}ois and Andry, G{\'e}r{\^o}me and Lanusse, Fran{\c{c}}ois and Louppe, Gilles},
  booktitle=NeurIPS,
  volume={37},
  pages={87647--87682},
  year={2024}
}

@article{cai2025:zimage,
  title={Z-image: An efficient image generation foundation model with single-stream diffusion transformer},
  author={Cai, Huanqia and Cao, Sihan and Du, Ruoyi and Gao, Peng and Hoi, Steven and Hou, Zhaohui and Huang, Shijie and Jiang, Dengyang and Jin, Xin and Li, Liangchen and others},
  journal={arXiv preprint arXiv:2511.22699},
  year={2025}
}

@inproceedings{liu2025:flowgrpo,
  title={Flow-{GRPO}: Training Flow Matching Models via Online {RL}},
  author={Liu, Jie and Liu, Gongye and Liang, Jiajun and Li, Yangguang and Liu, Jiaheng and Wang, Xintao and Wan, Pengfei and Zhang, Di and Ouyang, Wanli},
  booktitle=NeurIPS,
  volume={38},
  pages={40783--40818},
  year={2025}
}

@inproceedings{jang2026frameguidance,
      title={Frame Guidance: Training-Free Guidance for Frame-Level Control in Video Diffusion Models}, 
      author={Sangwon Jang and Taekyung Ki and Jaehyeong Jo and Jaehong Yoon and Soo Ye Kim and Zhe Lin and Sung Ju Hwang},
      year={2026},
      booktitle=ICLR
}

@inproceedings{zhang2024:controlvideo,
      title={ControlVideo: Training-free Controllable Text-to-Video Generation}, 
      author={Yabo Zhang and Yuxiang Wei and Dongsheng Jiang and Xiaopeng Zhang and Wangmeng Zuo and Qi Tian},
      year={2024},
      booktitle=ICLR 
}

@inproceedings{lu2024:freelong,
      title={FreeLong: Training-Free Long Video Generation with SpectralBlend Temporal Attention},
      author={Yu Lu and Yuanzhi Liang and Linchao Zhu and Yi Yang},
      year={2024},
      booktitle=NeurIPS,
      volume={37},
      pages={131434--131455}
}

@article{qiu2024:freetraj,
      title={FreeTraj: Tuning-Free Trajectory Control in Video Diffusion Models}, 
      author={Haonan Qiu and Zhaoxi Chen and Zhouxia Wang and Yingqing He and Menghan Xia and Ziwei Liu},
      year={2024},
      journal={arXiv preprint arXiv:2406.16863}
}

@inproceedings{hertz2022:p2p,
      title={Prompt-to-Prompt Image Editing with Cross Attention Control},
      author={Hertz, Amir and Mokady, Ron and Tenenbaum, Jay and Aberman, Kfir and Pritch, Yael and Cohen-Or, Daniel},
      booktitle=ICLR,
      year={2023}
}

@article{ma2025:omnipainter,
  title={Training-free Stylized Text-to-Image Generation with Fast Inference},
  author={Ma, Xin and Wang, Yaohui and Chen, Xinyuan and Wong, Tien-Tsin and Chen, Cunjian},
  journal={arXiv preprint arXiv:2505.19063},
  year={2025}}

@inproceedings{hertz2024:stylealigned,
      title={Style Aligned Image Generation via Shared Attention},
      author={Amir Hertz and Andrey Voynov and Shlomi Fruchter and Daniel Cohen-Or},
      year={2024},
      booktitle=CVPR,
      pages={4775--4785},
}

@article{chefer2023:attend,
  title={Attend-and-Excite: Attention-Based Semantic Guidance for Text-to-Image Diffusion Models},
  author={Chefer, Hila and Alaluf, Yuval and Vinker, Yael and Wolf, Lior and Cohen-Or, Daniel},
  journal=TOG,
  volume={42},
  number={4},
  pages={148:1--148:12},
  doi={10.1145/3592116},
  year={2023},
}

@inproceedings{kumari2023:custom,
  title={Multi-concept customization of text-to-image diffusion},
  author={Kumari, Nupur and Zhang, Bingliang and Zhang, Richard and Shechtman, Eli and Zhu, Jun-Yan},
  booktitle=CVPR,
  pages={1931--1941},
  year={2023}
}

@inproceedings{chung2022:mcg,
  title={Improving diffusion models for inverse problems using manifold constraints},
  author={Chung, Hyungjin and Sim, Byeongsu and Ryu, Dohoon and Ye, Jong Chul},
  booktitle=NeurIPS,
  volume={35},
  pages={25683--25696},
  year={2022}
}

@article{seo2026:stressdream,
  title={StressDream: Steering Video World Models for Robust Policy Evaluation and Improvement},
  author={Seo, Junwon and Veer, Sushant and Tian, Ran and Ding, Wenhao and Sharma, Apoorva and Leung, Karen and Schmerling, Edward and Pavone, Marco and Bajcsy, Andrea},
  journal={arXiv preprint arXiv:2606.00267},
  year={2026}
}

\ntappendix

\section{Reward-Guided Reverse Kernels}
\label{sec:kernel_app}

In this section, we provide derivations and interpretations for reward-guided reverse kernels.
\Cref{fig:method_illust} and \cref{tab:kernel_summary} summarize the four kernel types considered in this section: the base reverse kernel, mean-shifted reverse kernels, search-based guidance over the base kernel, and our noise-tilted reverse kernel (\Ours).
Among them, mean-shifted guidance injects reward information through the deterministic term, whereas search-based guidance and \Ours\ act through stochastic perturbations.

Before proceeding, we elaborate on the notion of \emph{noise-compatible} used in \cref{sec:method}.
By this, we mean the regime in which the reverse update remains centered at the base transition and is perturbed by typical noise-like perturbations, so that intermediate states stay on the manifold of the learned intermediate-state distribution, rather than drifting into regions insufficiently supported by the pretrained diffusion.
This shorthand captures the key intuition; it does \emph{not} mean that the induced transition exactly matches the pretrained reverse-kernel distribution.

With this clarification, the rest of the section is organized as follows.
In \cref{subsec:prelim_app}, we recall the optimal reward-tilted reverse kernel that serves as the common theoretical target and present continuous-time formulation of the base reverse kernel. 
In \cref{subsec:mean-shift_app}, we derive mean-shifted reverse kernels as a standard approximation obtained by modifying the deterministic term.
In \cref{subsec:search_app}, we revisit search-based guidance, which instead draws multiple candidates from the base reverse kernel and selects favorable perturbations according to reward.
In \cref{subsec:ours_app}, we interpret \Ours\ from both perspectives, showing how it combines the single-draw guidance advantage of mean-shifted methods with the noise-compatible-regime advantage of search-based guidance. 
Finally, in \cref{subsec:rho_app}, we present an intuitive interpretation of the guidance strength parameter. 

\subsection{Optimal Reward-Tilted and Base Reverse Kernel}
\label{subsec:prelim_app}

In this subsection, we first formulate the optimal reward-tilted reverse kernel, which serves as the target for maximizing expected rewards under a KL divergence penalty. Subsequently, we detail the continuous-time formulation of the base reverse kernel. 

\subsubsection{Optimal Reward-Tilted Reverse Kernel}
Starting from an entropy-regularized objective over reverse transition policies, we derive the optimal reward-tilted reverse kernel, the associated value function, and the target marginal distribution.
For a more rigorous theoretical treatment and comprehensive proofs, we refer readers to previous work~\cite{Uehara:2024Bridging}.

We consider the sequence of reverse transition policies $\{q(\bm{x}_{t-1}|\bm{x}_t)\}_{t=1}^{T}$ and optimize
\begin{equation}
    \label{eq:reward_rl_obj_app}
    \max_{q}\;
    \mathbb{E}_{\bm{x}_{0:T}\sim q(\bm{x}_{0:T})}
    \left[
        r(\bm{x}_{0})
        -
        \beta
        \sum_{t=1}^{T}
        \mathcal{D}_{KL}\!\left(
            q(\bm{x}_{t-1}|\bm{x}_{t})
             \|  
            p(\bm{x}_{t-1}|\bm{x}_{t})
        \right)
    \right],
\end{equation}
which maximizes the terminal reward while regularizing step-wise deviations from the pretrained base reverse kernel $p(\bm{x}_{t-1}|\bm{x}_t)$.
The temperature $\beta>0$ controls the reward-KL trade-off.

We define the expected return functional from state $\bm{x}_t$ as
\begin{equation}
    \label{eq:return_functional_app}
    J_t(\bm{x}_t;q)
    =
    \mathbb{E}_{q(\bm{x}_{0:t-1}|\bm{x}_t)}
    \left[
        r(\bm{x}_0)
        -
        \beta
        \sum_{\tau=1}^{t}
        \mathcal{D}_{KL}\!\left(
            q(\bm{x}_{\tau-1}|\bm{x}_{\tau})
             \|  
            p(\bm{x}_{\tau-1}|\bm{x}_{\tau})
        \right)
    \right].
\end{equation}
The value function is the optimal value of this return functional:
\begin{align}
    \label{eq:bellman_value_app}
    V_t(\bm{x}_t)
    &=
    \max_{q}\; J_t(\bm{x}_t;q)
    = J_t(\bm{x}_t;p^\star) \nonumber\\
    &=
    \max_{q}
    \left\{
        \mathbb{E}_{\bm{x}_{t-1}\sim q}
        \bigl[
            V_{t-1}(\bm{x}_{t-1})
        \bigr]
        -
        \beta
        \mathcal{D}_{KL}\!\left(
            q(\bm{x}_{t-1}|\bm{x}_t)
             \| 
            p(\bm{x}_{t-1}|\bm{x}_t)
        \right)
    \right\},
\end{align}
with terminal boundary condition $V_0(\bm{x}_0)=r(\bm{x}_0)$.

\paragraph{Optimal reward-tilted reverse kernel.}
Expanding the expectation and KL divergence in \cref{eq:bellman_value_app} gives
\begin{equation}
    \label{eq:bellman_integral_app}
    V_t(\bm{x}_t)
    =
    \max_{q}
    \int
    q(\bm{x}_{t-1}|\bm{x}_t)
    \left[
        V_{t-1}(\bm{x}_{t-1})
        -
        \beta
        \log
        \frac{
            q(\bm{x}_{t-1}|\bm{x}_t)
        }{
            p(\bm{x}_{t-1}|\bm{x}_t)
        }
    \right]
    d\bm{x}_{t-1}.
\end{equation}
Writing $V_{t-1}(\bm{x}_{t-1})=\beta\log\!\bigl(\exp(V_{t-1}(\bm{x}_{t-1})/\beta)\bigr)$, we obtain
\begin{equation}
    \label{eq:bellman_single_log_app}
    V_t(\bm{x}_t)
    =
    \max_{q}
    \int
    q(\bm{x}_{t-1}|\bm{x}_t)
    \beta
    \log
    \left(
        \frac{
            p(\bm{x}_{t-1}|\bm{x}_t)
            \exp(V_{t-1}(\bm{x}_{t-1})/\beta)
        }{
            q(\bm{x}_{t-1}|\bm{x}_t)
        }
    \right)
    d\bm{x}_{t-1}.
\end{equation}
Introducing the state-dependent normalizing constant
\begin{equation}
    \label{eq:partition_function_app}
    Z_t(\bm{x}_t)
    =
    \int
    p(\bm{x}_{t-1}|\bm{x}_t)
    \exp(V_{t-1}(\bm{x}_{t-1})/\beta)
    d\bm{x}_{t-1},
\end{equation}
and multiplying and dividing the logarithm argument by $Z_t(\bm{x}_t)$ yields
\begin{align}
    \label{eq:bellman_kl_form_app}
    V_t(\bm{x}_t)
    &=
    \max_{q}
    \int
    q(\bm{x}_{t-1}|\bm{x}_t)
    \beta
    \log
    \left(
        \frac{
            Z_t(\bm{x}_t)
        }{
            q(\bm{x}_{t-1}|\bm{x}_t)
        }
        \cdot
        \frac{
            p(\bm{x}_{t-1}|\bm{x}_t)\exp(V_{t-1}(\bm{x}_{t-1})/\beta)
        }{
            Z_t(\bm{x}_t)
        }
    \right)
    d\bm{x}_{t-1} \nonumber\\
    &=
    \beta\log Z_t(\bm{x}_t)
    -
    \beta
    \min_{q}
    \mathcal{D}_{KL}
    \left(
        q(\bm{x}_{t-1}|\bm{x}_t)
         \bigg\| 
        \frac{
            p(\bm{x}_{t-1}|\bm{x}_t)\exp(V_{t-1}(\bm{x}_{t-1})/\beta)
        }{
            Z_t(\bm{x}_t)
        }
    \right).
\end{align}
Since the KL divergence is non-negative and vanishes if and only if the two distributions coincide, the optimum is uniquely attained at
\begin{equation}
    \label{eq:optimal_prop_app}
    p^\star(\bm{x}_{t-1}|\bm{x}_t)
    =
    \frac{
        p(\bm{x}_{t-1}|\bm{x}_t)\exp(V_{t-1}(\bm{x}_{t-1})/\beta)
    }{
        \int
        p(\bm{x}_{t-1}|\bm{x}_t)\exp(V_{t-1}(\bm{x}_{t-1})/\beta)
        d\bm{x}_{t-1}
    }.
\end{equation}
This is the optimal reward-tilted reverse kernel.

Evaluating the optimum further yields the relation
\begin{align}
    \label{eq:value_partition_app}
    \exp(V_t(\bm{x}_t)/\beta)
    &=
    \int
    p(\bm{x}_{t-1}|\bm{x}_t)\exp(V_{t-1}(\bm{x}_{t-1})/\beta)
    d\bm{x}_{t-1} \nonumber\\
    &=
    \mathbb{E}_{p}
    \left[
        \exp(V_{t-1}(\bm{x}_{t-1})/\beta)
         \middle| 
        \bm{x}_t
    \right]
    = \hdots =
    \mathbb{E}_{p}
    \left[
        \exp(r(\bm{x}_0)/\beta)
         \middle| 
        \bm{x}_t
    \right].
\end{align}

\paragraph{Value approximation.}
In practice, directly evaluating \cref{eq:value_partition_app} is intractable.
Following previous work~\cite{Li2024:SVDD}, we approximate the posterior expectation using the Tweedie estimate $\hat{\bm{x}}_{0|t}:=\mathbb{E}[\bm{x}_0|\bm{x}_t]$:
\begin{align}
    \label{eq:value_approx_app}
    V_t(\bm{x}_t)
    &=
    \beta\log
    \mathbb{E}_{\bm{x}_0\sim p(\cdot|\bm{x}_t)}
    \left[
        \exp(r(\bm{x}_0)/\beta)
    \right] \nonumber\\
    &\approx
    \beta\log
    \left(
        \exp(r(\hat{\bm{x}}_{0|t})/\beta)
    \right)
    =
    r(\hat{\bm{x}}_{0|t}).
\end{align}

\paragraph{Target marginal distribution.}
Sampling from the optimal step-wise proposal induces the following target marginal at timestep $t$:
\begin{align}
    \label{eq:target_marginal_app}
    p^\star(\bm{x}_t)
    &=
    \int
    p^\star(\bm{x}_{t:T})
    d\bm{x}_{t+1:T} \nonumber\\
    &=
    \frac{
        \exp(V_t(\bm{x}_t)/\beta)
    }{
        Z
    }
    \int
    p(\bm{x}_{t:T})
    d\bm{x}_{t+1:T} \nonumber\\
    &=
    \frac{1}{Z} 
    p(\bm{x}_t)\exp(V_t(\bm{x}_t)/\beta).
\end{align}
Likewise,
\begin{align}
    \label{eq:target_marginal_prev_app}
    p^\star(\bm{x}_{t-1})
    &=
    \int
    p^\star(\bm{x}_{t-1}|\bm{x}_t)
    p^\star(\bm{x}_t)
    d\bm{x}_t \nonumber\\
    &=
    \int
    \left(
        p(\bm{x}_{t-1}|\bm{x}_t)
        \frac{
            \exp(V_{t-1}(\bm{x}_{t-1})/\beta)
        }{
            \exp(V_t(\bm{x}_t)/\beta)
        }
    \right)
    \left(
        \frac{1}{Z}p(\bm{x}_t)\exp(V_t(\bm{x}_t)/\beta)
    \right)
    d\bm{x}_t \nonumber\\
    &=
    \frac{
        \exp(V_{t-1}(\bm{x}_{t-1})/\beta)
    }{
        Z
    }
    \int
    p(\bm{x}_{t-1}|\bm{x}_t)p(\bm{x}_t)
    d\bm{x}_t \nonumber\\
    &=
    \frac{1}{Z} 
    p(\bm{x}_{t-1})\exp(V_{t-1}(\bm{x}_{t-1})/\beta).
\end{align}
Recursively applying this relation down to $t=0$ recovers
\begin{equation}
    \label{eq:terminal_target_marginal_app}
    p^\star(\bm{x}_0)
    =
    \frac{1}{Z} 
    p(\bm{x}_0)\exp(r(\bm{x}_0)/\beta).
\end{equation}

\subsubsection{Base Reverse Kernel}
We connect our discrete-time formulation to the continuous-time sampling processes used in score-based generative models. 

Let $\{p_t\}_{0\le t\le T}$ be a probability path interpolating between a tractable noise prior and the data distribution:
\begin{equation}
    \label{eq:stochastic_interpolant_app}
    \bm{x}_t
    =
    \alpha_t \bm{x}_0 + \sigma_t \bm{x}_T,
\end{equation}
where $\alpha_t$ and $\sigma_t$ are smooth monotone schedules satisfying $\alpha_0=\sigma_T=1$ and $\alpha_T=\sigma_0=0$.

Flow-based models~\cite{Lipman:2023CFM,Liu:2023RF} parameterize this path using a time-dependent velocity field $\bm{u}_t:\mathbb{R}^N\rightarrow\mathbb{R}^N$, leading to the probability flow ODE:
\begin{equation}
    \label{eq:pf_ode_app}
    \mathrm{d}\bm{x}_t
    =
    \bm{u}_t(\bm{x}_t) \mathrm{d}t,
    \qquad
    \bm{x}_T\sim\mathcal{N}(\bm{0},\bm{I}).
\end{equation}
Adding stochastic exploration yields the reverse-time SDE:
\begin{equation}
    \label{eq:reverse_sde_app}
    \mathrm{d}\bm{x}_t = \bm{f}_t(\bm{x}_t) \mathrm{d}t + \sigma_t \mathrm{d}\bm{w},
    \qquad
    \bm{f}_t(\bm{x}_t) = \bm{u}_t(\bm{x}_t) - \frac{\sigma_t^2}{2}\nabla \log p_t(\bm{x}_t),
\end{equation}
where $\bm{w}$ denotes the standard Wiener process. Discretizing \cref{eq:reverse_sde_app} backward in time yields the base reverse kernel:
\begin{equation}
    \label{eq:base_reverse_kernel_app}
    \bm{x}_{t-1}
    =
    \bm{\mu}_\theta(\bm{x}_t,t)
    +
    \sigma_t \bm{\epsilon}_t,
    \qquad
    \bm{\epsilon}_t\sim\mathcal{N}(\bm{0},\bm{I}).
\end{equation}

\subsection{Approach 1: Mean-Shifted Reverse Kernel}
\label{subsec:mean-shift_app}

A standard approach to approximating the optimal reward-tilted reverse kernel is to inject the reward information directly into the deterministic component of the reverse update.
We first establish the exact relationship between the marginal velocity and the marginal score, then derive the mean-shifted reverse kernel by substituting the tilted score into the reverse SDE.

\paragraph{Relation between velocity and score.}
To systematically modify this base kernel with a reward signal, we must adjust the reverse drift $\bm{f}_t(\bm{x}_t)$ in \cref{eq:reverse_sde_app}. 
Since this drift depends on both the marginal velocity $\bm{u}_t(\bm{x}_t)$ and the marginal score $\nabla_{\bm{x}_t}\log p_t(\bm{x}_t)$, we first establish their exact relationship. By definition, the marginal velocity is given as:
\begin{equation}
    \label{eq:marginal_velocity_def}
    \bm{u}_t(\bm{x}_t) = \mathbb{E}[\dot{\alpha}_t \bm{x}_0 + \dot{\sigma}_t \bm{x}_T \mid \bm{x}_t] = \dot{\alpha}_t \mathbb{E}[\bm{x}_0 \mid \bm{x}_t] + \dot{\sigma}_t \mathbb{E}[\bm{x}_T \mid \bm{x}_t].
\end{equation}
Using the standard score identity for the Gaussian transition kernel $p_t(\bm{x}_t \mid \bm{x}_0) = \mathcal{N}(\bm{x}_t; \alpha_t \bm{x}_0, \sigma_t^2 \bm{I})$, we can express the marginal score as:
\begin{equation}
    \label{eq:marginal_score_identity}
    \nabla_{\bm{x}_t} \log p_t(\bm{x}_t) = \mathbb{E} \left[ -\frac{\bm{x}_t - \alpha_t \bm{x}_0}{\sigma_t^2} \;\middle|\; \bm{x}_t \right] = -\frac{\bm{x}_t - \alpha_t \mathbb{E}[\bm{x}_0 \mid \bm{x}_t]}{\sigma_t^2}.
\end{equation}
Rearranging this expression yields the optimal denoised estimate for $\bm{x}_0$:
\begin{equation}
    \label{eq:x0_posterior}
    \mathbb{E}[\bm{x}_0 \mid \bm{x}_t] = \frac{1}{\alpha_t}\bm{x}_t + \frac{\sigma_t^2}{\alpha_t} \nabla_{\bm{x}_t} \log p_t(\bm{x}_t).
\end{equation}
Similarly, since the forward process is defined as $\bm{x}_t = \alpha_t \bm{x}_0 + \sigma_t \bm{x}_T$, the expected noise can be written directly in terms of the score:
\begin{equation}
    \label{eq:xT_posterior}
    \mathbb{E}[\bm{x}_T \mid \bm{x}_t] = \frac{\bm{x}_t - \alpha_t \mathbb{E}[\bm{x}_0 \mid \bm{x}_t]}{\sigma_t} = -\sigma_t \nabla_{\bm{x}_t} \log p_t(\bm{x}_t).
\end{equation}
Substituting these posterior expectations back into the marginal velocity definition (\cref{eq:marginal_velocity_def}), we arrive at the exact relationship:
\begin{align}
    \bm{u}_t(\bm{x}_t) 
    &= \dot{\alpha}_t \left( \frac{1}{\alpha_t}\bm{x}_t + \frac{\sigma_t^2}{\alpha_t} \nabla_{\bm{x}_t} \log p_t(\bm{x}_t) \right) - \dot{\sigma}_t \sigma_t \nabla_{\bm{x}_t} \log p_t(\bm{x}_t) \nonumber \\
    \label{eq:marginal_velocity_score_app}
    &= \frac{\dot{\alpha}_t}{\alpha_t} \bm{x}_t - \underbrace{\left( \dot{\sigma}_t \sigma_t - \sigma_t^2 \frac{\dot{\alpha}_t}{\alpha_t} \right)}_{=:C_t} \nabla_{\bm{x}_t} \log p_t(\bm{x}_t).
\end{align} 

\paragraph{Mean-Shifted Reverse Kernel.}
With this exact relationship established, we can now inject the reward signal into the reverse generative dynamics. Starting from the optimal target marginal in \cref{eq:target_marginal_app}, we decompose its score as follows:
\begin{align}
    \label{eq:tilted_score_app}
    \nabla_{\bm{x}_t}\log p^\star(\bm{x}_t)
    &=
    \nabla_{\bm{x}_t}
    \log
    \left(
        \frac{1}{Z} 
        p_t(\bm{x}_t)\exp(V_t(\bm{x}_t)/\beta)
    \right) \nonumber\\
    &=
    \nabla_{\bm{x}_t}\log p_t(\bm{x}_t)
    +
    \frac{1}{\beta}\nabla_{\bm{x}_t}V_t(\bm{x}_t).
\end{align}
This demonstrates that the optimal tilted score is the pretrained score shifted by the temperature-scaled value gradient. 

Next, we substitute this tilted score alongside the corresponding tilted velocity, $\bm{u}^\star_t(\bm{x}_t) = \bm{u}_t(\bm{x}_t) - \frac{C_t}{\beta} \nabla_{\bm{x}_t}V_t(\bm{x}_t)$, into the reverse SDE (\cref{eq:reverse_sde_app}) to obtain the modified reverse drift:
\begin{align} 
    \label{eq:tilted_drift_app} 
    \bm{f}_t^\star(\bm{x}_t) 
    &= \bm{u}^\star_t(\bm{x}_t) - \frac{\sigma_t^2}{2} \nabla_{\bm{x}_t}\log p^\star(\bm{x}_t) \nonumber\\ 
    &= \left( \bm{u}_t(\bm{x}_t) - \frac{C_t}{\beta} \nabla_{\bm{x}_t}V_t(\bm{x}_t) \right) - \frac{\sigma_t^2}{2} \left( \nabla_{\bm{x}_t}\log p_t(\bm{x}_t) + \frac{1}{\beta}\nabla_{\bm{x}_t}V_t(\bm{x}_t) \right) \nonumber\\ 
    &= \bm{f}_t(\bm{x}_t) - \frac{C_t + \frac{\sigma_t^2}{2}}{\beta} \nabla_{\bm{x}_t}V_t(\bm{x}_t).
\end{align}
Because the exact value function gradient is analytically intractable, we employ the approximation from \cref{eq:value_approx_app}:
\begin{equation}
    \label{eq:value_grad_approx_app}
    \nabla_{\bm{x}_t}V_t(\bm{x}_t)
    \approx
    \nabla_{\bm{x}_t}r(\hat{\bm{x}}_{0|t}).
\end{equation}
We absorb the schedule-dependent coefficient $\bigl(C_t + \sigma_t^2/2\bigr)/\beta$ into a single guidance hyperparameter $\lambda_t$.
Discretizing the modified reverse SDE then gives the mean-shifted reverse kernel:
\begin{equation}
    \label{eq:mean_shift_kernel_app}
    \bm{x}_{t-1}
    =
    \bm{\mu}_\theta(\bm{x}_t, t)
    +
    \lambda_t\nabla_{\bm{x}_t} r(\hat{\bm{x}}_{0|t})
    +
    \sigma_t \bm{\epsilon}_t,
    \qquad
    \bm{\epsilon}_t \sim \mathcal{N}(\bm{0}, \bm{I}).
\end{equation}
While the derivation prescribes $\lambda_t = \bigl(C_t + \sigma_t^2/2\bigr)/\beta$, in practice $\lambda_t$ is treated as an independent hyperparameter.
This corresponds to the mean-shifted reverse kernel introduced in \cref{subsec:grad_kernel}.

\subsection{Approach 2: Search-Based Reverse Kernel}
\label{subsec:search_app}

In contrast to modifying the deterministic drift, an alternative approximation strategy draws multiple candidate perturbations from the base reverse kernel and biases the final selection toward those yielding higher rewards. 
Starting from the optimal reward-tilted reverse kernel in \cref{eq:optimal_prop_app}, we can approximate it empirically using $K$ candidate samples drawn from the base reverse kernel:
\begin{align}
    \label{eq:importance_sampling_app}
    p_\theta^\star(\bm{x}_{t-1}|\bm{x}_t)
    &\approx
    \sum_{i=1}^{K}
    \frac{
        w_{t-1}^{(i)}
    }{
        \sum_{j=1}^{K} w_{t-1}^{(j)}
    }
    \delta_{\bm{x}_{t-1}^{(i)}}(\bm{x}_{t-1}), \\
    \bm{x}_{t-1}^{(i)}
    &= 
    \bm{\mu}_\theta(\bm{x}_t,t)
    +
    \sigma_t \bm{\epsilon}_t^{(i)},
    \qquad
    \bm{\epsilon}_t^{(i)}\sim\mathcal{N}(\bm{0},\bm{I}), \nonumber\\
    w_{t-1}^{(i)}
    &=
    \exp\!\left(
        V_{t-1}(\bm{x}_{t-1}^{(i)})/\beta
    \right). \nonumber
\end{align}
Here, each candidate is generated from the same base reverse kernel, and the reward tilt appears only through the selection weights.

Using the value approximation in \cref{eq:value_approx_app}, the weights can be approximated as
\begin{equation}
    \label{eq:search_weight_approx_app}
    w_{t-1}^{(i)}
    \approx
    \exp\!\left(
        r(\hat{\bm{x}}_{0|t-1}^{(i)})/\beta
    \right),
\end{equation}
where $\hat{\bm{x}}_{0|t-1}^{(i)}$ denotes the Tweedie estimate associated with the $i$-th candidate.
This gives a search-based approximation to the optimal reward-tilted reverse kernel by favoring candidates with larger estimated terminal reward.

In practice, a common hard-selection variant replaces stochastic resampling by selecting the maximum-weight candidate.
In particular, SVDD~\cite{Li2024:SVDD} can be interpreted as the argmax form of this search procedure:
\begin{align}
    \label{eq:svdd_argmax_app}
    i^\star
    &=
    \underset{i\in\{1,\dots,K\}}{\arg\max}\,
    r(\hat{\bm{x}}_{0|t-1}^{(i)}), \\
    \bm{x}_{t-1}
    &=
    \bm{x}_{t-1}^{(i^\star)}. \nonumber
\end{align}

Conditioned on $\bm{x}_t$, search-based guidance samples multiple perturbations around the same base center $\bm{\mu}_\theta(\bm{x}_t,t)$ and selects one according to reward.
This can be interpreted as inducing a biased distribution over the perturbation variable itself, while preserving the base reverse-kernel sampling form during candidate generation.
Under a local linear reward model around the base reverse update, the induced perturbation bias is aligned with the reward-gradient direction, so search-based guidance may be viewed as an implicit form of noise tilting, as illustrated in \cref{fig:method_illust}.
This observation provides a natural bridge to our \Ours, which realizes reward guidance explicitly through a single tilted perturbation.

\subsection{Best of Both Worlds: \Ours{}}
\label{subsec:ours_app}

We now interpret \Ours\ by connecting the two approximation routes discussed above.
From the perspective of mean-shifted reverse kernels, our goal is to retain the guidance advantage of first-order reward information without modifying the deterministic term of the reverse update.
From the perspective of search-based guidance, our goal is to retain the noise-compatible-regime advantage of sampling around the base update center, while avoiding the multi-draw search cost.
These two viewpoints meet at the same principle: reward information should be injected through the perturbation variable rather than through a shift of the deterministic term.

Accordingly, \Ours\ keeps the base reverse update center $\bm{\mu}_\theta(\bm{x}_t,t)$ and replaces the standard Gaussian perturbation by a reward-tilted perturbation:
\begin{equation}
    \label{eq:ntrk_kernel_app}
    \bm{x}_{t-1}
    =
    \bm{\mu}_\theta(\bm{x}_t,t)
    +
    \sigma_t \tilde{\bm{\epsilon}}_t,
\end{equation}
where
\begin{equation}
    \label{eq:ntrk_noise_app}
    \tilde{\bm{\epsilon}}_t
    =
    \sqrt{\rho_t}\,\mathcal{W}\!\left(\nabla_{\bm{x}_t} r(\hat{\bm{x}}_{0|t})\right)
    +
    \sqrt{1-\rho_t}\,\bm{\epsilon}_t,
    \qquad
    \bm{\epsilon}_t\sim\mathcal{N}(\bm{0},\bm{I}).
\end{equation}
Here, $\rho_t\in[0,1]$ controls the strength of the reward-informed perturbation, and $\mathcal{W}(\cdot)$ denotes the whitening operator introduced in \cref{sec:method} and detailed in \cref{app:whitening_operator}.
The role of $\mathcal{W}(\cdot)$ is to map the reward gradient to a typical noise-like vector, so that the resulting update remains in the noise-compatible regime.

\paragraph{From the mean-shift perspective.}
Comparing \cref{eq:ntrk_kernel_app,eq:ntrk_noise_app} with the mean-shifted reverse kernel in \cref{eq:mean_shift_kernel_app}, both methods exploit the same local first-order reward signal $\nabla_{\bm{x}_t} r(\hat{\bm{x}}_{0|t})$.
However, the way this signal enters the reverse update is fundamentally different.
Mean-shifted guidance adds the reward gradient directly to the deterministic term, thereby changing the center of the reverse update.
In contrast, \Ours\ preserves the deterministic term $\bm{\mu}_\theta(\bm{x}_t,t)$ and injects reward information only through the perturbation.
Thus, \Ours\ may be viewed as a mean-preserving reformulation of first-order reward guidance: it retains the single-draw efficiency of mean-shifted guidance while avoiding the explicit center shift that moves the update away from the noise-compatible regime.

\paragraph{From the search-based perspective.}
Comparing \cref{eq:ntrk_kernel_app,eq:ntrk_noise_app} with the search-based update in \cref{eq:importance_sampling_app,eq:svdd_argmax_app}, both methods generate reward-guided updates around the same base center $\bm{\mu}_\theta(\bm{x}_t,t)$.
Search-based guidance does so implicitly by drawing multiple perturbations from the base reverse kernel and selecting the most favorable one according to reward.
As discussed in \cref{subsec:search_app}, under a local linear assumption, this selection induces a bias over the perturbation variable aligned with the reward-gradient direction.
From this viewpoint, \Ours\ can be interpreted as an explicit single-draw realization of that idea: instead of performing a $K$-candidate search and selecting a favorable perturbation afterward, we directly construct a reward-tilted perturbation through \cref{eq:ntrk_noise_app}.

Therefore, \Ours\ combines the key advantages of both existing viewpoints.
Unlike mean-shifted reverse kernels, it stays within the noise-compatible regime by preserving the base update center.
Unlike search-based guidance, it achieves reward alignment with a single draw per step.
In this sense, \Ours\ bridges the two approaches by realizing reward guidance explicitly through a single typical noise-like perturbation.

\subsection{Interpretation of the Guidance Strength $\rho_t$ in \Ours}
\label{subsec:rho_app}

We now give an intuitive interpretation of the guidance strength $\rho_t$ by comparing \Ours\ with search-based guidance.
Our goal is not to claim an exact equivalence, but to clarify how increasing $\rho_t$ changes the effective strength of reward guidance relative to search-based methods.
The discussion below is heuristic and relies on a local linearity assumption around the base reverse update.
Although this assumption need not hold exactly, it reflects the common working belief behind gradient-based guidance methods: the reward gradient often provides a meaningful direction not only infinitesimally, but also over a practically relevant local neighborhood.

Fix $\bm{x}_t$ and consider perturbations around the base reverse update center $\bm{\mu}_\theta(\bm{x}_t,t)$.
For a perturbation $\bm{\epsilon}$, define
\begin{equation}
    \label{eq:reverse_update_eps_app}
    \bm{x}_{t-1}(\bm{\epsilon})
    :=
    \bm{\mu}_\theta(\bm{x}_t,t) + \sigma_t \bm{\epsilon},
\end{equation}
and let $\hat{\bm{x}}_{0|t-1}(\bm{\epsilon})$ denote the Tweedie estimate obtained from $\bm{x}_{t-1}(\bm{\epsilon})$.

To connect this interpretation with the actual direction used in practice, we approximate the local reward-improving perturbation direction around $\bm{\epsilon}=\bm{0}$ by the normalized reward gradient at the current state:
\begin{equation}
    \label{eq:ut_def_app}
    \bm{u}_t
    :=
    \frac{\nabla_{\bm{x}_t} r(\hat{\bm{x}}_{0|t})}
    {\|\nabla_{\bm{x}_t} r(\hat{\bm{x}}_{0|t})\|_2},
\end{equation}
whenever $\nabla_{\bm{x}_t} r(\hat{\bm{x}}_{0|t}) \neq \bm{0}$.
Under this approximation, the reward around the base reverse update is modeled by the first-order form
\begin{equation}
    \label{eq:local_linearity_reward_app}
    r\!\left(\hat{\bm{x}}_{0|t-1}(\bm{\epsilon})\right)
    \approx
    c_t + \kappa_t \langle \bm{u}_t, \bm{\epsilon} \rangle,
\end{equation}
where
\begin{equation}
    \label{eq:ct_def_app}
    c_t := r\!\left(\hat{\bm{x}}_{0|t-1}(\bm{0})\right),
\end{equation}
and $\kappa_t > 0$ is a local sensitivity coefficient.
Thus, under this local approximation, candidate ranking depends only on the one-dimensional projected perturbation coordinate
\begin{equation}
    \label{eq:projected_coordinate_app}
    z := \langle \bm{u}_t, \bm{\epsilon} \rangle.
\end{equation}

\paragraph{Search-based perspective.}
As discussed in \cref{subsec:search_app}, search-based guidance draws $K$ candidates from the base reverse kernel and performs importance sampling according to their estimated rewards.
Under \cref{eq:local_linearity_reward_app}, the importance weight of the $i$-th candidate satisfies
\begin{equation}
    \label{eq:search_linear_weight_app}
    w_i
    \propto
    \exp\!\left(
        r\!\left(\hat{\bm{x}}_{0|t-1}(\bm{\epsilon}_t^{(i)})\right)
    \right)
    \approx
    \exp(c_t)\exp(\kappa_t z_i),
\end{equation}
where
\begin{equation}
    \label{eq:search_projected_coordinate_app}
    z_i
    :=
    \langle \bm{u}_t, \bm{\epsilon}_t^{(i)} \rangle,
    \qquad
    \bm{\epsilon}_t^{(i)} \sim \mathcal{N}(\bm{0}, \bm{I}).
\end{equation}
Since the base perturbation is isotropic and $\|\bm{u}_t\|_2=1$, each $z_i$ follows $\mathcal{N}(0,1)$.
Thus, search-based guidance increasingly favors candidates with larger projected coordinate $z_i$.
In the hard-selection form used by SVDD~\cite{Li2024:SVDD}, this reduces to selecting
\begin{equation}
    \label{eq:search_projected_max_app}
    z_K^\star = \max_{1 \le i \le K} z_i.
\end{equation}
Define
\begin{equation}
    \label{eq:mK_def_app}
    m_K
    :=
    \mathbb{E}[z_K^\star]
    =
    \mathbb{E}\!\left[\max_{1 \le i \le K} z_i\right].
\end{equation}
Then larger $K$ yields a larger average displacement along the reward-improving direction.
For large $K$, the standard extreme-value approximation gives
\begin{equation}
    \label{eq:mK_approx_app}
    m_K \approx \sqrt{2\log K}.
\end{equation}

\paragraph{NTRK perspective.}
For \Ours, the perturbation is
\begin{equation}
    \label{eq:ntrk_noise_rho_app}
    \tilde{\bm{\epsilon}}_t
    =
    \sqrt{\rho_t}\,\bm{w}_t
    +
    \sqrt{1-\rho_t}\,\bm{\epsilon}_t,
    \qquad
    \bm{\epsilon}_t \sim \mathcal{N}(\bm{0},\bm{I}),
\end{equation}
where
\begin{equation}
    \label{eq:ntrk_whitened_dir_app}
    \bm{w}_t
    :=
    \mathcal{W}\!\left(\nabla_{\bm{x}_t} r(\hat{\bm{x}}_{0|t})\right).
\end{equation}
Projecting onto the same reward-aligned unit direction $\bm{u}_t$ gives
\begin{align}
    \label{eq:ntrk_projected_coordinate_app}
    \tilde{z}_t
    :=
    \langle \bm{u}_t,\tilde{\bm{\epsilon}}_t\rangle
    &=
    \sqrt{\rho_t}\,
    \left\langle
        \bm{u}_t,
        \mathcal{W}\!\left(\nabla_{\bm{x}_t} r(\hat{\bm{x}}_{0|t})\right)
    \right\rangle
    +
    \sqrt{1-\rho_t}\,
    \langle \bm{u}_t,\bm{\epsilon}_t\rangle \nonumber\\
    &=
    \sqrt{\rho_t}\,a_t
    +
    \sqrt{1-\rho_t}\,z_t,
\end{align}
where
\begin{equation}
    \label{eq:ntrk_alignment_coeff_app}
    a_t
    :=
    \left\langle
        \bm{u}_t,
        \mathcal{W}\!\left(\nabla_{\bm{x}_t} r(\hat{\bm{x}}_{0|t})\right)
    \right\rangle,
    \qquad
    z_t
    :=
    \langle \bm{u}_t,\bm{\epsilon}_t\rangle.
\end{equation}
Since $\|\bm{u}_t\|_2=1$ and $\bm{\epsilon}_t\sim\mathcal{N}(\bm{0},\bm{I})$, we have $z_t\sim\mathcal{N}(0,1)$, and therefore
\begin{equation}
    \label{eq:ntrk_projected_stats_app}
    \mathbb{E}[\tilde{z}_t]
    =
    \sqrt{\rho_t}\,a_t.
\end{equation}

The key quantity is thus the alignment coefficient $a_t$.
It measures how strongly the whitened reward-informed perturbation projects onto the locally reward-improving direction $\bm{u}_t$.
As discussed later in \cref{app:whitening_operator}, the whitening operator $\mathcal{W}$ is designed to preserve a meaningful cosine similarity with the original reward gradient.
Consequently, even in very high dimension, $a_t$ can remain substantial.

To see this, suppose the cosine similarity between $\bm{w}_t := \mathcal{W}(\nabla_{\bm{x}_t} r(\hat{\bm{x}}_{0|t}))$ and $\bm{u}_t$ is around $0.1$.
When $N=65536$, we typically have $\|\bm{w}_t\|_2 \approx \sqrt{N} = 256$, so
\begin{equation}
    \label{eq:at_example_app}
    a_t
    =
    \langle \bm{u}_t,\bm{w}_t\rangle
    \approx
    0.1 \times 256
    =
    25.6.
\end{equation}
Thus, even a seemingly small value of $\rho_t$ can already induce a strong reward-aligned bias through the factor $\sqrt{\rho_t}\,a_t$.

Under this interpretation, a local heuristic correspondence between search-based guidance and \Ours\ is obtained by matching the average projected displacement:
\begin{equation}
    \label{eq:rho_k_match_app}
    \sqrt{\rho_t}\,a_t \approx m_K.
\end{equation}
Using the large-$K$ approximation in \cref{eq:mK_approx_app}, this gives
\begin{equation}
    \label{eq:rho_k_match_exp_app}
    K_{\mathrm{eff}}
    \approx
    \exp\!\left(\frac{\rho_t a_t^2}{2}\right).
\end{equation}
Although this correspondence is only heuristic, it clarifies the role of $\rho_t$: increasing $\rho_t$ rapidly increases the effective search strength.

For example, taking the value $a_t=25.6$ from \cref{eq:at_example_app} gives
\begin{align}
    \rho_t = 0.01
    &\quad\Longrightarrow\quad
    \sqrt{\rho_t}\,a_t = 2.56,
    \qquad
    K_{\mathrm{eff}}
    \approx
    \exp(3.28)
    \approx
    2.6\times 10^{1}, \nonumber\\
    \rho_t = 0.10
    &\quad\Longrightarrow\quad
    \sqrt{\rho_t}\,a_t = 8.10,
    \qquad
    K_{\mathrm{eff}}
    \approx
    \exp(32.77)
    \approx
    1.7\times 10^{14}. \label{eq:rho_examples_app}
\end{align}
Therefore, once the preserved directional alignment is taken into account, even moderate values of $\rho_t$ can correspond to an astronomically large effective search strength that would be infeasible for natural search-based methods.

\begin{figure*}[t]
\centering

\newlength{\ablationpairheight}
\setlength{\ablationpairheight}{6.75cm}

\begin{minipage}[t][\ablationpairheight][t]{0.60\textwidth}
    \centering
    \captionof{table}{
    \footnotesize
    \textbf{Ablation on the guidance strength $\rho_t$ for \Ours\ on aesthetic image generation.}
    We fix the sampling configuration (NFE = 25) and vary only $\rho_t$.
    Dark green and light green denote the best and second-best results, respectively.
    }
    \label{tab:rho_ablation_aesthetic}

    \vspace{0.0cm}
    \renewcommand{\arraystretch}{1.10}
    \setlength{\tabcolsep}{4.5pt}
    \scriptsize

    \resizebox{\linewidth}{!}{%
    \begin{tabular}{l c c c c c}
    \toprule
    \multirow{2}{*}{\textbf{$\rho_t$}}
    & \multicolumn{1}{c}{\makecell{\textbf{Target}\\\textbf{Reward}}}
    & \multicolumn{4}{c}{\textbf{Held-out Rewards}} \\
    \cmidrule(lr){2-2}\cmidrule(lr){3-6}
    & \makecell{\textbf{Aesthetic}\\\textbf{Score}} $\uparrow$
    & \makecell{\textbf{Pick}\\\textbf{-score}} $\uparrow$
    & \textbf{HPSv2} $\uparrow$
    & \makecell{\textbf{Image}\\\textbf{Reward}} $\uparrow$
    & \makecell{\textbf{VQA}\\\textbf{Score}} $\uparrow$ \\
    \midrule
    0.01 & \textbf{6.5127} & \cellcolor{lightgreen}0.2205 & \cellcolor{darkgreen}0.2979 & 1.2773 & 0.9724 \\
    0.03 & \textbf{6.7445} & \cellcolor{darkgreen}0.2212 & \cellcolor{lightgreen}0.2977 & \cellcolor{lightgreen}1.3135 & 0.9687 \\
    0.10 & \textbf{7.1128} & \cellcolor{lightgreen}0.2205 & 0.2959 & 1.2431 & \cellcolor{darkgreen}0.9730 \\
    0.20 & \textbf{7.3265} & 0.2204 & 0.2976 & \cellcolor{darkgreen}1.3386 & 0.9716 \\
    0.30 & \textbf{7.4510} & 0.2200 & 0.2928 & 1.2565 & \cellcolor{lightgreen}0.9728 \\
    0.50 & \cellcolor{lightgreen}\textbf{7.5892} & 0.2177 & 0.2916 & 1.1592 & 0.9674 \\
    1.00 & \cellcolor{darkgreen}\textbf{7.7571} & 0.2142 & 0.2789 & 0.9576 & 0.9492 \\
    \bottomrule
    \end{tabular}%
    }
\end{minipage}
\hfill
\begin{minipage}[t][\ablationpairheight][b]{0.37\textwidth}
    \centering
    \includegraphics[width=\linewidth]{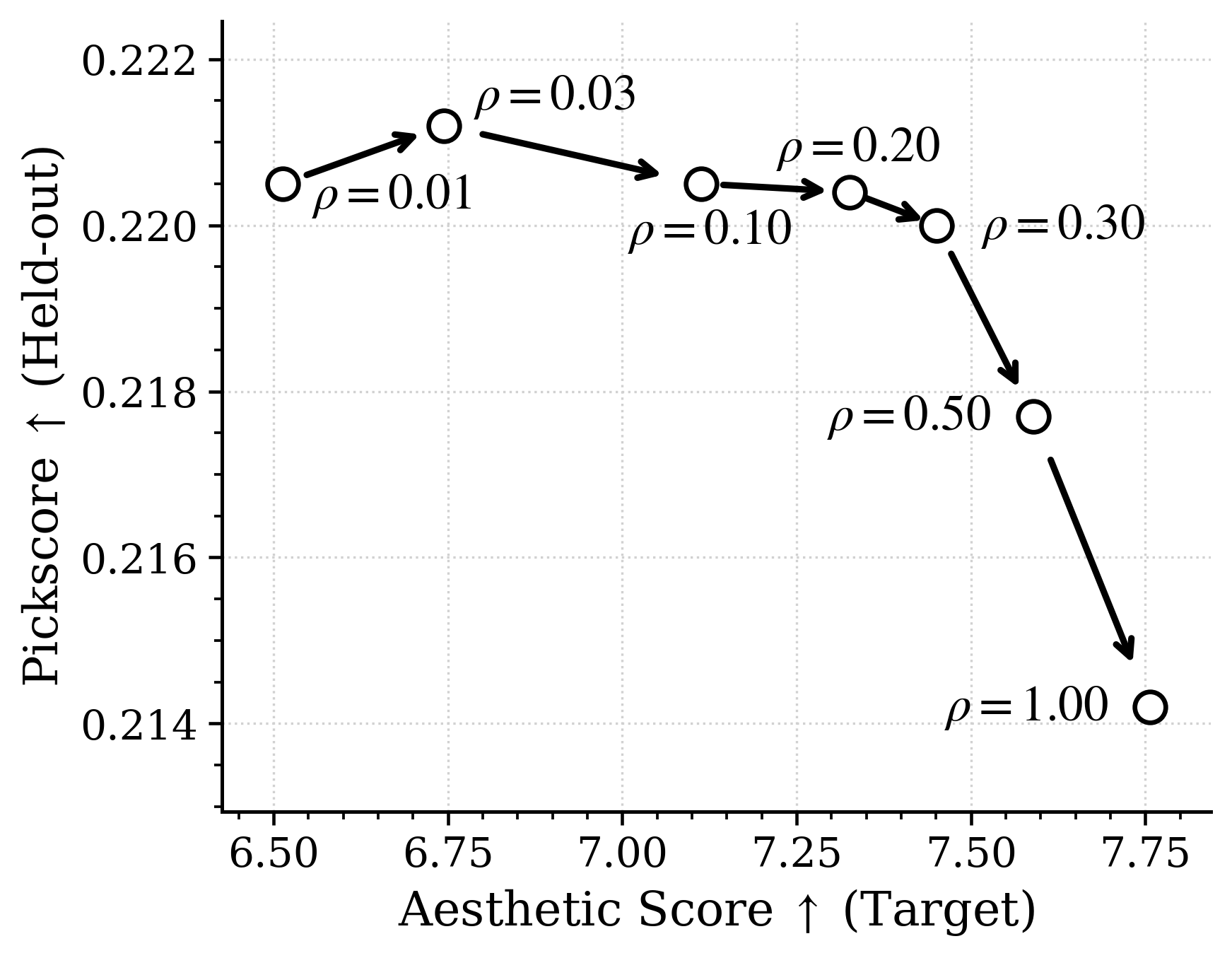}
    \vspace{-0.7cm}

    \captionof{figure}{
    \footnotesize
    \textbf{Reward trade-off trajectory as $\rho_t$ increases.}
    Moderate $\rho_t$ improves the target reward while preserving held-out quality, whereas overly large $\rho_t$ degrades the held-out reward.
    }
    \label{fig:rho_tradeoff_aesthetic}
\end{minipage}

\end{figure*}

Although the correspondence above is only heuristic, it suggests a clear practical implication: large $\rho_t$ acts like an increasingly greedy effective search over perturbations.
In multi-step reverse sampling, however, stronger greediness is not always desirable.
As $\rho_t \to 1$, the perturbation in \cref{eq:ntrk_noise_rho_app} becomes nearly deterministic, and the residual exploration term vanishes.
Thus, each reverse step relies more heavily on the reward-aligned component and less on stochastic exploration.
While this can improve the immediate target reward, repeatedly applying such nearly deterministic updates may reduce robustness to local approximation error and gradually weaken compatibility with the noise-compatible regime, which can in turn harm held-out quality.

This trade-off is consistent with the ablation results in \cref{tab:rho_ablation_aesthetic}.
Moderate values of $\rho_t$ improve the target aesthetic score while largely preserving held-out rewards, whereas overly large values of $\rho_t$ degrade held-out quality.
Thus, the most effective operating regime is neither pure random sampling ($\rho_t \approx 0$) nor fully deterministic reward chasing ($\rho_t \approx 1$), but a moderate intermediate regime that balances reward alignment and stochastic exploration.

\clearpage
\newpage

\section{Whitening Operator}
\label{app:whitening_operator}

In \cref{sec:method}, we introduced the whitening operator $\mathcal{W}$ as the key component that enables reward guidance through the noise term while preserving the pretrained reverse-kernel mean.
Its role is to transform an arbitrary input signal, such as a reward gradient, into a perturbation that remains compatible with the pretrained reverse dynamics.

More precisely, our goal is to move an arbitrary vector toward the \emph{noise-compatible regime}, namely, the regime of perturbations that exhibit the statistical characteristics of typical standard Gaussian noise and are therefore naturally accepted by the pretrained model.
As discussed in \cref{sec:method}, although $\mathcal{N}(\bm{0}, \bm{I})$ has nonzero density everywhere in $\mathbb{R}^N$, in high dimensions almost all of its probability mass concentrates on a narrow typical region.
The exact typical set is difficult to characterize in closed form, so instead of attempting to describe it exactly, we approximate it by intersecting a collection of high-confidence constraint sets induced by known statistics of the standard Gaussian distribution.

Accordingly, the whitening operator $\mathcal{W}$ is implemented as a sequence of projections onto such confidence sets.
Each component enforces one statistical property that typical white Gaussian noise satisfies with overwhelming probability.
The resulting operator is modular, computationally efficient, and practically effective: it leaves already-typical Gaussian noise almost unchanged, while substantially suppressing structured or correlated artifacts in arbitrary inputs.

\subsection{Typical Noise Accepted by a Pretrained Model}
\label{app:whitening_background}

The pretrained reverse kernel is calibrated under the assumption that its standardized perturbation follows standard Gaussian noise.
Therefore, when we inject a reward-informed vector through the noise term, the relevant question is not merely whether the vector has the correct overall scale, but whether it lies in the regime of perturbations that the pretrained model naturally accepts.
In this subsection, we clarify this regime and motivate why our whitening operator is designed through confidence-interval projections rather than simple norm matching.

Let $\bm{z}\in\mathbb{R}^N$ follow the standard multivariate Gaussian distribution
\begin{equation}
p(\bm{z})
=
(2\pi)^{-N/2}
\exp\!\left(
-\frac{\|\bm{z}\|_2^2}{2}
\right).
\label{eq:gaussian_pdf_app}
\end{equation}
Although this density is radially symmetric, its probability mass concentrates sharply in high dimensions.
In particular,
\begin{equation}
\|\bm{z}\|_2^2 \sim \chi^2_N,
\label{eq:chi_square_app}
\end{equation}
whose mean is $N$ and whose standard deviation is $\sqrt{2N}$.
As $N$ grows, the relative fluctuation of $\|\bm{z}\|_2^2$ becomes small, so most samples concentrate near a thin hyperspherical shell of radius approximately $\sqrt{N}$.
This is the familiar norm concentration phenomenon of high-dimensional Gaussian noise.

A natural first approximation is therefore to regard vectors on the hypersphere $\|\bm{z}\|_2=\sqrt{N}$ as typical.
However, this is too weak for pretrained generative models.

\begin{figure}[t]
\centering
\setlength{\tabcolsep}{1.0pt}
\newcolumntype{Y}{>{\centering\arraybackslash}X}
\renewcommand{\arraystretch}{1.0}
\small

\newcommand{\ImgBox}[1]{%
  \fbox{\includegraphics[width=\linewidth]{#1}}%
}
\setlength{\fboxsep}{0pt}      
\setlength{\fboxrule}{0.25pt}  

\newcommand{\TNoise}[1]{\ImgBox{Figures/NoiseDistinct/noises/#1}}
\newcommand{\TFlux}[1]{\ImgBox{Figures/NoiseDistinct/noises_flux/#1}}
\newcommand{\ANoise}[1]{\ImgBox{Figures/NoiseDistinct/noises/#1}}
\newcommand{\AFlux}[1]{\ImgBox{Figures/NoiseDistinct/noises_flux/#1}}

\begin{tabularx}{\linewidth}{
@{} *{2}{Y} c|c *{2}{Y}
c | c
*{2}{Y} c|c *{2}{Y} @{}}
\toprule
\multicolumn{6}{c}{\textbf{Typical Noise}} & & &
\multicolumn{6}{c}{\textbf{Atypical Noise}} \\

\multicolumn{2}{c}{\scriptsize Latent $\bm z$ ($\|\bm z\|_2=\sqrt{N}$)} & & &
\multicolumn{2}{c}{\scriptsize Sampled Image} & & &
\multicolumn{2}{c}{\scriptsize Latent $\bm z$ ($\|\bm z\|_2=\sqrt{N}$)} & & &
\multicolumn{2}{c}{\scriptsize Sampled Image} \\
\midrule

\TNoise{00.png} & \TNoise{01.png} & & & \TFlux{00.jpg} & \TFlux{01.jpg} & & &
\ANoise{s00.png} & \ANoise{s01.png} & & & \AFlux{s00.jpg} & \AFlux{s01.jpg} \\[-2pt]

\TNoise{02.png} & \TNoise{03.png} & & & \TFlux{02.jpg} & \TFlux{03.jpg} & & &
\ANoise{s02.png} & \ANoise{s03.png} & & & \AFlux{s02.jpg} & \AFlux{s03.jpg} \\[-2pt]

\TNoise{04.png} & \TNoise{05.png} & & & \TFlux{04.jpg} & \TFlux{05.jpg} & & &
\ANoise{s04.png} & \ANoise{s05.png} & & & \AFlux{s04.jpg} & \AFlux{s05.jpg} \\[-2pt]

\bottomrule
\end{tabularx}

\caption{
\footnotesize
\textbf{Identical norm does not guarantee noise compatibility.}
Latent vectors with identical norm ($\|\bm z\|_2=\sqrt{N}$) can still behave very differently under pretrained reverse dynamics.
Vectors whose statistics resemble typical standard Gaussian noise produce valid samples, whereas structured or spatially correlated vectors, despite lying on the same hypersphere, lead to severe failure cases.
}
\label{fig:noise_init_examples_app}
\end{figure}

As illustrated in \cref{fig:noise_init_examples_app}, latent vectors can have the same norm while behaving very differently in a pretrained model.
When the vector exhibits the fine-scale irregularity characteristic of standard Gaussian noise, the pretrained model produces valid samples.
In contrast, when the vector contains structured spatial correlation or other atypical regularities, the same pretrained model can produce severe failure cases, even though the vector lies on the same hypersphere.
Thus, norm concentration alone does not characterize the perturbation regime that a pretrained model reliably accepts.

This observation suggests that the practically relevant object is not the hypersphere itself, but the subset of vectors whose statistics resemble those of \emph{typical standard Gaussian noise}.
More importantly, these are precisely the perturbations that the pretrained reverse kernel is calibrated to process: during pretraining, the model repeatedly encounters standard Gaussian samples drawn from the reference noise distribution, not arbitrary vectors on the hypersphere.
Therefore, if we wish to inject a reward-informed signal through the noise term while preserving the pretrained reverse dynamics, the injected vector should mimic the statistical characteristics of those typical sampled perturbations as closely as possible.
In this sense, our goal is to move an arbitrary input toward the \emph{noise-compatible regime} of standard Gaussian noise.

Intuitively, such vectors should not only have the correct global scale, but also avoid unnatural local structure, excessive correlation, or anomalous concentration of values and energies.
The exact typical set of high-dimensional Gaussian noise is difficult to describe in a closed form that is mathematically explicit.
Accordingly, rather than attempting to characterize it exactly, we construct a tractable surrogate based on a collection of high-confidence constraints derived from standard Gaussian reference statistics.

Our whitening operator is built from this perspective.
Each component enforces one property that standard Gaussian noise satisfies with overwhelming probability, while modifying the input as little as possible in Euclidean norm.
Concretely, throughout this appendix we use $99.99\%$ confidence intervals (CI) under the reference distribution and realize whitening as a sequence of projections onto the corresponding confidence sets.
This gives a practical approximation of the noise-compatible regime: already-typical Gaussian noise is changed only minimally, while structured or correlated inputs are pushed toward the statistics of typical standard Gaussian noise.

The remaining subsections make this construction explicit.
We first introduce the basic two-level order-statistic (2OS) projection in \cref{app:whitening_2os}, then extend it to tile-wise mean and energy statistics in \cref{app:whitening_tile_mean_energy}, and finally apply the same principle in multiple orthogonal transform domains in \cref{app:whitening_orthogonal}.

\subsection{Confidence-Interval Projection via Two-Level Order Statistics}
\label{app:whitening_2os}

We now introduce the two-level order-statistic projection (2OS), which is a key building block of our whitening operator.
Its purpose is to move an arbitrary input toward \emph{typical standard Gaussian noise} by projecting onto high-confidence sets derived from the reference distribution.
To motivate why 2OS is needed, we first consider two simpler confidence-interval (CI) constructions.
These preliminary constructions are included only to explain the design choice.
They are not components of the full whitening operator.
Rather, they show why we ultimately use two-level order statistics instead of relying only on global value bounds or one-level sorted statistics.

\cref{fig:ci_hierarchy_app} summarizes this build-up.
Panel (a) shows the coarsest possibility: a global confidence interval for a single Gaussian variable.
Panel (b) refines this by using rank-dependent confidence intervals for one-level order statistics.
Panel (c) then shows the two-level construction used in our method, which additionally conditions on the inner rank within each local chunk.
The key distinction is that only this final construction constrains how values are distributed \emph{inside} each chunk, which is what allows it to suppress localized structure.

\begin{figure}[t]
\centering
\setlength{\tabcolsep}{2pt}
\renewcommand{\arraystretch}{0.5}

\begin{tabular}{ccc}
\small \textbf{(a)} \makecell{Global CI}
&
\small \textbf{(b)} \makecell{1OS CI}
&
\small \textbf{(c)} \makecell{2OS CI}
\\
\includegraphics[width=0.33\linewidth]{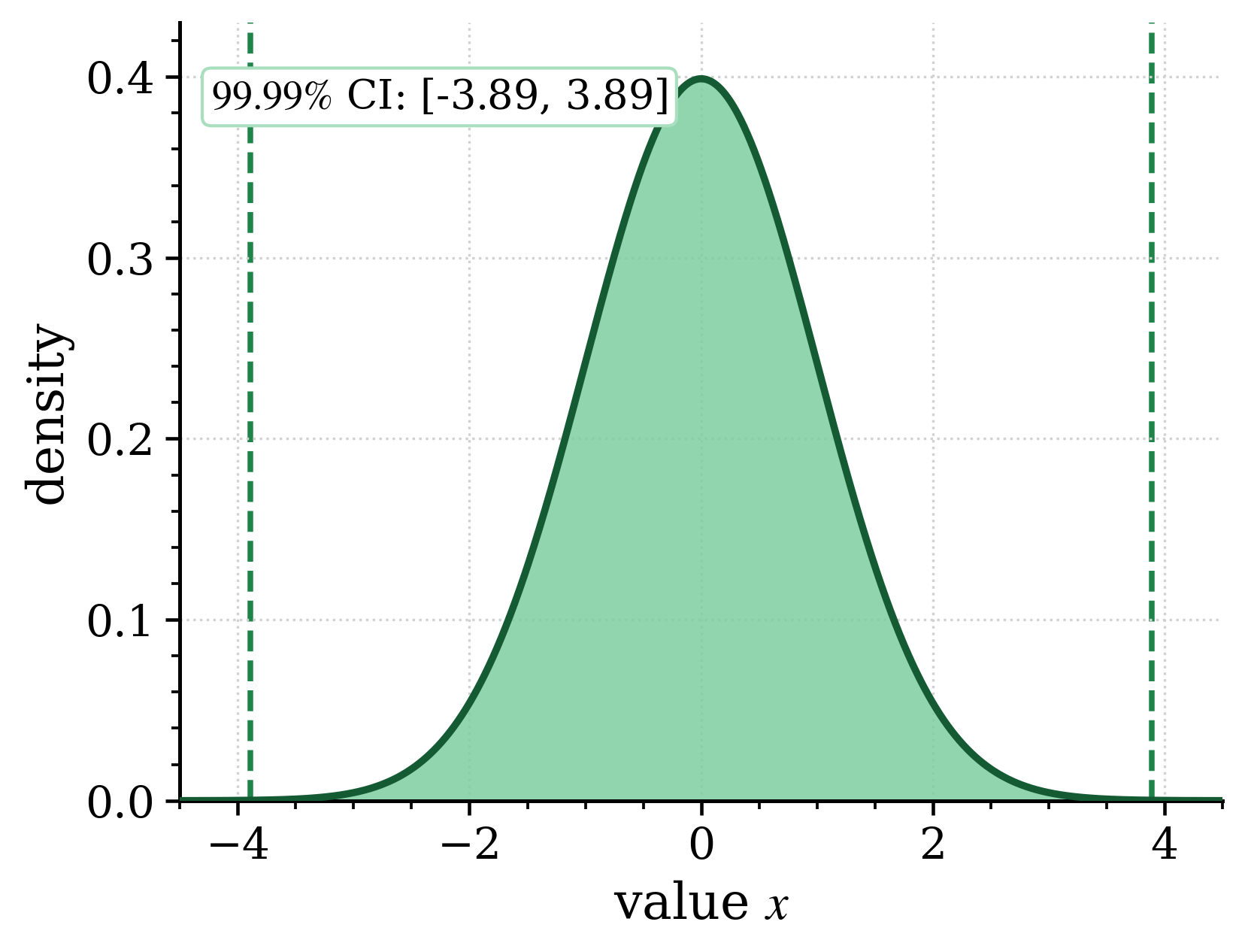}
&
\includegraphics[width=0.33\linewidth]{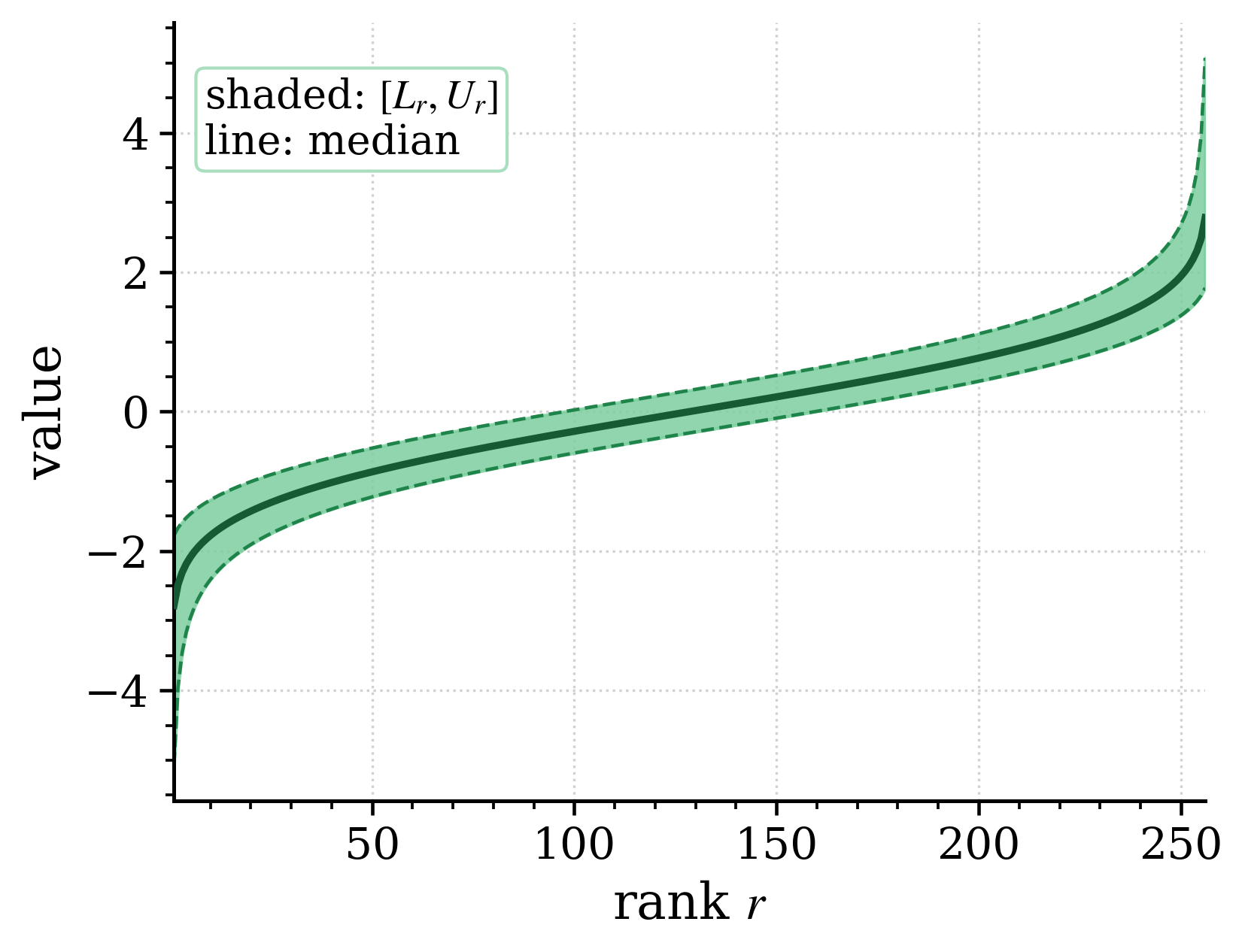}
&
\includegraphics[width=0.33\linewidth]{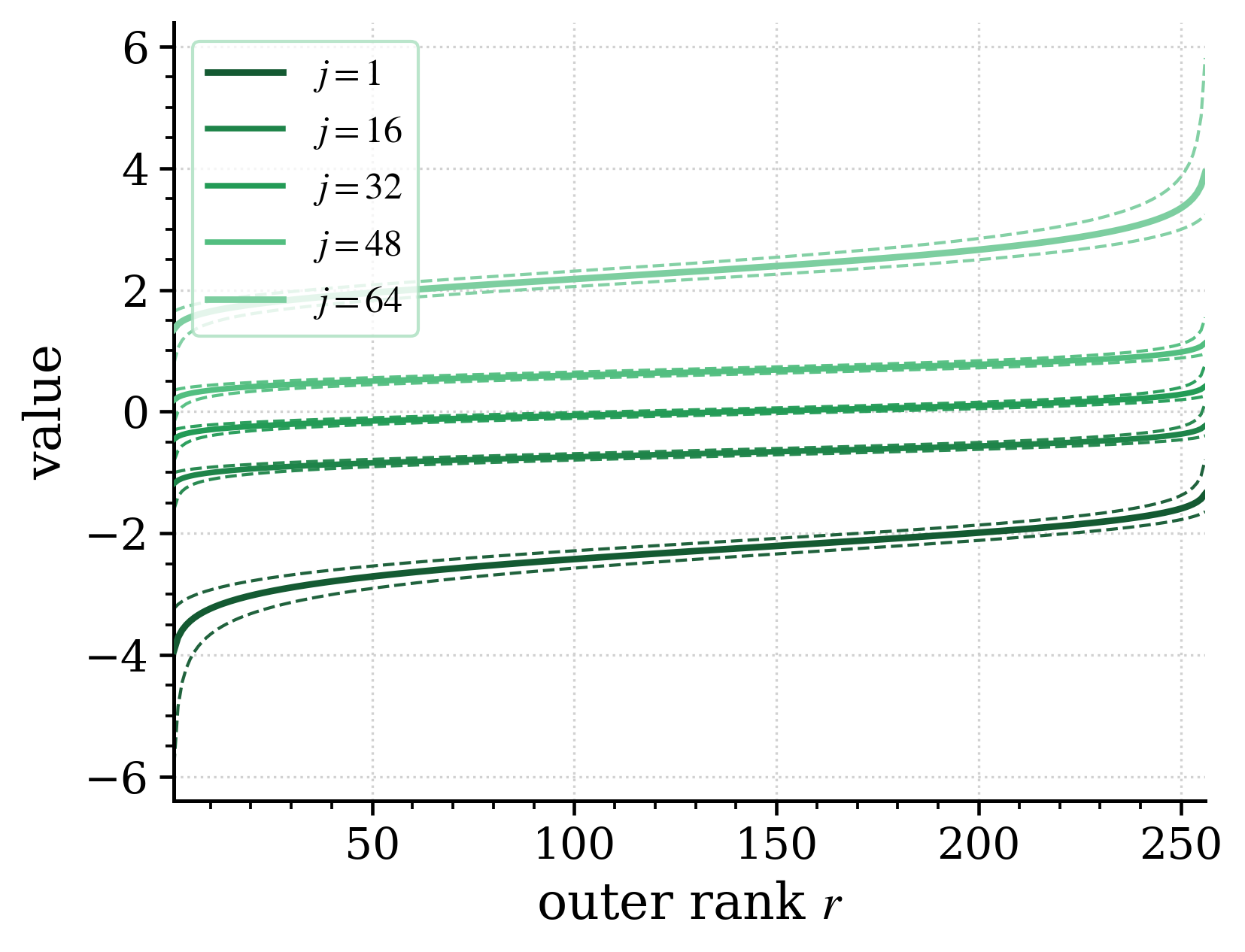}
\end{tabular}
\vspace{-0.3cm}
\caption{
\footnotesize
\textbf{Illustration of the confidence-interval hierarchy.} Confidence-intervals (CI) for the standard normal distribution with confidence level $99.99\%$ are shown.
\textbf{(a)} The coarsest constraint is the global confidence interval for a single variable.
\textbf{(b)} A tighter characterization is obtained by using rank-dependent confidence intervals for one-level order statistics (1OS).
\textbf{(c)} Our two-level order-statistic (2OS) construction further refines the admissible range by conditioning on both the outer rank $r$ and the inner rank $j$ within each chunk.
}
\label{fig:ci_hierarchy_app}
\end{figure}

Throughout this subsection, let $H$ denote a continuous reference CDF and let $H^{-1}$ denote its inverse CDF.
In our Gaussian instantiation, we use $H=\Phi$, the standard normal CDF, and fix the confidence level to $99.99\%$.

\paragraph{A first construction: global confidence intervals (Global CI).}
The coarsest confidence-based construction is to require each coordinate to lie in a global high-confidence interval under the reference distribution; see \cref{fig:ci_hierarchy_app}(a).
Given confidence level $1-\alpha$, define
\begin{equation}
\underline{a}
=
H^{-1}\!\left(\frac{\alpha}{2}\right),
\qquad
\overline{a}
=
H^{-1}\!\left(1-\frac{\alpha}{2}\right),
\label{eq:global_ci_bounds_app}
\end{equation}
and let
\begin{equation}
\mathcal{S}_{\mathrm{box}}
=
\Bigl\{
\bm{y}\in\mathbb{R}^{N}
:\;
\underline{a}\le y_i\le \overline{a}
\;\; \forall i
\Bigr\}.
\label{eq:box_set_app}
\end{equation}

\begin{proposition}
\label{prop:box_projection_app}
The Euclidean projection onto $\mathcal{S}_{\mathrm{box}}$ is given by elementwise clipping:
\begin{equation}
\Pi_{\mathcal{S}_{\mathrm{box}}}(\bm{x})
=
\bigl(
\mathrm{clip}(x_i;\underline{a},\overline{a})
\bigr)_{i=1}^{N}.
\label{eq:box_projection_app}
\end{equation}
\end{proposition}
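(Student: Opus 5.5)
The plan is to exploit the fact that both the objective and the constraint set separate across coordinates. The set $\mathcal{S}_{\mathrm{box}}$ in \cref{eq:box_set_app} is a Cartesian product $\prod_{i=1}^N[\underline{a},\overline{a}]$ of closed intervals. The squared Euclidean distance also splits as
\begin{equation}
\|\bm{y}-\bm{x}\|_2^2=\sum_{i=1}^N (y_i-x_i)^2 .
\end{equation}
So the projection problem $\min_{\bm{y}\in\mathcal{S}_{\mathrm{box}}}\|\bm{y}-\bm{x}\|_2^2$ decouples into $N$ independent scalar problems $\min_{y_i\in[\underline{a},\overline{a}]}(y_i-x_i)^2$. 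A minimizer of the full problem is then obtained by minimizing each term separately. Before anything else I would note that $\underline{a}<\overline{a}$ for $\alpha\in(0,1)$, because $H$ is continuous and strictly increasing for $H=\Phi$. This makes the box nonempty, closed and convex, so the projection exists and is unique.

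Next I would solve the scalar problem by cases. If $x_i\in[\underline{a},\overline{a}]$, then $y_i=x_i$ gives value $0$ and is optimal. If $x_i<\underline{a}$, then $(y_i-x_i)^2$ is strictly increasing in $y_i$ on $[\underline{a},\overline{a}]$, so the minimizer is $y_i=\underline{a}$. Symmetrically, if $x_i>\overline{a}$, the minimizer is $y_i=\overline{a}$. In every case the minimizer is $\mathrm{clip}(x_i;\underline{a},\overline{a})$, and it is unique by strict convexity of the scalar quadratic.

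Assembling the coordinates gives \cref{eq:box_projection_app}. As a cross-check, I could verify the variational inequality $\langle \bm{x}-\Pi(\bm{x}),\bm{y}-\Pi(\bm{x})\rangle\le 0$ for all $\bm{y}\in\mathcal{S}_{\mathrm{box}}$. This also holds coordinatewise: each factor $(x_i-\Pi_i)$ is zero, negative at the lower bound, or positive at the upper bound, with the matching sign of $(y_i-\Pi_i)$.

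There is no real obstacle here. The only point needing care is justifying the decoupling: that the product structure of the feasible set lets the sum of separate minima be attained jointly. I would state this explicitly, since the later 2OS projection result will reuse the same separability idea after sorting.
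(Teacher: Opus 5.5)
Your proposal is correct and follows the same route as the paper: both observe that the squared distance and the box constraints separate across coordinates, so the projection reduces to $N$ independent scalar projections onto $[\underline{a},\overline{a}]$, each of which is clipping. Your explicit case analysis, the uniqueness remark, and the variational-inequality check add detail the paper leaves implicit, but they do not change the argument.
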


\begin{proof}
Because
\begin{equation}
\|\bm{y}-\bm{x}\|_2^2
=
\sum_{i=1}^{N}(y_i-x_i)^2,
\end{equation}
and the constraints in \cref{eq:box_set_app} decouple across coordinates, the minimization separates into $N$ independent scalar projections onto the interval $[\underline{a},\overline{a}]$, which is exactly elementwise clipping.
\end{proof}

This construction preserves natural variability, since it imposes an interval rather than a deterministic equality.
However, it is still far too weak for our purpose.
It constrains only marginal value range, so many structured vectors can satisfy \cref{eq:box_set_app} while remaining far from the noise-compatible regime of typical standard Gaussian noise.

\paragraph{A second construction: one-level order statistics (1OS).}
A more informative construction is to constrain the sorted values themselves, as visualized in \cref{fig:ci_hierarchy_app}(b).
Let $\{x_i\}_{i=1}^{M}$ be i.i.d.\ samples from the reference distribution with CDF $H$, and let $x_{(r)}$ denote the $r$-th smallest sample.
By the probability integral transform, $u_i := H(x_i)$ are i.i.d.\ $\mathrm{Unif}(0,1)$, and therefore
\begin{equation}
u_{(r)}
\sim
\mathrm{Beta}\bigl(r,\,M-r+1\bigr).
\label{eq:orderstat_beta_app}
\end{equation}
This yields rank-dependent confidence bounds in the quantile domain:
\begin{equation}
p^{\mathrm{lo}}_{r}(M)
=
\mathrm{Beta}^{-1}\!\Bigl(
\frac{\alpha}{2};\,
r,\,
M-r+1
\Bigr),
\quad
p^{\mathrm{hi}}_{r}(M)
=
\mathrm{Beta}^{-1}\!\Bigl(
1-\frac{\alpha}{2};\,
r,\,
M-r+1
\Bigr).
\label{eq:beta_bounds_outer_app}
\end{equation}
Mapping them back through $H^{-1}$ gives the value-domain thresholds
\begin{equation}
L_r
=
H^{-1}\!\bigl(p^{\mathrm{lo}}_{r}(M)\bigr),
\qquad
U_r
=
H^{-1}\!\bigl(p^{\mathrm{hi}}_{r}(M)\bigr).
\label{eq:gauss_bounds_outer_app}
\end{equation}

Let $\bm{x}_{\uparrow}=(x_{(1)},\dots,x_{(M)})$ denote the sorted version of $\bm{x}$, and let $P_{\bm{x}}$ be the permutation matrix satisfying $\bm{x}_{\uparrow}=P_{\bm{x}}\bm{x}$.
Define
\begin{equation}
\mathcal{S}_{\mathrm{os}}
=
\Bigl\{
\bm{y}\in\mathbb{R}^{M}
:\;
L_r\le y_{(r)}\le U_r,
\ \ r=1,\dots,M
\Bigr\}.
\label{eq:os_set_app}
\end{equation}

\begin{proposition}
\label{prop:order_projection_app}
A Euclidean projection onto $\mathcal{S}_{\mathrm{os}}$ is obtained by sorting, clipping in sorted order, and undoing the sort:
\begin{equation}
\Pi_{\mathcal{S}_{\mathrm{os}}}(\bm{x})
=
P_{\bm{x}}^{\top}\,
\mathrm{clip}\bigl(
\bm{x}_{\uparrow};\,
\bm{L},\,
\bm{U}
\bigr),
\label{eq:os_projection_app}
\end{equation}
where $\bm{L}=(L_1,\dots,L_M)$ and $\bm{U}=(U_1,\dots,U_M)$.
\end{proposition}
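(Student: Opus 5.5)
The plan is to show that the candidate $\bm{y}^\star := P_{\bm{x}}^{\top}\,\mathrm{clip}(\bm{x}_{\uparrow};\bm{L},\bm{U})$ lies in $\mathcal{S}_{\mathrm{os}}$ and that no point of $\mathcal{S}_{\mathrm{os}}$ is closer to $\bm{x}$. The set $\mathcal{S}_{\mathrm{os}}$ is permutation-invariant but need not be convex, so we only claim that $\bm{y}^\star$ is \emph{a} minimizer, which is why the statement says ``a Euclidean projection.'' The argument has three steps, taken in order: monotonicity of the bounds, feasibility of the candidate, and a lower bound on the distance from $\bm{x}$ to any feasible point.

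\textbf{Step 1 (monotone bounds).} First I will check that $L_r$ and $U_r$ are nondecreasing in $r$. The family $\mathrm{Beta}(r,M-r+1)$ is stochastically increasing in $r$, since $u_{(r)}\le u_{(r+1)}$ pointwise. Hence its $\alpha/2$ and $1-\alpha/2$ quantiles are nondecreasing in $r$, and applying the increasing map $H^{-1}$ preserves this order. Consequently, writing $\bm{c}:=\mathrm{clip}(\bm{x}_{\uparrow};\bm{L},\bm{U})$ with $c_r=\max(L_r,\min(x_{(r)},U_r))$, each $c_r$ is a coordinatewise nondecreasing function of the nondecreasing triple $(x_{(r)},L_r,U_r)$. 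So $\bm{c}$ is itself sorted. This is the step I expect to be the main obstacle: it is the only place where the specific Beta-quantile construction enters, and without it clipping could break the sort order and the claim would fail.

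\textbf{Step 2 (feasibility).} Since $\bm{c}$ is sorted, the $r$-th smallest entry of $\bm{y}^\star=P_{\bm{x}}^\top\bm{c}$ is exactly $c_r\in[L_r,U_r]$, so $\bm{y}^\star\in\mathcal{S}_{\mathrm{os}}$. Ties in $\bm{x}$ cause no difficulty, because any sorting permutation $P_{\bm{x}}$ gives the same sorted vector $\bm{c}$. Moreover, $P_{\bm{x}}$ is orthogonal, so $\|\bm{y}^\star-\bm{x}\|_2=\|\bm{c}-\bm{x}_{\uparrow}\|_2$.

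\textbf{Step 3 (optimality).} Take any $\bm{y}\in\mathcal{S}_{\mathrm{os}}$. Because the norms are permutation-invariant,
\begin{equation*}
\|\bm{y}-\bm{x}\|_2^2=\|\bm{y}\|_2^2+\|\bm{x}\|_2^2-2\langle\bm{y},\bm{x}\rangle,
\qquad
\|\bm{y}_{\uparrow}-\bm{x}_{\uparrow}\|_2^2=\|\bm{y}\|_2^2+\|\bm{x}\|_2^2-2\langle\bm{y}_{\uparrow},\bm{x}_{\uparrow}\rangle.
\end{equation*}
The rearrangement inequality gives $\langle\bm{y},\bm{x}\rangle\le\langle\bm{y}_{\uparrow},\bm{x}_{\uparrow}\rangle$, hence $\|\bm{y}-\bm{x}\|_2^2\ge\|\bm{y}_{\uparrow}-\bm{x}_{\uparrow}\|_2^2$. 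Next, membership $\bm{y}\in\mathcal{S}_{\mathrm{os}}$ means exactly that $\bm{y}_{\uparrow}$ lies in the box $\prod_r[L_r,U_r]$. By the separable argument of \cref{prop:box_projection_app} with rank-dependent endpoints, coordinatewise clipping is the projection onto this box, so $\|\bm{y}_{\uparrow}-\bm{x}_{\uparrow}\|_2\ge\|\bm{c}-\bm{x}_{\uparrow}\|_2=\|\bm{y}^\star-\bm{x}\|_2$. Chaining the two inequalities shows that $\bm{y}^\star$ attains the minimum distance from $\bm{x}$ to $\mathcal{S}_{\mathrm{os}}$, which proves \cref{eq:os_projection_app}.
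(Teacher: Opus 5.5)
Your proof is correct and follows the same route as the paper: permutation invariance plus the rearrangement inequality reduces the problem to comparing $\bm{y}_{\uparrow}$ with $\bm{x}_{\uparrow}$, and coordinatewise clipping then solves the rank-wise box problem. Your Step 1 shows that $L_r$ and $U_r$ are nondecreasing in $r$, so the clipped vector stays sorted and $P_{\bm{x}}^{\top}\mathrm{clip}(\bm{x}_{\uparrow};\bm{L},\bm{U})$ is actually feasible; the paper's proof skips this when it treats the reduced problem as a plain box projection, so your version is the more complete one.
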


\begin{proof}
Any $\bm{y}\in\mathcal{S}_{\mathrm{os}}$ can be written as $\bm{y}=P^\top \bm{y}_{\uparrow}$ for some permutation $P$ and some nondecreasing vector $\bm{y}_{\uparrow}$ satisfying $L_r\le (\bm{y}_{\uparrow})_r\le U_r$.
Since the Euclidean norm is permutation-invariant,
\begin{equation}
\|\bm{y}-\bm{x}\|_2^2
=
\|\bm{y}_{\uparrow}-P\bm{x}\|_2^2.
\end{equation}
For fixed $\bm{y}_{\uparrow}$, this is minimized when $P\bm{x}$ is sorted in the same order as $\bm{y}_{\uparrow}$, that is, when $P=P_{\bm{x}}$, by the rearrangement inequality.
The remaining problem is therefore the Euclidean projection of the sorted vector $\bm{x}_{\uparrow}$ onto the box constraints $L_r\le y_r\le U_r$, which is achieved by elementwise clipping.
\end{proof}

This one-level construction is much tighter than the global interval, since it constrains each rank separately rather than only the overall value range.
For example, when $H=\Phi$, $\alpha=10^{-4}$, and $M=65536$, the global $99.99\%$ confidence interval is $[-3.8906,\,3.8906]$, whose width is $7.7812$.
In contrast, the one-level interval width $U_r-L_r$ is only $0.0381$ near the median rank and $0.0414$ even at $r=16384$, making it roughly $200\times$ tighter than the global interval.
Even at the most extreme rank $r=1$, the width is $2.4282$, which is still more than $3\times$ tighter.

Nevertheless, one-level order statistics still do not solve the main problem.
Because they depend only on the globally sorted values, they are permutation-invariant with respect to the original coordinates.
They constrain the global histogram, but they do not constrain how small and large values are distributed across local chunks.
As a result, substantial spatial structure can still survive.

\begin{figure}[t]
\centering
\setlength{\tabcolsep}{1pt}
\renewcommand{\arraystretch}{0.5}
\small

\newcommand{\cellimg}[1]{\includegraphics[width=\linewidth]{Figures/whitening_process_app/#1}}

\newcolumntype{C}{>{\centering\arraybackslash}X}

\begin{tabularx}{\linewidth}{@{} c *{1}{C} | *{3}{C} @{}}
\toprule
& Input & \makecell{Global} & \makecell{1OS} & \makecell{2OS} \\

\midrule
\raisebox{3.0ex}{\rotatebox{90}{Latent Visualization}} &
\cellimg{original.png} &
\cellimg{clamp0.png} &
\cellimg{clamp1.png} &
\cellimg{clamp2.png} \\
\raisebox{5.5ex}{\rotatebox{90}{Sampled Image}} &
\cellimg{original_img.jpg} &
\cellimg{clamp0_img.jpg}  &
\cellimg{clamp1_img.jpg}  &
\cellimg{clamp2_img.jpg} \\[-2px]
\bottomrule
\end{tabularx}

\caption{
\footnotesize
\textbf{Effect of confidence-interval (CI) projection via two-level order statistics (2OS).}
Global CI clipping only truncates extreme values and leaves the dominant structured pattern largely unchanged.
1OS CI further matches the global sorted histogram, but still preserves substantial local structure.
In contrast, 2OS CI enforces rank consistency both within and across local chunks, forcing each chunk to contain a balanced spread of small-to-large values.
This suppresses localized artifacts much more effectively and makes the latent substantially more like a typical noise.
The bottom row shows the corresponding effect on the sampled image.
}
\label{fig:whitening_two_level_app}
\end{figure}

This limitation is illustrated in \cref{fig:whitening_two_level_app}.
Both the global CI and one-level CI projections reduce extreme values, but visible stripe-like structure remains in the latent visualization.
This is precisely what motivates the two-level construction below: we need a confidence-interval projection that controls not only \emph{which} values appear globally, but also \emph{how} those values are distributed locally.

\paragraph{Two-level order statistics (2OS).}
We now introduce the construction actually used in our whitening operator.
To make the confidence-interval constraint sensitive to local organization, we partition the vector into local chunks and enforce order-statistic consistency both \emph{within} each chunk and \emph{across} chunks; see \cref{fig:ci_hierarchy_app}(c).

Specifically, reshape the vector into a matrix
\begin{equation}
\bm{X}\in\mathbb{R}^{M\times D},
\end{equation}
where each row corresponds to a chunk of size $D$.
We first sort within each row, and then, for each within-row rank $j$, sort across the $M$ rows.
This yields the doubly sorted representation
\begin{equation}
\bm{Z}
=
\mathrm{sort}_0\bigl(\mathrm{sort}_1(\bm{X})\bigr),
\label{eq:twolevel_sort_def_app}
\end{equation}
where $r$ indexes the outer rank across chunks and $j$ indexes the inner rank within each chunk.
Equivalently, $Z_{r,j}$ is the $r$-th smallest value among the $j$-th order statistics collected from all chunks.

Under the i.i.d.\ reference model, the induced quantiles are obtained by composing Beta laws.
After fixing the outer rank $r$, the confidence interval for the corresponding quantile is given by \cref{eq:beta_bounds_outer_app}.
Conditioning further on the inner rank $j$ within a chunk yields
\begin{align}
p^{\mathrm{lo}}_{r,j}(M,D)
&=
\mathrm{Beta}^{-1}\!\Bigl(
p^{\mathrm{lo}}_{r}(M);\,
j,\,
D-j+1
\Bigr),
\label{eq:beta_bounds_twolevel_lo_app}
\\
p^{\mathrm{hi}}_{r,j}(M,D)
&=
\mathrm{Beta}^{-1}\!\Bigl(
p^{\mathrm{hi}}_{r}(M);\,
j,\,
D-j+1
\Bigr),
\label{eq:beta_bounds_twolevel_hi_app}
\end{align}
and therefore
\begin{equation}
L_{r,j}
=
H^{-1}\!\bigl(p^{\mathrm{lo}}_{r,j}(M,D)\bigr),
\qquad
U_{r,j}
=
H^{-1}\!\bigl(p^{\mathrm{hi}}_{r,j}(M,D)\bigr).
\label{eq:gauss_bounds_twolevel_app}
\end{equation}

Now define the feasible set
\begin{equation}
\mathcal{S}_{\mathrm{2os}}
=
\Bigl\{
\bm{Y}\in\mathbb{R}^{M\times D}
:\;
L_{r,j}
\le
\bigl(\mathrm{sort}_0(\mathrm{sort}_1(\bm{Y}))\bigr)_{r,j}
\le
U_{r,j},
\ \forall r,j
\Bigr\}.
\label{eq:twolevel_set_app}
\end{equation}

\begin{proposition}
\label{prop:two_level_projection_app}
Let $\bm{Z}$ be defined by \cref{eq:twolevel_sort_def_app}.
Then clipping each entry of $\bm{Z}$ into its corresponding interval $[L_{r,j},U_{r,j}]$ and undoing the two sorting permutations yields a Frobenius-norm projection onto $\mathcal{S}_{\mathrm{2os}}$:
\begin{equation}
\Pi_{\mathcal{S}_{\mathrm{2os}}}(\bm{X})
\in
\arg\min_{\bm{Y}\in\mathcal{S}_{\mathrm{2os}}}
\|\bm{Y}-\bm{X}\|_F^2.
\label{eq:twolevel_projection_app}
\end{equation}
\end{proposition}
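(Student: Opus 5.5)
The plan is a lower-bound-then-attain argument, mirroring the proof of \cref{prop:order_projection_app} but applied twice. Write $\bm{S}=\mathrm{sort}_1(\bm{X})$ for the row-sorted matrix, $\bm{Z}=\mathrm{sort}_0(\bm{S})$ as in \cref{eq:twolevel_sort_def_app}, and $\bm{W}$ for the entrywise clip of $\bm{Z}$ into $[L_{r,j},U_{r,j}]$.

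\textbf{Step 1 (lower bound).} Take any $\bm{Y}\in\mathcal{S}_{\mathrm{2os}}$. Applying the rearrangement inequality row by row, exactly as in \cref{prop:order_projection_app}, gives
\begin{equation*}
\|\bm{Y}-\bm{X}\|_F^2 \ge \|\mathrm{sort}_1(\bm{Y})-\bm{S}\|_F^2 .
\end{equation*}
Applying it again column by column to the two row-sorted matrices gives
\begin{equation*}
\|\mathrm{sort}_1(\bm{Y})-\bm{S}\|_F^2 \ge \|\mathrm{sort}_0(\mathrm{sort}_1(\bm{Y}))-\bm{Z}\|_F^2 .
\end{equation*}
The matrix $\mathrm{sort}_0(\mathrm{sort}_1(\bm{Y}))$ lies entrywise in $[L_{r,j},U_{r,j}]$, so by \cref{prop:box_projection_app} (with rank-dependent bounds) the last quantity is at least $\|\bm{W}-\bm{Z}\|_F^2$. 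Hence every feasible $\bm{Y}$ has distance at least $\|\bm{W}-\bm{Z}\|_F$ from $\bm{X}$.

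\textbf{Step 2 (attainment).} Define $\bm{Y}^\star$ by undoing the column permutations of $\mathrm{sort}_0$ and then the row permutations of $\mathrm{sort}_1$, applied to $\bm{W}$. Both maps are permutations, so
\begin{equation*}
\|\bm{Y}^\star-\bm{X}\|_F = \|\bm{W}-\bm{Z}\|_F ,
\end{equation*}
which meets the bound of Step 1. It remains to show $\bm{Y}^\star\in\mathcal{S}_{\mathrm{2os}}$, which holds if $\mathrm{sort}_0(\mathrm{sort}_1(\bm{Y}^\star))=\bm{W}$. That in turn requires two facts:
\begin{itemize}
\item[(i)] each column of $\bm{W}$ is still nondecreasing, so un-sorting that column reproduces the original rank pattern;
\item[(ii)] the intermediate matrix $\bm{S}'$, obtained by un-sorting the columns of $\bm{W}$, has nondecreasing rows, so $\mathrm{sort}_1$ leaves it unchanged.
\end{itemize}

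For (i), I would show that $L_{r,j}$ and $U_{r,j}$ are nondecreasing in $r$ and in $j$. In $r$ this holds because $p^{\mathrm{lo}}_r(M)$ is a Beta quantile that increases with the first shape parameter. In $j$ it holds because $\mathrm{Beta}^{-1}(p;j,D-j+1)$ increases in $j$ and in $p$. The map $(a,L,U)\mapsto\mathrm{clip}(a;L,U)$ is monotone in all three arguments, so clipping preserves the column order of $\bm{Z}$, and also its row order; recall that column-sorting a row-sorted matrix keeps rows sorted.

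\textbf{Main obstacle: (ii).} Entry $S'_{i,j}$ equals $W_{\pi_j(i),j}$, where $\pi_j(i)$ is the rank of $S_{i,j}$ within column $j$. The ranks $\pi_j(i)$ and $\pi_{j+1}(i)$ need not be ordered, so comparing $S'_{i,j}$ with $S'_{i,j+1}$ compares clips against incomparable bounds. My first attempt is a case analysis.
\begin{itemize}
\item If neither entry is clipped, the inequality is just $S_{i,j}\le S_{i,j+1}$.
\item If $S_{i,j}$ is clipped down to $U_{\pi_j(i),j}$, bound $U_{\pi_j(i),j}$ by $S_{i,j+1}$ or by its clipped value. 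For this I would use that $Z_{r,j}\le Z_{r,j+1}$ pointwise, together with column monotonicity, to find some index $r'\le\pi_{j+1}(i)$ with $U_{\pi_j(i),j}\le U_{r',j+1}$.
\end{itemize}

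If that fails, the fallback is to note that any row inversions in $\bm{S}'$ can only be removed by re-sorting rows. Re-sorting rows does not increase the distance to $\bm{S}$, by rearrangement. The resulting $\bm{Y}$ still attains the lower bound, provided its double sort remains $\bm{W}$; this is exactly what the statement's "a projection" (membership in the $\arg\min$) requires.
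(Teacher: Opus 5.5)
Your Step 1 is correct (it shows $\min_{\bm{Y}\in\mathcal{S}_{\mathrm{2os}}}\|\bm{Y}-\bm{X}\|_F\ge\|\bm{W}-\bm{Z}\|_F$), and so is (i). You have also isolated exactly the point the paper's proof passes over. The paper asserts that, among matrices whose double sort is $\widetilde{\bm{Z}}$, the closest to $\bm{X}$ is obtained by undoing $\bm{X}$'s own permutations. That tacitly assumes the un-sorted matrix has double sort $\widetilde{\bm{Z}}$ at all, which is your (ii).

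The genuine gap is that (ii) is false for these bounds. So neither your case analysis nor your fallback can close it, and the proposition as stated fails. Monotonicity in $r$ and in $j$ separately does not compare position $(a,j)$ with $(b,j+1)$ when $a>b$, and here the bounds really cross. Take $D=2$, $\alpha=10^{-4}$, $M\ge 35$, and put $c=\sqrt{1-(\alpha/2)^{1/M}}<1/2$. The Beta formulas then give $L_{M,1}=\Phi^{-1}(1-c)>0$ and $U_{1,2}=\Phi^{-1}(c)=-L_{M,1}<0$. In particular, the inequality $U_{\pi_j(i),j}\le U_{r',j+1}$ with $r'\le\pi_{j+1}(i)$ that your second bullet aims for would read $U_{M,1}\le U_{1,2}$ when $\pi_1(i)=M$ and $\pi_2(i)=1$. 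But $U_{M,1}\ge L_{M,1}>0>U_{1,2}$.

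\textbf{Counterexample.} Let $\bm{X}$ have first row $(0,0)$ and rows $(-10+\delta_k,\,10+\delta_k)$ for $k=2,\dots,M$, with small distinct $\delta_k$.
\begin{itemize}
\item The zeros have outer rank $M$ in column $1$ and outer rank $1$ in column $2$. They are therefore clipped to $L_{M,1}$ and $U_{1,2}$, and un-sorting returns the decreasing row $(L_{M,1},-L_{M,1})$ to $\bm{S}'$.
\item Every other column-$1$ entry of $\bm{S}'$ is a clip at some rank $r\le M-1$, hence at most $\max(-10+\delta_k,L_{M-1,1})<L_{M,1}$.
\item So after row-sorting $\bm{Y}^\star$, the whole first column lies strictly below $L_{M,1}$. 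The $(M,1)$ constraint is violated and $\bm{Y}^\star\notin\mathcal{S}_{\mathrm{2os}}$.
\end{itemize}
An analogous construction works for any even $D$ once $M$ is large enough: a zero row among rows whose first $D/2$ entries are very negative and last $D/2$ entries very positive.

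Your fallback does not rescue this. Re-sorting the offending row moves $-L_{M,1}$ into column $1$, so the double sort is no longer $\bm{W}$. Worse, all column entries of $\bm{S}$ and of $\bm{W}$ are distinct here. Equality in your column-rearrangement step, together with $\mathrm{sort}_0(\mathrm{sort}_1(\bm{Y}))=\bm{W}$, would then force $\mathrm{sort}_1(\bm{Y})=\bm{S}'$. That is impossible because $\bm{S}'$ has a decreasing row. Hence no feasible matrix attains the Step 1 bound, and the true projection is strictly farther from $\bm{X}$ than the sort--clip--unsort output.

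What your argument does establish is the proposition under the extra hypothesis (ii). For example, it holds when, for every $i,j$ with $\pi_j(i)>\pi_{j+1}(i)$, neither corresponding entry of $\bm{Z}$ is clipped. Without some such hypothesis the claim cannot be proved, because it is not true.
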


\begin{proof}
The row-wise and column-wise sorting operations are compositions of permutation matrices and therefore preserve the Frobenius norm.
Accordingly, for the purpose of Euclidean projection, we may pass to the canonical representative obtained by doubly sorting $\bm{X}$.

Now fix a candidate doubly sorted matrix $\widetilde{\bm{Z}}$ satisfying
\begin{equation}
L_{r,j}\le \widetilde{Z}_{r,j}\le U_{r,j}
\qquad
\forall r,j.
\end{equation}
Among all matrices whose doubly sorted representation equals $\widetilde{\bm{Z}}$, the one closest to $\bm{X}$ is obtained by undoing the sorting permutations induced by $\bm{X}$ itself.
This follows by repeated application of the rearrangement inequality: first within each row, and then across rows for each column.
Consequently, the projection problem reduces to minimizing
\begin{equation}
\|\widetilde{\bm{Z}}-\bm{Z}\|_F^2
\end{equation}
subject to the box constraints above.
Since these constraints decouple entrywise, the minimizer is obtained by clipping each $Z_{r,j}$ independently to $[L_{r,j},U_{r,j}]$.
Undoing the sorting permutations then gives a Frobenius-norm minimizer in the original coordinates.
\end{proof}

The key difference from the one-level construction is that clipping is applied to every rank pair $(r,j)$ of the doubly sorted statistic.
As a result, extreme values cannot concentrate in only a few chunks.
Instead, each chunk is forced to contain a balanced spread of small-to-large values consistent with typical standard Gaussian noise.
In this sense, 2OS constrains not only the global histogram, but also the local distributional composition of each chunk.
This is the core mechanism by which it suppresses localized structure and pushes an arbitrary vector toward typical noise.

\subsection{Confidence-Interval Projection on Tile-wise Mean and Energy}
\label{app:whitening_tile_mean_energy}

The 2OS projection in \cref{app:whitening_2os} acts directly on the entries of a partitioned representation.
This already suppresses localized structure by forcing each chunk to contain a balanced spread of small-to-large values.
However, entrywise 2OS alone does not fully control \emph{partition-wise statistics}.
For example, a chunk may still contain values from all bands while having an unusually low or high mean, or it may contain values that are too concentrated around that mean, leading to an atypical centered energy.
To further move the input toward typical standard Gaussian noise, we therefore apply the same confidence-interval principle to tile-wise statistics themselves.

\begin{figure}[t]
\centering
\setlength{\tabcolsep}{1pt}
\renewcommand{\arraystretch}{0.5}
\small

\newcommand{\cellimg}[1]{\includegraphics[width=\linewidth]{Figures/whitening_process_app/#1}}

\newcolumntype{C}{>{\centering\arraybackslash}X}

\begin{tabularx}{\linewidth}{@{} c C | C C C @{}}
\toprule
& Input
& \makecell{2OS}
& \makecell{$+\,2\times2$ Tile\\Mean/Energy}
& \makecell{$+\,8\times8$ Tile\\Mean/Energy} \\
\midrule
\raisebox{3.0ex}{\rotatebox{90}{Latent Visualization}} &
\cellimg{original.png} &
\cellimg{clamp2.png} &
\cellimg{mean_var_22.png} &
\cellimg{mean_var.png} \\
\raisebox{5.5ex}{\rotatebox{90}{Sampled Image}} &
\cellimg{original_img.jpg} &
\cellimg{clamp2_img.jpg}  &
\cellimg{mean_var_22_img.jpg}  &
\cellimg{mean_var_img.jpg} \\[-2px]
\bottomrule
\end{tabularx}

\caption{
\footnotesize
\textbf{Effect of confidence-interval (CI) projection on tile-wise statistics.}
Starting from entrywise 2OS, we additionally constrain tile-wise mean and centered energy at $2\times2$ and $8\times8$ tile scales.
These tile-wise constraints further suppress residual low-frequency structure.
}
\label{fig:whitening_tilewise_app}
\end{figure}

\cref{fig:whitening_tilewise_app} illustrates this effect.
Starting from the output of 2OS CI on entries, we additionally constrain tile-wise statistics at multiple tile scales.
This progressively removes residual low-frequency bias.
This is exactly the role of tile-wise statistics in the whitening operator: they complement entrywise 2OS by directly controlling whether each partition has Gaussian-like aggregate behavior.

Let $\bm{x}\in\mathbb{R}^{N}$ be partitioned into $P$ disjoint tiles of equal size $F$ (so $N=PF$), denoted by
\begin{equation}
\bm{x}^{(p)}\in\mathbb{R}^{F},
\qquad
p=1,\dots,P.
\end{equation}
For each tile, define the tile-wise mean and centered energy
\begin{equation}
\bar{x}^{(p)}
=
\frac{1}{F}\bm{1}^{\top}\bm{x}^{(p)},
\qquad
v^{(p)}
=
\left\|
\bm{x}^{(p)}-\bar{x}^{(p)}\bm{1}
\right\|_2^2,
\label{eq:block_stats_app}
\end{equation}
where $\bm{1}\in\mathbb{R}^{F}$ is the all-ones vector.

Under the standard Gaussian reference model, these statistics have known distributions.
Indeed, if $\bm{x}^{(p)}\sim\mathcal{N}(\bm{0},\bm{I}_F)$, then
\begin{equation}
s^{(p)}
:=
\sqrt{F}\,\bar{x}^{(p)}
\sim
\mathcal{N}(0,1),
\qquad
v^{(p)}
\sim
\chi^2_{F-1}.
\label{eq:block_stat_dists_app}
\end{equation}
Thus the normalized tile-wise mean is standard normal, while the centered energy follows a chi-square law with $F-1$ degrees of freedom.

We now collect these quantities over all tiles.
Define the tile-wise statistic vectors
\begin{equation}
\bm{s}\in\mathbb{R}^{P},
\qquad
\bm{v}\in\mathbb{R}^{P},
\label{eq:block_stat_fields_app}
\end{equation}
whose $p$-th entries are the normalized mean $s^{(p)}$ and centered energy $v^{(p)}$ of the corresponding tile.
We then apply the same 2OS projection introduced in \cref{app:whitening_2os} to these statistic vectors, using the appropriate reference distributions.
Let $H_s$ and $H_v$ denote the CDFs of the standard normal and $\chi^2_{F-1}$ distributions, respectively.
We define
\begin{equation}
\bm{n}
=
\Pi^{\mathrm{(2os)}}_{H_s}(\bm{s}),
\qquad
\bm{m}
=
\Pi^{\mathrm{(2os)}}_{H_v}(\bm{v}),
\label{eq:tile_stat_projection_app}
\end{equation}
where $\Pi^{\mathrm{(2os)}}_{H}$ denotes the two-level order-statistic (2OS) projection of \cref{eq:twolevel_projection_app}, with confidence bounds computed from the reference CDF $H$.

The projected statistic fields $\bm{n}$ and $\bm{m}$ specify target tile-wise mean and energy values.
For each tile, we then update the tile to match those targets with minimal Euclidean modification.
Specifically, let
\begin{equation}
\mu_p^\star
=
\frac{n^{(p)}}{\sqrt{F}},
\qquad
v_p^\star
=
m^{(p)},
\label{eq:block_targets_app}
\end{equation}
and define
\begin{equation}
\bm{x}^{(p)}_{\mathrm{new}}
=
\mu_p^\star \bm{1}
+
\left(
\bm{x}^{(p)}-\bar{x}^{(p)}\bm{1}
\right)
\sqrt{\frac{v_p^\star}{v^{(p)}}}.
\label{eq:block_mean_energy_update_app}
\end{equation}
Thus each tile is shifted to the target mean and rescaled to the target centered energy, while preserving its centered direction.

\begin{proposition}
\label{prop:block_mean_energy_projection_app}
Fix a tile $\bm{x}^{(p)}\in\mathbb{R}^{F}$ and target values $(\mu_p^\star,v_p^\star)$ with $v_p^\star\ge 0$.
Define the feasible set
\begin{equation}
\mathcal{S}(\mu_p^\star,v_p^\star)
=
\Bigl\{
\bm{y}\in\mathbb{R}^{F}
:\;
\frac{1}{F}\bm{1}^{\top}\bm{y}=\mu_p^\star,
\ \ 
\bigl\|
\bm{y}-\mu_p^\star\bm{1}
\bigr\|_2^2=v_p^\star
\Bigr\}.
\label{eq:block_feasible_exact_app}
\end{equation}
Let $\bar{x}^{(p)}=\frac{1}{F}\bm{1}^{\top}\bm{x}^{(p)}$ and
$v^{(p)}=\|\bm{x}^{(p)}-\bar{x}^{(p)}\bm{1}\|_2^2$.
If $v^{(p)}>0$, then the update in \cref{eq:block_mean_energy_update_app} is the Euclidean projection of $\bm{x}^{(p)}$ onto $\mathcal{S}(\mu_p^\star,v_p^\star)$:
\begin{equation}
\bm{x}^{(p)}_{\mathrm{new}}
=
\Pi_{\mathcal{S}(\mu_p^\star,v_p^\star)}
\bigl(\bm{x}^{(p)}\bigr)
\in
\underset{\bm{y}\in\mathcal{S}(\mu_p^\star,v_p^\star)}{\arg\min}
\|\bm{y}-\bm{x}^{(p)}\|_2^2.
\label{eq:block_proj_exact_app}
\end{equation}
When $v^{(p)}=0$, the minimizer is not unique.
\end{proposition}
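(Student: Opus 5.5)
Our approach is to decompose $\mathbb{R}^F$ orthogonally into the span of $\bm{1}$ and its orthogonal complement $\bm{1}^\perp$, so the projection problem splits into two independent pieces. Writing any $\bm{y}$ as $\bm{y}=\bar{y}\bm{1}+\bm{y}_c$ with $\bm{y}_c\perp\bm{1}$, and likewise $\bm{x}^{(p)}=\bar{x}^{(p)}\bm{1}+\bm{x}_c$ with $\bm{x}_c:=\bm{x}^{(p)}-\bar{x}^{(p)}\bm{1}$, Pythagoras gives
\begin{equation}
\|\bm{y}-\bm{x}^{(p)}\|_2^2
=
F\,(\bar{y}-\bar{x}^{(p)})^2
+
\|\bm{y}_c-\bm{x}_c\|_2^2 .
\end{equation}
The constraints in $\mathcal{S}(\mu_p^\star,v_p^\star)$ fix $\bar{y}=\mu_p^\star$ and $\|\bm{y}_c\|_2^2=v_p^\star$; note that $\bm{y}-\mu_p^\star\bm{1}=\bm{y}_c$ once the mean constraint holds. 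The first term is therefore a constant, and the problem reduces to minimizing $\|\bm{y}_c-\bm{x}_c\|_2$ over the sphere of radius $\sqrt{v_p^\star}$ in the subspace $\bm{1}^\perp$.

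The key step is the projection onto this sphere. For $\bm{y}_c$ on the sphere we expand
\begin{equation}
\|\bm{y}_c-\bm{x}_c\|_2^2=v_p^\star+v^{(p)}-2\langle \bm{y}_c,\bm{x}_c\rangle .
\end{equation}
By Cauchy--Schwarz, $\langle \bm{y}_c,\bm{x}_c\rangle\le \sqrt{v_p^\star}\,\sqrt{v^{(p)}}$, with equality if and only if $\bm{y}_c$ is a nonnegative multiple of $\bm{x}_c$, provided $\bm{x}_c\neq\bm{0}$ (that is, $v^{(p)}>0$). The unique maximizer is then $\bm{y}_c=\bm{x}_c\sqrt{v_p^\star/v^{(p)}}$. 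This vector lies in $\bm{1}^\perp$ because $\bm{x}_c$ does, so the subspace restriction is automatically respected. Reassembling $\bm{y}=\mu_p^\star\bm{1}+\bm{y}_c$ reproduces exactly the update in \cref{eq:block_mean_energy_update_app}, and uniqueness follows from the equality case. The degenerate target $v_p^\star=0$ also needs a check: then the sphere is the single point $\bm{y}_c=\bm{0}$, which agrees with the formula.

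For the case $v^{(p)}=0$, we have $\bm{x}_c=\bm{0}$, so the objective $v_p^\star+0-0$ is constant over the whole sphere. Every point of the sphere is then a minimizer, and for $v_p^\star>0$ and $F\ge 3$ the sphere contains more than one point. Strictly speaking, when $F=2$ the sphere in the one-dimensional space $\bm{1}^\perp$ consists of two points, which still gives non-uniqueness.

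The main obstacle is minor. It is to make the orthogonal decomposition rigorous, in particular to confirm that the energy constraint involves only the centered component once the mean constraint holds. Without this, the objective does not separate cleanly into the constant mean term and the spherical projection.
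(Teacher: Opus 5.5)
Your proof is correct and follows essentially the same route as the paper: the orthogonal split $\bm{x}^{(p)}=\bar{x}^{(p)}\bm{1}+\bm{c}$, $\bm{y}=\mu_p^\star\bm{1}+\bm{d}$, the Pythagorean decomposition into the fixed term $F(\mu_p^\star-\bar{x}^{(p)})^2$ plus $\|\bm{d}-\bm{c}\|_2^2$, and maximizing $\langle\bm{d},\bm{c}\rangle$ on the sphere $\|\bm{d}\|_2^2=v_p^\star$. Your version is slightly more careful than the paper's on two points: you state the Cauchy--Schwarz equality case as a \emph{nonnegative} multiple, where the paper only says ``colinear'', and you note that non-uniqueness when $v^{(p)}=0$ requires $v_p^\star>0$, a caveat the statement itself omits.
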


\begin{proof}
Write
\begin{equation}
\bm{x}^{(p)}
=
\bar{x}^{(p)}\bm{1}+\bm{c},
\qquad
\bm{1}^{\top}\bm{c}=0,
\qquad
\|\bm{c}\|_2^2=v^{(p)}.
\end{equation}
Any feasible $\bm{y}\in\mathcal{S}(\mu_p^\star,v_p^\star)$ can be written as
\begin{equation}
\bm{y}
=
\mu_p^\star\bm{1}+\bm{d},
\qquad
\bm{1}^{\top}\bm{d}=0,
\qquad
\|\bm{d}\|_2^2=v_p^\star.
\end{equation}
By orthogonality,
\begin{equation}
\|\bm{y}-\bm{x}^{(p)}\|_2^2
=
\|(\mu_p^\star-\bar{x}^{(p)})\bm{1}\|_2^2
+
\|\bm{d}-\bm{c}\|_2^2
=
F(\mu_p^\star-\bar{x}^{(p)})^2
+
\|\bm{d}-\bm{c}\|_2^2.
\label{eq:block_obj_decompose_app}
\end{equation}
The first term is fixed once $\mu_p^\star$ is fixed.
For the second term,
\begin{equation}
\|\bm{d}-\bm{c}\|_2^2
=
\|\bm{d}\|_2^2+\|\bm{c}\|_2^2-2\langle \bm{d},\bm{c}\rangle,
\end{equation}
so minimizing it is equivalent to maximizing $\langle \bm{d},\bm{c}\rangle$ subject to $\|\bm{d}\|_2^2=v_p^\star$.
This is achieved when $\bm{d}$ is colinear with $\bm{c}$.
Hence, when $v^{(p)}>0$, the unique minimizer is
\begin{equation}
\bm{d}
=
\bm{c}\sqrt{\frac{v_p^\star}{v^{(p)}}},
\end{equation}
which yields \cref{eq:block_mean_energy_update_app}.
If $v^{(p)}=0$, then $\bm{c}=\bm{0}$, and any feasible $\bm{d}$ with $\bm{1}^{\top}\bm{d}=0$ and $\|\bm{d}\|_2^2=v_p^\star$ attains the same minimum.
\end{proof}

The role of this construction is complementary to entrywise 2OS.
The projection in \cref{app:whitening_2os} ensures that each tile contains a balanced spread of values across different rank bands.
The present construction checks whether the \emph{aggregate behavior} of each tile is statistically typical.
By constraining tile-wise mean and centered energy through their known reference distributions, we prevent a subset of tiles from having unusually biased averages or atypical local energy, even if their entrywise order statistics already look plausible.
This is why tile-wise statistics are a natural next component of the whitening operator after 2OS.

\subsection{Confidence-Interval Projection in Orthogonal Transform Domains}
\label{app:whitening_orthogonal}

The constructions in \cref{app:whitening_2os,app:whitening_tile_mean_energy} constrain entrywise order statistics and tile-wise statistics in the original coordinates.
These components already suppress substantial local structure, but some artifacts are still more naturally exposed after an orthogonal change of basis.
In particular, residual global patterns may remain diffuse in the original indexing while becoming much more explicit in a transformed domain.
For this reason, our whitening operator also applies the same confidence-interval projections in additional \emph{orthogonal transform domains}.

\begin{figure}[t]
\centering
\setlength{\tabcolsep}{1pt}
\renewcommand{\arraystretch}{0.5}
\small

\newcommand{\cellimg}[1]{\includegraphics[width=\linewidth]{Figures/whitening_process_app/#1}}

\newcolumntype{C}{>{\centering\arraybackslash}X}

\begin{tabularx}{\linewidth}{@{} c *{1}{C} | *{3}{C} @{}}
\toprule
& Input & \makecell{2OS + Tile-wise} & \makecell{+ Fourier} & \makecell{+ Hadamard} \\

\midrule
\raisebox{3.0ex}{\rotatebox{90}{Latent Visualization}} &
\cellimg{original.png} &
\cellimg{mean_var.png} &
\cellimg{small} &
\cellimg{ours.png} \\
\raisebox{5.5ex}{\rotatebox{90}{Sampled Image}} &
\cellimg{original_img.jpg} &
\cellimg{mean_var_img.jpg}  &
\cellimg{small_img.jpg}  &
\cellimg{ours_img.jpg} \\[-2px]
\bottomrule
\end{tabularx}

\caption{
\footnotesize
\textbf{Effect of confidence-interval (CI) projection in orthogonal transform domains.}
Starting from the output of 2OS and tile-wise constraints in the original coordinates, we additionally apply the same confidence-interval projections in a compact Fourier domain and then in a Hadamard-style mixing domain.
These transformed views expose complementary global structure, progressively suppressing residual artifacts and improving the sampled image.
}
\label{fig:whitening_domain_app}
\end{figure}

This effect is illustrated in \cref{fig:whitening_domain_app}.
After applying 2OS together with tile-wise constraints in the original coordinates, the latent is already substantially whitened, but the sampled image still contains severe global artifacts.
Applying the same confidence-interval projections in a compact Fourier domain further suppresses frequency-localized structure and yields a recognizable image, although visible artifacts remain.
Applying them again in a Hadamard-style mixing domain removes much of the remaining coherent structure and produces a valid sample.
Thus, orthogonal transform domains provide complementary global views that are not fully captured by coordinate-domain or tile-wise constraints alone.

The validity of this strategy rests on a simple geometric fact: Euclidean projection is preserved under orthogonal changes of coordinates.

\begin{proposition}
\label{prop:isometry_projection_app}
Let $T:\mathbb{R}^N\to\mathbb{R}^N$ be orthogonal, i.e.,
\begin{equation}
T^\top T=\bm{I}.
\end{equation}
For any nonempty closed set $\mathcal{S}\subset\mathbb{R}^N$, define
\begin{equation}
T(\mathcal{S})
=
\{T\bm{y}:\bm{y}\in\mathcal{S}\}.
\end{equation}
Then, for any $\bm{x}\in\mathbb{R}^N$,
\begin{equation}
T\,\Pi_{\mathcal{S}}(\bm{x})
\in
\Pi_{T(\mathcal{S})}(T\bm{x}),
\label{eq:isometry_proj_commute_app}
\end{equation}
equivalently,
\begin{equation}
\Pi_{\mathcal{S}}(\bm{x})
\in
T^\top \Pi_{T(\mathcal{S})}(T\bm{x}).
\label{eq:isometry_proj_commute_inv_app}
\end{equation}
Hence, projecting in the transformed domain and mapping back is exactly the Euclidean projection onto the corresponding pullback set in the original domain.
\end{proposition}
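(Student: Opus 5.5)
The plan is to argue directly from the definition of the (set-valued) Euclidean projection, $\Pi_{\mathcal{S}}(\bm{x})=\arg\min_{\bm{y}\in\mathcal{S}}\|\bm{y}-\bm{x}\|_2$, and use only that $T$ is a bijective isometry of $\mathbb{R}^N$. Since $T^\top T=\bm{I}$ on a finite-dimensional space, $T$ is invertible with $T^{-1}=T^\top$, so $TT^\top=\bm{I}$ as well. Consequently, $\bm{y}\mapsto T\bm{y}$ is a bijection from $\mathcal{S}$ onto $T(\mathcal{S})$. Moreover, $T(\mathcal{S})$ is nonempty and closed, because it is the preimage of $\mathcal{S}$ under the continuous map $T^\top$. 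Hence both minimization problems have minimizers: a continuous coercive function attains its minimum on a nonempty closed set.

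The key step is the identity
\begin{equation*}
\|T\bm{y}-T\bm{x}\|_2^2=(\bm{y}-\bm{x})^\top T^\top T(\bm{y}-\bm{x})=\|\bm{y}-\bm{x}\|_2^2 .
\end{equation*}
It says that, under the reparametrization $\bm{y}'=T\bm{y}$, the objective of $\min_{\bm{y}'\in T(\mathcal{S})}\|\bm{y}'-T\bm{x}\|_2^2$ equals the objective of $\min_{\bm{y}\in\mathcal{S}}\|\bm{y}-\bm{x}\|_2^2$. The two problems therefore have the same optimal value, and their minimizer sets correspond under $T$:
\begin{equation*}
\Pi_{T(\mathcal{S})}(T\bm{x})=T\,\Pi_{\mathcal{S}}(\bm{x})
\end{equation*}
as sets. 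Taking any selection of $\Pi_{\mathcal{S}}(\bm{x})$ yields \cref{eq:isometry_proj_commute_app}. Applying $T^\top$ to both sides and using $T^\top T=\bm{I}$ gives \cref{eq:isometry_proj_commute_inv_app}. The final sentence of the statement is a reading of the same identity: $\mathcal{S}$ is the pullback $T^{-1}(T(\mathcal{S}))$, so the recipe ``transform, project onto the transformed constraint set, map back with $T^\top$'' returns exactly a Euclidean projection onto $\mathcal{S}$ in the original coordinates.

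There is no real obstacle. The one point needing care is bookkeeping: projections onto nonconvex sets such as $\mathcal{S}_{\mathrm{2os}}$ or the sphere-like sets of \cref{prop:block_mean_energy_projection_app} may be set-valued, so the claim should be made as an equality of argmin sets, with the ``$\in$'' forms following from it. I would also note explicitly that $TT^\top=\bm{I}$ follows from $T^\top T=\bm{I}$ because $T$ is a square matrix; surjectivity of $T$ is what makes the reparametrization a bijection between the two feasible sets.
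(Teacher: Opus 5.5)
Your proposal is correct and follows essentially the same route as the paper: both rest on the isometry identity $\|T\bm{y}-T\bm{x}\|_2=\|\bm{y}-\bm{x}\|_2$, compare objective values over $\mathcal{S}$ and $T(\mathcal{S})$, and then apply $T^\top$ to obtain the inverse form. The paper proves only the inclusion actually claimed, namely that any $\bm{y}^\star\in\Pi_{\mathcal{S}}(\bm{x})$ maps to a minimizer over $T(\mathcal{S})$, whereas you strengthen this to equality of the argmin sets and also check that minimizers exist; strictly speaking, that equality needs only the isometry and the definition of $T(\mathcal{S})$, not surjectivity of $T$.
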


\begin{proof}
Since $T$ is orthogonal,
\begin{equation}
\|T\bm{u}-T\bm{v}\|_2
=
\|\bm{u}-\bm{v}\|_2
\qquad
\forall \bm{u},\bm{v}\in\mathbb{R}^N.
\end{equation}
Let $\bm{y}^\star\in\Pi_{\mathcal{S}}(\bm{x})$.
Then for every $\bm{y}\in\mathcal{S}$,
\begin{equation}
\|T\bm{x}-T\bm{y}^\star\|_2
=
\|\bm{x}-\bm{y}^\star\|_2
\le
\|\bm{x}-\bm{y}\|_2
=
\|T\bm{x}-T\bm{y}\|_2.
\end{equation}
Hence $T\bm{y}^\star$ is a minimizer of the distance from $T\bm{x}$ over $T(\mathcal{S})$, proving \cref{eq:isometry_proj_commute_app}.
Applying $T^\top$ yields \cref{eq:isometry_proj_commute_inv_app}.
\end{proof}

A second fact is that the standard Gaussian reference distribution is itself invariant under orthogonal transforms.

\begin{proposition}
\label{prop:gaussian_invariant_app}
If $T^\top T=\bm{I}$ and $\bm{\epsilon}\sim\mathcal{N}(\bm{0},\bm{I}_N)$, then
\begin{equation}
T\bm{\epsilon}\sim\mathcal{N}(\bm{0},\bm{I}_N).
\label{eq:gaussian_invariant_app}
\end{equation}
Consequently, confidence intervals derived under i.i.d.\ standard Gaussian assumptions remain valid after applying $T$.
\end{proposition}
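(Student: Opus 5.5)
The plan is to argue through the characteristic function, or equivalently through the fact that affine images of Gaussian vectors are Gaussian. Let $\bm{\eta}:=T\bm{\epsilon}$. Since $\bm{\epsilon}$ is Gaussian and $\bm{\eta}$ is a linear image of it, $\bm{\eta}$ is Gaussian. It therefore suffices to compute its mean and covariance:
\begin{equation*}
\mathbb{E}[\bm{\eta}]=T\,\mathbb{E}[\bm{\epsilon}]=\bm{0},
\qquad
\mathrm{Cov}(\bm{\eta})=T\,\mathbb{E}[\bm{\epsilon}\bm{\epsilon}^\top]\,T^\top=TT^\top .
\end{equation*}

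To make the argument self-contained, I would write the characteristic function directly. For any $\bm{\omega}\in\mathbb{R}^N$,
\begin{equation*}
\mathbb{E}[\exp(i\bm{\omega}^\top T\bm{\epsilon})]=\exp\!\left(-\tfrac12\|T^\top\bm{\omega}\|_2^2\right),
\end{equation*}
so everything reduces to showing $\|T^\top\bm{\omega}\|_2=\|\bm{\omega}\|_2$.

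The only point needing care is passing from the hypothesis $T^\top T=\bm{I}$ to $TT^\top=\bm{I}$. This is the main (though mild) obstacle. I would handle it as follows: $T$ is a square $N\times N$ matrix, and $T^\top T=\bm{I}$ makes $T^\top$ a left inverse of $T$. In finite dimensions a one-sided inverse of a square matrix is a two-sided inverse, so $TT^\top=\bm{I}$. Hence the covariance is $\bm{I}_N$, the characteristic function equals $\exp(-\|\bm{\omega}\|_2^2/2)$, and $T\bm{\epsilon}\sim\mathcal{N}(\bm{0},\bm{I}_N)$.

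An alternative route is via the density: the map $\bm{z}\mapsto T\bm{z}$ has $|\det T|=1$ and preserves $\|\bm{z}\|_2$, so the density in \cref{eq:gaussian_pdf_app} is unchanged.

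For the ``consequently'' clause, the argument is as follows. Every confidence bound used in \cref{app:whitening_2os,app:whitening_tile_mean_energy} is a functional of the joint law of the input vector under the i.i.d.\ $\mathcal{N}(0,1)$ model. Since $T\bm{\epsilon}$ has exactly the same joint law as $\bm{\epsilon}$ (i.i.d.\ standard normal coordinates), any event $\{T\bm{\epsilon}\in\mathcal{S}\}$ has the same probability as $\{\bm{\epsilon}\in\mathcal{S}\}$. So the $1-\alpha$ coverage of the sets $\mathcal{S}_{\mathrm{2os}}$ and the tile-wise sets carries over verbatim in the transformed domain.

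Combined with \cref{prop:isometry_projection_app}, this justifies applying the same projections after an orthogonal change of basis.
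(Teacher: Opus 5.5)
Your proposal is correct and follows the same route as the paper: a linear image of a Gaussian vector is Gaussian, with mean $\bm{0}$ and covariance $TT^\top$. Your explicit argument that $T^\top T=\bm{I}$ forces $TT^\top=\bm{I}$ for square $T$ supplies a step the paper uses silently when it writes $T\bm{I}_N T^\top=\bm{I}_N$. Your equality-in-law argument for the ``consequently'' clause likewise spells out what the paper leaves implicit.
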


\begin{proof}
Since $T$ is linear, $T\bm{\epsilon}$ is Gaussian with mean $\bm{0}$ and covariance
\begin{equation}
T\bm{I}_N T^\top
=
\bm{I}_N.
\end{equation}
Therefore $T\bm{\epsilon}\sim\mathcal{N}(\bm{0},\bm{I}_N)$.
\end{proof}

\paragraph{Compact orthogonal Fourier domain.}
Our first transformed view is a compact real representation of the orthonormal discrete Fourier transform.
For an even-length real vector $\bm{x}\in\mathbb{R}^N$, let
\begin{equation}
\hat{\bm{x}}
=
\mathcal{F}_{\mathrm{r}}(\bm{x})
\in
\mathbb{C}^{N/2+1},
\end{equation}
where $\mathcal{F}_{\mathrm{r}}$ denotes the orthonormal real Fourier transform.
We then define the compact real transform
\begin{equation}
\mathcal{T}_{\mathrm{F}}(\bm{x})
=
\Bigl[
\Re(\hat{x}_0),\,
\sqrt{2}\Re(\hat{\bm{x}}_{\mathrm{int}})^{\top},\,
\Re(\hat{x}_{N/2}),\,
\sqrt{2}\Im(\hat{\bm{x}}_{\mathrm{int}})^{\top}
\Bigr]^{\top}
\in
\mathbb{R}^{N},
\label{eq:compact_fourier_transform_app}
\end{equation}
where $
\hat{\bm{x}}_{\mathrm{int}}
=
(\hat{x}_1,\dots,\hat{x}_{N/2-1})
$
collects the interior Fourier coefficients.
That is, we keep the two real boundary frequencies explicitly and pack the independent interior Fourier coefficients by concatenating their real and imaginary parts.
The factor $\sqrt{2}$ on the interior frequencies accounts for the Hermitian symmetry of the Fourier transform of a real signal and preserves the Euclidean norm.
Consequently, $\mathcal{T}_{\mathrm{F}}$ is an orthogonal linear bijection from $\mathbb{R}^N$ to $\mathbb{R}^N$.
Its inverse simply reconstructs the Hermitian-symmetric Fourier coefficients and then applies the inverse real FFT.

The purpose of this domain is to expose structure that is more naturally described in frequency space.
Periodic artifacts, low-frequency bias, and anomalous spectral concentration may remain difficult to suppress when viewed only through local statistics in the original coordinates.
In the compact Fourier domain, the same CI projections from \cref{app:whitening_2os,app:whitening_tile_mean_energy} directly constrain those frequency-domain coordinates while remaining statistically consistent with the same standard Gaussian distribution.

\paragraph{Hadamard-style mixing domain.}
Our second transformed view is a sparse orthogonal mixing operator based on repeated pairwise sum/difference butterflies.
Let a tile of size $H\times W$ be flattened into a vector $\bm{z}\in\mathbb{R}^{D}$ with $D=HW$.
A single butterfly step maps each adjacent pair according to
\begin{equation}
(z_{2k-1},z_{2k})
\mapsto
\left(
\frac{z_{2k-1}+z_{2k}}{\sqrt{2}},
\frac{z_{2k-1}-z_{2k}}{\sqrt{2}}
\right),
\qquad
k=1,\dots,\frac{D}{2}.
\label{eq:hadamard_butterfly_app}
\end{equation}
Applied independently across all tiles, this defines an orthogonal transform.
Repeated application produces a Hadamard-style mixing effect that progressively redistributes local information across coordinates.

The role of this domain is different from the Fourier domain.
Whereas the compact Fourier transform makes frequency-localized anomalies explicit, the Hadamard-style mixing domain exposes coherent global structure through orthogonal linear combinations of coordinates.
In practice, this complementary view is important: after Fourier-domain filtering, some residual artifacts can still remain spatially coherent, and Hadamard-style mixing makes them easier to detect and suppress by the same CI projections.

In both transformed domains, the projection principle itself does not change:
we transform the signal, apply the CI projections developed in \cref{app:whitening_2os,app:whitening_tile_mean_energy}, and map the result back.
Because the transforms are orthogonal and preserve the standard Gaussian law, these projections continue to target the same noise-compatible regime of typical standard Gaussian noise.

\subsection{Component-Level Ablation of the Whitening Operator}
\label{app:whitening_component_ablation}

\cref{tab:whitening_component_ablation} examines the contribution of each component of $\mathcal{W}$ to reward alignment performance.
Starting from the 2OS projection alone, we progressively add tile-wise statistics and then multi-domain projections, holding the sampling configuration fixed throughout (NFE\,=\,25).

\begin{figure*}[h]
\centering

\newlength{\whiteningcomppairheight}
\setlength{\whiteningcomppairheight}{5.5cm}

\begin{minipage}[t][\whiteningcomppairheight][t]{0.63\textwidth}
    \centering
    \captionof{table}{
    \footnotesize
    \textbf{Component-level ablation of the whitening operator $\mathcal{W}$ on aesthetic image generation.}
    We fix the sampling configuration (NFE = 25) and progressively add components to the whitening operator.
    Dark green denotes the best target reward; light green denotes the second best.
    }
    \label{tab:whitening_component_ablation}

    \vspace{0.2em}
    \renewcommand{\arraystretch}{1.10}
    \setlength{\tabcolsep}{4.5pt}
    \scriptsize

    \resizebox{\linewidth}{!}{%
    \begin{tabular}{l c c c c}
    \toprule
    \multirow{2}{*}{\textbf{Whitening Operator}}
    & \multicolumn{1}{c}{\makecell{\textbf{Target}\\\textbf{Reward}}}
    & \multicolumn{3}{c}{\textbf{Held-out Rewards}} \\
    \cmidrule(lr){2-2}\cmidrule(lr){3-5}
    & \makecell{\textbf{Aesthetic}\\\textbf{Score}} $\uparrow$
    & \makecell{\textbf{Pick}\\\textbf{-score}} $\uparrow$
    & \textbf{HPSv2} $\uparrow$
    & \makecell{\textbf{Image}\\\textbf{Reward}} $\uparrow$ \\
    \midrule
    Base & \textbf{6.0282} & 0.2144 & 0.2759 & 1.0538 \\
    \midrule
    2OS proj. & \textbf{7.3439} & 0.2179 & 0.2891 & 1.1430 \\
    $+$ Tile-wise stat. & \cellcolor{darkgreen}\textbf{7.4607} & 0.2181 & 0.2909 & 1.0585 \\
    $+$ Multi. domain (Ours) & \cellcolor{lightgreen}\textbf{7.4510} & 0.2200 & 0.2928 & 1.2565 \\
    \bottomrule
    \end{tabular}%
    }
\end{minipage}
\hfill
\begin{minipage}[t][\whiteningcomppairheight][b]{0.34\textwidth}
    \centering
    \includegraphics[width=\linewidth]{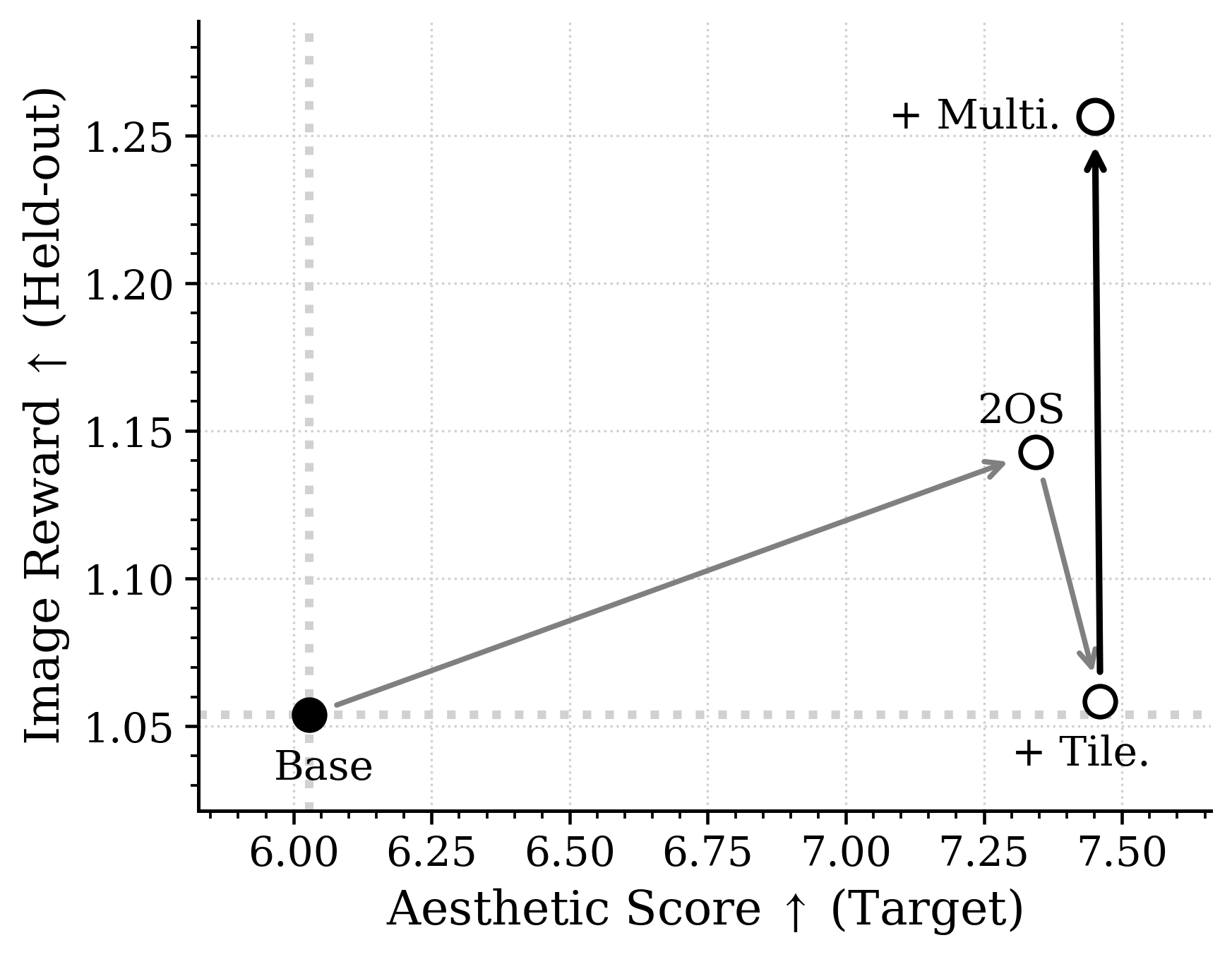}
    \vspace{-0.7cm}

    \captionof{figure}{
    \footnotesize
    \textbf{Trade-off under component ablation.}
    Full components achieve the best trade-off.
    }
    \label{fig:whitening_component_ablation}
\end{minipage}

\end{figure*}

The 2OS projection alone already provides a substantial gain in the target reward over the unguided base.
Adding tile-wise statistics yields the highest target reward among the three configurations, but reduces ImageReward, reflecting a trade-off between target optimization and distributional balance across rewards.
Incorporating multi-domain projections restores the held-out metrics to their highest values while preserving most of the target-reward gain, achieving the best overall trade-off.
These results suggest that each component addresses a complementary aspect of noise compatibility: 2OS controls local value statistics, tile-wise statistics regulate aggregate per-tile behavior, and multi-domain projections target residual structure that is more naturally exposed in alternative orthogonal bases.
\cref{fig:whitening_component_ablation} further visualizes the target-vs-held-out trade-off as components are added.

\subsection{Implementation and Optimization of the Whitening Operator}
\label{app:whitening_efficiency}

The whitening operator admits many possible configurations, since it is determined by the combination of the 2OS chunk partitions in \cref{app:whitening_2os}, the tile partitions for mean and centered energy in \cref{app:whitening_tile_mean_energy}, and the orthogonal transform domains in \cref{app:whitening_orthogonal}.
In practice, however, the computational bottleneck is clear: the dominant cost comes from the repeated sort--clip--unsort pattern inside the 2OS projection.
By contrast, the tile-wise mean and centered-energy updates are comparatively cheap, since they consist only of reductions, rescaling, and broadcasting.
Accordingly, our implementation is designed to make the 2OS projections as GPU-friendly as possible.

\paragraph{Optimization.}
Our first optimization is to keep the 2OS chunk sizes small.
In practice, we use chunk sizes such as $4$ and $64$, so that the inner sorting dimension remains modest.
When the chunk size is $4$, the sorting can be handled essentially at thread-local scale.
When the chunk size is $64$, it still fits comfortably within a single GPU block and can be processed efficiently in shared memory.
This matters because each 2OS projection repeatedly sorts within chunks and then across chunks at a fixed rank index.
By keeping the chunk size small, we keep these repeated sorting operations efficient even when the whitening operator is applied many times.

Our second optimization is to precompute all confidence bounds once at initialization.
For each configuration, we cache the Gaussian and chi-square thresholds for all relevant tuples of outer-rank count, chunk size, and tile size.
As a result, the online cost of each whitening call is dominated by tensor reshaping, sorting, clipping, FFTs, and orthogonal mixing, rather than by repeated evaluation of Beta, Gaussian, or chi-square quantiles.
In addition, because the same sort--clip--unsort pattern is reused across many nested calls, temporary tensors and indexing patterns can also be reused efficiently on GPU.

\paragraph{Configurations.}
For FLUX latents with shape $[1024,64]$, we use 2OS chunk sizes $(2,2)$ and $(8,8)$, tile-wise statistic sizes $(1,1)$, $(2,2)$, and $(8,8)$, and two Hadamard-style mixing domains: the full latent resolution and a $4\times4$ tiled mixing, repeated $32$ and $8$ times, respectively.
For the larger Wan2.1 latent with shape $[16,13,60,104]$, reshaped as $(16\!\cdot\!13\!\cdot\!60,\,104)$, we use chunk sizes $(2,2)$ and $(16,4)$, tile-wise statistic sizes $(2,2)$ and $(13,13)$, and the same two mixing domains, repeated $42$ and $8$ times.
In all cases, we use confidence level $\alpha=10^{-4}$.

\begin{table}[h]
\centering
\caption{
\footnotesize
\textbf{Wall-clock overhead of the whitening operator.}
}
\label{tab:whitening_runtime_app}
\setlength{\tabcolsep}{4.0pt}
\renewcommand{\arraystretch}{1.08}
\normalsize
\begin{tabular}{lcccc}
\toprule
\textbf{Model} & \makecell{\textbf{Per-step}\\\textbf{Model Time}} & \makecell{\textbf{Whitening}\\\textbf{Time}} & \makecell{\textbf{Whitening}\\\textbf{Slowdown}} & \makecell{\textbf{DPS}~\cite{Chung:2023DPS} $\rightarrow$ \Ours} \\
\midrule
FLUX~\cite{flux2024}    & 0.873\,s & 0.068\,s & +7.8\% & 87\,s $\rightarrow$ 94\,s \\
Wan2.1~\cite{wan2025} & 26.1\,s  & 0.5\,s   & +1.9\% & 650\,s $\rightarrow$ 663\,s \\
\bottomrule
\end{tabular}
\vspace{-0.8em}
\end{table}

\paragraph{Wall-clock time.}
These configurations were selected so that whitening accounts for less than $10\%$ of the overall reward-alignment cost.
For FLUX, one reward-guided model step takes about $0.87$\,s, while the whitening operator takes about $0.068$\,s, i.e., about $7.8\%$ of the per-step wall-clock time.
At the level of full runs, DPS~\cite{Chung:2023DPS} takes about $87$\,s, whereas adding whitening increases this only to $94$\,s.
For Wan2.1, one model step takes about $26$\,s, while whitening takes about $0.5$\,s, which is roughly $2\%$ of the runtime.
Correspondingly, DPS~\cite{Chung:2023DPS} takes about $650$\,s, and the whitened version takes about $663$\,s.
Thus, although the whitening operator combines several CI projections across multiple domains, its overhead remains modest relative to the underlying generative model.

An important practical advantage is that this cost can be adjusted continuously through the configuration.
Reducing the number of chunk sizes, tile sizes, transform domains, or mixing repetitions yields a cheaper operator, while adding them yields a stronger but slower one.
This makes the whitening operator easy to adapt to models with different latent sizes and runtime budgets.

\subsection{Comparison with Simpler Operators}
\label{app:whitening_comparison}

To understand why the quality of the whitening operator matters, we compare \Ours\ with three simpler alternatives.
The key question is not merely whether an operator removes visible structure, but whether it moves an arbitrary input toward the noise-compatible regime of typical standard Gaussian noise while leaving already-typical noise nearly unchanged.

We consider four whitening operators.
The first is the identity map,
\begin{equation}
\mathcal{W}_{\mathrm{None}}(\bm{z})=\bm{z},
\end{equation}
which corresponds to using the raw reward-informed direction without whitening.
The second is norm projection,
\begin{equation}
\mathcal{W}_{\mathrm{Norm}}(\bm{z})
=
\sqrt{N}\,\frac{\bm{z}}{\|\bm{z}\|_2},
\label{eq:norm_projection_app}
\end{equation}
which places the input on the high-probability hypersphere of radius $\sqrt{N}$, but does not constrain its finer statistics.
The third is $\mathcal{W}_{\mathrm{WGNC}}$, based on White Gaussian Noise Constraints (WGNC)~\cite{hwang2026gradient}, which projects onto a spectral feasible set defined by hard blockwise constraints in a compact Fourier domain.
This suppresses structured frequency artifacts much more strongly than norm projection, but its hard equalities remove the natural variability that genuine Gaussian noise should retain.
The last is our proposed operator, $\mathcal{W}_{\mathrm{Ours}}$.

\begin{figure*}[h]
\centering

\newlength{\whiteningpairheight}
\setlength{\whiteningpairheight}{5.7cm} 

\begin{minipage}[t][\whiteningpairheight][t]{0.65\textwidth}
    \centering
    \captionof{table}{
    \footnotesize
    \textbf{Ablation on the whitening operator for \Ours\ on aesthetic image generation.}
    We fix the sampling configuration (NFE = 25) and replace only the whitening operator.
    Dark green cells indicate the best result in the target reward, while light green cells denote the second best.
    }
    \label{tab:whitening_ablation_aesthetic}

    \vspace{0.2em}
    \renewcommand{\arraystretch}{1.10}
    \setlength{\tabcolsep}{4.5pt}
    \scriptsize

    \resizebox{\linewidth}{!}{%
    \begin{tabular}{l c c c c c}
    \toprule
    \multirow{2}{*}{\textbf{Setting}}
    & \multicolumn{1}{c}{\makecell{\textbf{Target}\\\textbf{Reward}}}
    & \multicolumn{4}{c}{\textbf{Held-out Rewards}} \\
    \cmidrule(lr){2-2}\cmidrule(lr){3-6}
    & \makecell{\textbf{Aesthetic}\\\textbf{Score}} $\uparrow$
    & \makecell{\textbf{Pick}\\\textbf{-score}} $\uparrow$
    & \textbf{HPSv2} $\uparrow$
    & \makecell{\textbf{Image}\\\textbf{Reward}} $\uparrow$
    & \makecell{\textbf{VQA}\\\textbf{Score}} $\uparrow$ \\
    \midrule

    Base~\cite{flux2024} & \textbf{6.0282} & 0.2144 & 0.2759 & 1.0538 & 0.9644 \\
    \Ours\ w/ $\mathcal{W}_{\text{None}}$ & \textbf{6.2392} & 0.2207 & 0.2981 & 1.2914 & 0.9690 \\
    \Ours\ w/ $\mathcal{W}_{\text{Norm}}$ & \textbf{7.1036} & 0.2147 & 0.2794 & 0.8908 & 0.9557 \\
    \Ours\ w/ $\mathcal{W}_{\text{WGNC}}$ & \cellcolor{lightgreen}\textbf{7.2096} & 0.2174 & 0.2901 & 1.1391 & 0.9672 \\
    \Ours\ w/ $\mathcal{W}_{\text{Ours}}$ & \cellcolor{darkgreen}\textbf{7.4510} & 0.2200 & 0.2928 & 1.2565 & 0.9728 \\

    \bottomrule
    \end{tabular}%
    }
\end{minipage}
\hfill
\begin{minipage}[t][\whiteningpairheight][b]{0.33\textwidth}
    \centering
    \includegraphics[width=\linewidth]{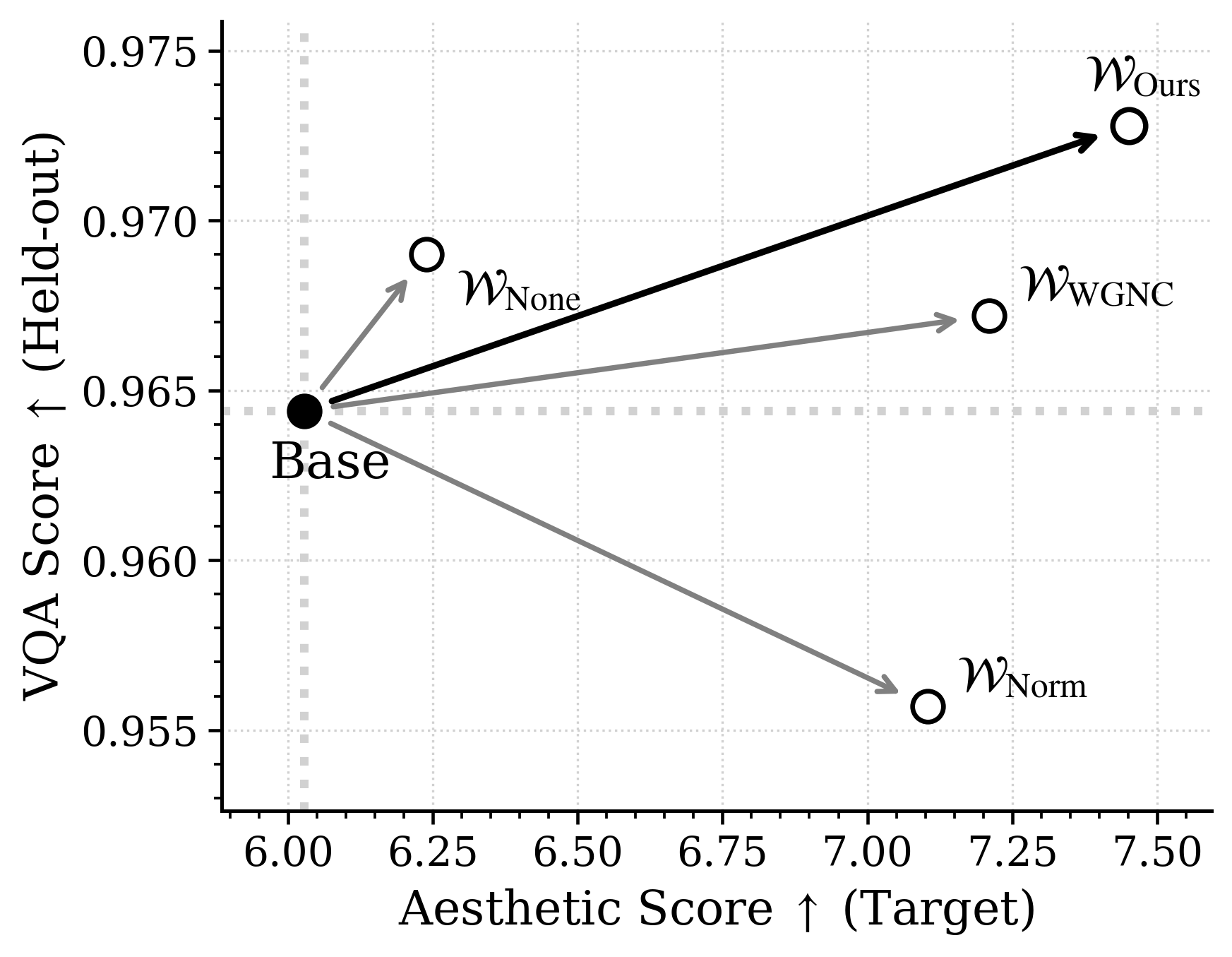}
    \vspace{-0.7cm}

    \captionof{figure}{
    \footnotesize
    \textbf{Trade-off under different whitening operators.}
    Only $\mathcal{W}_{\text{Ours}}$ improves the target reward without degrading the held-out quality.
    }
    \label{fig:whitening_ablation_tradeoff}
\end{minipage}

\end{figure*}

\cref{tab:whitening_ablation_aesthetic} shows an ablation in which we fix the \Ours\ sampling setting and replace only the whitening operator.
Using no whitening, i.e.\ $\mathcal{W}_{\mathrm{None}}$, improves the target reward only modestly.
Norm projection, $\mathcal{W}_{\mathrm{Norm}}$, increases the target reward further, but incurs the largest degradation in held-out quality.
$\mathcal{W}_{\mathrm{WGNC}}$ provides a better trade-off than norm projection and recovers held-out quality above the base level, yet still lags behind $\mathcal{W}_{\mathrm{Ours}}$ on both target reward and held-out metrics.
In contrast, $\mathcal{W}_{\mathrm{Ours}}$ achieves the highest target reward while also preserving, and in this case slightly improving, the held-out metrics.
Therefore, the benefit of \Ours\ does not come merely from making the guidance stronger; it comes from injecting reward information through a perturbation that remains much more faithful to the statistics of typical standard Gaussian noise.

\cref{fig:whitening_comparison_app} compares the four operators on two representative inputs.
In the top example of \cref{fig:whitening_comparison_app}, the input is strongly structured and leads to a severe failure case.
Applying $\mathcal{W}_{\mathrm{None}}$ or $\mathcal{W}_{\mathrm{Norm}}$ leaves the dominant structure largely intact, so the sampled image remains invalid.
$\mathcal{W}_{\mathrm{WGNC}}$ removes much more of the visible structure, but still produces a noticeably distorted sample.
By contrast, $\mathcal{W}_{\mathrm{Ours}}$ produces a much more typical noise and yields a realistic image.
In the bottom example of \cref{fig:whitening_comparison_app}, the input is already a typical Gaussian noise.
Here the desired behavior is the opposite: a good whitening operator should change the input as little as possible.
Indeed, $\mathcal{W}_{\mathrm{Ours}}$ leaves the latent almost unchanged, whereas $\mathcal{W}_{\mathrm{WGNC}}$ still perturbs it visibly.
\Cref{fig:whitening_comparison_app} illustrates the main difference between the two approaches: \Ours\ removes atypical structure when necessary, but preserves genuine noise when it is already compatible with the pretrained model.

Overall, the quantitative results above and \cref{fig:whitening_comparison_app} together show that a good whitening operator is crucial for reward alignment.
Operators that are too weak fail to remove harmful structure, whereas operators that are too rigid distort even already-typical noise.
The proposed whitening operator is effective precisely because it balances these two requirements: it strongly suppresses atypical structure when needed, while minimally altering inputs that already lie near the noise-compatible regime.

\begin{figure}[t]
\centering
\setlength{\tabcolsep}{1pt}
\renewcommand{\arraystretch}{0.5}
\small

\newcommand{\cellimg}[1]{\includegraphics[width=\linewidth]{Figures/whitening_process_app/#1}}
\newcommand{\cellimgp}[1]{\includegraphics[width=\linewidth]{Figures/pure_pass/#1}}

\newcolumntype{C}{>{\centering\arraybackslash}X}

\begin{tabularx}{\linewidth}{@{} c *{1}{C} | *{3}{C} @{}}
\toprule
& $\bm{z}$ ($=\mathcal{W}_{\text{None}}(\bm{z})$) & \makecell{$\mathcal{W}_{\text{Norm}}(\bm{z})$} & \makecell{$\mathcal{W}_{\text{WGNC}}(\bm{z})$} & \makecell{$\mathcal{W}_{\text{Ours}}(\bm{z})$} \\

\midrule
\raisebox{3.0ex}{\rotatebox{90}{Latent Visualization}} &
\cellimg{original.png} &
\cellimg{norm.png} &
\cellimg{wgnc.png} &
\cellimg{ours.png} \\
\raisebox{5.5ex}{\rotatebox{90}{Sampled Image}} &
\cellimg{original_img.jpg} &
\cellimg{norm_img.jpg}  &
\cellimg{wgnc_img.jpg}  &
\cellimg{ours_img.jpg} \\[3px]
& CosSim($\bm{z}, \mathcal{W}(\bm{z})$)
& 1.000000
& 0.642806
& 0.480325 \\[3px]
& $\|\bm{z} - \mathcal{W}(\bm{z})\|_2$
& 37.63
& 203.35
& 244.39 \\
\bottomrule
\toprule
& $\bm{z}$ ($=\mathcal{W}_{\text{None}}(\bm{z})$) & \makecell{$\mathcal{W}_{\text{Norm}}(\bm{z})$} & \makecell{$\mathcal{W}_{\text{WGNC}}(\bm{z})$} & \makecell{$\mathcal{W}_{\text{Ours}}(\bm{z})$} \\
\midrule
\raisebox{3.0ex}{\rotatebox{90}{Latent Visualization}} &
\cellimgp{original.png} &
\cellimgp{norm.png} &
\cellimgp{wgnc.png} &
\cellimgp{ours.png} \\
\raisebox{5.5ex}{\rotatebox{90}{Sampled Image}} &
\cellimgp{original_img.jpg} &
\cellimgp{norm_img.jpg}  &
\cellimgp{wgnc_img.jpg}  &
\cellimgp{ours_img.jpg} \\[3px]
& CosSim($\bm{z}, \mathcal{W}(\bm{z})$)
& 1.000000
& 0.989371
& 0.999996 \\[3px]
& $\|\bm{z} - \mathcal{W}(\bm{z})\|_2$
& 1.98
& 37.24
& 0.77 \\
\bottomrule
\end{tabularx}

\caption{
\footnotesize
\textbf{Comparison with simpler whitening operators.}
Top: when the input latent $\bm{z}$ is strongly atypical, $\mathcal{W}_{\mathrm{None}}$ and $\mathcal{W}_{\mathrm{Norm}}$ fail to remove the dominant structure, while $\mathcal{W}_{\mathrm{WGNC}}$ suppresses it more strongly but still yields a distorted sample.
In contrast, $\mathcal{W}_{\mathrm{Ours}}$ produces a much more typical noise and a realistic image.
Bottom: when $\bm{z}$ is already close to typical Gaussian noise, a good whitening operator should leave it nearly unchanged.
Here, $\mathcal{W}_{\mathrm{Ours}}$ preserves the input most faithfully, whereas $\mathcal{W}_{\mathrm{WGNC}}$ still perturbs it visibly.
The cosine similarity and $\ell_2$ distance quantify this trade-off: \Ours\ changes atypical inputs when needed, but minimally alters already-typical noise.
}
\label{fig:whitening_comparison_app}
\end{figure}

\clearpage
\newpage

\clearpage
\newpage
\section{Additional Experimental Results}
\label{app:additional_experiments}

In this section, we present additional experimental results and implementation details that complement the experiments in \cref{sec:exp}. We first summarize the implementation details across applications in~\cref{app:imple_details}. We then compare with noise optimization methods in~\cref{app:dno_comparison}. We further provide results on additional applications, including quantity-aware generation and VLM-based reward alignment, in~\cref{app:counting,app:vlm_reward_alignment}. We further report results with a different base model, Z-Image~\cite{cai2025:zimage}, for aesthetic and text-aligned image generation in~\cref{app:zimage_results}. Finally, we present additional results for the main experiments in~\cref{app:additional_main}.

\subsection{Implementation Details}
\label{app:imple_details}

\begin{wraptable}{r}{0.46\textwidth}
\vspace{-1.0\baselineskip}
\centering
\caption{
\footnotesize
\textbf{Hyperparameter configurations for each application.}
}
\label{tab:hyperparameters}
\renewcommand{\arraystretch}{1.05}
\setlength{\tabcolsep}{3pt}
\footnotesize
\begin{tabular}{@{} l ccccc @{}}
\toprule
& \makecell[c]{\textbf{Aesthetic}\\\textbf{Image}}
& \makecell[c]{\textbf{Text-}\\\textbf{Aligned}}
& \makecell[c]{\textbf{Quantity-}\\\textbf{Aware}}
& \textbf{VLM}
& \textbf{Video} \\
\midrule
$\sigma_t$ & $0.2t$ & $0.2t$ & $0.2$ & $0.2$ & $0.2$ \\
$\rho_t$   & $0.3$  & $0.3$  & $0.5$ & $0.5$ & $0.5$ \\
\bottomrule
\end{tabular}
\vspace{-0.8\baselineskip}
\end{wraptable}
We summarize the implementation details across applications in this section. For image generation tasks, we use FLUX~\cite{flux2024} and Z-Image~\cite{cai2025:zimage} as the base models for aesthetic image generation and text-aligned image generation, and FLUX~\cite{flux2024} for quantity-aware generation and VLM-based reward alignment. For preference-aligned video generation, we use Wan2.1~\cite{wan2025} as the base flow model. Across all experiments, we fix the number of sampling steps to 25. The diffusion coefficient $\sigma_t$ and the whitened guidance strength $\rho_t$ used for each application are summarized in~\cref{tab:hyperparameters}.

\subsection{Comparison with Noise Optimization}
\label{app:dno_comparison}

\begin{wraptable}{r}{0.56\textwidth}
\vspace{-1.0\baselineskip}
\centering
\caption{
\footnotesize
\textbf{Comparison with DNO~\cite{tang2025dno} under the same NFE budget.} Aesthetic image generation at 500 NFE.
}
\label{tab:dno_comparison}
\vspace{-0.2\baselineskip}
\renewcommand{\arraystretch}{1.10}
\setlength{\tabcolsep}{5.0pt}
\footnotesize
\begin{tabular}{@{} l c ccc @{}}
\toprule
\multirow{2}{*}[-1.4ex]{\textbf{Method}} & \multirow{2}{*}[-1.4ex]{\textbf{NFE}} & \textbf{Target Reward} & \multicolumn{2}{c}{\textbf{Held-Out Reward}} \\
\cmidrule(lr){3-3}\cmidrule(lr){4-5}
& & \makecell{\textbf{Aesthetic}\\\textbf{Score}} $\uparrow$ & \makecell{\textbf{Pick-}\\\textbf{Score}} $\uparrow$ & \textbf{HPSv2} $\uparrow$ \\
\midrule
Base                                & 25  & 6.028 & 0.2144 & 0.2759 \\[0.5em]
\makecell[l]{DNO~\cite{tang2025dno}\\[-0.2em]{\scriptsize(20 updates)}} & 500 & 6.772 & 0.2143 & 0.2717 \\
\makecell[l]{\textbf{\Ours{}}\\[-0.2em]{\scriptsize(20 particles)}} & 500 & 7.966 & 0.2197 & 0.2932 \\
\bottomrule
\end{tabular}
\vspace{-0.8\baselineskip}
\end{wraptable}

DNO~\cite{tang2025dno} belongs to a distinct class of methods: rather than modifying the per-step reverse kernel, it treats all injected noise vectors as inference-time optimization variables and refines them through iterative Adam updates with a regularization term.
This design has a fundamental budgetary consequence: each Adam update requires a complete forward pass through all $T$ denoising steps, so generating a single sample with $K$ updates costs $T \times K$ NFEs.
At the standard 25-NFE budget ($K{=}0$), DNO produces output identical to base sampling since no optimization has yet taken place.

We compare DNO and \Ours{} under a matched budget of 500 NFE on the aesthetic image generation task.
For DNO this corresponds to $K{=}20$ Adam updates over the full noise trajectory; for \Ours{} it corresponds to $N{=}20$ particles with Best-of-N.
As shown in~\cref{tab:dno_comparison}, \Ours{} achieves substantially higher aesthetic reward while better preserving held-out metrics, demonstrating that per-step kernel modification is a more budget-efficient strategy than trajectory-level noise optimization.
\FloatBarrier

\subsection{Additional Application: Quantity-Aware Generation}
\label{app:counting}

\paragraph{Experiment Setup.} We evaluate quantity-aware generation using 40 prompts from prior work, $\Psi$-Sampler~\cite{yoon2025:psi}, covering 20 object categories with randomly assigned target counts of up to 90. 
For all other experimental details, we follow the same setup described in \cref{subsec:exp_setup}.

\paragraph{Evaluation Metrics.} The target reward is negative smooth L1 loss computed with T2I-Count~\cite{qian2025:t2icount}, which takes a generated image and the corresponding
text prompt as input and returns a density map. For the held-out reward, we use alternative counting model, CountGD~\cite{amininaieni2025:countgd}, and report MAE and counting accuracy (\%), where a prediction is considered correct if the predicted count exactly matches the target count. As held-out rewards, we evaluate image quality using ImageReward~\cite{Xu2023:ImageReward} and HPSv2~\cite{wu2023:hpsv2}, and text–image alignment using VQA Score~\cite{lin2024:vqa}. For all methods, methods augmented with BoN~\cite{Stiennon:2020BoN} and $\Psi$-Sampler~\cite{yoon2025:psi} are marked with \textsuperscript{\dag} and \textsuperscript{$\ddagger$}, respectively. 

\paragraph{Results.}
The quantitative and qualitative results are presented in~\cref{tab:quantity_aware} and~\cref{fig:counting_qualitative}, respectively. Overall, \Ours{} achieves the best target reward performance by a large margin over all baselines. On held-out rewards, \Ours{} also attains the best counting performance in terms of both MAE and accuracy, while preserving image quality relative to the base model. Qualitatively, \Ours{} consistently generates the desired number of objects across diverse categories and target counts, whereas the baselines often generate either fewer or more objects than the target count.

\begin{table*}[t!]
\centering
\caption{
\footnotesize 
\textbf{Quantitative comparison on quantity-aware generation.} 
The target reward is T2I-Count~\cite{qian2025:t2icount}.
For single-particle methods, we augment sampling with Best-of-N~\cite{Stiennon:2020BoN} and $\Psi$-Sampler~\cite{yoon2025:psi} to match the total NFE, denoted with \textsuperscript{\dag} and \textsuperscript{$\ddagger$}, respectively. 
Dark green cells indicate the best result for each metric across all runs, while light green cells denote the second best. 
}
\label{tab:quantity_aware}
\renewcommand{\arraystretch}{1.10}
\setlength{\tabcolsep}{5.0pt}
\footnotesize
\resizebox{\textwidth}{!}{%
\begin{tabular}{@{} l c cccccc @{}}
\toprule
\multirow{2}{*}[-1.4ex]{\textbf{Method}} & \multirow{2}{*}[-1.4ex]{\textbf{NFE}} & \textbf{Target Reward} & \multicolumn{5}{c}{\textbf{Held-Out Reward}} \\
\cmidrule(lr){3-3} \cmidrule(lr){4-8}
& & \makecell{\textbf{T2I-Count}} $\downarrow$
& \textbf{MAE} $\downarrow$
& \makecell{\textbf{Acc}\\\textbf{(\%)}} $\uparrow$
& \makecell{\textbf{Image}\\\textbf{Reward}} $\uparrow$
& \textbf{HPSv2} $\uparrow$
& \makecell{\textbf{VQA}\\\textbf{Score}} $\uparrow$ \\
\midrule
Base~\cite{flux2024} & 25 & 16.365 & 16.700 & 5.0 & 0.579 & 0.262 & 0.928 \\
BoN~\cite{Stiennon:2020BoN} & 1000 & 2.668 & 4.925 & 22.5 & 0.642 & 0.269 & 0.904 \\
DPS\textsuperscript{\dag}~\cite{Chung:2023DPS} & 1000 & 2.565 & 4.900 & 15.0 & 0.716 & 0.268 & 0.925 \\
FreeDoM\textsuperscript{\dag}~\cite{Yu:2023FreeDOM} & 1045 & 5.210 & 7.925 & 20.0 & 0.303 & 0.259 & 0.931 \\
SVDD~\cite{Li2024:SVDD} & 1000 & \cellcolor{lightgreen}1.530 & \cellcolor{lightgreen}3.100 & 17.5 & 0.729 & \cellcolor{lightgreen}0.271 & \cellcolor{darkgreen}0.953 \\
RBF~\cite{kim2025:rbf} & 1000 & 1.796 & 4.025 & \cellcolor{lightgreen}30.0 & 0.646 & \cellcolor{lightgreen}0.271 & 0.936 \\
DAS~\cite{Kim:2025DAS} & 1000 & 2.061 & 3.875 & 27.5 & \cellcolor{lightgreen}0.789 & \cellcolor{lightgreen}0.271 & 0.950 \\
$\Psi$-Sampler~\cite{yoon2025:psi} & 1000 & 1.426 & 3.215 & \cellcolor{lightgreen}30.0 & \cellcolor{darkgreen}0.854 & \cellcolor{darkgreen}0.273 & \cellcolor{lightgreen}0.951 \\
\midrule
\textbf{\Ours\textsuperscript{$\ddagger$} (Ours)} & 1000 & \cellcolor{darkgreen}0.045 & \cellcolor{darkgreen}2.325 & \cellcolor{darkgreen}37.5 & 0.681 & 0.266 & 0.915 \\
\bottomrule
\end{tabular}%
}
\end{table*}
\begin{figure*}[t!]
\centering
\setlength{\tabcolsep}{0pt}
\renewcommand{\arraystretch}{1.03}
\scriptsize

\newcommand{\colw}{0.159\textwidth}
\newcommand{\imgw}{0.159\textwidth}

\begin{tabular}{@{}%
>{\centering\arraybackslash}m{\colw}
>{\centering\arraybackslash}m{\colw}
>{\centering\arraybackslash}m{\colw}
>{\centering\arraybackslash}m{\colw}
>{\centering\arraybackslash}m{\colw}
>{\centering\arraybackslash}m{\colw}
@{}}
\toprule

Base~\cite{flux2024} &
BoN~\cite{Stiennon:2020BoN} &
DPS\textsuperscript{\dag}~\cite{Chung:2023DPS} &
SVDD~\cite{Li2024:SVDD} &
$\Psi$-Sampler~\cite{yoon2025:psi} &
{\Oursbf{}}\textsuperscript{$\ddagger$} (\textbf{Ours}) \\

\midrule

\multicolumn{6}{@{}c@{}}{\textit{``\textbf{37} strawberries''}} \\[1pt]
\includegraphics[width=\imgw]{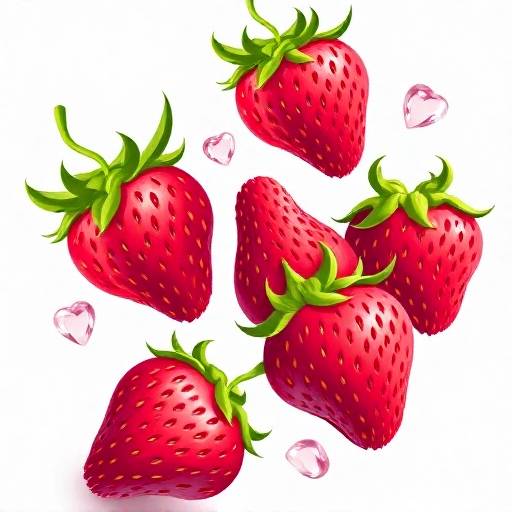} &
\includegraphics[width=\imgw]{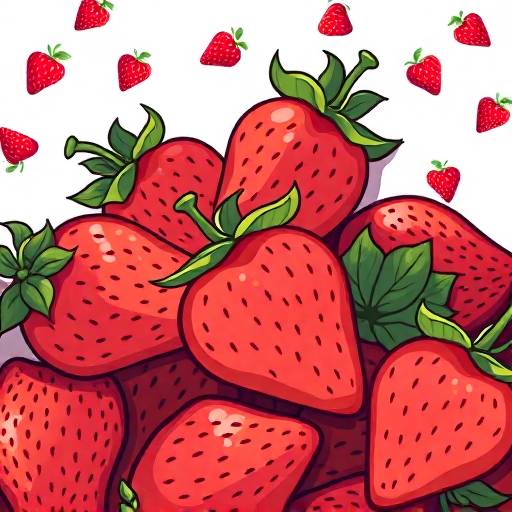} &
\includegraphics[width=\imgw]{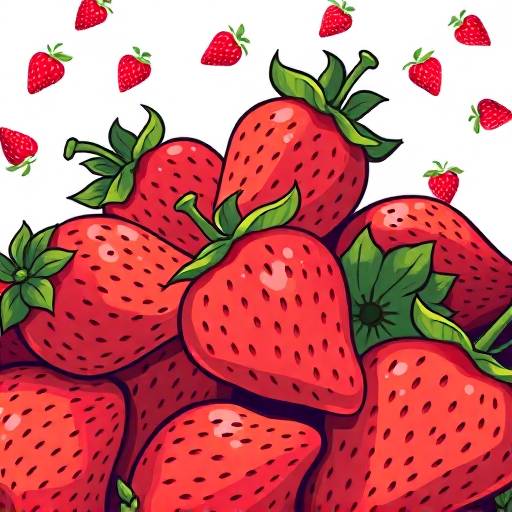} &
\includegraphics[width=\imgw]{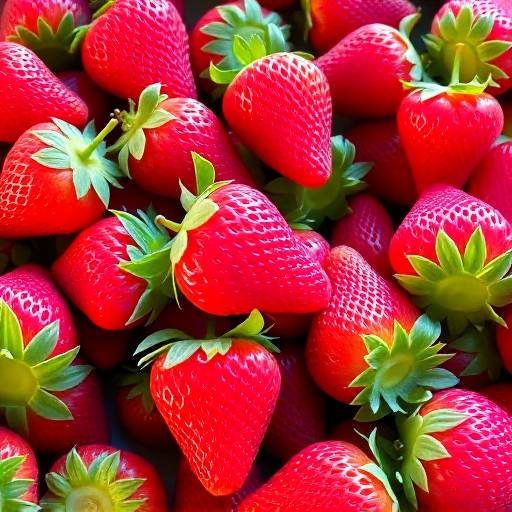} &
\includegraphics[width=\imgw]{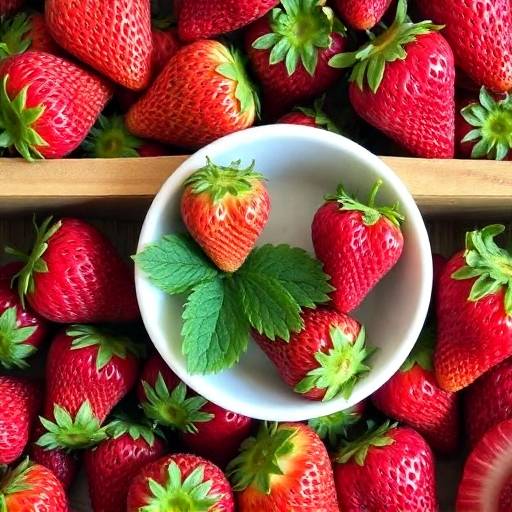} &
\includegraphics[width=\imgw]{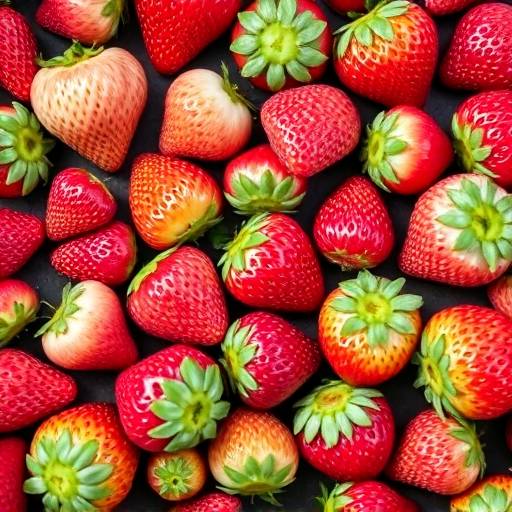} \\
\includegraphics[width=\imgw]{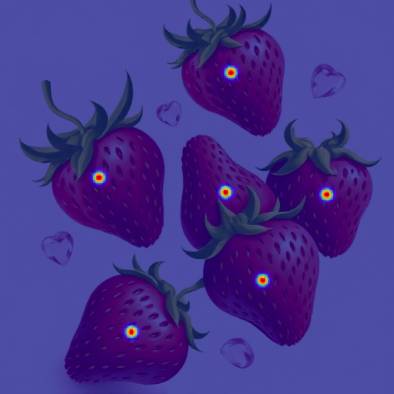} &
\includegraphics[width=\imgw]{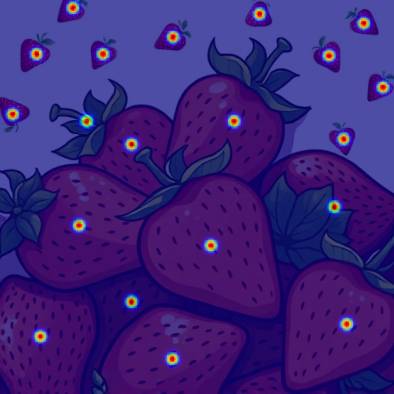} &
\includegraphics[width=\imgw]{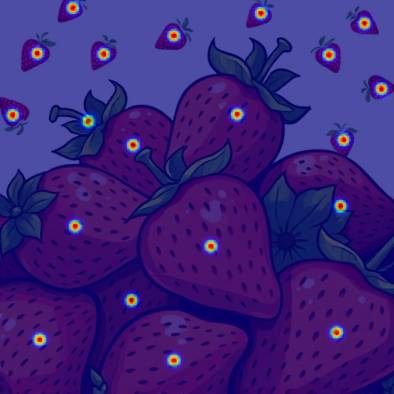} &
\includegraphics[width=\imgw]{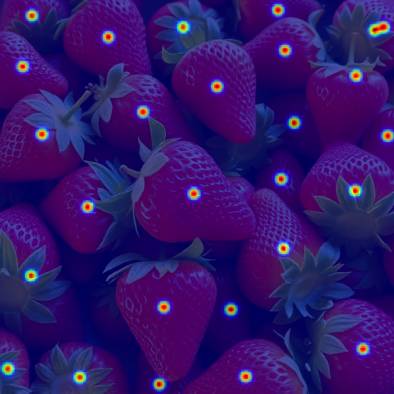} &
\includegraphics[width=\imgw]{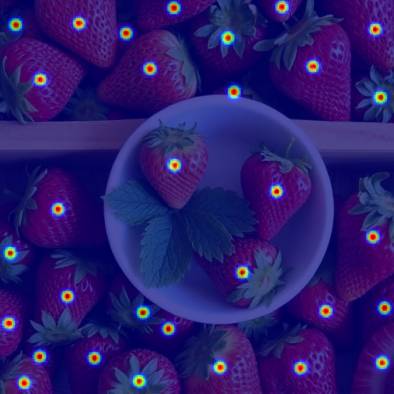} &
\includegraphics[width=\imgw]{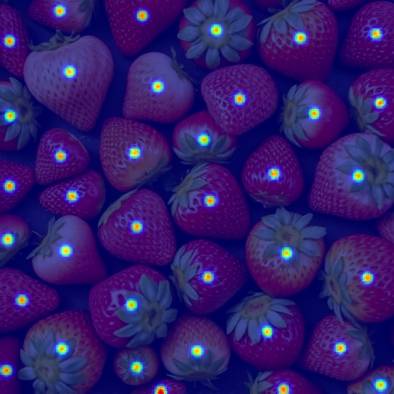} \\
6 ($\Delta 31$) &
20 ($\Delta 17$) &
20 ($\Delta 17$) &
27 ($\Delta 10$) &
31 ($\Delta 6$) &
37 ($\Delta 0$) \\[4pt]

\multicolumn{6}{@{}c@{}}{\textit{``\textbf{22} grapes''}} \\[1pt]
\includegraphics[width=\imgw]{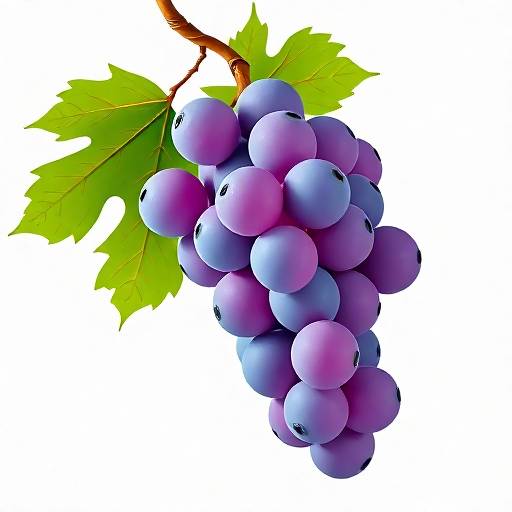} &
\includegraphics[width=\imgw]{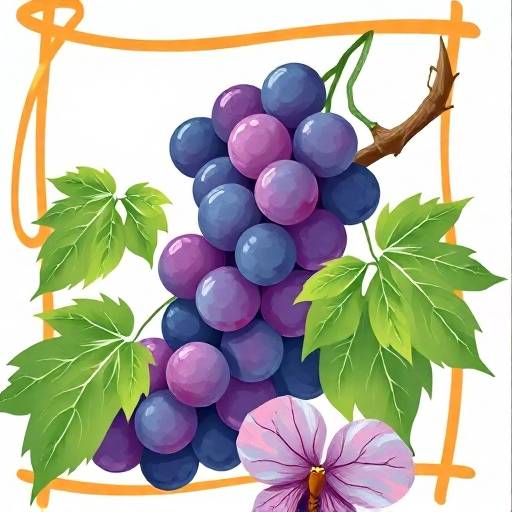} &
\includegraphics[width=\imgw]{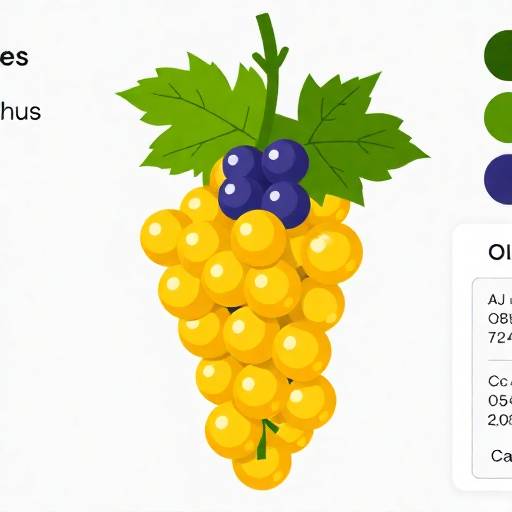} &
\includegraphics[width=\imgw]{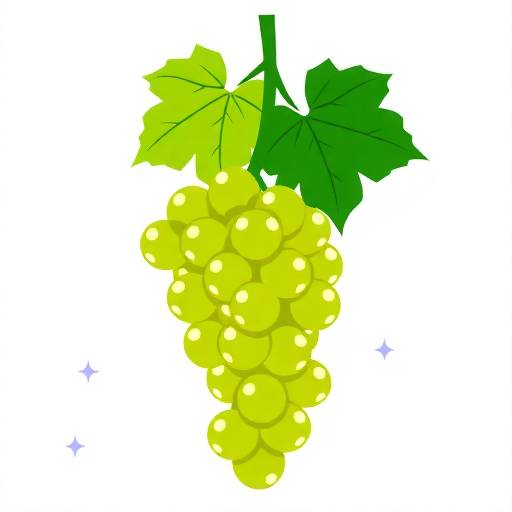} &
\includegraphics[width=\imgw]{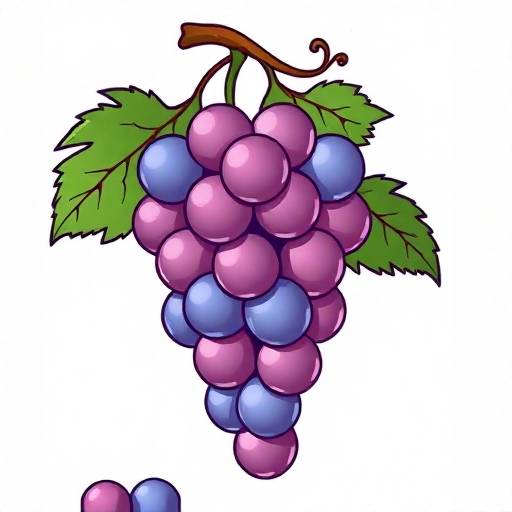} &
\includegraphics[width=\imgw]{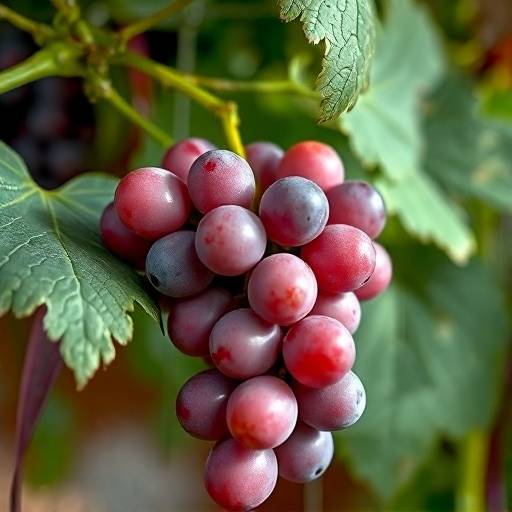} \\
\includegraphics[width=\imgw]{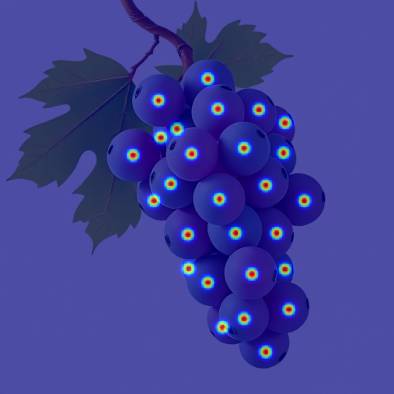} &
\includegraphics[width=\imgw]{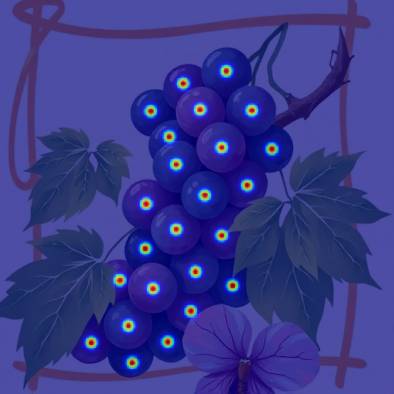} &
\includegraphics[width=\imgw]{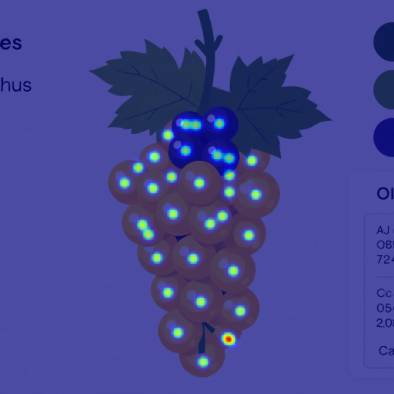} &
\includegraphics[width=\imgw]{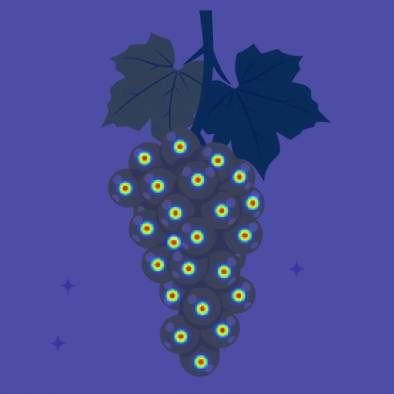} &
\includegraphics[width=\imgw]{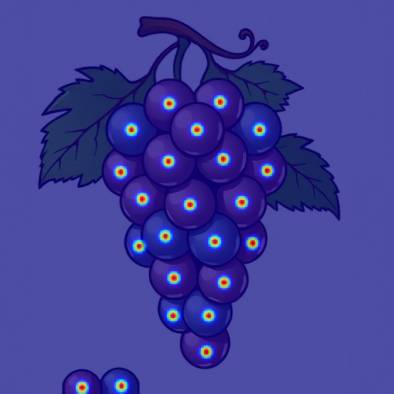} &
\includegraphics[width=\imgw]{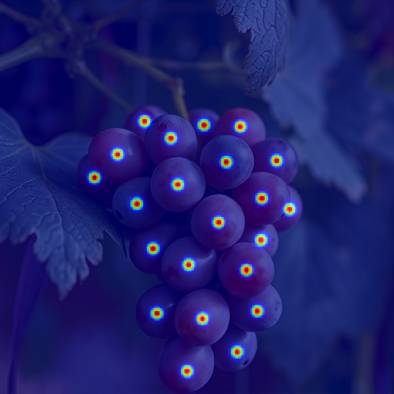} \\
27 ($\Delta 5$) &
24 ($\Delta 2$) &
33 ($\Delta 11$) &
23 ($\Delta 1$) &
23 ($\Delta 1$) &
22 ($\Delta 0$) \\[4pt]

\multicolumn{6}{@{}c@{}}{\textit{``\textbf{11} lychees''}} \\[1pt]
\includegraphics[width=\imgw]{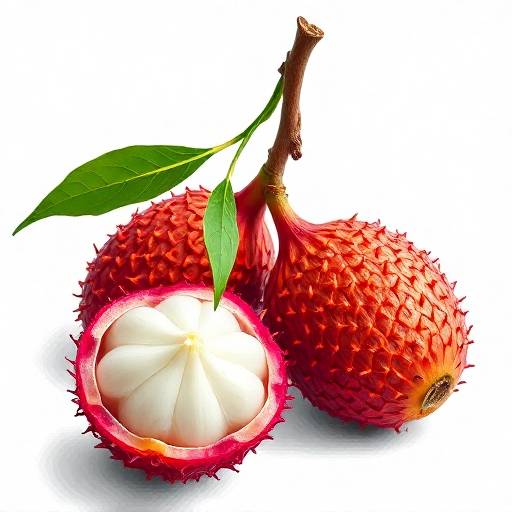} &
\includegraphics[width=\imgw]{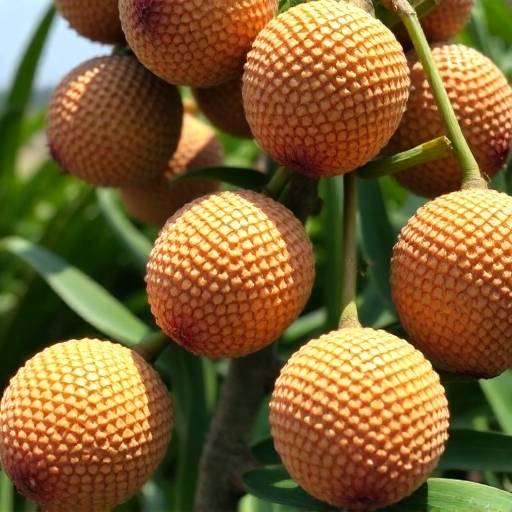} &
\includegraphics[width=\imgw]{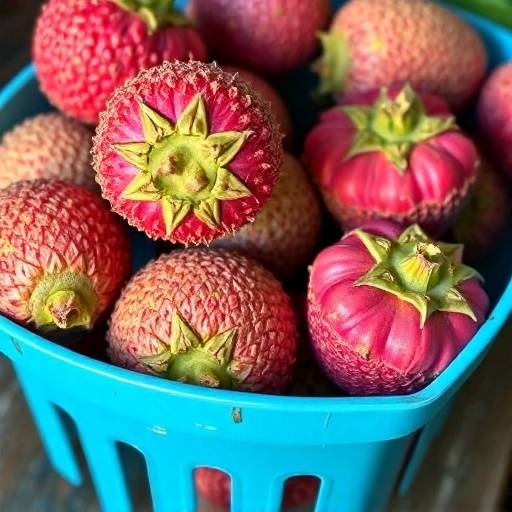} &
\includegraphics[width=\imgw]{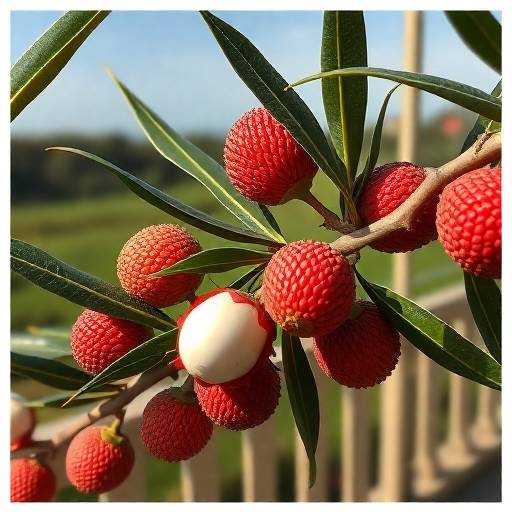} &
\includegraphics[width=\imgw]{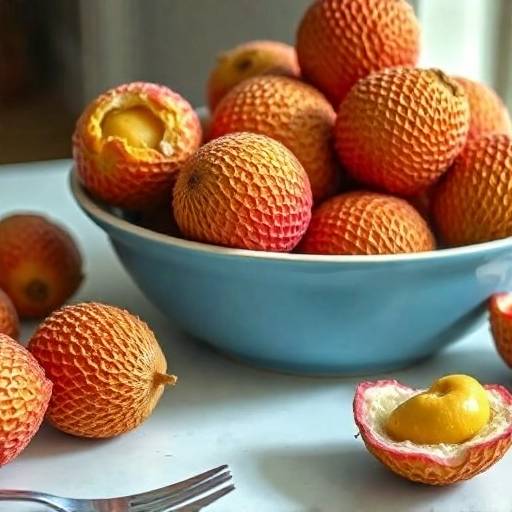} &
\includegraphics[width=\imgw]{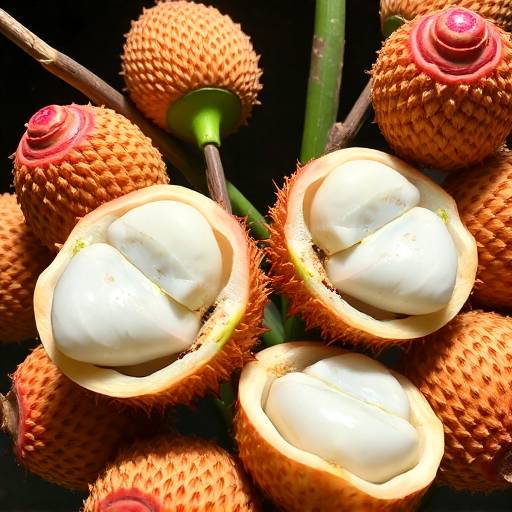} \\
\includegraphics[width=\imgw]{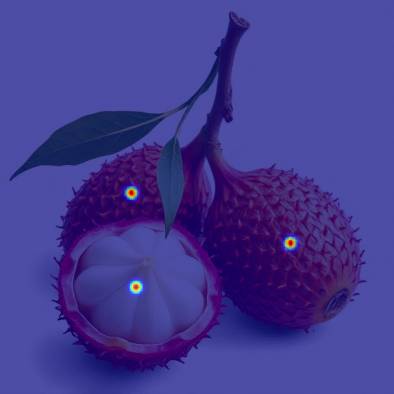} &
\includegraphics[width=\imgw]{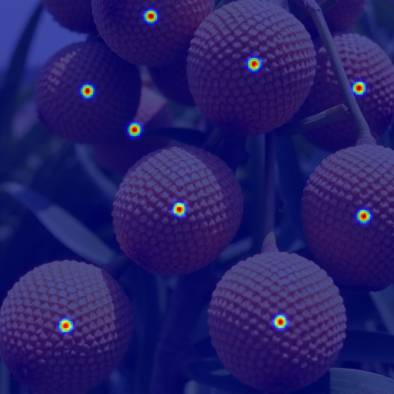} &
\includegraphics[width=\imgw]{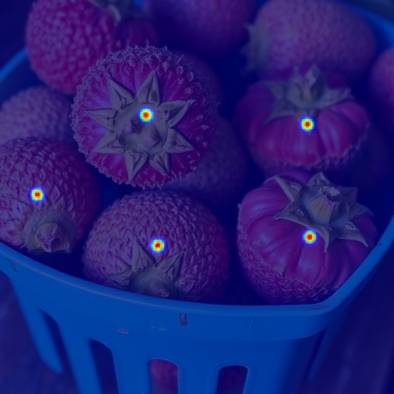} &
\includegraphics[width=\imgw]{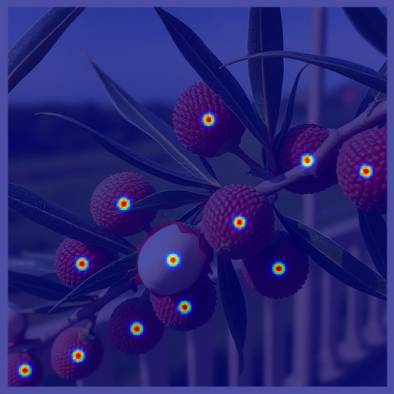} &
\includegraphics[width=\imgw]{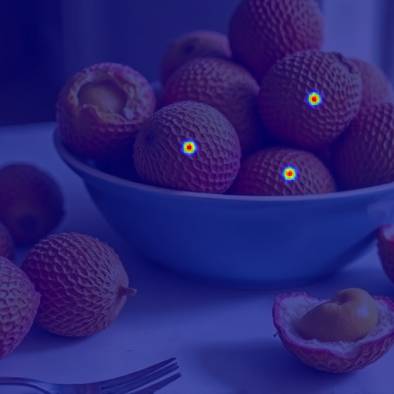} &
\includegraphics[width=\imgw]{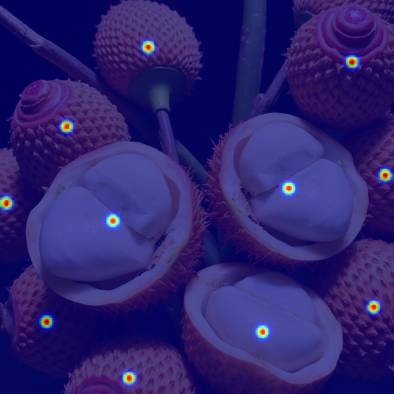} \\
3 ($\Delta 8$) &
9 ($\Delta 2$) &
5 ($\Delta 6$) &
12 ($\Delta 1$) &
3 ($\Delta 8$) &
11 ($\Delta 0$) \\

\bottomrule
\end{tabular}

\caption{
\footnotesize
\textbf{Qualitative comparison on quantity-aware generation using T2I-Count~\cite{qian2025:t2icount}.}
For each example, the first row shows generated images and the second row shows the corresponding heatmaps.
For single-particle methods, we augment sampling with Best-of-N~\cite{Stiennon:2020BoN} and $\Psi$-Sampler~\cite{yoon2025:psi} to match the total NFE, denoted with \textsuperscript{\dag} and \textsuperscript{$\ddagger$}, respectively.
}
\label{fig:counting_qualitative}
\end{figure*}

\subsection{Additional Application: VLM-Based Reward Alignment}
\label{app:vlm_reward_alignment}
In this section, we present qualitative results of applying \Ours{} to a VLM-based reward application: relative depth and horizon position conditioned image generation. 

\paragraph{Experiment Setup.} 
We consider the relative depth and horizon position applications introduced in the Dual-Process framework~\cite{luo2025:dual}. 
For relative depth, the task is to enforce a specific depth ordering between two locations, specified by overlaid red dots labeled as Point A and Point B. 
For horizon position, the objective is to align the scene horizon with a red line overlaid on the image. 

To enforce these conditions, the generated image containing these visual overlays is fed into a VLM, Qwen2.5-VL~\cite{qwen}. 
The VLM is then queried with a task-specific instruction prompt (e.g., ``Is Point B much closer to the camera than Point A?'' or ``Is the overlaid red line aligned with the horizon of the scene?''). 
The reward is computed as the probability of the VLM outputting the desired affirmative answer. 

\paragraph{Results.}
We present qualitative results in \cref{tab:dual_process}. 
As shown in these examples, \Ours{} demonstrates the capability to align generated images with instruction prompts. 
Specifically, we observe that the generated scenes can successfully place the horizon along the specified red line and adjust object positioning to satisfy the target depth ordering. 

\begin{table}[t]
\centering
\setlength{\tabcolsep}{3pt} 
\renewcommand{\arraystretch}{1.0} 
\begin{tabular}{cccc}
\multicolumn{4}{c}{\textbf{Relative Depth}} \\
\midrule 
\multicolumn{2}{c}{Base~\cite{flux2024}} & \multicolumn{2}{c}{{\Oursbf{}} (\textbf{Ours})} \\ 
\includegraphics[width=0.24\linewidth]{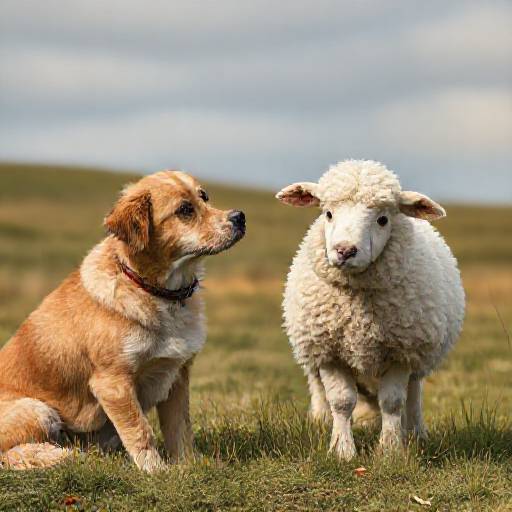} &
\includegraphics[width=0.24\linewidth]{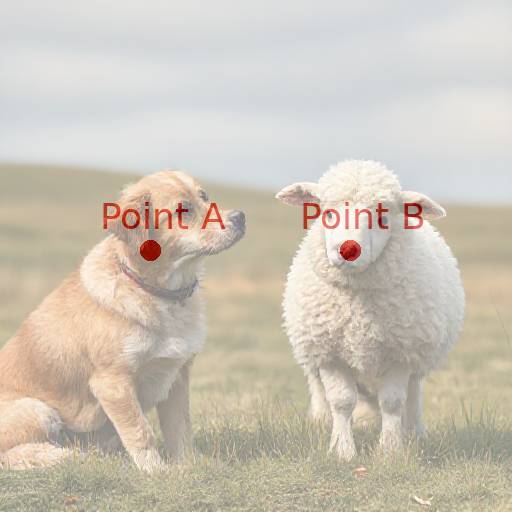} &
\includegraphics[width=0.24\linewidth]{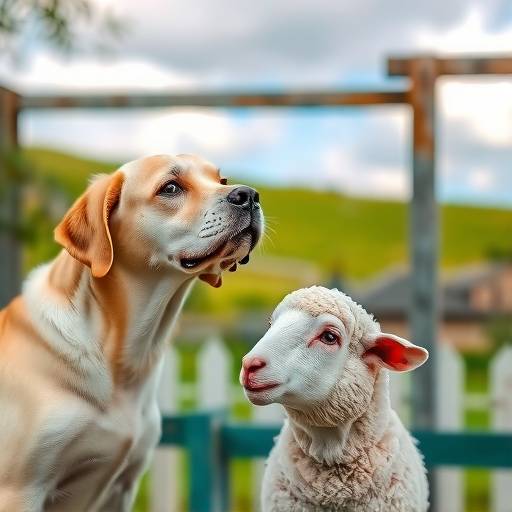} &
\includegraphics[width=0.24\linewidth]{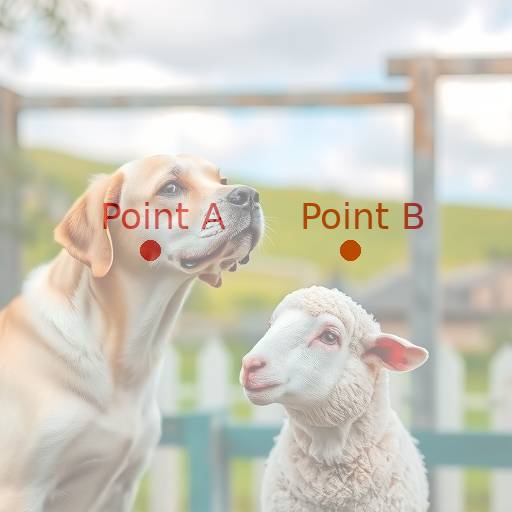} \\
\multicolumn{4}{c}{ \textit{``Is \colorbox{blue!20}{Point A} much closer to the camera than \colorbox{red!20}{Point B}?''}} \\ [1ex]

\includegraphics[width=0.24\linewidth]{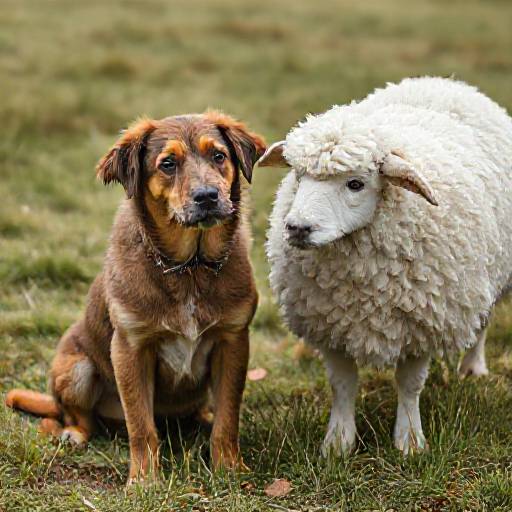} &
\includegraphics[width=0.24\linewidth]{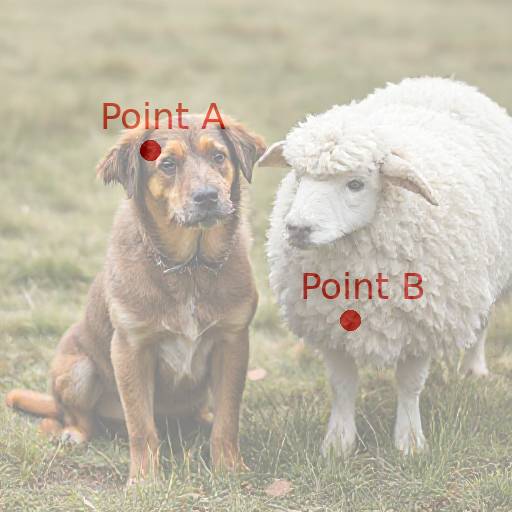} &
\includegraphics[width=0.24\linewidth]{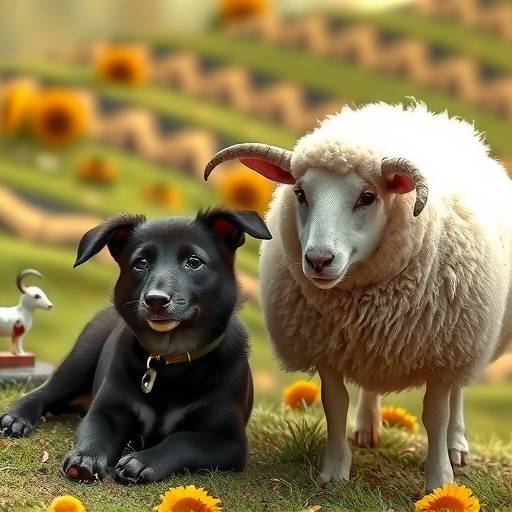} &
\includegraphics[width=0.24\linewidth]{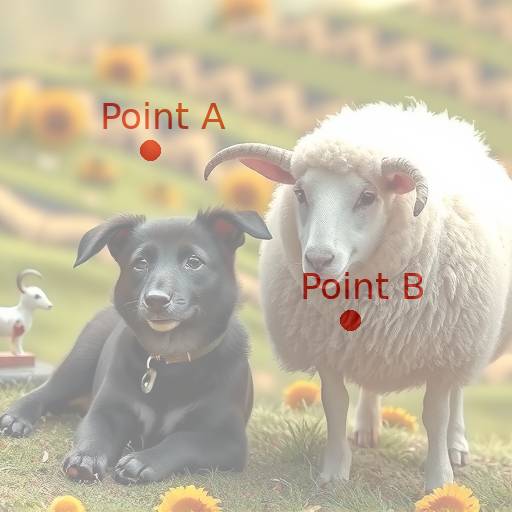} \\
\multicolumn{4}{c}{ \textit{``Is \colorbox{red!20}{Point B} much closer to the camera than \colorbox{blue!20}{Point A}?''}} \\ [1ex]

\multicolumn{4}{c}{\textbf{Horizon Position}} \\
\midrule 
\multicolumn{2}{c}{Base~\cite{flux2024}} & \multicolumn{2}{c}{{\Oursbf{}} (\textbf{Ours})} \\ 
\includegraphics[width=0.24\linewidth]{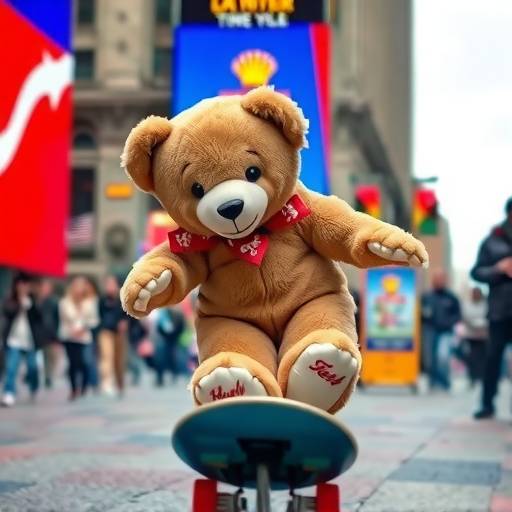} &
\includegraphics[width=0.24\linewidth]{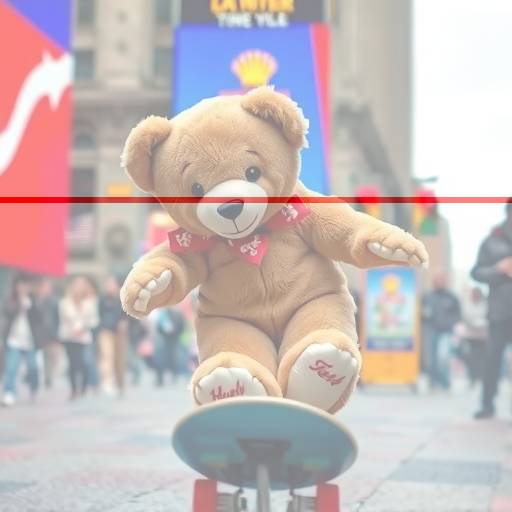} &
\includegraphics[width=0.24\linewidth]{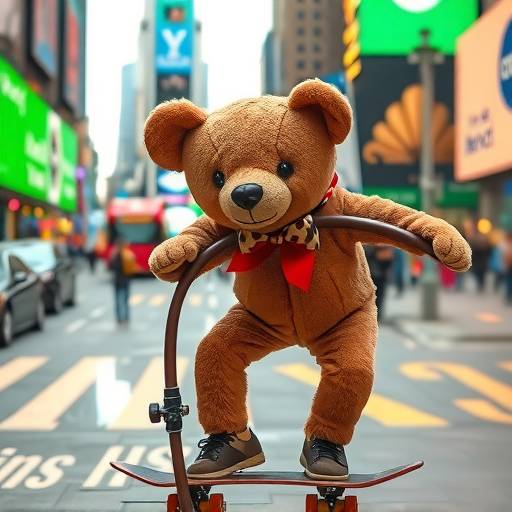} &
\includegraphics[width=0.24\linewidth]{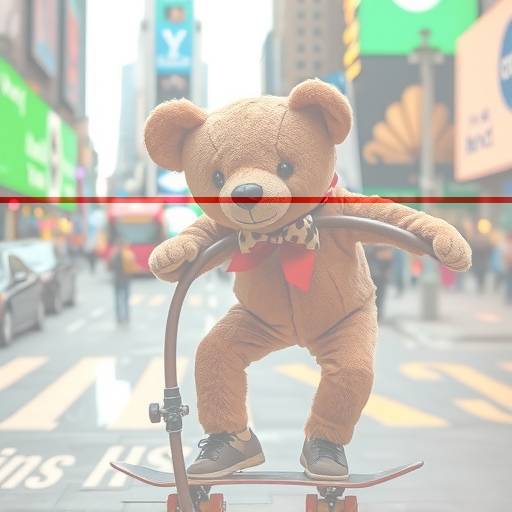} \\
\multicolumn{4}{c}{ \textit{``Does the overlaid red line represent the horizon of the scene?''}} \\ [1ex]
\includegraphics[width=0.24\linewidth]{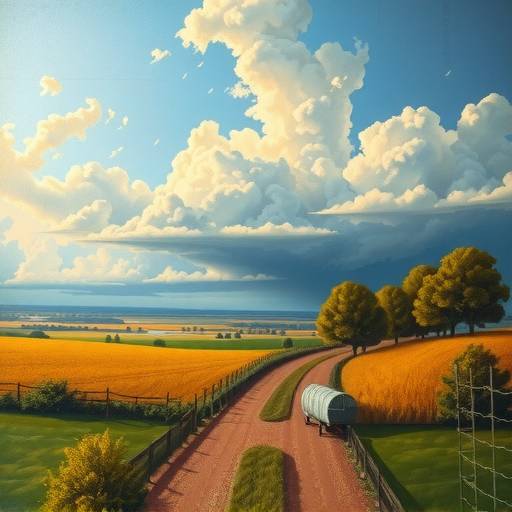} &
\includegraphics[width=0.24\linewidth]{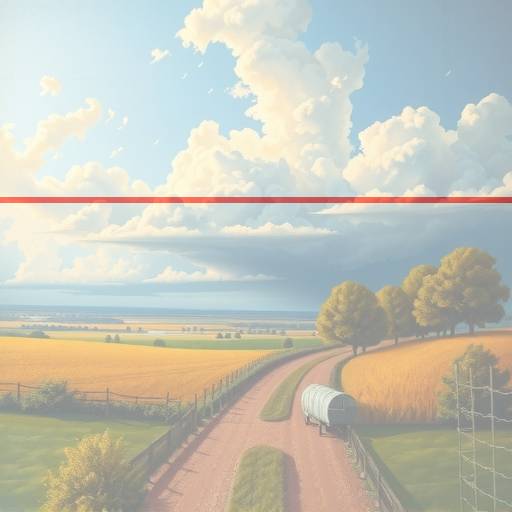} &
\includegraphics[width=0.24\linewidth]{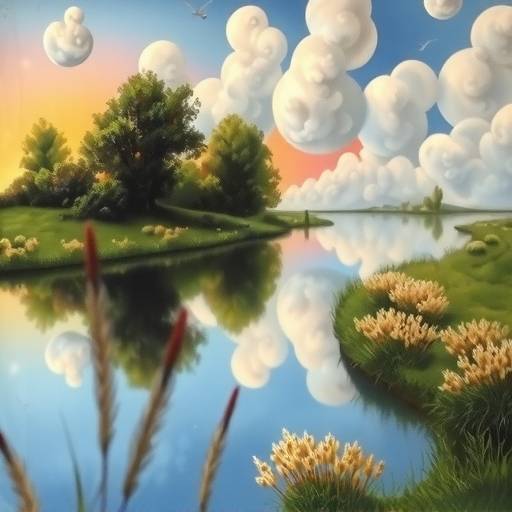} &
\includegraphics[width=0.24\linewidth]{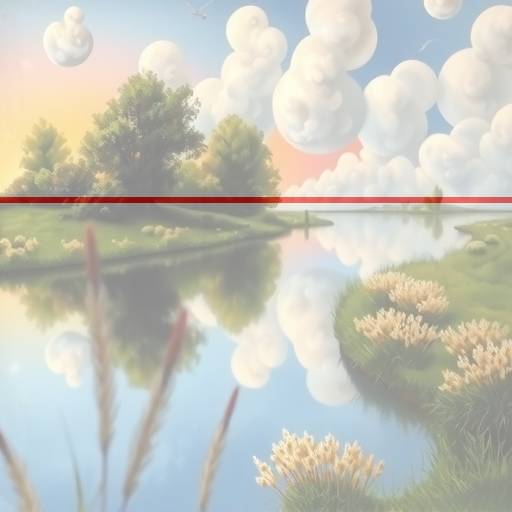} \\
\multicolumn{4}{c}{ \textit{``Does the overlaid red line represent the horizon of the scene?''}} \\ [1ex]
\end{tabular}
\captionof{figure}{
    \footnotesize
    \textbf{Applications of Dual-Process~\cite{luo2025:dual}}. Horizon position aims to align the horizontal line with the image, and relative depth to place objects with the given relative depth information. 
    The prompts describe the instructions provided to the VLM~\cite{qwen}, and for generation, we use the prompts provided in the Dual-Process~\cite{luo2025:dual}. 
}
\label{tab:dual_process}
\end{table}
\FloatBarrier

\subsection{Additional Results with Different Diffusion Model}
\label{app:zimage_results}

In this section, we present additional diffusion reward alignment results with a different base model, Z-Image~\cite{cai2025:zimage}. 

\paragraph{Experiment Setup.}
We follow the identical experimental setup described in \cref{subsec:exp_setup}, with only the base model replaced by Z-Image~\cite{cai2025:zimage}. Specifically, for aesthetic image generation, we use $45$ animal prompts from DDPO~\cite{Black2024:DDPO}. For text-aligned image generation, we use $100$ prompts from the complex category of T2I-CompBench++~\cite{Huang:2025T2ICompBench++}. We fix the sampling steps to $25$ for the base model. To ensure a fair comparison, we fix the total number of function evaluations (NFE) across all baseline methods, augmenting single-particle methods with Best-of-N (BoN)~\cite{Stiennon:2020BoN} or $\Psi$-Sampler~\cite{yoon2025:psi} to match the computational budget.

\paragraph{Aesthetic Image Generation.}
In this task, the target reward is Aesthetic Score~\cite{Schuhmann:aesthetics}, and the held-out rewards are ImageReward~\cite{Xu2023:ImageReward}, HPSv2~\cite{wu2023:hpsv2}, PickScore~\cite{Kirstain2023:pickapic}, and VQA Score~\cite{lin2024:vqa}. The quantitative results are presented in \cref{tab:aesthetic_full_zimage}. Consistent with our findings using FLUX~\cite{flux2024}, \Ours{} achieves the best target reward performance across all methods. Remarkably, \Ours{} with a single particle and only $25$ NFE achieves a higher target reward (7.3500) than the best-performing multi-particle baselines (e.g., DAS~\cite{Kim:2025DAS} and SVDD~\cite{Li2024:SVDD}) evaluated at $500$ NFE. When scaled to $500$ NFE, \Ours{} further improves the target reward to 8.7860 and attains the highest scores across most held-out rewards.

\paragraph{Text-Aligned Image Generation.}
For this task, the target reward is PickScore~\cite{Kirstain2023:pickapic}, while held-out rewards include VQA Score~\cite{lin2024:vqa}, Aesthetic Score~\cite{Schuhmann:aesthetics}, ImageReward~\cite{Xu2023:ImageReward}, and HPSv2~\cite{wu2023:hpsv2}. The quantitative results are summarized in \cref{tab:pick_full_zimage}. Similar to the aesthetic image generation, \Ours{} consistently outperforms other baselines in target reward performance. With just $25$ NFE, \Ours{} surpasses the performance of all baselines evaluated at $500$ NFE. In addition to superior target reward alignment, \Ours{} generalizes effectively to held-out rewards, attaining the highest ImageReward and HPSv2 scores while remaining comparable in Aesthetic Score and VQA Score. These results confirm that \Ours{} can be seamlessly applied to different diffusion models without compromising fundamental image quality.

\begin{table*}[t!]
\centering
\caption{
\footnotesize
\textbf{Full quantitative comparison on aesthetic image generation with Z-Image~\cite{cai2025:zimage}.}
The target reward is Aesthetic Score~\cite{Schuhmann:aesthetics}.
For single-particle methods, we augment sampling with Best-of-$N$~\cite{Stiennon:2020BoN} and $\Psi$-Sampler~\cite{yoon2025:psi} to match the total NFE, denoted with \textsuperscript{\dag} and \textsuperscript{$\ddagger$}, respectively.
Dark green cells indicate the best result for each metric across all runs, while light green cells denote the second best.
}
\label{tab:aesthetic_full_zimage}
\renewcommand{\arraystretch}{1.10}
\setlength{\tabcolsep}{4.0pt}

\scriptsize
\resizebox{\textwidth}{!}{%
\begin{tabular}{@{} l c c c >{\bfseries}c c c c c @{}}
\toprule
\multirow{2}{*}{\textbf{Method}}
& \multicolumn{3}{c}{\textbf{Settings}}
& \multicolumn{1}{c}{\makecell{\textbf{Target}\\\textbf{Reward}}}
& \multicolumn{4}{c}{\makecell{\textbf{Held-out}\\\textbf{Rewards}}} \\
\cmidrule(lr){2-4}\cmidrule(lr){5-5}\cmidrule(lr){6-9}
& \makecell{\textbf{\#}\\\textbf{Particles}}
& \makecell{\textbf{\#}\\\textbf{Steps}}
& \textbf{NFE}
& \makecell{\textbf{Aesthetic}\\\textbf{Score}} $\uparrow$
& \makecell{\textbf{Pick}\\\textbf{-score}} $\uparrow$
& \textbf{HPSv2} $\uparrow$
& \makecell{\textbf{Image}\\\textbf{Reward}} $\uparrow$
& \makecell{\textbf{VQA}\\\textbf{Score}} $\uparrow$ \\
\midrule

\multicolumn{9}{@{}l}{Base~\cite{cai2025:zimage}} \\
\quad & 1 & 25 & 25  & 5.7627 & 0.2206 & 0.2949 & 1.2856 & 0.9708 \\

\midrule

\multicolumn{9}{@{}l}{BoN~\cite{Stiennon:2020BoN}} \\
\quad & 4 & 25 & 100 & 5.9375 & 0.2208 & 0.2968 & 1.3292 & 0.9707 \\
\quad & 20 & 25 & 500 & 6.1749 & 0.2209 & 0.2962 & 1.3474 & 0.9709 \\

\midrule

\multicolumn{9}{@{}l}{DPS\textsuperscript{\dag}~\cite{Chung:2023DPS}} \\
\quad & 4 & 25 & 100 & 5.9327 & 0.2207 & 0.2963 & 1.3186 & \cellcolor{lightgreen}0.9736 \\
\quad & 20 & 25 & 500 & 6.1587 & 0.2208 & 0.2966 & \cellcolor{lightgreen}1.3784 & 0.9657 \\

\midrule

\multicolumn{9}{@{}l}{FreeDoM\textsuperscript{\dag}~\cite{Yu:2023FreeDOM}} \\
\quad & 3 & 25 & 123 & 5.9502 & 0.2209 & 0.2944 & 1.3189 & 0.9718 \\
\quad & 13 & 25 & 533 & 6.0753 & 0.2203 & 0.2942 & 1.3119 & 0.9678 \\

\midrule

\multicolumn{9}{@{}l}{SVDD~\cite{Li2024:SVDD}} \\
\quad & 4 & 25 & 100 & 6.0228 & 0.2208 & 0.2979 & 1.3307 & 0.9665 \\
\quad & 20 & 25 & 500 & 6.2394 & 0.2209 & 0.2945 & 1.2996 & 0.9698 \\

\midrule

\multicolumn{9}{@{}l}{RBF~\cite{kim2025:rbf}} \\
\quad & 4 & 25 & 100 & 6.0037 & 0.2208 & 0.2961 & 1.3203 & 0.9715 \\
\quad & 20 & 25 & 500 & 6.1600 & 0.2210 & 0.2941 & 1.3486 & 0.9692 \\

\midrule

\multicolumn{9}{@{}l}{DAS~\cite{Kim:2025DAS}} \\
\quad & 4 & 25 & 100 & 5.9745 & 0.2202 & 0.2942 & 1.3035 & 0.9678 \\
\quad & 20 & 25 & 500 & 6.2449 & \cellcolor{lightgreen}0.2214 & 0.2970 & 1.3174 & 0.9673 \\

\midrule

\multicolumn{9}{@{}l}{$\Psi$-Sampler~\cite{yoon2025:psi}} \\
\quad & 10 & 25 + 25 & 500 & 6.1078 & 0.2206 & 0.2942 & 1.2843 & 0.9689 \\

\midrule

\multicolumn{9}{@{}l}{\textbf{\Ours\textsuperscript{$\ddagger$} (Ours)}} \\
\quad & 10 & 25 + 25 & 500 & 7.3139 & 0.2205 & 0.2975 & 1.3296 & 0.9669 \\

\midrule

\multicolumn{9}{@{}l}{\textbf{\Ours\textsuperscript{\dag} (Ours)}} \\
\quad & 1 & 25  & 25  & 7.3500 & 0.2212 & \cellcolor{lightgreen}0.3003 & 1.2781 & 0.9666 \\
\quad & 2 & 50  & 100 & \cellcolor{lightgreen}8.0537 & 0.2204 & 0.2975 & 1.3034 & 0.9706 \\
\quad & 4 & 25  & 100 & 7.5989 & 0.2210 & 0.2996 & 1.3185 & 0.9684 \\
\quad & 20 & 25 & 500 & 7.8527 & \cellcolor{darkgreen}0.2218 & \cellcolor{darkgreen}0.3009 & 1.3701 & 0.9686 \\
\quad & 5 & 100 & 500 & \cellcolor{darkgreen}8.7860 & 0.2208 & 0.2978 & \cellcolor{darkgreen}1.4165 & \cellcolor{darkgreen}0.9738 \\

\bottomrule
\end{tabular}%
}
\end{table*}
\begin{table*}[t!]
\centering
\caption{
\footnotesize 
\textbf{Full quantitative comparison on text-aligned generation with Z-Image~\cite{cai2025:zimage}.} 
The target reward is PickScore~\cite{Kirstain2023:pickapic}. 
For single-particle methods, we augment sampling with Best-of-N~\cite{Stiennon:2020BoN} and $\Psi$-Sampler~\cite{yoon2025:psi} to match the total NFE, denoted with \textsuperscript{\dag} and \textsuperscript{$\ddagger$}, respectively. 
Dark green cells indicate the best result for each metric across all runs, while light green cells denote the second best. 
}
\label{tab:pick_full_zimage}
\renewcommand{\arraystretch}{1.10}
\setlength{\tabcolsep}{4.0pt}
\scriptsize
\resizebox{\textwidth}{!}{%
\begin{tabular}{@{} l c c c >{\bfseries}c c c c c @{}}
\toprule
\multirow{2}{*}{\textbf{Method}}
& \multicolumn{3}{c}{\textbf{Settings}}
& \multicolumn{1}{c}{\makecell{\textbf{Target}\\\textbf{Reward}}}
& \multicolumn{4}{c}{\makecell{\textbf{Held-out}\\\textbf{Rewards}}} \\
\cmidrule(lr){2-4}\cmidrule(lr){5-5}\cmidrule(lr){6-9}
& \makecell{\textbf{\#}\\\textbf{Particles}}
& \makecell{\textbf{\#}\\\textbf{Steps}}
& \textbf{NFE}
& \makecell{\textbf{Pick}\\\textbf{-Score}} $\uparrow$
& \makecell{\textbf{Aesthetic}\\\textbf{Score}} $\uparrow$
& \textbf{HPSv2} $\uparrow$
& \makecell{\textbf{Image}\\\textbf{Reward}} $\uparrow$
& \makecell{\textbf{VQA}\\\textbf{Score}} $\uparrow$ \\
\midrule

\multicolumn{9}{@{}l}{Base~\cite{cai2025:zimage}} \\
\quad & 1  & 25 & 25  & 0.2116 & 5.7061 & 0.2561 & 0.6567 & 0.8393 \\

\midrule
\multicolumn{9}{@{}l}{BoN~\cite{Stiennon:2020BoN}} \\
\quad & 4  & 25 & 100 & 0.2145 & 5.7627 & 0.2627 & 0.7788 & 0.8396 \\
\quad & 20 & 25 & 500 & 0.2162 & 5.7791 & 0.2654 & 0.7522 & 0.8299 \\
\midrule

\multicolumn{9}{@{}l}{DPS\textsuperscript{\dag}~\cite{Chung:2023DPS}} \\
\quad & 4  & 25 & 100 & 0.2145 & 5.7501 & 0.2624 & 0.7747 & 0.8392 \\
\quad & 20 & 25 & 500 & 0.2162 & 5.7422 & 0.2633 & 0.7380 & 0.8385 \\
\midrule

\multicolumn{9}{@{}l}{FreeDoM\textsuperscript{\dag}~\cite{Yu:2023FreeDOM}} \\
\quad & 3  & 25 & 123 & 0.2133 & 5.7087 & 0.2589 & 0.7054 & 0.8317 \\
\quad & 13 & 25 & 533 & 0.2155 & 5.7486 & 0.2625 & 0.6872 & 0.8435 \\
\midrule

\multicolumn{9}{@{}l}{SVDD~\cite{Li2024:SVDD}} \\
\quad & 4  & 25 & 100 & 0.2151 & 5.7813 & 0.2621 & 0.7275 & 0.8374 \\
\quad & 20 & 25 & 500 & 0.2181 & 5.7910 & 0.2667 & 0.7893 & 0.8389 \\
\midrule

\multicolumn{9}{@{}l}{RBF~\cite{kim2025:rbf}} \\
\quad & 4  & 25 & 100 & 0.2144 & 5.7461 & 0.2594 & 0.6940 & 0.8399 \\
\quad & 20 & 25 & 500 & 0.2165 & 5.7778 & 0.2628 & 0.7610 & 0.8445 \\
\midrule

\multicolumn{9}{@{}l}{DAS~\cite{Kim:2025DAS}} \\
\quad & 4  & 25 & 100 & 0.2144 & 5.7360 & 0.2605 & 0.7012 & 0.8426 \\
\quad & 20 & 25 & 500 & 0.2163 & 5.7657 & 0.2656 & 0.8306 & 0.8445 \\
\midrule

\multicolumn{9}{@{}l}{$\Psi$-Sampler~\cite{yoon2025:psi}} \\
\quad & 10 & 25 + 25 & 500 & 0.2159 & 5.8097 & 0.2639 & 0.7419 & \cellcolor{darkgreen}0.8451 \\
\midrule

\multicolumn{9}{@{}l}{\textbf{\Ours\textsuperscript{$\ddagger$} (Ours)}} \\
\quad & 10 & 25 + 25 & 500 & 0.2158 & 5.7939 & 0.2656 & 0.7352 & 0.8336 \\
\midrule

\multicolumn{9}{@{}l}{\textbf{\Ours\textsuperscript{\dag} (Ours)}} \\
\quad & 1  & 25  & 25  & 0.2334 & \cellcolor{darkgreen}5.8487 & 0.2837 & 0.8128 & 0.8423 \\
\quad & 2  & 50  & 100 & \cellcolor{lightgreen}0.2436 & 5.8095 & \cellcolor{lightgreen}0.2907 & \cellcolor{lightgreen}0.8525 & \cellcolor{lightgreen}0.8446 \\
\quad & 4  & 25  & 100 & 0.2361 & \cellcolor{lightgreen}5.8108 & 0.2877 & 0.8224 & 0.8364 \\
\quad & 20 & 25  & 500 & 0.2393 & 5.8092 & 0.2886 & 0.8515 & 0.8405 \\
\quad & 5  & 100 & 500 & \cellcolor{darkgreen}0.2613 & 5.7398 & \cellcolor{darkgreen}0.2984 & \cellcolor{darkgreen}0.8654 & 0.8444 \\
\bottomrule
\end{tabular}%
}
\end{table*}
\FloatBarrier

\subsection{Additional Results of the Main Paper Experiments}
\label{app:additional_main}

In this section, we present additional results for the main experiments, including expanded quantitative comparisons across diverse sampling configurations and additional qualitative examples for all three tasks in \cref{sec:exp}.

\paragraph{Expanded Quantitative Results.}
We provide the expanded quantitative results of aesthetic image generation and text-aligned image generation with FLUX~\cite{flux2024} in~\cref{tab:aesthetic_full_supp,tab:pick_full_supp}, covering diverse settings with different numbers of particles, sampling steps, and total NFEs. Across both diffusion models and reward alignment tasks, \Ours{} consistently outperforms the baselines in target reward alignment while achieving competitive or superior held-out reward performance, thereby yielding the most favorable trade-offs. These results further demonstrate that the advantages of \Ours{} remain robust across different base models and sampling configurations.

\begin{table*}[t!]
\centering
\caption{
\footnotesize 
\textbf{Full quantitative comparison on aesthetic image generation with FLUX~\cite{flux2024}.} 
The target reward is Aesthetic Score~\cite{Schuhmann:aesthetics}. 
For single-particle methods, we augment sampling with Best-of-N~\cite{Stiennon:2020BoN} and $\Psi$-Sampler~\cite{yoon2025:psi} to match the total NFE, denoted with \textsuperscript{\dag} and \textsuperscript{$\ddagger$}, respectively. 
Dark green cells indicate the best result for each metric across all runs, while light green cells denote the second best. 
}
\label{tab:aesthetic_full_supp}
\renewcommand{\arraystretch}{1.10}
\setlength{\tabcolsep}{4.0pt}

\scriptsize
\resizebox{\textwidth}{!}{%
\begin{tabular}{@{} l c c c >{\bfseries}c c c c c @{}}
\toprule
\multirow{2}{*}{\textbf{Method}}
& \multicolumn{3}{c}{\textbf{Settings}}
& \multicolumn{1}{c}{\makecell{\textbf{Target}\\\textbf{Reward}}}
& \multicolumn{4}{c}{\makecell{\textbf{Held-out}\\\textbf{Rewards}}} \\
\cmidrule(lr){2-4}\cmidrule(lr){5-5}\cmidrule(lr){6-9}
& \makecell{\textbf{\#}\\\textbf{Particles}}
& \makecell{\textbf{\#}\\\textbf{Steps}}
& \textbf{NFE}
& \makecell{\textbf{Aesthetic}\\\textbf{Score}} $\uparrow$
& \makecell{\textbf{Pick}\\\textbf{-score}} $\uparrow$
& \textbf{HPSv2} $\uparrow$
& \makecell{\textbf{Image}\\\textbf{Reward}} $\uparrow$
& \makecell{\textbf{VQA}\\\textbf{Score}} $\uparrow$ \\
\midrule

\multicolumn{9}{@{}l}{Base~\cite{flux2024}} \\
\quad & 1 & 25 & 25 & 6.0282 & 0.2144 & 0.2759 & 1.0538 & 0.9644 \\

\midrule

\multicolumn{9}{@{}l}{BoN~\cite{Stiennon:2020BoN}} \\
\quad & 4 & 25 & 100 & 6.4694 & 0.2183 & 0.2832 & 0.9583 & 0.9635 \\
\quad & 20 & 25 & 500 & 6.7310 & \cellcolor{lightgreen}0.2197 & 0.2890 & 1.1419 & 0.9597 \\

\midrule

\multicolumn{9}{@{}l}{DPS\textsuperscript{\dag}~\cite{Chung:2023DPS}} \\
\quad & 4 & 25 & 100 & 6.4773 & 0.2182 & 0.2805 & 0.9604 & 0.9634 \\
\quad & 20 & 25 & 500 & 6.7647 & 0.2191 & 0.2861 & 1.0639 & 0.9624 \\

\midrule

\multicolumn{9}{@{}l}{FreeDoM\textsuperscript{\dag}~\cite{Yu:2023FreeDOM}} \\
\quad & 3 & 25 & 123 & 6.5196 & 0.2179 & 0.2797 & 0.8771 & 0.9699 \\
\quad & 13 & 25 & 533 & 6.8406 & 0.2185 & 0.2853 & 0.9941 & 0.9635 \\

\midrule

\multicolumn{9}{@{}l}{SVDD~\cite{Li2024:SVDD}} \\
\quad & 4 & 25 & 100 & 6.5348 & 0.2176 & 0.2835 & 1.1208 & 0.9652 \\
\quad & 20 & 25 & 500 & 7.1363 & 0.2177 & 0.2814 & 1.0256 & 0.9510 \\

\midrule

\multicolumn{9}{@{}l}{RBF~\cite{kim2025:rbf}} \\
\quad & 4 & 25 & 100 & 6.5497 & 0.2173 & 0.2818 & 1.1237 & 0.9695 \\
\quad & 20 & 25 & 500 & 6.9900 & 0.2183 & 0.2826 & 1.0761 & 0.9689 \\

\midrule

\multicolumn{9}{@{}l}{DAS~\cite{Kim:2025DAS}} \\
\quad & 4 & 25 & 100 & 6.5759 & 0.2177 & 0.2821 & 1.0096 & 0.9658 \\
\quad & 20 & 25 & 500 & 6.9384 & 0.2183 & 0.2860 & 1.0568 & 0.9706 \\

\midrule

\multicolumn{9}{@{}l}{$\Psi$-Sampler~\cite{yoon2025:psi}} \\
\quad & 10 & 25 + 25 & 500 & 7.0116 & 0.2188 & 0.2847 & 1.1235 & \cellcolor{darkgreen}0.9737 \\

\midrule

\multicolumn{9}{@{}l}{\textbf{\Ours\textsuperscript{$\ddagger$} (Ours)}} \\
\quad & 10 & 25 + 25 & 500 & 7.4180 & 0.2170 & 0.2812 & 1.0209 & 0.9656 \\

\midrule

\multicolumn{9}{@{}l}{\textbf{\Ours\textsuperscript{\dag} (Ours)}} \\
\quad & 1 & 25 & 25 & 7.4510 & \cellcolor{darkgreen}0.2200 & \cellcolor{lightgreen}0.2928 & \cellcolor{lightgreen}1.2565 & \cellcolor{lightgreen}0.9728 \\
\quad & 1 & 100 & 100 & 8.0538 & 0.2173 & 0.2854 & 1.1546 & 0.9682 \\
\quad & 2 & 50 & 100 & 7.8454 & 0.2160 & 0.2831 & 1.0744 & 0.9520 \\
\quad & 4 & 25 & 100 & 7.7247 & 0.2176 & 0.2835 & 0.9874 & 0.9648 \\
\quad & 5 & 100 & 500 & \cellcolor{darkgreen}8.5394 & 0.2183 & 0.2868 & \cellcolor{darkgreen}1.2613 & 0.9566 \\
\quad & 10 & 50 & 500 & \cellcolor{lightgreen}8.2081 & 0.2181 & 0.2913 & 1.1445 & 0.9643 \\
\quad & 20 & 25 & 500 & 7.9656 & \cellcolor{lightgreen}0.2197 & \cellcolor{darkgreen}0.2932 & 1.1669 & 0.9609 \\

\bottomrule
\end{tabular}%
}
\end{table*}

\begin{table*}[t!]
\centering
\caption{
\footnotesize 
\textbf{Full quantitative comparison on text-aligned generation with FLUX~\cite{flux2024}.} 
The target reward is PickScore~\cite{Kirstain2023:pickapic}. 
For single-particle methods, we augment sampling with Best-of-N~\cite{Stiennon:2020BoN} and $\Psi$-Sampler~\cite{yoon2025:psi} to match the total NFE, denoted with \textsuperscript{\dag} and \textsuperscript{$\ddagger$}, respectively. 
Dark green cells indicate the best result for each metric across all runs, while light green cells denote the second best. 
}
\label{tab:pick_full_supp}
\renewcommand{\arraystretch}{1.10}
\setlength{\tabcolsep}{4.0pt}
\scriptsize
\resizebox{\textwidth}{!}{%
\begin{tabular}{@{} l c c c >{\bfseries}c c c c c @{}}
\toprule
\multirow{2}{*}{\textbf{Method}}
& \multicolumn{3}{c}{\textbf{Settings}}
& \multicolumn{1}{c}{\makecell{\textbf{Target}\\\textbf{Reward}}}
& \multicolumn{4}{c}{\makecell{\textbf{Held-out}\\\textbf{Rewards}}} \\
\cmidrule(lr){2-4}\cmidrule(lr){5-5}\cmidrule(lr){6-9}
& \makecell{\textbf{\#}\\\textbf{Particles}}
& \makecell{\textbf{\#}\\\textbf{Steps}}
& \textbf{NFE}
& \makecell{\textbf{Pick}\\\textbf{-Score}} $\uparrow$
& \makecell{\textbf{Aesthetic}\\\textbf{Score}} $\uparrow$
& \textbf{HPSv2} $\uparrow$
& \makecell{\textbf{Image}\\\textbf{Reward}} $\uparrow$
& \makecell{\textbf{VQA}\\\textbf{Score}} $\uparrow$ \\
\midrule

\multicolumn{9}{@{}l}{Base~\cite{flux2024}} \\
\quad & 1 & 25 & 25  & 0.2054 & 5.4664 & 0.2316 & 0.1710 & 0.8011 \\
\midrule

\multicolumn{9}{@{}l}{BoN~\cite{Stiennon:2020BoN}} \\
\quad & 4  & 25 & 100 & 0.2103 & 5.6987 & 0.2463 & 0.4272 & 0.7767 \\
\quad & 20 & 25 & 500 & 0.2146 & 5.8582 & 0.2619 & 0.6883 & 0.8021 \\
\midrule

\multicolumn{9}{@{}l}{DPS\textsuperscript{\dag}~\cite{Chung:2023DPS}} \\
\quad & 4  & 25 & 100 & 0.2105 & 5.6926 & 0.2470 & 0.5169 & 0.7926 \\
\quad & 20 & 25 & 500 & 0.2147 & 5.8073 & 0.2622 & 0.6310 & 0.8028 \\
\midrule

\multicolumn{9}{@{}l}{FreeDoM\textsuperscript{\dag}~\cite{Yu:2023FreeDOM}} \\
\quad & 3  & 25 & 123 & 0.2099 & 5.8158 & 0.2458 & 0.3275 & 0.8041 \\
\quad & 13 & 25 & 533 & 0.2133 & 5.8492 & 0.2572 & 0.5354 & 0.7990 \\
\midrule

\multicolumn{9}{@{}l}{SVDD~\cite{Li2024:SVDD}} \\
\quad & 4  & 25 & 100 & 0.2143 & 5.6745 & 0.2587 & 0.5536 & 0.8102 \\
\quad & 20 & 25 & 500 & 0.2204 & \cellcolor{lightgreen}5.8743 & 0.2699 & \cellcolor{darkgreen}0.7592 & 0.8201 \\
\midrule

\multicolumn{9}{@{}l}{RBF~\cite{kim2025:rbf}} \\
\quad & 4  & 25 & 100 & 0.2150 & 5.6813 & 0.2603 & 0.5993 & 0.8031 \\
\quad & 20 & 25 & 500 & 0.2202 & 5.8618 & 0.2682 & \cellcolor{lightgreen}0.7583 & 0.8149 \\
\midrule

\multicolumn{9}{@{}l}{DAS~\cite{Kim:2025DAS}} \\
\quad & 4  & 25 & 100 & 0.2104 & 5.6601 & 0.2457 & 0.4134 & 0.8064 \\
\quad & 20 & 25 & 500 & 0.2139 & 5.8385 & 0.2568 & 0.5226 & 0.7990 \\
\midrule

\multicolumn{9}{@{}l}{$\Psi$-Sampler~\cite{yoon2025:psi}} \\
\quad & 10 & 25 + 25 & 500 & 0.2120 & 5.7329 & 0.2551 & 0.4590 & 0.8145 \\
\midrule

\multicolumn{9}{@{}l}{\textbf{\Ours\textsuperscript{$\ddagger$} (Ours)}} \\
\quad & 10 & 25 + 25 & 500 & 0.2133 & 5.8495 & 0.2594 & 0.6226 & 0.8198 \\
\midrule

\multicolumn{9}{@{}l}{\textbf{\Ours\textsuperscript{\dag} (Ours)}} \\
\quad & 1  & 25  & 25  & 0.2224 & 5.7720 & 0.2601 & 0.5257 & \cellcolor{darkgreen}0.8210 \\
\quad & 1  & 100 & 100 & 0.2363 & 5.8397 & 0.2743 & 0.5971 & 0.8177 \\
\quad & 2  & 50  & 100 & 0.2317 & 5.8097 & 0.2720 & 0.5918 & 0.8100 \\
\quad & 4  & 25  & 100 & 0.2282 & 5.7478 & 0.2700 & 0.6399 & 0.7881 \\
\quad & 5  & 100 & 500 & \cellcolor{darkgreen}0.2439 & 5.8016 & \cellcolor{darkgreen}0.2830 & 0.6771 & \cellcolor{lightgreen}0.8205 \\
\quad & 10 & 50  & 500 & \cellcolor{lightgreen}0.2385 & 5.7252 & 0.2796 & 0.6739 & 0.8081 \\
\quad & 20 & 25  & 500 & 0.2327 & \cellcolor{darkgreen}5.9020 & \cellcolor{lightgreen}0.2817 & 0.7370 & 0.8017 \\
\bottomrule
\end{tabular}%
}
\end{table*}

\clearpage

\paragraph{Additional Qualitative Results.}
We provide additional qualitative results for aesthetic image generation, text-aligned image generation and preference-aligned video generation in~\cref{fig:aesthetic_qualitative_supp},~\cref{fig:text_align_qualitative_supp} and~\cref{fig:video_qualitative_supp}, respectively. Across all applications, \Ours{} generates images and videos that better reflect the target reward, while maintaining strong visual quality and fidelity to the input text compared to the baselines.

\begin{figure*}[h!]
\centering
\setlength{\tabcolsep}{0pt}
\renewcommand{\arraystretch}{1.05}
\scriptsize

\newcommand{\colw}{0.164\textwidth}  
\newcommand{\imgw}{0.165\textwidth}

\begin{tabular}{@{}%
>{\centering\arraybackslash}m{\colw}
>{\centering\arraybackslash}m{\colw}
>{\centering\arraybackslash}m{\colw}
>{\centering\arraybackslash}m{\colw}
>{\centering\arraybackslash}m{\colw}
>{\centering\arraybackslash}m{\colw}
@{}}
\toprule

Base~\cite{flux2024} &
BoN~\cite{Stiennon:2020BoN} &
DPS\textsuperscript{\dag}~\cite{Chung:2023DPS} &
SVDD~\cite{Li2024:SVDD} &
$\Psi$-Sampler~\cite{yoon2025:psi} &
{\Oursbf{}}\textsuperscript{\dag} (\textbf{Ours}) \\

\midrule

\multicolumn{6}{@{}c@{}}{"dolphin"} \\[2pt]
\includegraphics[width=\imgw]{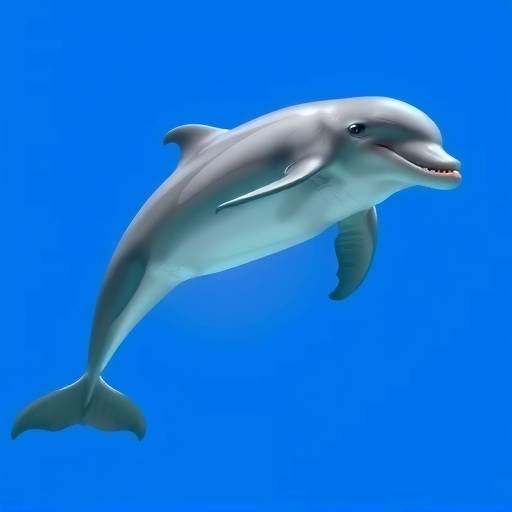} &
\includegraphics[width=\imgw]{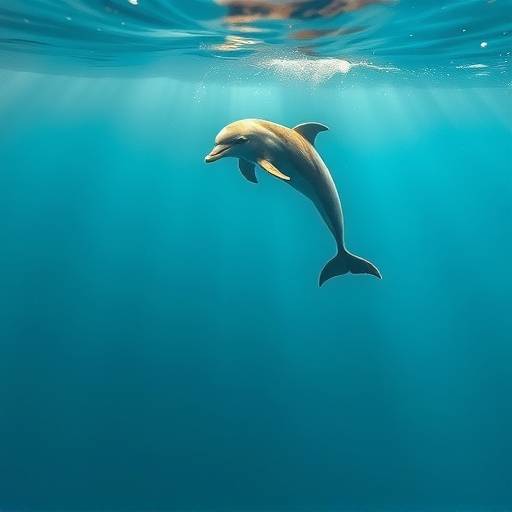} &
\includegraphics[width=\imgw]{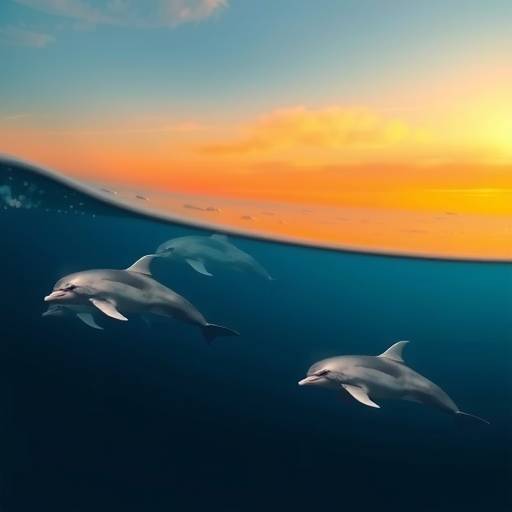} &
\includegraphics[width=\imgw]{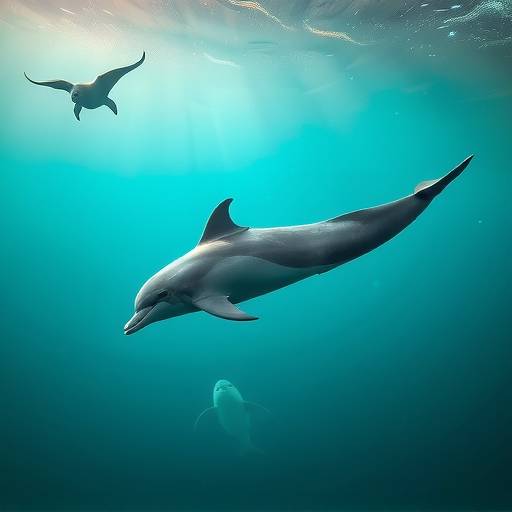} &
\includegraphics[width=\imgw]{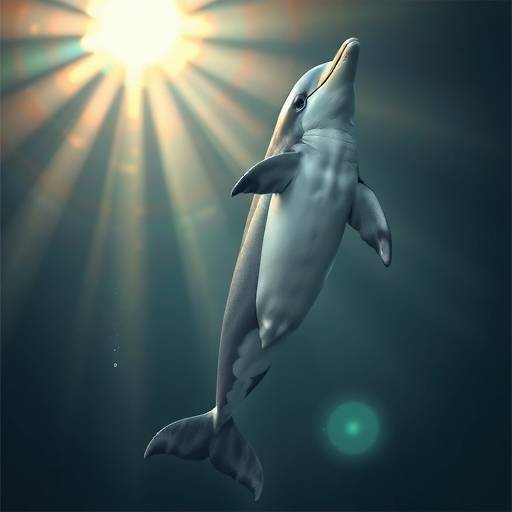} &
\includegraphics[width=\imgw]{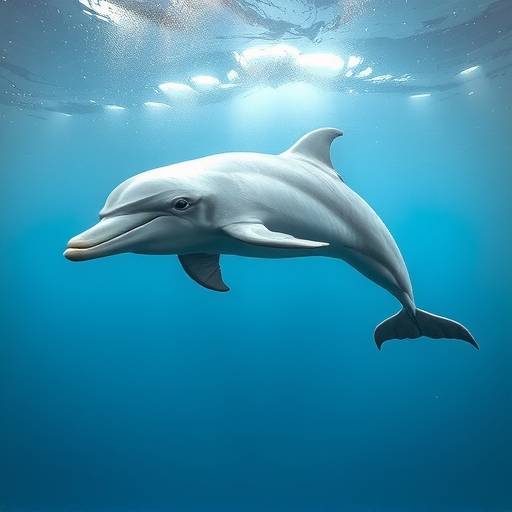} \\
\textit{5.8065} & \textit{6.7978} & \textit{6.7873} & \textit{7.5315} & \textit{6.9614} & \textit{7.9347} \\[3pt]

\multicolumn{6}{@{}c@{}}{"hedgehog"} \\[2pt]
\includegraphics[width=\imgw]{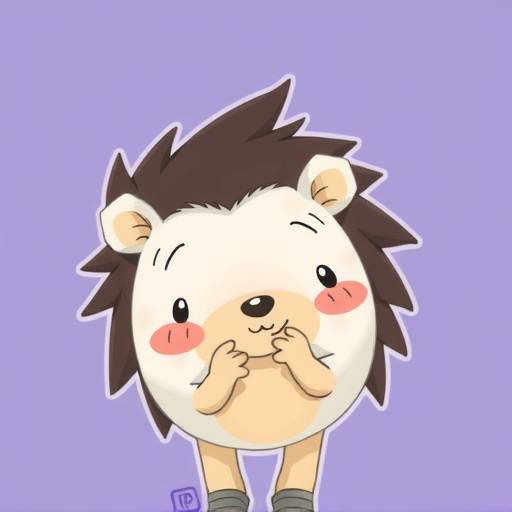} &
\includegraphics[width=\imgw]{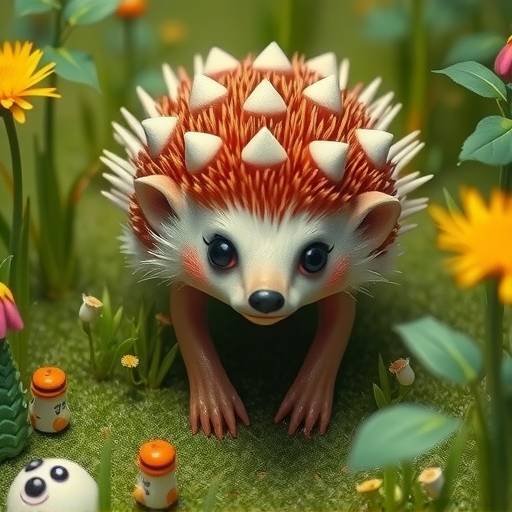} &
\includegraphics[width=\imgw]{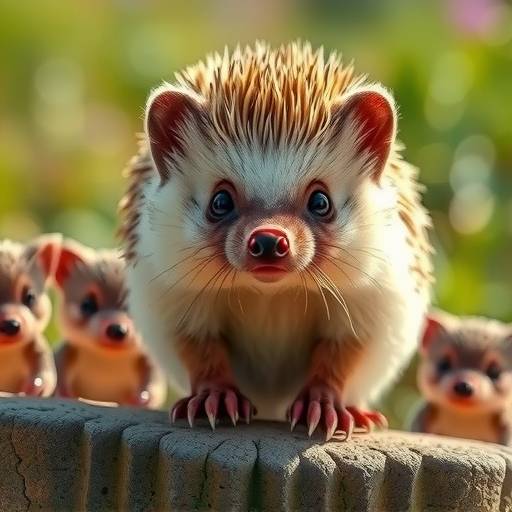} &
\includegraphics[width=\imgw]{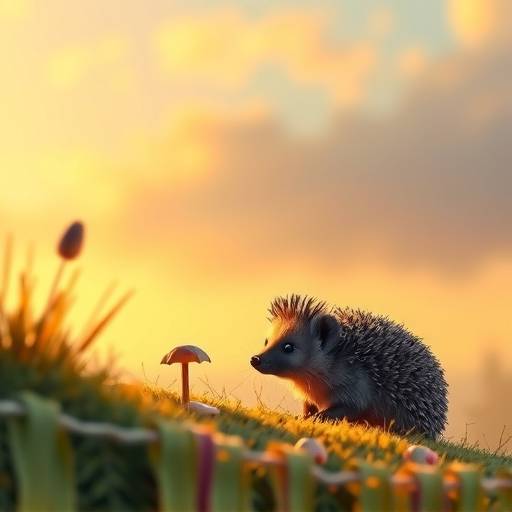} &
\includegraphics[width=\imgw]{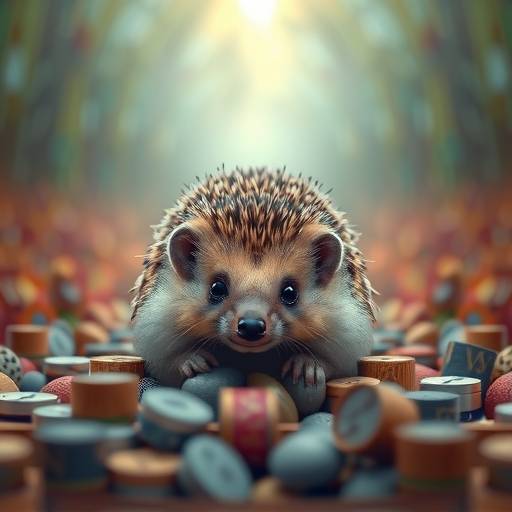} &
\includegraphics[width=\imgw]{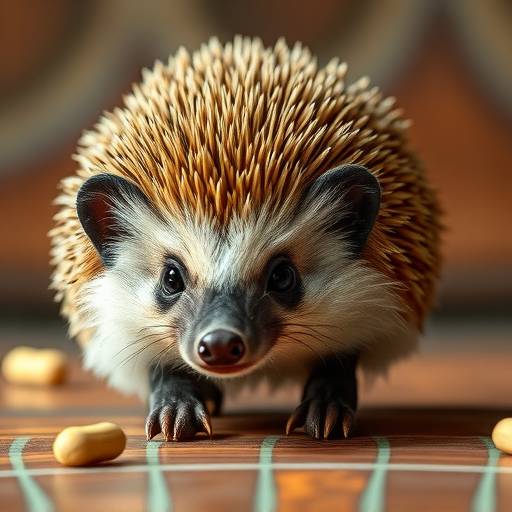} \\
\textit{5.8580} & \textit{6.5064} & \textit{6.7468} & \textit{7.3300} & \textit{7.0763} & \textit{7.9429} \\

\bottomrule
\end{tabular}

\caption{
\footnotesize 
\textbf{Additional qualitative comparison on aesthetic image generation using Aesthetic Score~\cite{Schuhmann:aesthetics}.} 
For single-particle methods, we augment sampling with Best-of-N to match the total NFE, denoted with \textsuperscript{\dag}. 
}
\label{fig:aesthetic_qualitative_supp}
\end{figure*}

\begin{figure*}[h!]
\centering
\setlength{\tabcolsep}{0pt}
\renewcommand{\arraystretch}{1.05}
\scriptsize

\newcommand{\colw}{0.164\textwidth}   
\newcommand{\imgw}{0.165\textwidth}

\begin{tabular}{@{}%
>{\centering\arraybackslash}m{\colw}
>{\centering\arraybackslash}m{\colw}
>{\centering\arraybackslash}m{\colw}
>{\centering\arraybackslash}m{\colw}
>{\centering\arraybackslash}m{\colw}
>{\centering\arraybackslash}m{\colw}
@{}}
\toprule

Base~\cite{flux2024} &
BoN~\cite{Stiennon:2020BoN} &
DPS\textsuperscript{\dag}~\cite{Chung:2023DPS} &
SVDD~\cite{Li2024:SVDD} &
$\Psi$-Sampler~\cite{yoon2025:psi} &
{\Oursbf{}}\textsuperscript{\dag} \textbf{(Ours)} \\

\midrule

\multicolumn{6}{@{}p{0.95\textwidth}@{}}{%
\centering\tiny``The intricate, colorful patterns of the mandala radiated outward from the center, forming a dazzling and hypnotic display.''} \\[3pt]
\includegraphics[width=\imgw]{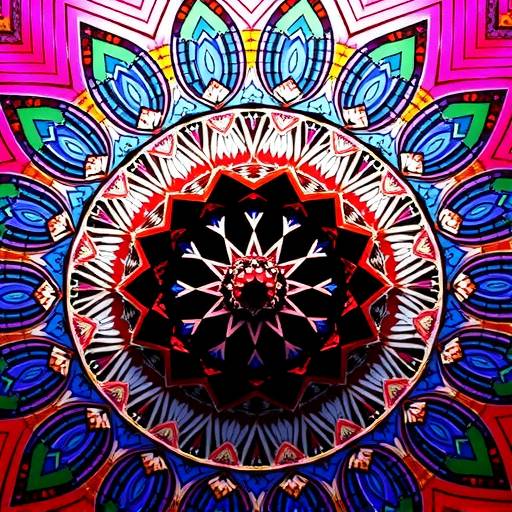} &
\includegraphics[width=\imgw]{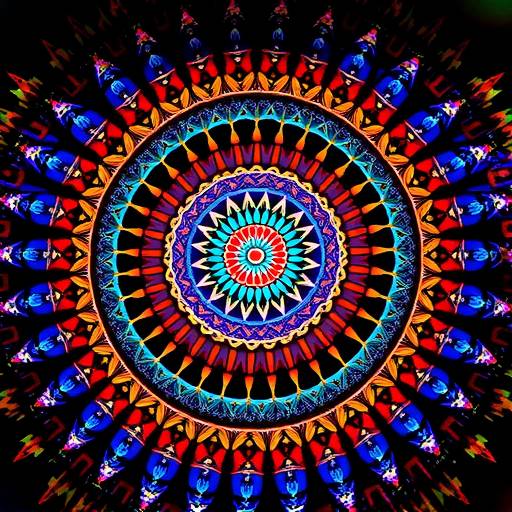} &
\includegraphics[width=\imgw]{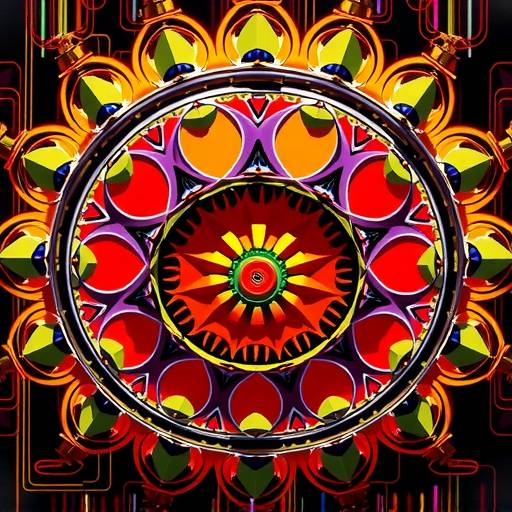} &
\includegraphics[width=\imgw]{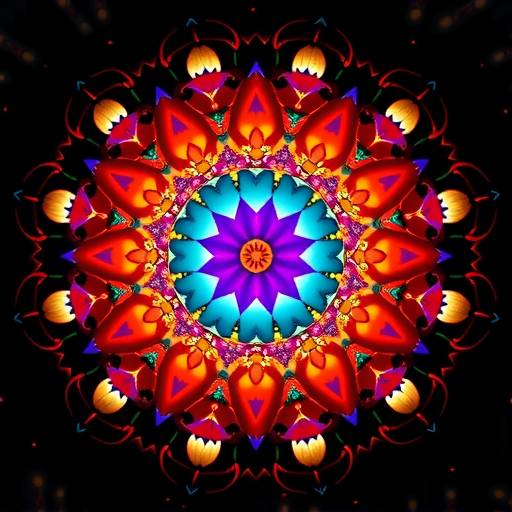} &
\includegraphics[width=\imgw]{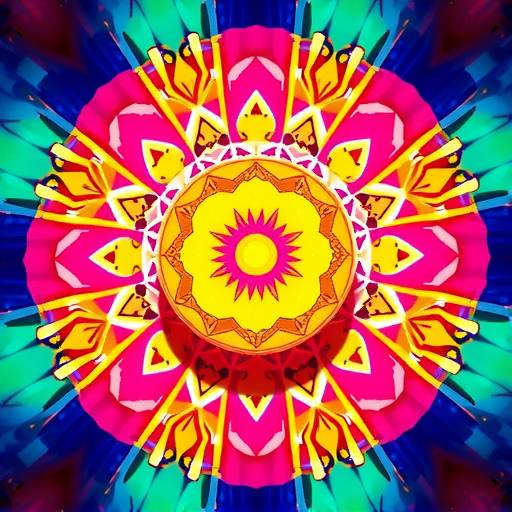} &
\includegraphics[width=\imgw]{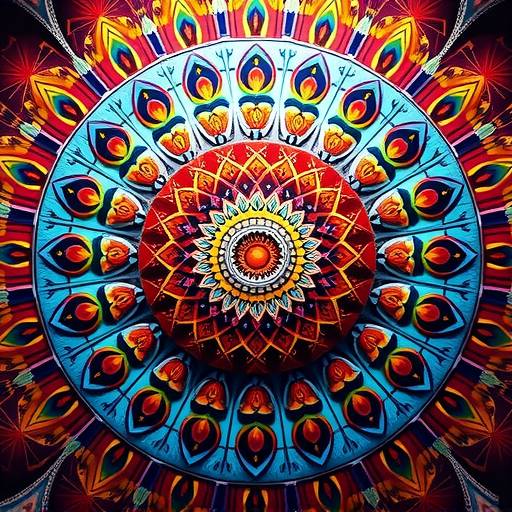} \\
\textit{0.2126} & \textit{0.2175} & \textit{0.2140} & \textit{0.2199} & \textit{0.2116} & \textit{0.2299} \\[2pt]

\multicolumn{6}{@{}p{0.95\textwidth}@{}}{%
\centering\tiny``The tall, slender birch trees swayed gently in the cool autumn breeze, their golden leaves rustling softly.''} \\[3pt]
\includegraphics[width=\imgw]{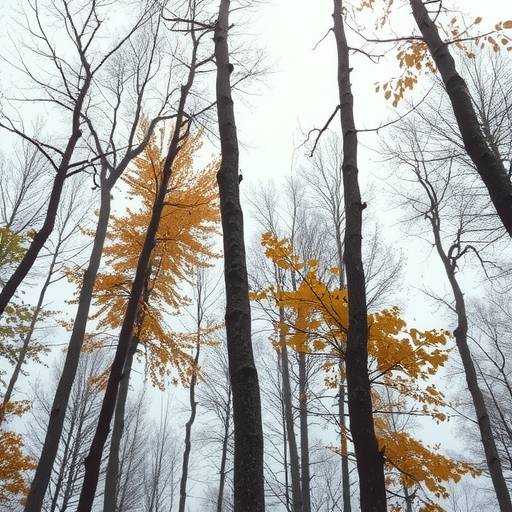} &
\includegraphics[width=\imgw]{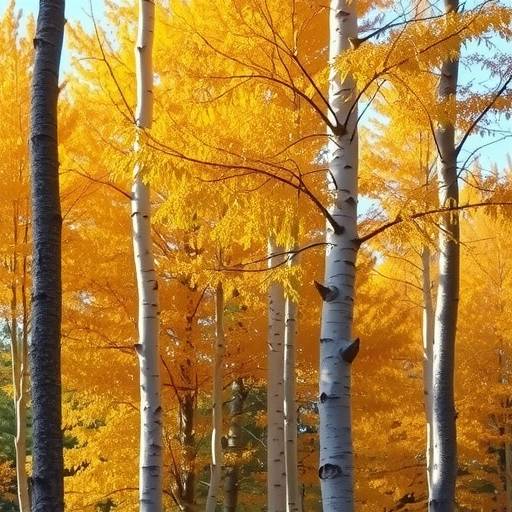} &
\includegraphics[width=\imgw]{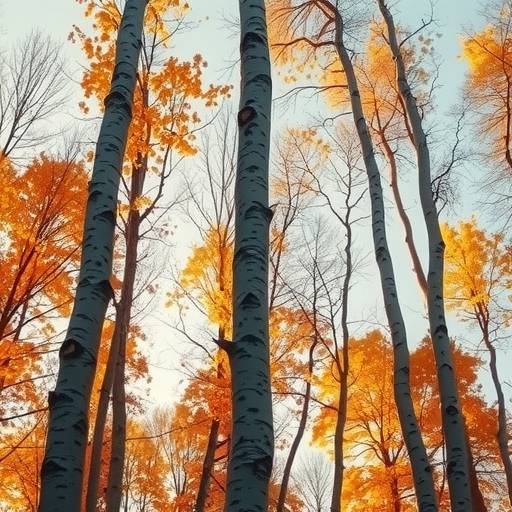} &
\includegraphics[width=\imgw]{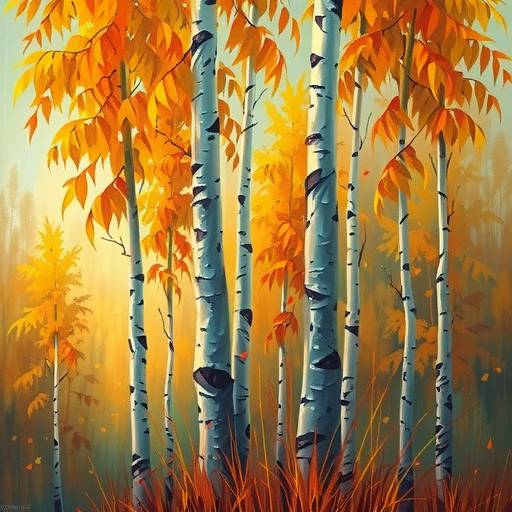} &
\includegraphics[width=\imgw]{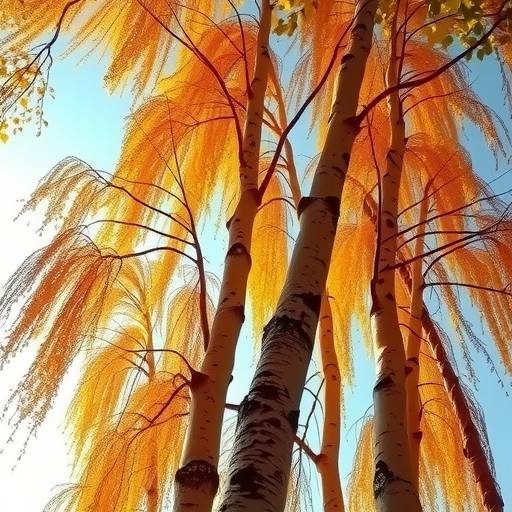} &
\includegraphics[width=\imgw]{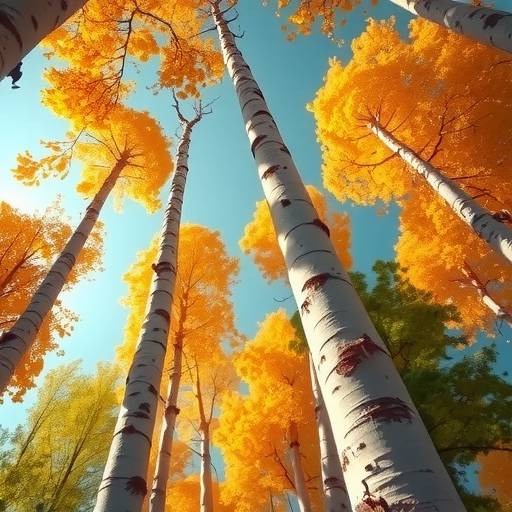} \\
\textit{0.2004} & \textit{0.2106} & \textit{0.2111} & \textit{0.2231} & \textit{0.2104} & \textit{0.2403} \\

\bottomrule
\end{tabular}

\caption{
\footnotesize
\textbf{Additional qualitative comparison on text-aligned image generation using PickScore~\cite{Kirstain2023:pickapic}.}
For single-particle methods we augment sampling with Best-of-N to match the total NFE, denoted with \textsuperscript{\dag}.
}
\label{fig:text_align_qualitative_supp}
\end{figure*}

\begin{figure*}[t!]
\centering
\setlength{\tabcolsep}{1pt}
\renewcommand{\arraystretch}{1.05}
\scriptsize

\newcommand{\methodw}{0.11\textwidth}
\newcommand{\framew}{0.142\textwidth}
\newcommand{\gapw}{0.001\textwidth}

\begin{tabular}{@{}%
>{\centering\arraybackslash}m{\methodw}
>{\centering\arraybackslash}m{\framew}
>{\centering\arraybackslash}m{\framew}
>{\centering\arraybackslash}m{\framew}
m{\gapw}
>{\centering\arraybackslash}m{\framew}
>{\centering\arraybackslash}m{\framew}
>{\centering\arraybackslash}m{\framew}
@{}}

& \multicolumn{3}{c}{\begin{tabular}[c]{@{}c@{}}\scriptsize\textit{``A sheep eating yellow flowers} \\ \scriptsize\textit{from behind a wire fence.''}\end{tabular}}
& 
& \multicolumn{3}{c}{\scriptsize\textit{``A harbour seal swimming near the shore.''}} \\

Base~\cite{wan2025}
& \includegraphics[width=\framew]{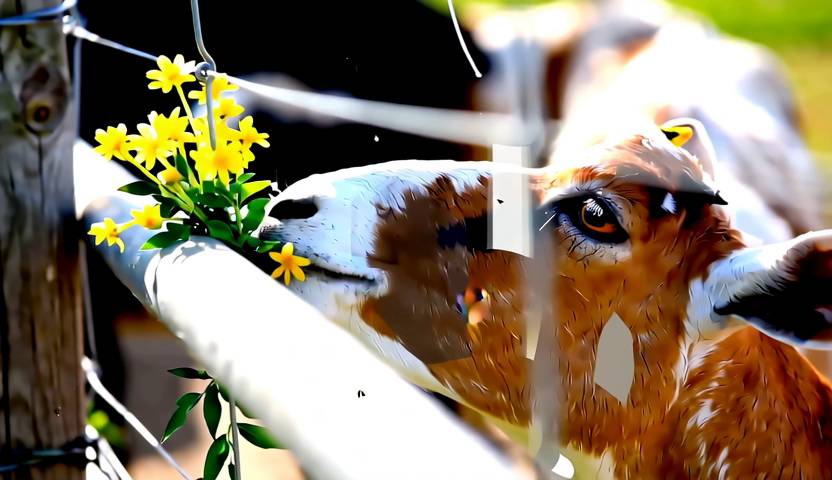}
& \includegraphics[width=\framew]{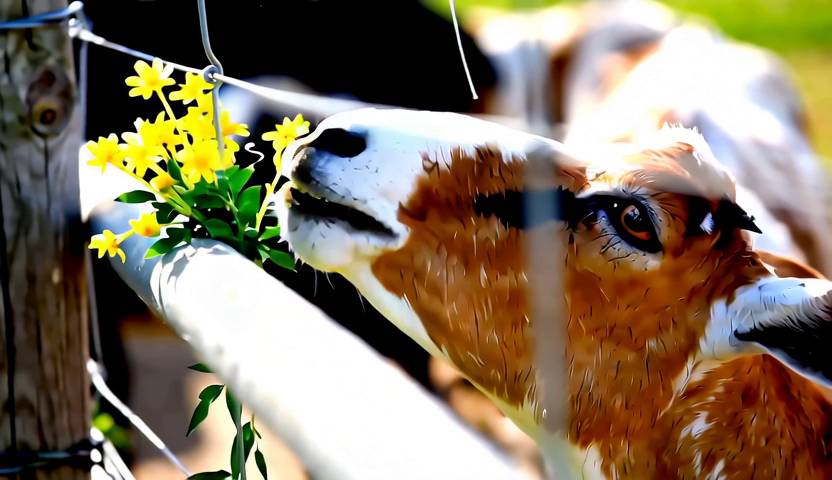}
& \includegraphics[width=\framew]{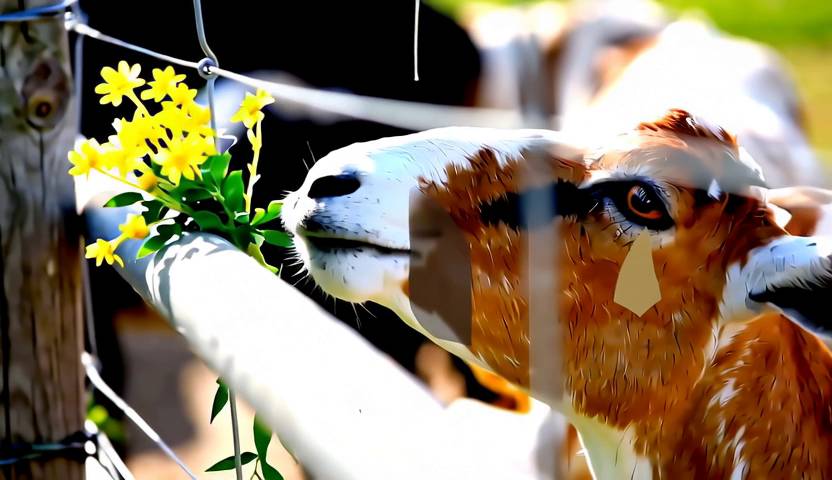}
&
& \includegraphics[width=\framew]{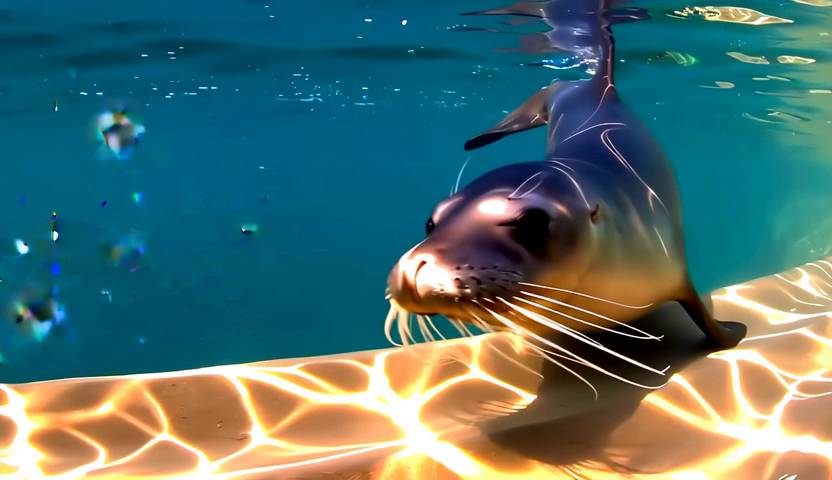}
& \includegraphics[width=\framew]{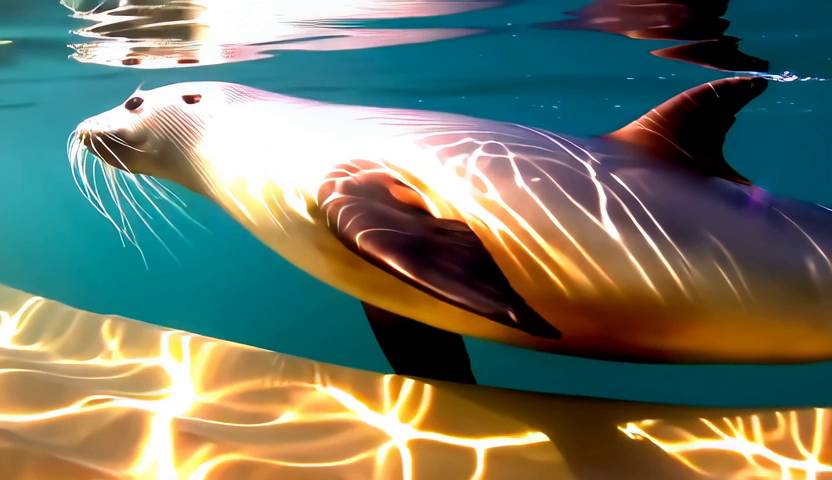}
& \includegraphics[width=\framew]{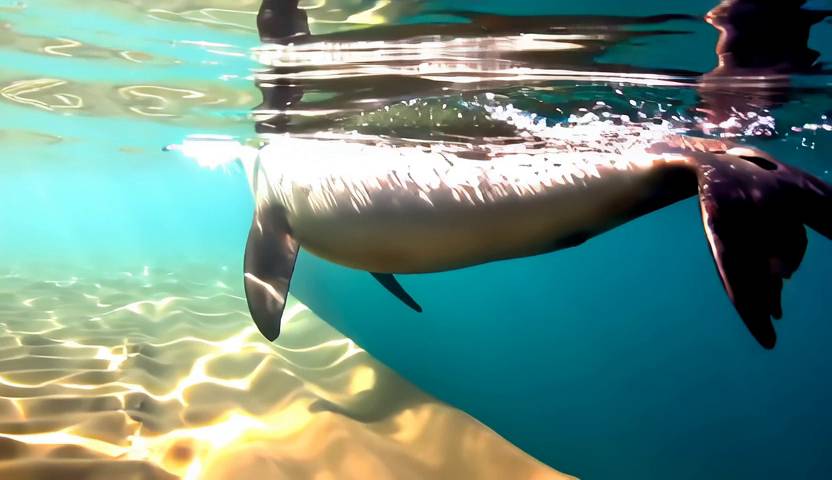} \\

DPS~\cite{Chung:2023DPS}
& \includegraphics[width=\framew]{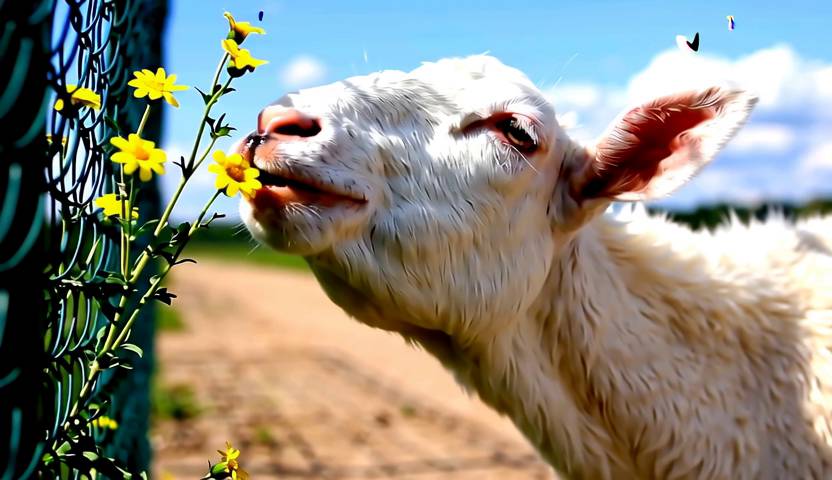}
& \includegraphics[width=\framew]{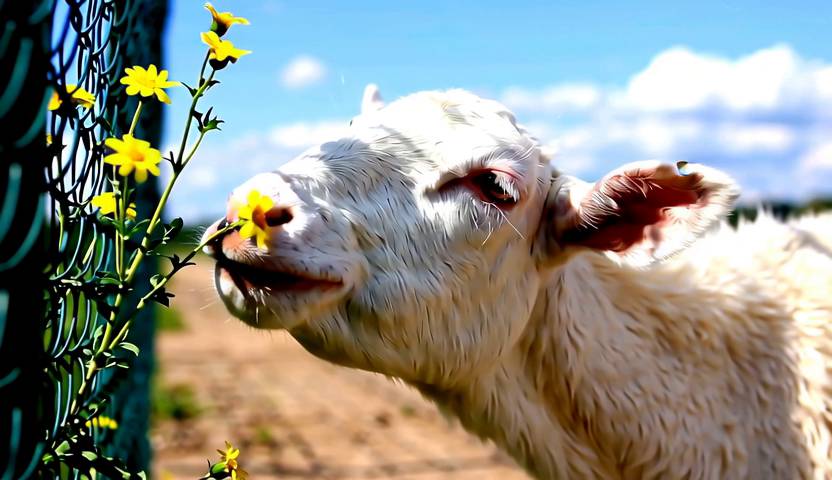}
& \includegraphics[width=\framew]{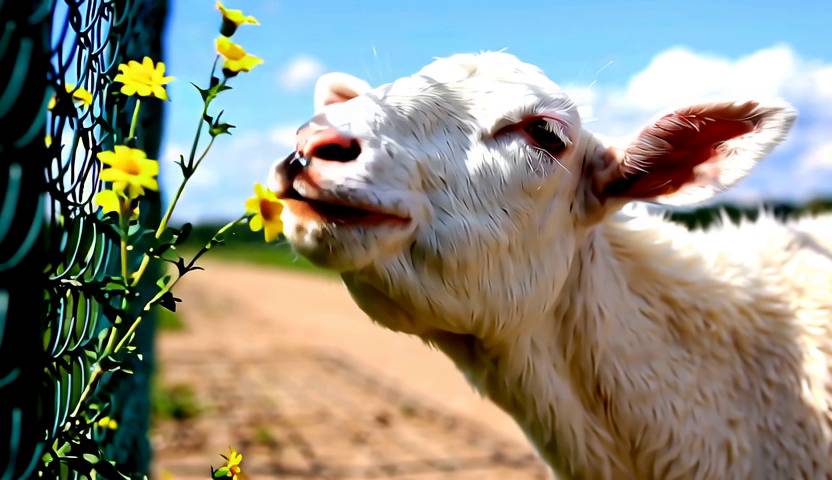}
&
& \includegraphics[width=\framew]{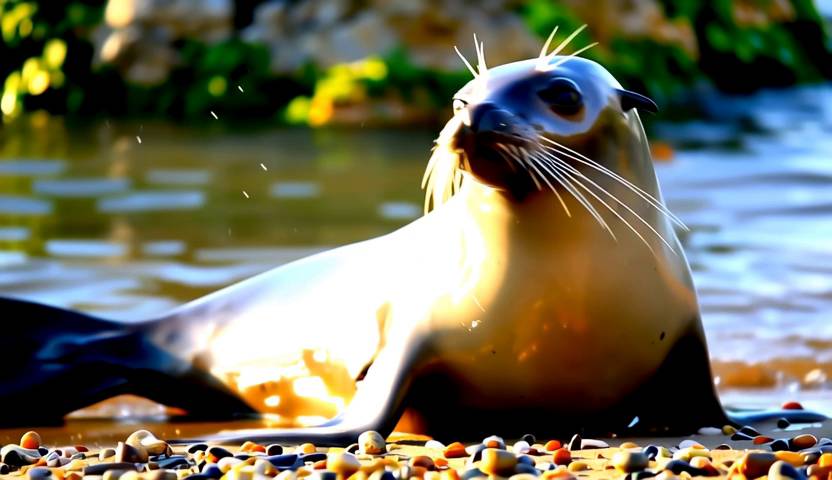}
& \includegraphics[width=\framew]{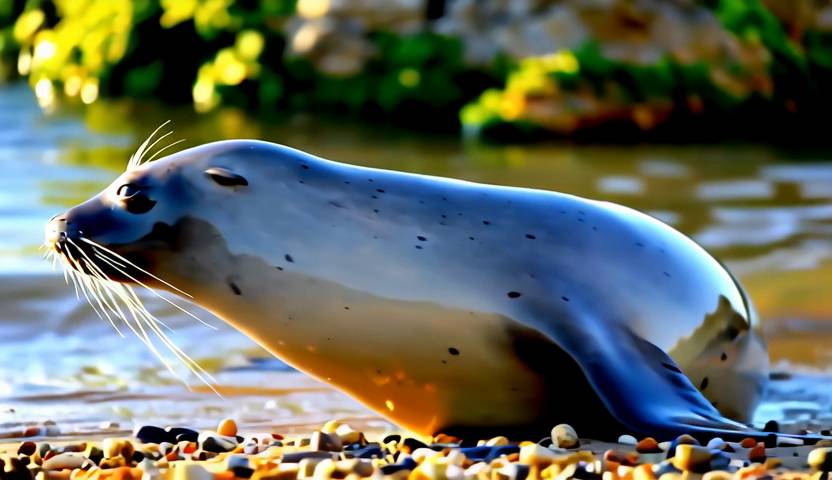}
& \includegraphics[width=\framew]{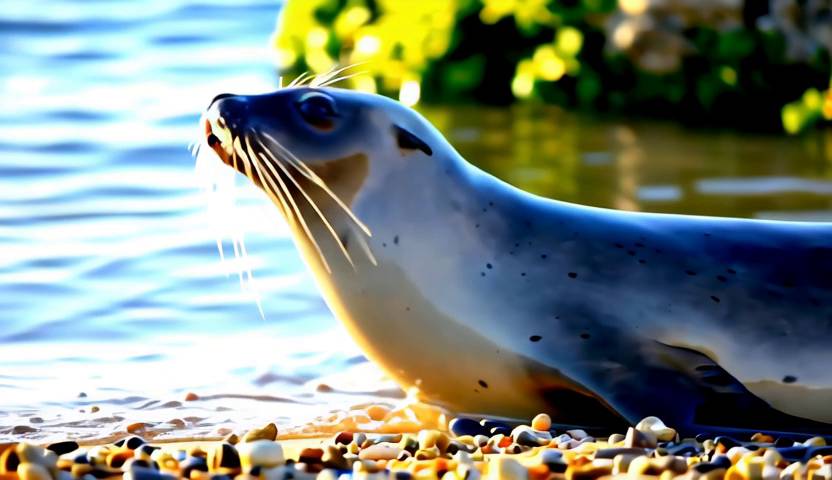} \\

\makecell{FreeDoM \\ \cite{Yu:2023FreeDOM}}
& \includegraphics[width=\framew]{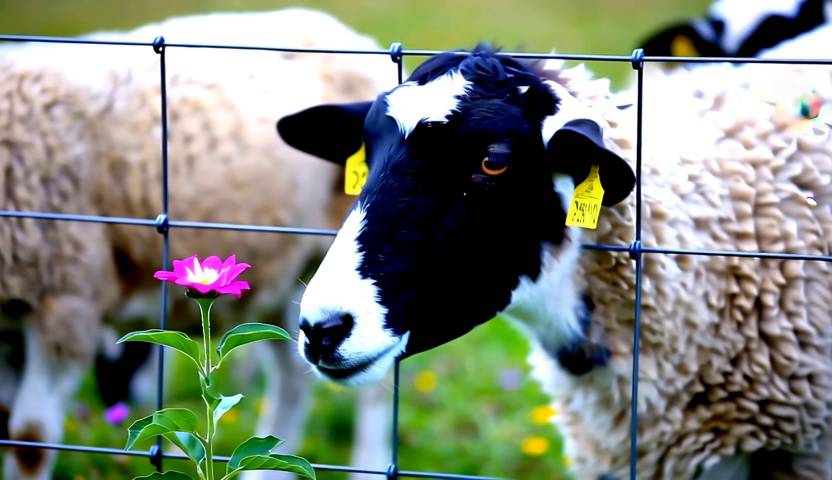}
& \includegraphics[width=\framew]{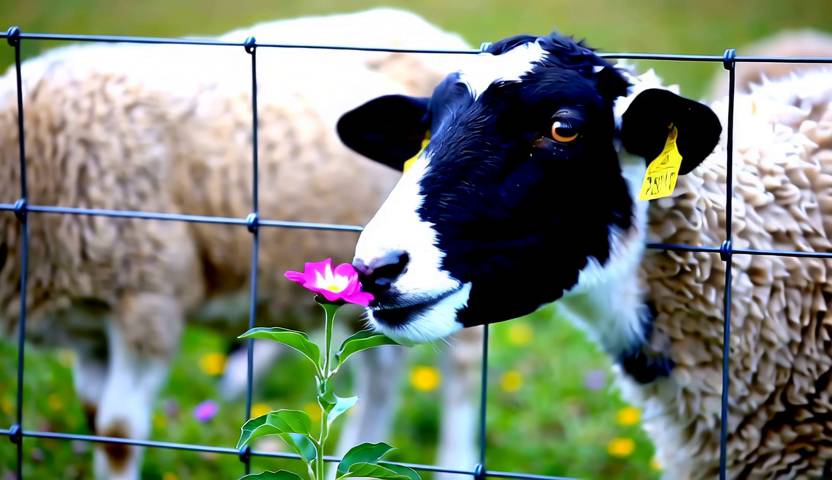}
& \includegraphics[width=\framew]{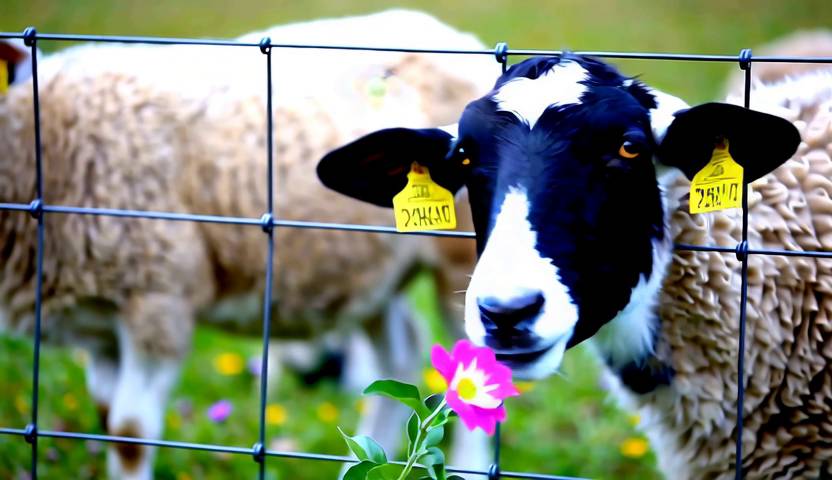}
&
& \includegraphics[width=\framew]{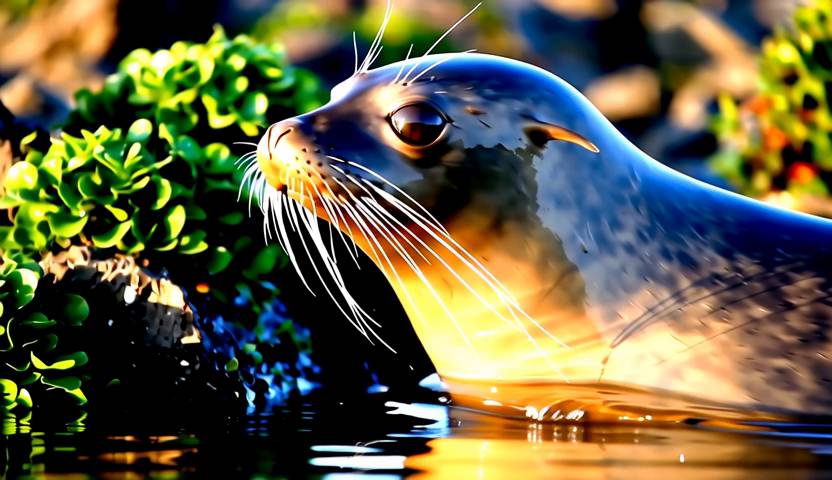}
& \includegraphics[width=\framew]{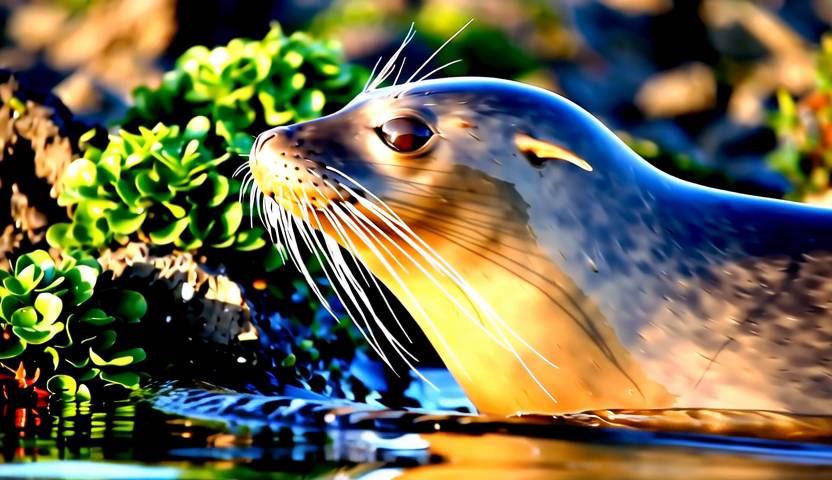}
& \includegraphics[width=\framew]{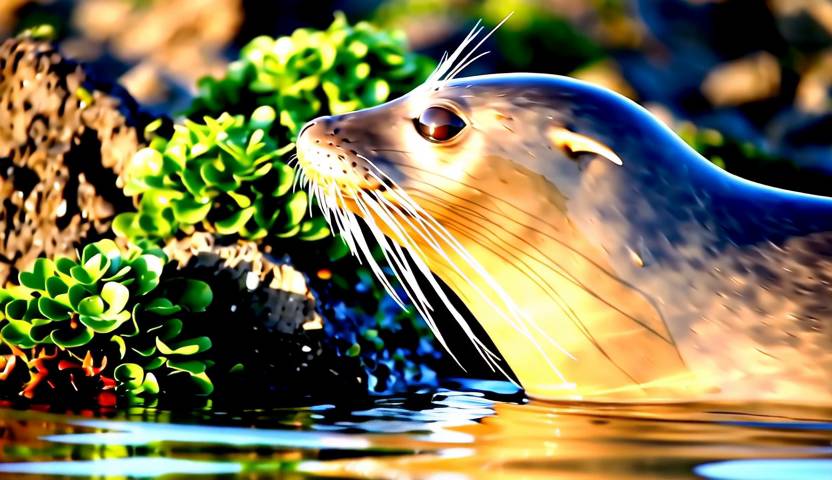} \\

{\Oursbf{}} \textbf{(Ours)}
& \includegraphics[width=\framew]{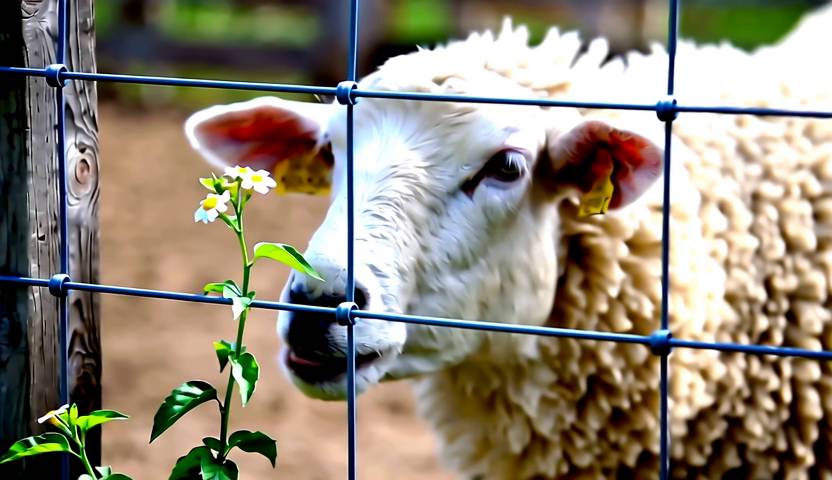}
& \includegraphics[width=\framew]{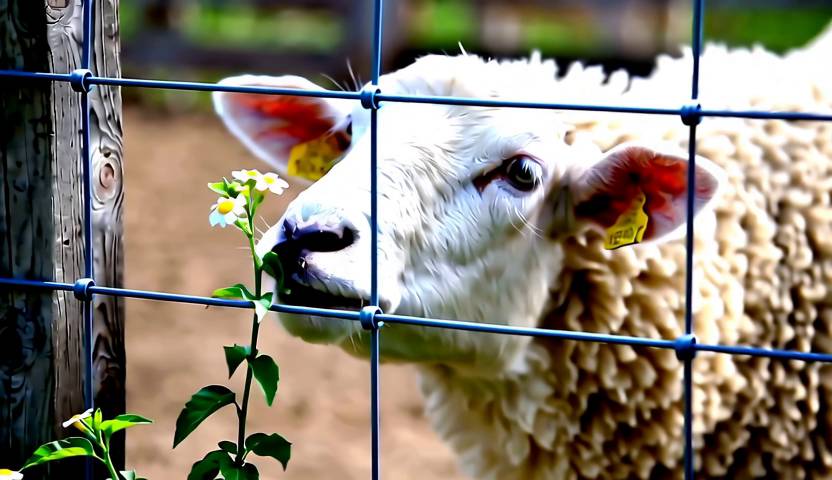}
& \includegraphics[width=\framew]{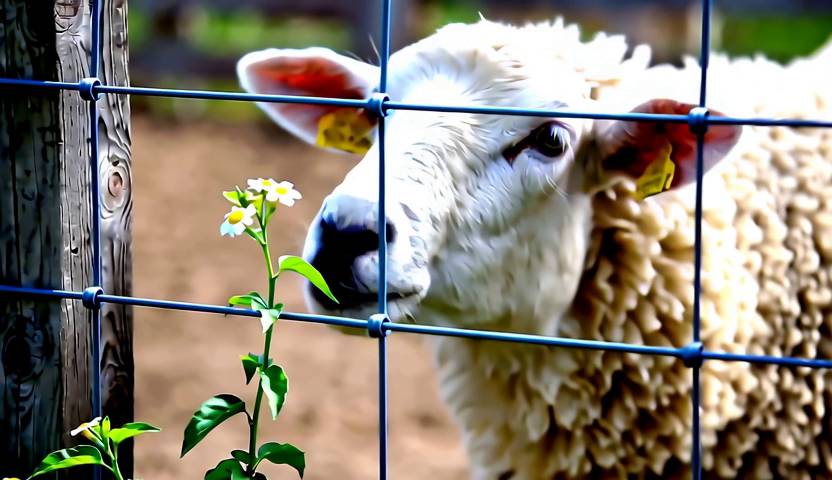}
&
& \includegraphics[width=\framew]{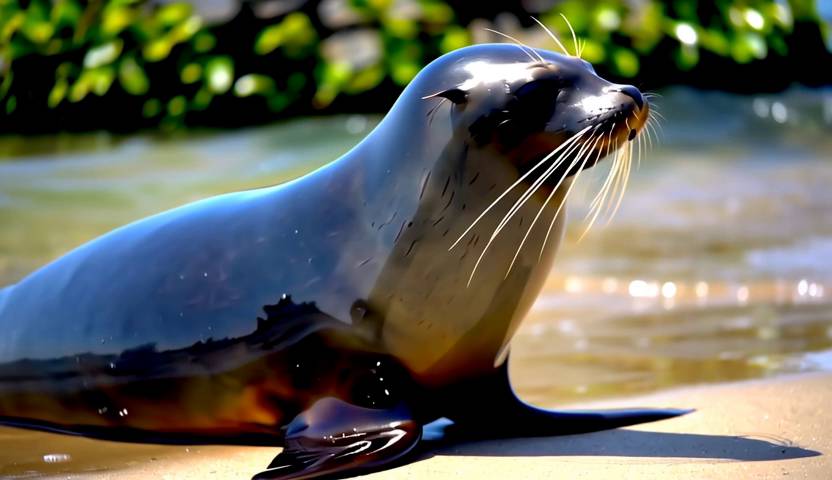}
& \includegraphics[width=\framew]{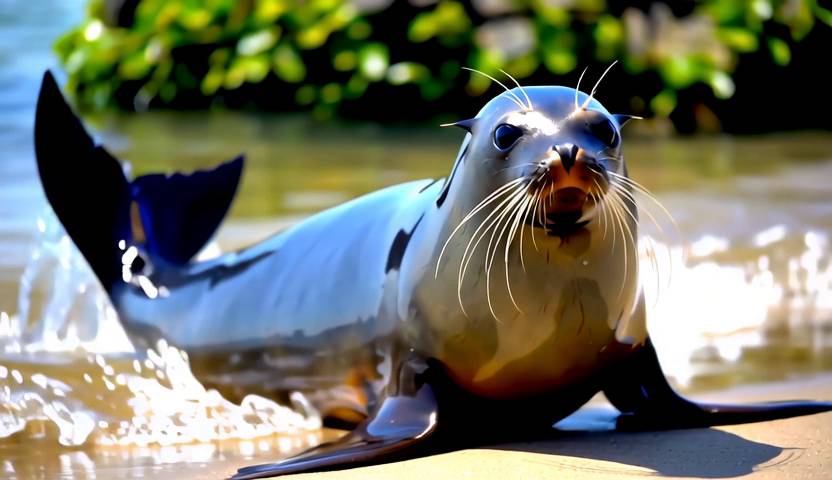}
& \includegraphics[width=\framew]{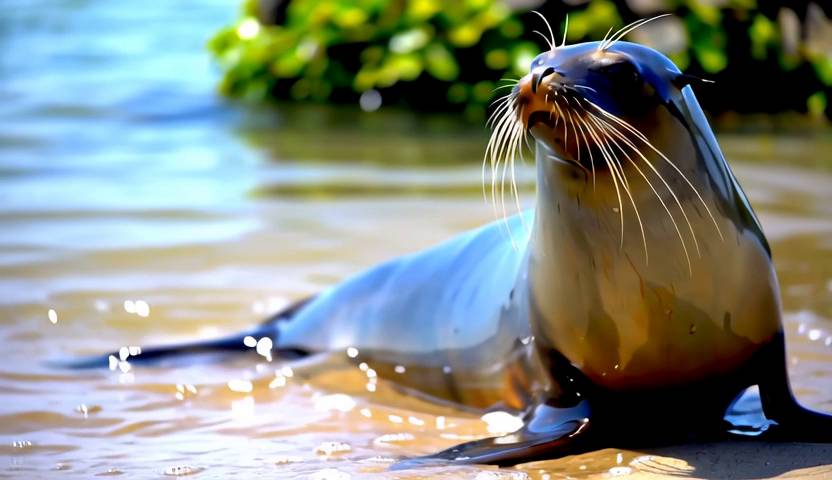} \\

\end{tabular}

\caption{
\footnotesize 
\textbf{Additional qualitative comparison on preference-aligned video generation using VideoReward~\cite{liu2025:videoalign}.} \Ours{} produces videos with better text alignment, visual quality and motion quality. 
}
\label{fig:video_qualitative_supp}
\end{figure*}

\end{document}